\pgfplotsset{compat=newest}
\theoremstyle{plain}
\newtheorem{theorem}{Theorem}[section]
\newtheorem{example}{Example}[section]
\newtheorem{proposition}[theorem]{Proposition}
\newtheorem{lemma}[theorem]{Lemma}
\newtheorem{corollary}[theorem]{Corollary}
\theoremstyle{definition}
\newtheorem{definition}[theorem]{Definition}
\newtheorem{assumption}[theorem]{Assumption}
\theoremstyle{remark}
\newtheorem{remark}{Remark}[section]
\newcommand{\pressure}{{\mathrm{p}}}
\newcommand{\velocity}{\boldsymbol{\mathrm{v}}}
\newcommand{\ii}{\mathrm{i}}
\newcommand{\bx}{\boldsymbol{x}}
\newcommand{\bid}{\boldsymbol{\mathrm{I_d}}}
\newcommand{\bk}{\boldsymbol{\mathcal{K}}}
\newcommand{\bq}{{\mathcal{Q}}}
\newcommand{\br}{{\mathcal{R}}}
\newcommand{\E}{\boldsymbol{\mathbb{E}}}
\newcommand{\rad}{\boldsymbol{\epsilon}}
\newcommand{\prob}{\boldsymbol{\mathrm{P}}}
\newcommand{\Lp}{\mathrm{L}^2}
\newcommand{\forward}{\boldsymbol{\mathcal{F}}}
\newcommand{\Gscr}{\boldsymbol{\mathscr{G}}}
\newcommand{\Gcal}{\boldsymbol{\mathcal{G}}}
\newcommand{\no}{\boldsymbol{\mathscr{N}}}
\newcommand{\IDFT}{\boldsymbol{\mathrm{IDFT}}}
\newcommand{\DFT}{\boldsymbol{\mathrm{DFT}}}
\newcommand{\antonio}[1]{{\color{orange!90!black}[antonio: #1]}}
\newcommand{\MNO}{{\textit{s}}{\text{NO}}}
\newcommand{\NONeXt}{{\textit{s}}{\text{NO}}+\varepsilon \mathrm{I}}
\newcommand{\MFNO}{{\textit{s}}{\text{FNO}}}
\newcommand{\FNONeXt}{{\textit{s}}{\text{FNO}}+\varepsilon \mathrm{I}}
\newcommand{\multiplicative}{\text{sequential}}
\newcommand{\tikzcmark}{%
\tikz[scale=0.23] {
    \draw[line width=0.7,line cap=round] (0.25,0) to [bend left=10] (1,1);
    \draw[line width=0.8,line cap=round] (0,0.35) to [bend right=1] (0.23,0);
}}
\newcommand{\tikzxmark}{%
\tikz[scale=0.23] {
    \draw[line width=0.7,line cap=round] (0,0) to [bend left=6] (1,1);
    \draw[line width=0.7,line cap=round] (0.2,0.95) to [bend right=3] (0.8,0.05);
}}
\newcommand{\bourbaki}{%
\tikz[
  line cap=but,
  line join=round,
  x=.5em,
  very thick,
  y=1*(height("Z")-\pgflinewidth)*(1-sin(10)),
  rotate=-10,
  rounded corners=1.5pt,
]\draw (1, 0) -- (0, 0) -- (1, 1) -- (0, 1);
}
\newcommand{\myblue}   {blue!60!black}
\newcommand{\myred}    {red!70!black}
\newcommand{\mycolor}    {green}
\newlength{\modelwidth }
\newlength{\modelheight}
\newcommand{\modelfile}{}
\definecolor{newcolor}{rgb}{.8,.349,.1}
\journal{arXiv}
\begin{document} 
 \captionsetup{format=plain,labelfont=bf,font=small}

%\verso{J.A.L.B., T.F. \textit{et al.}}

\begin{frontmatter}

\title{Out-of-distributional risk bounds for neural operators with applications to the Helmholtz equation}

\author[1]{\small Jose Antonio Lara Benitez \corref{cor1}}
\cortext[cor1]{Corresponding author: 
  Email: \texttt{antonio.lara@rice.edu}}
  
\author[2]{\small Takashi Furuya \corref{cor2}}
\cortext[cor2]{Corresponding author: 
  Email: \texttt{takashi.furuya0101@gmail.com}}
  
\author[3]{ \small Florian Faucher}
\author[4]{ \small Anastasis Kratsios}
\author[5]{ \small Xavier Tricoche}
\author[1]{\small Maarten V. de Hoop}

\address[1]{ \small Rice University}
\address[2]{ \small Shimane University}
\address[3]{ \small Team Makutu TotalEnergies, Inria Bordeaux, University of Pau and Pays de l'Adour}
\address[4]{ \small McMaster University and the Vector Institute}
\address[5]{ \small Purdue University}

%\affil[1, 2, 4 ]{{\footnotesize Rice University,  Shimane University, Purdue University}}

%\affil[2]{{\footnotesize Shimane University}}

%\affil[3]{{\footnotesize Team Makutu, Inria Bordeaux, University of Pau and Pays de l'Adour.}}

%\affil[4]{{\footnotesize Purdue University}}

%\affil[a,b]{{\footnotesize Email: \texttt{antonio.lara@rice.edu}, \texttt{takashi.furuya0101@gmail.com}}}

%\affil[*]{{\footnotesize These two authors contributed equally to this work}}

\date{}

\begin{abstract}
\noindent Despite their remarkable success in approximating a wide range of operators defined by PDEs, existing \emph{neural operators} (NOs) do not necessarily perform well for all physics problems. We focus here on high-frequency waves to highlight possible shortcomings. To resolve these, we propose a subfamily of NOs enabling an enhanced empirical approximation of the nonlinear operator mapping wave speed to solution, or boundary values for the Helmholtz equation on a bounded domain. The latter operator is commonly referred to as the ''\emph{forward}’’ operator in the study of inverse problems. Our methodology draws inspiration from transformers and techniques such as stochastic depth. Our experiments reveal certain surprises in the generalization and the relevance of introducing stochastic depth. Our NOs show superior performance as compared with standard NOs, not only for testing within the training distribution but also for out-of-distribution scenarios. To delve into this observation, we offer an in-depth analysis of the Rademacher complexity associated with our modified models and prove an upper bound tied to their stochastic depth that existing NOs do not satisfy. Furthermore, we obtain a novel \emph{out-of-distribution} risk bound tailored to Gaussian measures on Banach spaces, again relating stochastic depth with the bound. We conclude by proposing a hypernetwork version of the subfamily of NOs as a surrogate model for the mentioned forward operator. 

\iffalse{ 

Despite their remarkable success in approximating certain operators defined by PDEs, existing \emph{neural operators} (NOs) perform poorly in a variety of problems of practical relevance. We document this shortcoming and propose a subfamily of NOs with an enhanced empirical approximation of the non-linear operator defined by the Helmholtz equation. Our methodology draws inspiration from transformer and techniques such as stochastic depth. The experiments reveal major surprises in the generalization and the impact of stochastic depth. The network demonstrates superior performance for wave propagation compared to traditional NOs, not only for data within the training distribution but also for \emph{out-of-distribution} scenarios. To delve into this observation, we offer an in-depth analysis of the Rademacher complexity associated with our modified models and prove an upper bound tied to their stochastic depth that existing NOs do not satisfy. Furthermore, we perform a novel out-of-distribution risk bound tailored for Gaussian measures on Banach spaces, again relating stochastic depth with the bound. We conclude by proposing a hypernetwork version of the subfamily as a surrogate model for the forward operator.
} 
\fi
\end{abstract}
\end{frontmatter}
%%%%%%%%%%%%%%%%%%%%%%%%%%%%%%%%%%%%%%%%%%%%%%%%%%%%%%%%%%%%%%%%%
\section{Introduction}\label{Introduction.}
     Data-driven approximation of operators is gaining momentum due to its potential to approximate operators over expensive numerical solvers at a fraction of the computational cost, particularly in the context of parametric partial differential equations (PDEs). This approach proves particularly advantageous in scenarios where constitutive laws are approximated, or only data are available. Once the model is fully-trained, the solution is, up to an approximation error, obtained by evaluating the neural network with restrictions on the input, i.e., the test data are drawn from the same or a sufficiently similar distribution as the training data.
Numerous architectures have been proposed in recent years, such as DeepONets \citep{Lu_2021, lu2022comprehensive}, PCA-Net \citep{SMAI-JCM_2021__7__121_0, hesthaven2018non}, PINNs \citep{RAISSI2019686, karniadakis2021physics}, and neural operators (NOs) \citep{li2020neural, kovachki2021neural, li2020fourier}. Among them, Fourier neural operators (FNOs) have gained widespread popularity. Indeed, they can efficiently compute the costly integral operator, they enjoy ''\textit{discretization invariance}''\footnote{In the sense of zero-shot super-resolution, that is, training in a coarse grid and testing in a finer grid.}, and are universal approximators, under the assumption of regularity and separability of spaces. FNOs or iterations thereof, \citep{brandstetter2022clifford, wen2022u, tapas2022Wavelet, cao2023lno}, have rapidly become the network of choice, finding applications in various domains \citep{pathak2022fourcastnet, guan2021fourier, yin2022learned, wen2022accelerating, li2022solving, grady2022towards, kurth2022fourcastnet}. 

FNOs have shown promising results in certain 2D PDE problems, e.g., \textit{incompressible Navier-Stokes equation}, \citep[Section 3.2]{kovachki2021universal}, and even some non-linear inverse problems, \citep{yang2021seismic, molinaro2023neural}. However, application to realistically complex large-scale problems remains an issue, despite some recent progress~\cite{grady2022towards}. NOs are the natural generalization of multilayer perceptron (MLPs) to functional spaces, and they share their limitations. 
For example, \citet{you2022learning} have shown that deep FNOs perform poorly on some nonlinear operators for PDEs, despite being theoretically universal \citep{kovachki2021universal}. These findings underscore the need for architectures that possess more desirable properties in implementation. Moreover, the increasing interest in enhancing or replacing traditional numerical methods has prompted a focus on understanding the generalization capabilities and training dynamics rather than solely relying on the approximation power of networks, e.g. \cite{marcati2023exponential, lanthaler2022nonlinear, lanthaler2022error, lanthaler2023nonlocal, deng2021convergence}. \par

In this paper we focus on \emph{the out-of-sample, or generalization}, performance of neural operators trained from finitely many noisy inputs. We consider neural operators of the form
\begin{equation}\label{Intro:main}
v_{\ell+1} = \sigma \circ ( W_{\ell} + \bk_\ell  + b_{\ell})\circ v_{\ell},
\end{equation} 
or our proposed network
\begin{equation} \label{Intro:proposed}
v_{\ell+1} = \big(\bid + \boldsymbol{\mathrm{X}}_{\ell}\, {f_\ell} \circ \mathrm{N} \big)\circ \big(\,\bid + \boldsymbol{\mathrm{X}}_{\ell}\,\sigma \circ ( \bk_\ell  + b_{\ell}) \circ \mathrm{N}\, \big)\circ v_\ell.
\end{equation} 
Here, $\bk v(x) = \int k(x,y) v(y) dy$ represents integral operators, where the kernel function $k_{i,j}$ is uniformly bounded for each point $x$ and $y$. This boundedness property allows us to establish theorems that hold regardless of the choice of basis expansions for $\bk$. $\boldsymbol{\mathrm{X}}_\ell$ are Bernoulli random variables, $\boldsymbol{\mathrm{X}}_\ell \sim \mathrm{Ber}(\mathrm{p}_\ell)$, acting as ''switches'', controlling the propagation of information within the network, and adding extra randomness in the training. The process of adding $\boldsymbol{\mathrm{X}}_\ell \sim \mathrm{Ber}(\mathrm{p}_\ell)$, such that $\mathrm{p}_\ell$ decreases with depth is known as \emph{stochastic depth} \citep{huang2016deep}. Finally, $f_\ell$ is a simple multilayer perceptron (MLP), $\sigma$ the activation, $\mathrm{N}$ a normalizer, and $\bid$ the identity operator, which we introduce formally in \cref{FNO+epsilon v2}. \par

The theory of generalization for neural operators is still in its early stages, with ongoing advancements in the field, as \citet{kim2022bounding}. However, these methods have primarily focused on finite-dimensional parameters, as they rely on established theorems within the statistical learning community. Nevertheless, the underlying theory can be extended to encompass a broader range of kernel functions, beyond those approximated by Fourier basis. In our work, we have extended the theory to a wider class of kernel functions and have avoided relying on the constraints of finite dimensionality. This allows for the consideration of alternative bases for expressing the integral operator, such as wavelet basis, spherical harmonics, and others. \par 

Additionally, the theory of generalization in neural networks encompasses their ability to handle perturbations in the underlying distribution, including out-of-distribution scenarios. While empirical and theoretical results for operator learning are relatively scarce and challenging to obtain, there are notable exceptions, such as the work by \citet[Sec 4.1.2]{de2023convergence} that focuses on learning linear operators from data. In our work, we make a further contribution to this area by investigating the robustness of the proposed network \eqref{Intro:proposed}, to changes in the input distribution. Empirically, we observe that the network exhibits robustness to such changes. Theoretically, we leverage properties from the theory of general Gaussian measures on Banach spaces and the duality of the Wasserstein $1$ distance to establish an upper bound on the network's robustness to a change of measure. It is important to note that the random variables in \eqref{Intro:proposed} play a significant role in controlling the bound, particularly as the depth of the networks increases. These findings shed light on the generalization capabilities of the networks and provide insights into their behavior beyond the training distribution. However, it is worth mentioning that our bounds rely on estimates of the Lipschitz constant, and those are not tight. Strictly speaking, further analysis is needed to fully understand the growth and provide a complete explanation of the observed out-of-distribution behavior.\par

Our proposed architecture modifications in \eqref{Intro:proposed} borrow ideas from transformers, in particular to the encoder part, whose layers can be described as
\begin{equation} \label{Intro:encoderTransformer}
v_{\ell+1} = \big(\bid + \boldsymbol{\mathrm{X}}_{\ell}\, {f_\ell} \circ \mathrm{N} \big)\circ \big(\,\bid + \boldsymbol{\mathrm{X}}_{\ell}\, \circ \mathrm{Attn} \, \big)\circ v_\ell, \quad \mathrm{Attn}(v_\ell) = \mathrm{softmax}\left( \mathrm{Const.}\,Q(v_\ell)K(v_\ell)^\top\right)V(v_\ell),
\end{equation} 
for $v_\ell\in \mathbb{R}^{n\times d}$, and $Q(v_\ell) = v_\ell W^{Q}$, $K(v_\ell) = v_\ell W^{K}$, $V(v_\ell) = v_\ell W^{V}$ for $W^{Q}, W^{K}, W^{V}\in \mathbb{R}^{d\times d}$. As we delve into the subject, we will discover how this approach grants us significant control over the complexity class within the family, while effectively bounding the out-of-distribution risk through stochastic depth for Gaussian measures. Additionally, it empowers us to leverage a proven network layout, which has consistently demonstrated promising empirical results across various domains. In recent years, there has been a shift towards the adoption of transformer-based architectures \citep{vaswani2017attention, devlin2018bert, dosovitskiy2020image, liu2021swin, Acciaio2022_GHT} throughout machine learning. These architectures, which include widely publicized models such as BERT and ChatGPT \citep{devlin2018bert,liu2023summary}, have shown remarkable success in various tasks, outperforming previous state-of-the-art models. \citet[Section 3.3]{kovachki2021neural} has identified a connection between transformers and neural operators, where self-attention can be viewed as a Monte Carlo approximation of a \textit{nonlinear integral operator}, showing that the underlying principles of these seemingly different architectures are linked. \citet{cao2021choose, kissas2022learning, li2022transformer} explored transformers for parametric PDEs. Despite the promising results for small-scale problems, using transformers to approximate operators is hindered by the inherent scalability issue of self-attention\footnote{In transformer applications, datasets are massive, but individual data samples are relatively small compared to those in PDE-related problems, particularly in $3$D cases. In areas like vision, attention is typically applied to image patches instead of at a pixel-wise level to reduce computational cost, e.g. \citep{dosovitskiy2020image}}. Attention operations have a cost of $\mathcal{O}(n^2)$, making them prohibitively expensive for realistic $3$D inputs. Incorporating workarounds like shifted windows in visual transformers, as seen in \citep{liu2021swin}, can be beneficial for certain applications. However, the absence of a solid theoretical foundation in these approaches makes it challenging to analyze the convergence of the architecture, particularly in scientific computing scenarios. In contrast, the \textit{convolutional integral operator} in FNOs can be efficiently estimated by the Fast Fourier Transform (FFT) with a computational cost of $\mathcal{O}(n \log n)$. Furthermore, the adaptive Fourier neural operator (AFNO) \citep{guibas2021adaptive} presents a promising approach to address the scalability limitations of transformers.  Nonetheless, current applications have been primarily limited to vision, and further research is needed to explore these architectures in scientific computing. 

Extensive empirical evidence has shown that design choices in transformers can yield significant improvements in the capacity of network families, training stability, generalization performance to in-distribution-data, and sometimes out-of-distribution, \citep{hendrycks2020pretrained}. This has resulted in a growing trend in various fields of machine learning to adopt \enquote{\emph{transformized}} architectures \citep{tolstikhin2021mlp, lee2021fnet, yu2022metaformer, martins2021infty, rao2021global, liu2022convnet}. The work of \citet{yu2022metaformer}, abstracts the self-attention of the transformer leading to \textit{metaformer} architecture. Here we take advantage of the abstracted layout of this approach to overcome limitations associated with traditional self-attention in terms of input's dimension. Further, this opens up possibilities for designing transformer-based models that can effectively tackle problems arising in scientific computing on an ad-hoc basis.  \par

\subsection*{Our Contributions}
\begingroup
  \setlength\itemsep{1.mm}
\begin{enumerate}[(a)]
\item We introduce modifications to neural operators to adopt a transformer-like architecture, drawing inspiration from works such as \citep{liu2022convnet, lee2021fnet,  yu2022metaformer}. The resulting network (\cref{{Proposed Method}}) is referred to as $\FNONeXt$ and $\NONeXt$, respectively, for experiments and theory, where the $\varepsilon$ indicates that ''minor'' changes are incorporated, and the $s$ stands for sequential, as we preserve the arrangement: non-local (integral operator layer as ''token mixer''), and local (MLP layer as ''feature mixing'') in transformers (contrasting with traditional NOs). 
\begin{comment}In Remark \ref{remark:PDOs}, it is observed that this arrangement has similarities with the computational approximation of pseudo-differential operators ($\Psi$DOs), if we use Fourier basis.\end{comment}
\item We construct a benchmark for the time-harmonic wave equation according to \citep{faucher2020full}. We observe that modifying FNOs towards $\FNONeXt$ leads to a smaller test loss in the Helmholtz equation (\cref{experiment: multiple_initialization:gelu}) \iffalse{In addition, we tested the architecture in the $2$D benchmarks in \citet{li2020fourier}, }\fi for data \emph{in-distribution}. \iffalse{ In the process, we gradually modify the architecture to understand the impact of the design changes, showing that our findings are consistent among multiple realizations of the training path and initialization(s) of the networks. Each step leading toward the \textit{transformer-look-like-version} of the Neural Operator has a gradual increase in the performance gap with (vanilla)-neural operators.}\fi
\begin{comment}
\item Building on the tools in \citep{li2018visualizing} we provide a visualization of the loss landscapes of FNOs, $\FNONeXt$, and the architectures in between. To provide a qualitative analysis of the loss-landscape and the impacts on the training trajectory and the flatness of the local minima. 
\end{comment}
\item  We provide an exhaustive empirical study of the robustness of the trained networks for perturbation in the data distribution. We show that the proposed architecture is able to generalize to \emph{out-of distribution} input, while earlier networks are unable to. Remarkably, the proposed network is able to obtain \textit{reasonable} wave propagation from an \emph{anisotropic} covariance operator, change in the input's range and roughness coefficient, despite being only trained on smooth Gaussian random fields with Whittle–Mat\'ern \emph{isotropic} covariance, and fixed wave speed range (\cref{section:experimental:OOD}). 

\item We propose a hypernetwork version of the architecture, as a surrogate model to effectively approximate the forward operator of the Helmholtz equation (\cref{section:StR}). That is, $(f, c) \to \pressure^{f} \,\vert_{\Sigma}$, where $\pressure^{f}\, \vert_\Sigma$ is the restriction of the wavefield at receivers location for a given source, $f$.

\item  We give theoretical guarantees supporting the out-of-distribution performance of the $\NONeXt$ and $\NONeXt$v2 \eqref{Intro:proposed} models in the case where the inputs are sampled from a centered Gaussian measure $\mu_X$ on various Banach spaces (\cref{section:theory_OOD}). We find that the out-of-sample generalization of both neural operator models is described by the metric entropy of the unit Cameron-Martin space associated with $\mu_X$.  The analysis extends the transport-theoretic tools for deriving risk-bounds introduced in \cite{hou2022instance} and merges it with small-ball estimates for Gaussian processes on Banach spaces, e.g.~\cite{li1999approximation}. 

\item We offer a novel analysis of the \emph{Rademacher complexity} of NOs and \emph{our proposed architecture} \eqref{Intro:proposed} (\cref{GEB}). For NO, our analysis is general in the sense that it applies independently of the discretization and of the choice of basis in the integral operator\footnote{In particular, Fourier basis corresponding to FNOs.}, contrasting with \citep{kim2022bounding}. In addition, our work not only extends the previous results to functional space but also provides a better bound on the Rademacher complexity with order $\mathcal{O}\left(1/n^{\frac{1}{\hat{d}+1}}\right)$ ($n$ is the number of training data, and $\hat{d}$ is the doubling dimension of $D \times D$, where $D$ is the spatial domain), whilst $\mathcal{O}(1)$ in \citep{kim2022bounding}.  For the \emph{Rademacher complexity} of \eqref{Intro:proposed} our analysis is tied to \emph{stochastic depth}. We show that stochastic depth controls the expected Rademacher complexity, irrespective of the number of layers. For instance, if $\boldsymbol{\mathrm{X}}_\ell\sim \mathrm{Ber}(p_\ell)$, and $p_\ell =\mathcal{O}(\ell^{-(1+\varepsilon)})$, where $\ell$ denotes the layer's number, and $\varepsilon>0$, the bound is uniform regardless of $\ell \to \infty$\footnote{similar conclusion is obtained in theoretical analysis of OOD.}. As a consequence, we show that the upper bound of the $\NONeXt$ can always be controlled with depth, while the upper bound of the other neural operators diverges.
 
\end{enumerate}
\endgroup

\iffalse{
\begin{center}
\begin{tikzpicture}
\node[draw, inner sep=0.2cm, rounded corners] (text) at (0,0) {%
\begin{minipage}{0.95\textwidth}
We expect the \emph{ideas in this paper} may influence the evolution of neural operators, and operator learning in general towards architectures with more desirable properties. These may include the preservation of universality, the potential to increase capacity without succumbing to overfitting, improved generalization within the distribution, and a degree of resilience to out-of-distribution inputs, among others.
\end{minipage}%
};
\end{tikzpicture}    
\end{center}
}\fi

\iffalse{
We expect the ideas in this paper may influence the evolution of neural operators towards architectures with more desirable properties, for instance, by preserving universality and reducing the test loss with the same parameter size as (traditional) neural operator counterparts, and some robustness to out-of-distribution input.}
\fi
%% getting a better bang for your buck.

%%%%%%%%%%%%%%%%%%%%%%%%%%%%%%%%%%%%%%%%%%%%%%%%%%%%%%%%%%%%%%%%%
%%%%%%%%%%%%%%%%%%%%%%% For testing format 

\section{Proposed networks: \enquote{Metaforming the neural operator}}\label{Proposed Method}
    In this section, we introduce the architecture known as $\NONeXt$ with stochastic depth. This architecture is designed to enhance the generalization performance and capabilities of neural operators. Here, we provide a comprehensive description of the layers that constitute $\NONeXt$, which are briefly outlined in \cref{Intro:proposed}. However, to understand the impact of different architectural changes, we gradually modify the NOs until obtaining the $\NONeXt$ with stochastic depth. Throughout the next sections, we provide both numerical evidence and theoretical reasoning to support our choices.
 
\paragraph{Neural Operator: standard structure}
We briefly review  NOs \citep{kovachki2021neural, kovachki2021universal}. Let $\bk_\ell$ be a linear integral operator (non-local), see Definition~\ref{IO},
and $W_{\ell}$ be the weight matrix (local). 
The standard layer structure is
\begin{equation}\label{eq:arch:NO}
v_{\ell+1} = \sigma \circ ( W_{\ell} + \bk_\ell  + b_{\ell})\circ v_{\ell},
\end{equation}
$(\ell = 1 ,\ldots, L)$ where $\sigma$ is an element-wise nonlinear activation function, and $b_{\ell}$ is a bias. For $\ell=1$, we have $v_1 = \br(a)$, i.e., the parameter $a$ is lifted by the map $\br$, and finally, the output is projected back to the corresponding space by $\bq$, forming the solution field, $u =  \bq (v_{L+1})$. We refer to Appendix~\ref{NO} for additional explanations.
    \paragraph{Sequential neural operators ($\MNO$)}\label{MNO}
Transformers \citep{yu2022metaformer, vaswani2017attention} adopt a compositional structure, wherein non-local and local layers are arranged sequentially instead of combining the operations within a single layer. The so-called token mixer (e.g. attention) precedes a MLP acting on feature space; see Figure~\ref{fig:MFNO}. 
This structure bears resemblance to the one described by \citet[Section 2.5.1]{kovachki2021universal} for $1$-layer NN, and FNOs (the authors of \citep{kovachki2021universal} observed that universality is preserved, so it can be expanded to MLP architecture with $M$-layers, see \cref{sNO: universal}). 

\noindent We introduce the $\multiplicative$ neural operator ($\MNO$). 
Let $f_\ell$ be a MLP with $M$-layers (local), \citep[Ch. 6]{goodfellow2016deep}. Then, 
\begin{subequations}\label{eq:MNO} 
\begin{empheq}[left={\empheqlbrace}]{align}
w'_\ell &= \sigma \circ ( \bk_\ell  + b_{\ell})\, \circ v_{\ell}, \label{eq:MFNO:fourierLayer}\\ 
v_{\ell+1} &=  {f_\ell} \circ w'_{\ell}, \label{eq:MFNO:MLP}& \end{empheq}
\end{subequations}
$(\ell = 1 ,\ldots, L)$. See, \cref{fig:MFNO}. If $\bk$ is convolutional, we find a significant improvement over the relative $\Lp$-norm compared to traditional FNOs for similar parameter count, see \cref{table:Models_Size} and \cref{fig:boxplot}.

\begin{figure}[!ht]
    \centering % Customize caption style
    \begin{tikzpicture}[scale=0.7]
    % Define the nodes
    \node[draw,trapezium,trapezium left angle=60,trapezium right angle=60,fill=blue!20,rotate=90,minimum width=.5cm,minimum height=.75cm,text width=1cm,text centered, rounded corners] (trap1) at (-3,0) {$\mathcal{R}$};
    \node[draw,trapezium,trapezium left angle=60,trapezium right angle=60,fill=blue!20,rotate=270, minimum width=.5cm,minimum height=.75cm,text width=1cm,text centered, rounded corners] (trap2) at (11.2,0) {$\mathcal{Q}$};
    %nonlocal
    \node[draw,rectangle,fill=orange!25,rotate=90,minimum width=0.5cm,minimum height=1.cm,text width=2.6cm,text centered, rounded corners] (rect1) at (0,0) {$\sigma \circ \left(\bk_1 + b_1\right)$};
    \node[draw,rectangle,fill=orange!25,rotate=90,minimum width=0.5cm,minimum height=1.cm,text width=2.6cm,text centered, rounded corners] (rect2) at (6,0) {$\sigma \circ \left(\bk_L + b_L\right)$};
    %MLPs 
    \node[draw,rectangle,fill=red!25,rotate=90,minimum width=1.cm,minimum height=0.5cm,text width=1cm,text centered, rounded corners] (rect_mlp1) at (2.5,0) {$f_1$};
    \node[draw,rectangle,fill=red!25,rotate=90,minimum width=1.cm,minimum height=0.5cm,text width=1cm,text centered, rounded corners] (rect_mlp2) at (8.5,0) {$f_L$};
    %circles
    \node[draw,circle,fill=gray!20,rotate=0,minimum width=0.75cm,text width=0.5cm,text centered] (circ0) at (-5,0) {$a$}; 
    \node[draw,circle,fill=gray!20,rotate=0,minimum width=0.75cm,text width=0.5cm,text centered] (circ1) at (13.2,0) {$u$};
    
    % Add a dashed square that covers rect1 and rect_mlp1
    \draw[dashed, line width=1.5pt, rounded corners, rotate=90] ($(rect1.north west) + (-0.2, 0.35)$) rectangle ($(rect_mlp1.south east) + (1.4,-0.2)$);
    \draw[dashed, line width=1.5pt, rounded corners, rotate=90] ($(rect2.north west) + (-0.2, 0.35)$) rectangle ($(rect_mlp2.south east) + (1.4,-0.2)$);
    % Draw the arrows
    \draw[ ->, line width=1.5pt] (circ0) -- (trap1);
    \draw[ ->, line width=1.5pt] (trap1) -- (rect1) node[midway,above]{$v_{1}$};
    \draw[ ->, line width=1.5pt] (rect1) -- (rect_mlp1) node[midway,above] {$w_{2}$};
    \draw[dashed, ->, line width=1.5pt] (rect_mlp1) -- (rect2) node[midway,above] {$v_{2}$};
    \draw[->, line width=1.5pt] (rect2) -- (rect_mlp2) node[midway,above] {$w_{L+1}$};
    \draw[->, line width=1.5pt] (rect_mlp2) -- (trap2) node[midway,above] {$v_{L+1}$};
    \draw[->, line width=1.5pt] (trap2) -- (circ1);
    \end{tikzpicture}
        \caption{$\MNO$ is called sequential, as the integral operator is followed by a MLP in a sequential manner. For comparison with the NO, see \cref{fig:NO}.}
        \label{fig:MFNO}
\end{figure}
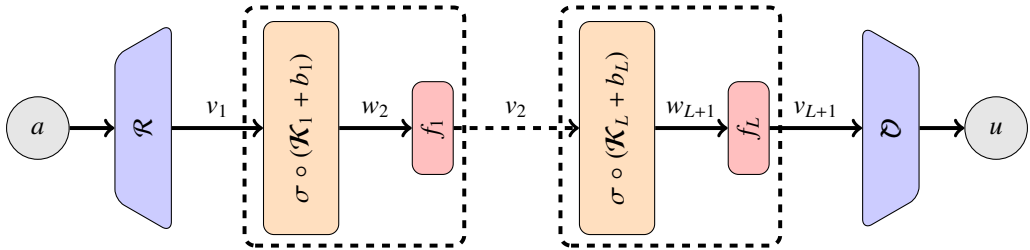 

    \paragraph{$\NONeXt$: $\MNO$ with the identity map--skip connection--}
We now incorporate the addition of the identity map on the $\MNO$ (in machine learning's jargon, this is referred to as a \emph{skip connection}). The use of the symbol $\varepsilon$ in the name is merely to signify that minor changes have been made to the $\MNO$ architecture.

Two variants can be considered: one without, and one with stochastic depth \citep{huang2016deep}, that will allow us to access deep versions of $\NONeXt$. For the sake of brevity, sometimes \emph{we may refer to $\NONeXt$ without stochastic depth as version $1$, and $\NONeXt$ with stochastic depth as version $2$}, in figures, or tables.

\paragraph{$\NONeXt$ without stochastic depth}\label{FNO+epsilon}
Incorporating skip connections, that is $\MNO + \boldsymbol{\mathrm{I}_{\mathrm{d}}}$, lead us to \cref{eq:FNO+epsilon}.
% In addition of the multiplicative structure, we incorporate the skip connection, and a layer normalization. 
The architecture can be seen as an instance of the metaformer \citep{yu2022metaformer}; whence, 
the token mixer is replaced by an integral operator, and the network is extended to functional space \footnote{This has not been done in the previously mentioned paper.}. Using a similar notation, we have
\begin{subequations} \label{eq:FNO+epsilon} \begin{empheq}[left={\empheqlbrace}]{align}
w'_\ell &= \big(\,\bid +\sigma \circ ( \bk_\ell  + b_{\ell}) \circ \mathrm{N}\, \big)\circ v_\ell, \\ 
v_{\ell+1} &= \big(\bid + {f_\ell} \circ \mathrm{N} \big)\circ w'_\ell,\quad
\end{empheq} 
\end{subequations}
$(\ell = 1 ,\ldots, L)$ where $\bid$ is the identity operator, and $\mathrm{N}$ is the layer normalization (or any other normalization). 

\iffalse{
\begin{figure*}[!ht]
    \centering
    \includegraphics[width=0.7\textwidth]{Figures/FnoNext.png}
    \caption{\small{\textbf{($\NONeXt$) version 1}. It is a modification based on the $\multiplicative$ structure in where we incorporate layer normalization and skip connection as in modern Transformers. For comparison with the NO, see \cref{fig:NO}.}}
    \label{fig:FNO+epsilon}
\end{figure*}
}
\fi

\begin{figure*}[!ht]
    \centering
    \begin{tikzpicture}[scale=0.9]
% Define the nodes
\node[draw,trapezium,trapezium left angle=60,trapezium right angle=60,fill=blue!20,rotate=90,minimum width=.5cm,minimum height=.75cm,text width=1cm,text centered, rounded corners] (trap1) at (-2,0) {$\mathcal{R}$};
\node[draw,trapezium,trapezium left angle=60,trapezium right angle=60,fill=blue!20,rotate=270, minimum width=.5cm,minimum height=.75cm,text width=1cm,text centered, rounded corners] (trap2) at (9,0) {$\mathcal{Q}$};
    
%nonlocal
\node[draw,rectangle,fill=orange!25,rotate=90,minimum width=2.8cm,minimum height=1.cm,text width=2.5cm,text centered, rounded corners] (rect1) at (0,0) {\small$\sigma \circ \left(\bk_1 + b_1\right)\circ \mathrm{N}$};
\node[draw,rectangle,fill=orange!25,rotate=90,minimum width=2.8cm,minimum height=1.cm,text width=2.5cm,text centered, rounded corners] (rect2) at (5,0) {\small $\sigma\circ \left(\bk_L + b_L\right)\circ \mathrm{N}$};

%identity 
\node[draw,rectangle,fill=pink!20,rotate=90,minimum width=1.cm,minimum height=0.5cm,text width=0.5cm,text centered, rounded corners] (Id1) at (0,2.2) {$\boldsymbol{\mathrm{I_d}}$};
\node[draw,rectangle,fill=pink!20,rotate=90,minimum width=1.cm,minimum height=0.5cm,text width=0.5cm,text centered, rounded corners] (Id2) at (2,2.2) {$\boldsymbol{\mathrm{I_d}}$};

\node[draw,rectangle,fill=pink!20,rotate=90,minimum width=1.cm,minimum height=0.5cm,text width=0.5cm,text centered, rounded corners] (Id3) at (5,2.2) {$\boldsymbol{\mathrm{I_d}}$};
\node[draw,rectangle,fill=pink!20,rotate=90,minimum width=1.cm,minimum height=0.5cm,text width=0.5cm,text centered, rounded corners] (Id4) at (7,2.2) {$\boldsymbol{\mathrm{I_d}}$};

%MLPs
\node[draw,rectangle,fill=red!25,rotate=90,minimum width=1.cm,minimum height=0.5cm,text width=1cm,text centered, rounded corners] (rect_mlp1) at (2,0) {$f_1$};
\node[draw,rectangle,fill=red!25,rotate=90,minimum width=1.cm,minimum height=0.5cm,text width=1cm,text centered, rounded corners] (rect_mlp2) at (7,0) {$f_L$};

%circles
\node[draw,circle,fill=gray!20,rotate=0,minimum width=0.75cm,text width=0.5cm,text centered] (circ0) at (-4,0) {$a$};
\node[draw,circle,fill=gray!20,rotate=0,minimum width=0.75cm,text width=0.5cm,text centered] (circ1) at (11,0) {$u$};

% Add a dashed square that covers rect1 and rect_mlp1
\draw[dashed, line width=1.5pt, rounded corners, rotate=90] ($(rect1.north west) + (-0.4, 0.65)$) rectangle ($(rect_mlp1.south east) + (2.2,-0.85)$);
\draw[dashed, line width=1.5pt, rounded corners, rotate=90] ($(rect2.north west) + (-0.4, 0.65)$) rectangle ($(rect_mlp2.south east) + (2.2,-0.85)$);

% Draw the arrows
\draw[ ->, line width=1.5pt] (circ0) -- (trap1);
\draw[ ->, line width=1.5pt] (trap1) -- (rect1);
\draw[ ->, line width=1.5pt] (rect1) -- (rect_mlp1);
\draw[dashed, ->, line width=1.5pt] (rect_mlp1) -- (rect2);
\draw[ ->, line width=1.5pt] (rect2) -- (rect_mlp2);
\draw[ ->, line width=1.5pt] (rect_mlp2) -- (trap2);
\draw[->, line width=1.5pt] (trap2) -- (circ1);
% Add the mid-point arrow
\draw[ ->, line width=1.5pt]  ($(trap1.east)!0.5!(rect1.west)+ (0, 0.1)$) |- node {} (Id1);
\draw[ ->, line width=1.5pt]  (Id1) -| ($(rect1.east)!0.4!(rect_mlp1.west)+ (0, -0.68)$);
\draw[ ->, line width=1.5pt]  ($(rect1.east)!0.6!(rect_mlp1.west)+ (0, -0.2)$) |- node {} (Id2);
\draw[ ->, line width=1.5pt]  (Id2) -| node {} ($(rect_mlp1.east)!0.3!(rect2.west)+ (0, 0)$);
% Add the 2nd mid-point arrow
\draw[ ->, line width=1.5pt]  ($(rect_mlp1.east)!0.7!(rect2.west)+ (0, 0.86)$) |- node {} (Id3);
\draw[ ->, line width=1.5pt]  (Id3) -| ($(rect2.east)!0.4!(rect_mlp2.west)+ (0, -0.68)$);
\draw[ ->, line width=1.5pt]  ($(rect2.east)!0.6!(rect_mlp2.west)+ (0, -0.2)$) |- node {} (Id4);
\draw[ ->, line width=1.5pt]  (Id4) -| node {} ($(rect_mlp2.east)!0.4!(trap2.west)+ (0, -0.9)$);
\end{tikzpicture}
    \caption{$\NONeXt$ without stochastic depth. It is a modification based on the $\multiplicative$ structure in where we incorporate layer normalization and skip connection as in transformers. For comparison with the NO, see \cref{fig:NO}.}
    \label{fig:FNO+epsilon}
\end{figure*}

If $\bk$ is a convolutional-type kernel, the architecture has similarities with the FNet introduced in \citet{lee2021fnet} though these connections have not been explored in the context of parametric PDEs. The addition of skip connection in the FNOs architecture has been previously investigated in the work of \citet{you2022learning}. However, the specific $\multiplicative$ structure used in here is not presented in the previous work. To provide a comprehensive analysis, we include the ResNet version of FNO in the ablation test (see \cref{experiment:ablation}) to evaluate its performance alongside the other described architectures. It is worth noting that similarities of the skip connection in the work of \citet{you2022learning} can also be drawn with $\FNONeXt$. For example, the skip connection can also be interpreted as unrolling Newton's method, see \citep{Kaltenbacher, NakamuraPotthast, Bakushinsky}. \par 

In comparing $\FNONeXt$ with NOs and $\MNO$s, we observe improvements in terms of loss and wavefield prediction across various settings (see \crefrange{OOD:table1}{OOD:table6}).

\paragraph{$\NONeXt$ with stochastic depth} \label{FNO+epsilon v2}
Despite the fact that a neural architecture is theoretically universal, in practice, the parameters are updated using gradient-based methods that cannot exhaustively search the parameter space. It is, therefore, necessary to consider the limitations of the optimization algorithm and the training data, both of which may render the model non-universal in practice. 

One possible approach to address this challenge is to enable the exploration of the optimization algorithm. \citet{huang2016deep} introduced the concept of \textit{stochastic depth}, which involves randomly dropping entire layers of the network using Bernoulli RVs\footnote{RVs refers to Random Variables.}. Practitioners have used this approach to facilitate the efficient training of large models. We conjecture that it also enables further exploration, which intuitively allows the algorithm to find better local minima (this procedure is in the spirit of an \textit{adaptive rejection sampling}). We adopt this technique in the final network design.
\begin{subequations} \label{eq:FNO+epsilon2} 
\begin{empheq}[left={\empheqlbrace}]{align}
w'_\ell &= \big(\,\bid + \boldsymbol{\mathrm{X}}_{\ell}\,\sigma \circ ( \bk_\ell  + b_{\ell}) \circ \mathrm{N}\, \big)\circ v_\ell, \\ 
v_{\ell+1} &= \big(\bid + \boldsymbol{\mathrm{X}}_{\ell}\, {f_\ell} \circ \mathrm{N} \big)\circ w'_\ell,\quad  
\end{empheq} 
\end{subequations}
$(\ell = 1 ,\ldots, L)$, $\boldsymbol{\mathrm{X}}_{\ell}$ is a Bernoulli RV, such that $\prob\{\boldsymbol{\mathrm{X}}_{\ell}  =  1 \} = p_\ell$, and $\prob\{\boldsymbol{\mathrm{X}}_{\ell}  = 0\} = 1-p_\ell$ for $p_\ell \in [0,1]$, and 
$p_1 = 1$, $p_{\ell+1}\le p_\ell$. $\mathrm{N}$ is the layer normalization (or any other normalization). In \cref{thrm:Risk_Bound_NONEXTv2} and \cref{GEB for multiplicative NO}, we shall show the relation of RVs in the generalization error bound (\textit{in-distribution} and \textit{out-of-distribution}).

%This has been previously used in Swin Transformers, \citep{liu2021swin}, and 
%ConvNeXt \citep{liu2022convnet}, among other architectures, with relative success in training very deep neural networks, 
%avoiding some of the usual issues associated with depth. 
%Lastly, following \citet{liu2022convnet}, the epochs in the training are extended.
    %\input{Architecture/4.tex}
%%%%%%%%%%%%%%%%%%%%%%%%%%%%%%%%%%%%%%%%%%%%%%%%%%%%%%%%%%%%%%%%%
%\section{Related Works}\label{Related Works}

\section{Parametric time-harmonic wave equations, forward operator and data generation}\label{modeling_data}
    Here we present a comprehensive overview of the coefficient to solution map associated with the Helmholtz equation, as well as the corresponding forward operator. Additionally, we outline the step-by-step procedure for generating the dataset and the guarantees in place to ensure: (a) independent realizations of the wave speed, and (b) sufficient regularity \footnote{nonnegative Sobolev spaces.} in accordance with the theory of neural operators.

% =======================================
\subsection{Time-harmonic wave equations}
% =======================================

We consider the propagation of time-harmonic acoustic 
waves for two dimensional domain $D \subset \mathbb{R}^2$.
The waves are given by the (scalar) pressure field $\pressure$ 
and (vector) particle velocity $\boldsymbol{v}$ solutions 
to \citep{Faucher2020adjoint,martin2021time}
\begin{subequations} \label{eq:Euler} \begin{empheq}[left={\empheqlbrace}]{align}
  -\ii \omega \rho(\bx) \, \velocity(\bx,\omega) \,- \, 
   \nabla \pressure(\bx,\omega) \,=\, 0 \, , \quad \text{in $D$} \,, \\ 
  -\dfrac{\ii\omega}{\kappa(\bx)} \pressure(x,\omega) \,+\, \nabla \cdot \velocity(\bx,\omega) \,=\, f(\bx,\omega)\, , \quad \text{in $D$} \,,
\end{empheq} 
\end{subequations}
where $f$ is the time-harmonic source of angular 
frequency $\omega$, $\rho$ is the density and $\kappa$ 
the bulk modulus. 
The boundary of the domain $\partial D=\Gamma_1 \,\cup \,\Gamma_2$ 
is separated into two, following a geophysical configuration: a 
free-surface condition is imposed at the surface $\Gamma_1$ (that is
the interface between the medium and the air), while absorbing boundary
conditions \citep{engquist1977absorbing} are imposed elsewhere 
(that is, to truncate the numerical domain), see Figure~\ref{fig:boundary}. 
These conditions correspond to
\begin{subequations} \label{eq:time-harmonic:bc}  \begin{align}
\pressure(\bx,\omega) \,=\, 0\,, & \qquad \text{on $\Gamma_1$ (Dirichlet boundary condition),} \\
\left( \partial_{\nu} - \dfrac{i \omega}{c(\bx)} \right) \pressure(\bx,\omega) = 0\,, & \qquad \text{on $\Gamma_2$ (absorbing boundary conditions).}
\end{align}\end{subequations}

Upon assuming constant density $\rho$, Problem~\eqref{eq:Euler} 
can be rewritten as the Helmholtz equation (see \citet[Remark~1]{faucher2020full}),
\begin{equation}
\label{eq:Helmholtz}
 -\left( \Delta \,+\, \dfrac{\omega^2}{c(\bx)^2} \right )\,  \pressure(\bx,\omega)
    \,=\, -\mathrm{i}\omega\rho \, f(\bx,\omega)\,,
\end{equation}
where $c$ is the wave speed,
\begin{equation}
  c(\bx) = \sqrt{\dfrac{\kappa(\bx)}{\rho(\bx)}} \, .
\end{equation}

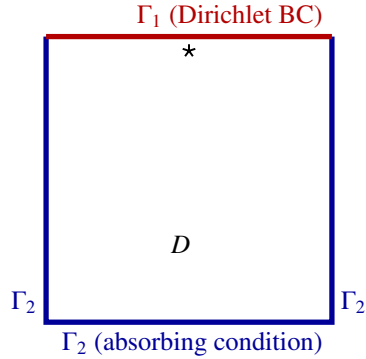
\begin{figure}[!ht] \centering
\begin{tikzpicture}[scale=1]
  \pgfmathsetmacro{\wid}{3.8} 
  \pgfmathsetmacro{\hei}{3.8}
  %% coordinates 
  \coordinate (r1)   at (  0.,  0.);  
  \coordinate (r2)   at (\wid,  0.);
  \coordinate (r3)   at (\wid,\hei); 
  \coordinate (r4)   at (  0.,\hei); 
  \draw[line width=2,\myblue]  (r4) to (r1) to (r2) to (r3);  
  \draw[line width=2,\myred]   (r3) to (r4);  
  \node[anchor=south east,\myred ,yshift=0mm,xshift=0.0em] at (r3) {$\Gamma_1$ (Dirichlet BC)};
  \node[anchor=north east,\myblue,yshift=0mm,xshift=0.0em] at (r2) {$\Gamma_2$ (absorbing condition)};
  \node[anchor=south east,\myblue,yshift=0mm,xshift=0.0em, rotate=0] at (r1) {$\Gamma_2$};
  \node[anchor=south west,\myblue,yshift=0mm,xshift=0.0em, rotate=0] at (r2) {$\Gamma_2$};
  \node[anchor=south east,black  ,yshift=8mm,xshift=-5em, rotate=0] at (r2) {$D$};  

  \draw[mark=star,mark size=2.50,mark options={color=black,line width=0.75}] plot[] 
        coordinates{(.5*\wid,0.95*\hei)};
%%\draw[line width=1,densely dashed](0.0, 0.85*\hei) to (\wid, 0.85*\hei);  
\end{tikzpicture}
    \caption{Illustration of domain $D$: 
    Dirichlet boundary condition is imposed on the the top (red line, $\Gamma_1$),
    while absorbing absorbing boundary conditions are imposed elsewhere
    (blue line, $\Gamma_2$).
    The source ($\star$) is typically positioned near surface.
    %  and the receivers (dashed line) slightly below.
    }
    \label{fig:boundary}
\end{figure}

\subsection{Wave speed to solution map, $\Gcal : c \mapsto \pressure$}

In the first experiment, the source $f$ is fixed, as well as the frequency $\omega$. The operator, $\Gcal$ is defined as a mapping from the wave speed model $c$ to the associated wavefield $\pressure$, \cref{figure:cp-GRF:experiment3}. 
That is, it gives the solution to the wave equation \cref{eq:Helmholtz} with boundary conditions \cref{eq:time-harmonic:bc} for a given physical model $c$ in the entire domain $D$. $c \mapsto \Gcal(c) = \pressure$. 
See \cref{figure:cp-GRF:experiment3} for an illustration of the operator when $c$ is a realization of a Gaussian random field.
%----------------------------------------
% visualization of the operator 
%--------------------------------------=

\setlength{\modelwidth} {4.10cm}
\setlength{\modelheight}{4.10cm}
% ---------------------------

\begin{figure}[ht!]\centering
\pgfmathsetmacro{\xmingb}{0}  \pgfmathsetmacro{\xmaxgb}{4.810}
\pgfmathsetmacro{\zmingb}{0}  \pgfmathsetmacro{\zmaxgb}{4.810}
\pgfmathsetmacro{\xminloc}{0} \pgfmathsetmacro{\xmaxloc}{4.810}
\pgfmathsetmacro{\zminloc}{0} \pgfmathsetmacro{\zmaxloc}{4.810}
\pgfmathsetmacro{\steph}{0.85}

\graphicspath{{Figures/dataset/GRF/}}

\begin{tikzpicture}

%\tikzexternaldisable

 ===============================================================================
\renewcommand{\modelfile}{wavespeed_06_scale1500to5000}
\pgfmathsetmacro{\cmin}{1.5} \pgfmathsetmacro{\cmax}{5}
% ===============================================================================
\hspace*{1em}\begin{axis}[xshift=1\modelwidth,
  width=\modelwidth, height=\modelheight,
  axis on top, separate axis lines,
  xmin=\xminloc, xmax=\xmaxloc, xlabel={$x$   (\si{\kilo\meter})},
  ymin=\zminloc, ymax=\zmaxloc, ylabel={depth (\si{\kilo\meter})}, y dir=reverse,
  xtick={}, xticklabels={,,},xlabel={\tiny wave speed model $c$ (\si{\km\per\second})},
  xticklabel pos=right,
  ytick={}, yticklabels={,,},ylabel={},
  xmajorticks=false,
  ymajorticks=false,
  x label style={xshift=-0.0cm, yshift=-0.05cm}, 
  y label style={xshift= 0.0cm, yshift=-0.00cm},
  colormap/jet,colorbar,colorbar style={title={},
               title style={yshift=0mm, xshift=3mm},
  width=.10cm, xshift=-0.5em},
  point meta min=\cmin,point meta max=\cmax,
  % style of the axis and size -------------------------------------
  label style={font=\tiny},
  tick label style={font=\tiny},
  legend style={font=\tiny\selectfont},
]
\addplot [forget plot]  graphics [xmin=\xmingb,xmax=\xmaxgb,ymin=\zmingb,ymax=\zmaxgb] {{\modelfile}.png};
\end{axis} 
 ===============================================================================

\draw[line width=2,black,->]  (22.8em,1.3em) to node[draw,rectangle,black,anchor=south,text width=9em,align=center,line width=1pt,yshift=0.50em]{from $c$ to $\pressure$ solving \cref{eq:Helmholtz}} (29.4em,1.3 em);

\pgfmathsetmacro{\cmin}{-1} \pgfmathsetmacro{\cmax}{1}

% ===============================================================================
\renewcommand{\modelfile}{15Hz_cp6_wavefield_scale1e-1_real_ref}
% ===============================================================================

\begin{axis}[xshift=2.75\modelwidth,
  width=\modelwidth, height=\modelheight,
  axis on top, separate axis lines,
  xmin=\xminloc, xmax=\xmaxloc, xlabel={$x$   (\si{\kilo\meter})},
  ymin=\zminloc, ymax=\zmaxloc, ylabel={depth (\si{\kilo\meter})}, y dir=reverse,
  xmajorticks=false,
  ymajorticks=false,
  xticklabel pos=right,
  xtick={}, xticklabels={,,},xlabel={Real part of $\pressure$},
  ytick={}, yticklabels={,,},ylabel={},
  x label style={xshift=-0.0cm, yshift=-0.00cm}, 
  y label style={xshift= 0.0cm, yshift=-0.00cm},
  % colormap/jet,colorbar,colorbar style={title={\tiny $p$},
  %              title style={yshift=-2mm, xshift=0mm},
  % width=.10cm, xshift=-0.70em},
  % point meta min=\cmin,point meta max=\cmax,
  % % style of the axis and size -------------------------------------
  label style={font=\tiny},
  tick label style={font=\tiny},
  legend style={font=\tiny\selectfont},
]
\addplot [forget plot] graphics [xmin=\xmingb,xmax=\xmaxgb,ymin=\zmingb,ymax=\zmaxgb] {{\modelfile}.png};
\end{axis}

% ===============================================================================
\renewcommand{\modelfile}{15Hz_cp6_wavefield_scale1e-1_imag_ref}
% ===============================================================================

\begin{axis}[xshift=3.4\modelwidth,
  width=\modelwidth, height=\modelheight,
  axis on top, separate axis lines,
  xmin=\xminloc, xmax=\xmaxloc, xlabel={$x$   (\si{\kilo\meter})},
  ymin=\zminloc, ymax=\zmaxloc, ylabel={depth (\si{\kilo\meter})}, y dir=reverse,
  xmajorticks=false,
  ymajorticks=false,
  xticklabel pos=right,
  xtick={}, xticklabels={,,},xlabel={Imaginary part of $\pressure$},
  ytick={}, yticklabels={,,},ylabel={},
  x label style={xshift=-0.0cm, yshift=-0.00cm}, 
  y label style={xshift= 0.0cm, yshift=-0.00cm},
  colormap/jet,colorbar,colorbar style={title={},
               title style={yshift=-2mm, xshift=0mm},
  width=.10cm, xshift=-.50em},
  point meta min=\cmin,point meta max=\cmax,
  colormap={gb}{color=(blue) color=(white) color=(red)},
  label style={font=\tiny},
  tick label style={font=\tiny},
  legend style={font=\tiny\selectfont},
]
%\tikzexternaldisable
\addplot [forget plot] graphics [xmin=\xmingb,xmax=\xmaxgb,ymin=\zmingb,ymax=\zmaxgb] {{\modelfile}.png};
%\tikzexternaldisable
\end{axis}
%\tikzexternalenable
\end{tikzpicture}
\caption{Illustration of the full-wave dataset for experiment
         that considers a computational domain of size 
         \num{1.27}$\times$\num{1.27}\si{\km\squared} with
         a source near surface.
         The wave speed and pressure field are represented 
         on a Cartesian grid of size \num{64}$\times$\num{64}
         with a grid step of \num{20}\si{\meter}. 
         The complete dataset corresponds to \num{50000} couples
         made up of a wave speed model and associated acoustic 
         wave.}
         \label{figure:cp-GRF:experiment3}
\end{figure}

% ===========================================================
% \subsection{Wave dataset: $c\mapsto \pressure$.} \label{dataset:Helmholtz}
% ===========================================================

The dataset corresponds to $N$ couples of wave speed and pressure field, denoted as, $(c_k, \pressure_k)_{k  = 1,\ldots,N}$. The pressure field, $\pressure_k$ is obtained by solving \eqref{eq:Helmholtz} with the corresponding wave speed $c_k$. We use the \textit{hybridizable discontinuous Galerkin method} 
(HDG, \cite{Faucher2020adjoint}) and  the (open-source) software \texttt{hawen} \citep{faucher2021hawen}, to obtain $\pressure_k$. The source $f$ in \eqref{eq:Helmholtz} is a fixed point-source, and the frequency is set to $15$ Hz. 
We have the following configuration:
\begin{equation}\label{experiment2}
\text{Experiment with $\Gcal$~} \,\left\lbrace\quad\begin{aligned}
 & \text{2D domain of size \num{1.27}$\times$\num{1.27}\si{\km\squared}} \\
 & \text{\num{50000} GRF wave speeds generated, imposing \,
         $\num{1.5}\si{\km\per\second}\,\leq c(x) \leq \num{5}\si{\km\per\second}$} \\
 & \text{The data are $\pressure$ that solve 
         \cref{eq:Euler} at frequency $\omega/(2\pi)=$  \num{15} \si{\Hz}}.
\end{aligned}\right.\end{equation}

To ensure a statistical learning framework, we generate independent identically distributed realizations of a Gaussian random field (GRF) as our wave speed. The process is described in \cref{subsection:GRF-wavespeed}.

%-------------------------------
\subsection{Forward operator $\forward^{f}_\omega : (c,f,\omega) \mapsto \{ \pressure(\bx_j,\omega,f) \}_{j=1,\ldots,n_{\mathrm{rcv}}}$}
\label{section:Forward}
In the following, the term \emph{forward operator} refers to the forward operator in the context of the study of the inverse %boundary value 
problem for the Helmholtz equation \cite{beretta2016inverse} (which maps parameter and source to the data) $\forward^{f}_\omega$ at frequency $\omega$ for a source $f$ such that, $\forward^{f}_\omega (c) = \pressure_\omega^f \, \vert_\Sigma$. The model parameter is the wave speed $c$ from \eqref{eq:Helmholtz}, and $\Sigma$ corresponds to a discrete set of receiver locations. That is, $    \pressure^f_\omega\vert_ \Sigma = \bigg \{\pressure^f_\omega(x_1), \ldots \pressure^f_\omega(x_{n_{\mathrm{rcv}}}) \bigg\}$, where $x_i$ is the position of the $i^{\mathrm{th}}$ receiver for a total of $n_{\mathrm{rcv}}$.  
For notation, we introduce the \emph{restriction operator} $\mathcal{R}$, which reduces the fields to the set of receivers positions, $\Sigma$, such that $\mathcal{R}(\pressure) = \pressure \vert_\Sigma$.

The dataset is composed of $N_{\mathrm{src}}$ sources, denoted as $f_\ell$ and consists of $N$ pairs of wave speed and restricted pressure field, that is $\left(c_k, \mathcal{R} \left( \pressure^{f_\ell}_k \right) \right)_{k  = 1,\ldots,N; \,\,
l = 1,\ldots,N_{\mathrm{src}}}$.
The restricted pressure field, $\mathcal{R} \left( \pressure^{f_\ell}_k \right)$, is obtained by solving \eqref{eq:Helmholtz} with the corresponding wave speed $c_k$ and source $f_\ell$, then restricted at the set $\Sigma$. 
% We employ the \textit{hybridizable discontinuous Galerkin method} 
% (HDG, \cite{Faucher2020adjoint}) and the (open-source) software \texttt{hawen} \citep{faucher2021hawen} to obtain $\mathcal{R} \left( \pressure^{f_\ell}_k \right)$. The sources $f_\ell$ in \eqref{eq:Helmholtz} are point-sources, and the frequency is set at $15$ Hz.
Similar to the experiment with the full modeling operator,
the wave speeds are independent identically distributed 
realizations of a GRF (see \cref{subsection:GRF-wavespeed}). 
The data set is illustrated in \cref{figure:cp-StR:experiment}.

\begin{equation}\label{experiment_forward}
\text{Experiment with $\forward$~} \,\left\lbrace\quad\begin{aligned}
 & \text{2D domain of size \num{1.27}$\times$\num{1.27}\si{\km\squared}} \\
 & \text{\num{50000} GRF wave speeds generated, imposing \,
         $\num{1.5}\si{\km\per\second}\,\leq c(x) \leq \num{5}\si{\km\per\second}$} \\
 & \text{\num{64} point-sources, located at a fixed depth of \num{10} \si{\m}, and \num{20} \si{\m} apart along the width} \\
 & \text{The data are $\mathcal{R} \left( \pressure^{f_\ell}_k \right)$ that solve 
         \cref{eq:Euler} at frequency $\omega/(2\pi)=$  \num{15} \si{\Hz}}.\\
 & \text{The line of receivers $\Sigma$ is located at a fixed depth \num{10} \si{\m}, and \num{10} \si{\m} apart along the width}
\end{aligned}\right.\end{equation}

\setlength{\modelwidth} {4.1cm}
\setlength{\modelheight}{4.1cm}
% ---------------------------

\begin{figure}[ht!]\centering
\pgfmathsetmacro{\xmingb}{0}  \pgfmathsetmacro{\xmaxgb}{1.27}
\pgfmathsetmacro{\zmingb}{0}  \pgfmathsetmacro{\zmaxgb}{1.27}
\pgfmathsetmacro{\xminloc}{0} \pgfmathsetmacro{\xmaxloc}{1.27}
\pgfmathsetmacro{\zminloc}{0} \pgfmathsetmacro{\zmaxloc}{1.27}
\pgfmathsetmacro{\steph}{0.85}

\graphicspath{{Figures/StR/}}

\begin{tikzpicture}

 ===============================================================================
\renewcommand{\modelfile}{wavespeed/cp_017_wavespeed}
\pgfmathsetmacro{\cmin}{1.5} \pgfmathsetmacro{\cmax}{5}
% ===============================================================================
\begin{axis}[xshift=0.1\modelwidth,
  width=\modelwidth, height=\modelheight,
  axis on top, separate axis lines,
  xmin=\xminloc, xmax=\xmaxloc, xlabel={$x$   (\si{\kilo\meter})},
  ymin=\zminloc, ymax=\zmaxloc, ylabel={depth (\si{\kilo\meter})}, 
  y dir=reverse,
  xtick={0.2,1.0},
  ytick={0.1,1.2},
  xticklabel pos=right,
  % xmajorticks=false,
  % ymajorticks=false,
  x label style={xshift=-0.0cm, yshift=-0.40cm}, 
  y label style={xshift= 0.0cm, yshift=-0.40cm},
  colormap/jet,colorbar,colorbar style={title={(\si{\km\per\second})},
               title style={yshift=0mm, xshift=7mm},
  width=.10cm, xshift=-0.5em},
  point meta min=\cmin,point meta max=\cmax,
  % style of the axis and size -------------------------------------
  label style={font=\small},
  tick label style={font=\small},
  legend style={font=\small\selectfont},
]
\addplot [forget plot]  graphics [xmin=\xmingb,xmax=\xmaxgb,ymin=\zmingb,ymax=\zmaxgb] {{\modelfile}.png};
%\node[anchor=south east, orange!60!black, yshift=0mm,xshift=1.0em, rotate=0] at (\xminloc, 0.8) {$\nabla$};
%\node[anchor=south east, orange!60!black, yshift=0mm,xshift=3.5em, rotate=0] at (\xminloc, 0.8) {$\nabla$};
%\draw[black, dashed, line width=2, dash pattern=on 1pt off 3pt]  (\xminloc, 0.8) to (\xmaxloc, 0.8); 
\end{axis} 
 ===============================================================================

\draw[line width=2,black,->]  (11.50em,1.3em) to node[draw,rectangle,black,anchor=south,text width=7em,align=center,line width=1pt,yshift=0.50em]{\small from $c$ to Source-to-Receiver operator} (16.4em,1.3 em);

\pgfmathsetmacro{\cmin}{-1} \pgfmathsetmacro{\cmax}{1}

% ===============================================================================
\renewcommand{\modelfile}{wavefield/cp_017_pressure_hdg_real}
% ===============================================================================

\pgfmathsetmacro{\xmingb}{1}  \pgfmathsetmacro{\xmaxgb}{128}
\pgfmathsetmacro{\zmingb}{1}  \pgfmathsetmacro{\zmaxgb}{64}
\pgfmathsetmacro{\xminloc}{1} \pgfmathsetmacro{\xmaxloc}{128}
\pgfmathsetmacro{\zminloc}{1} \pgfmathsetmacro{\zmaxloc}{64}

\begin{axis}[xshift=1.7\modelwidth,
  width=1.30*\modelwidth, height=\modelheight,
  axis on top, separate axis lines,
  xmin=\xminloc, xmax=\xmaxloc, xlabel={receiver index},
  ymin=\zminloc, ymax=\zmaxloc, ylabel={source index}, 
  xticklabel pos=right,
  xtick={10,120},
  ytick={5,60},
  x label style={xshift=-0.0cm, yshift=-0.40cm}, 
  y label style={xshift=-0.0cm, yshift=-0.40cm},
  y dir=reverse,
  % xmajorticks=false,
  % ymajorticks=false,
  % xtick={}, xticklabels={,,},
  % ytick={}, yticklabels={,,},
  % xlabel={Real part of $\pressure\,\vert_\Sigma$},
  % colormap/jet,colorbar,colorbar style={title={\tiny $p$},
  %              title style={yshift=-2mm, xshift=0mm},
  % width=.10cm, xshift=-0.70em},
  % point meta min=\cmin,point meta max=\cmax,
  % % style of the axis and size -------------------------------------
  label style={font=\small},
  tick label style={font=\small},
  legend style={font=\small\selectfont},
]
\addplot [forget plot] graphics [xmin=\xmingb,xmax=\xmaxgb,ymin=\zmingb,ymax=\zmaxgb] {{\modelfile}.png};
% Add a horizontal dashed line
%\draw [dashed, ultra thick] (\xminloc, 0.6) -- (\xmaxloc, 0.6);

\end{axis}

% ===============================================================================
\renewcommand{\modelfile}{wavefield/cp_017_pressure_hdg_imag}
% ===============================================================================

\begin{axis}[xshift=2.80\modelwidth,
  width=1.30*\modelwidth, height=\modelheight,
  axis on top, separate axis lines,
  xmin=\xminloc, xmax=\xmaxloc, xlabel={receiver index},
  ymin=\zminloc, ymax=\zmaxloc, ylabel={source index}, 
  xticklabel pos=right,
  xtick={10,120},
  ytick={5,60},
  x label style={xshift=-0.0cm, yshift=-0.40cm}, 
  y label style={xshift=-0.0cm, yshift=-0.40cm},
  y dir=reverse,  
  % xmajorticks=false,
  % ymajorticks=false,
  % xtick={}, xticklabels={,,},
  % ytick={}, yticklabels={,,},
  % xlabel={Real part of $\pressure\,\vert_\Sigma$},
   colormap/jet,colorbar,colorbar style={title={\tiny $p$},
                title style={yshift=-2mm, xshift=0mm},
   width=.10cm, xshift=-0.70em},
   point meta min=\cmin,point meta max=\cmax,
  % % style of the axis and size -------------------------------------
  colormap={gb}{color=(blue) color=(white) color=(red)},
  label style={font=\small},
  tick label style={font=\small},
  legend style={font=\small\selectfont},
]
\addplot [forget plot] graphics [xmin=\xmingb,xmax=\xmaxgb,ymin=\zmingb,ymax=\zmaxgb] {{\modelfile}.png};

%\draw [dashed, ultra thick] (\xminloc, 0.6) -- (\xmaxloc, 0.6);
\end{axis}

\node[anchor=north] at (1.70,-0.2) {\small GRF wave speed model};
\node[anchor=north] at (8.8,-0.2) 
{\small $\mathrm{Re}\big(\pressure\vert_\Sigma\big)$};
\node[anchor=north] at (13.4,-0.2) 
{\small $\mathrm{Im}\big(\pressure\vert_\Sigma\big)$};

\end{tikzpicture}
\caption{Illustration of forward operator experiment
         that considers a computational domain of size 
         \num{1.27}$\times$\num{1.27}\si{\km\squared} with $64$ source near surface, and $128$ receivers located slightly beneath the sources' location. The illustration of the wave field represent the "matrix" response, each row corresponds to a source, and each column to the pressure field registered by the receivers' line.}
         \label{figure:cp-StR:experiment}
\end{figure}
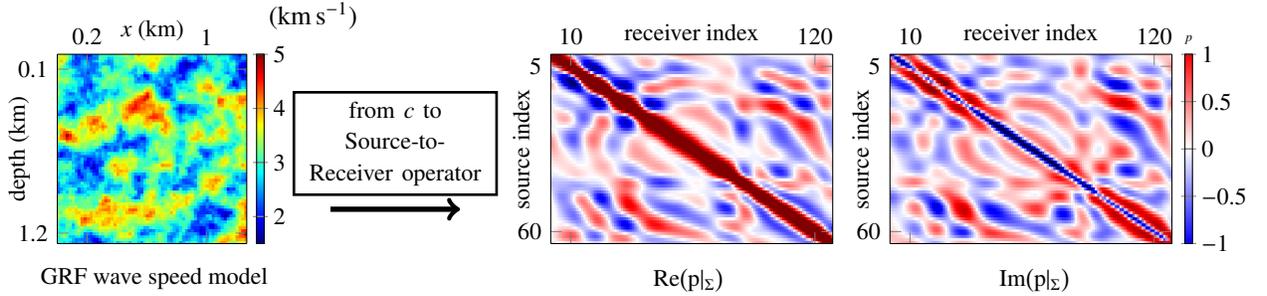

% ===========================================================
\subsection{Wave speeds as Gaussian random fields (GRF). Whittle–Mat\'ern field}
\label{subsection:GRF-wavespeed}
% ===========================================================

The wave speed is obtained as the composition of linear transformation and an independent realizations of GRF with the Whittle–Mat\'ern covariance kernel $\mathrm{C}_\nu$ 
\citep{ghosal2017fundamentals, bogachev2015gaussian, lord2014introduction}. The linear transformation $\mathrm{T}$, is a linked function to ensure that the wave speed is nonnegative, $\mathrm{T}\circ Z \ge 0$ and $Z \sim \mathrm{GRF}$. A most sophisticated version of this idea is presented in \citet{abraham2020statistical} for the \emph{conductivity} in the Calder\'on problem, the conductivity is also restricted to be nonnegative.

An introduction of Gaussian random fields is presented in \cref{appendix:grf}. We briefly discuss the Whittle–Mat\'ern kernel, (\cite{bolin2020numerical} and \cite[Sec. 2.2.3]{chada2018parameterizations}).
A real-valued Gaussian random field $Z$ defined on a spatial domain $D \subset \mathbb{R}^d$ is a  Whittle–Mat\'ern field if its covariance function $C : D\times D \to \mathbb{R}$ is given by
\begin{equation}\label{Whittle–Mat}
    \mathrm{C}_\nu(r)  = s^2\, \dfrac{2^{1-\nu}}{\Gamma(\nu)} \left( \sqrt{2 \nu}\, \dfrac{r}{\mathfrak{a}}\right)^\nu \mathrm{K}_\nu \left( \sqrt{2 \nu}\,  \dfrac{r}{\mathfrak{a}} \right) \, .
\end{equation}

% , where $d$ is the distance between two points,  
\noindent $s$ is the variance of the process, $\Gamma$ is the gamma function \citep{artin2015gamma}, $\mathrm{K}_\nu$ is the modified Bessel 
function of the second kind \citep{bowman2012introduction, arfken1999mathematical}, 
and $\mathfrak{a}$, $\nu$ are positive parameters. $\nu$ is known as the \textit{smoothness} of the random field.
$r$ is the distance, that is, given  
$\boldsymbol{x}=(x_1, \ldots, x_d) \in \mathbb{R}^d$, and $\boldsymbol{x}'=(x_1', \ldots, x_d')\in \mathbb{R}^d$, 
is defined as 
\begin{equation} \label{eq:lambda_cov}
r
 \, := \, \sqrt{\sum_{i = 1}^d \left(\dfrac{x_i - x'_i}{\lambda_i}\right)^2\,} \, ,
\end{equation}
where the vector coefficient $\boldsymbol{\lambda} = (\lambda_1, \ldots \lambda_d)$ 
defines the correlation length along two points in $\mathbb{R}^d$. \par

%%%%%%%%%%%%%%%%%%%%%%%%%%%%%%%%%%

The regularity of the Whittle-Matérn field and its generalizations can be readily obtained by viewing the field as a stochastic partial differential equation (SPDE for short). That is, a GRF $Z$  with covariance function of  Whittle-Matérn solves the fractional SPDE,
\begin{equation} \label{eq:Matern} 
  \boldsymbol{L}^\beta   {Z}  = \boldsymbol{d{\mathscr{W}}} \, , \quad \text{in $D$} \,, \quad Z \,=\, 0\, , \quad \text{on $\partial D$} \,,
\end{equation}

\noindent for $4 \beta = d+ 2 \nu$ and $d \in \{1, 2, 3\}$ the spatial dimension, $\boldsymbol{d{\mathscr{W}}}$ is Gaussian white noise, and  
$\boldsymbol{L}^\beta $ is a second-order elliptic differential operator.\par 
In the case of $\boldsymbol{L} = - \Delta + \kappa^2\,$\footnote{where $\Delta$ denotes the Laplacian.}, each realization of Whittle–Mat\'ern field, defined in (\ref{Whittle–Mat}) ($\lambda_i=1$), coincide with the solution of (\ref{eq:Matern}), and belongs to the Sobolev space $H^{2\beta - d/2 - \varepsilon}(D)$ ($\prob$-a.s. \footnote{almost surely with respect to the probability measure $\prob$.}) \citep[Remark 2.4]{bolin2020numerical}. For positive non-integer order Sobolev spaces, we refer to (\ref{positive-Sobolev space}).  In the more general case, when $\boldsymbol{L} = \nabla \cdot (\sigma \nabla) + \kappa^2$, which is referred to as the generalized Whittle-Matérn field, the regularity of its solution can be established under mild assumptions on $\sigma$, $\kappa$, and the boundary $\partial D$, see \citep[Lemma 4.2]{cox2020regularity}.

%%%%%%%%%%%%%%%%%%%%%%%%%%%%%%
\begin{comment}
The regularity of the Whittle-Matérn field and its generalizations can be readily obtained by viewing the field as a stochastic partial differential equation (SPDE for short).
\begin{subequations} \label{eq:Matern} 
\begin{empheq}[left={\empheqlbrace}]{align}\,\;
  \boldsymbol{L}^\beta   {Z}  = \boldsymbol{d{\mathscr{W}}} \, , \quad \text{in $D$} \,, \\ 
  Z \,=\, 0\, , \quad \text{on $\partial D$} \,,
\end{empheq} 
\end{subequations}

\noindent for $4 \beta = d+ 2 \nu$ and $d \in \{1, 2, 3\}$ the spatial dimension, $\boldsymbol{d{\mathscr{W}}}$ is Gaussian white noise, and  
$\boldsymbol{L}^\beta $ is a second-order elliptic differential operator, $\boldsymbol{L}  = \nabla \cdot  (\sigma \nabla ) + \kappa^2$.  In the classical case of Whittle–Mat\'ern,  $\boldsymbol{L} = - \Delta + \kappa^2$, from \citet[Remark 2.4]{bolin2020numerical} we know that each realization of Whittle–Mat\'ern field lies on a Sobolev space $H^{2\beta - d/2 - \varepsilon}(D)$ ($\prob$-a.s. \footnote{almost surely.}). In the more general case where $\boldsymbol{L}$ is a (general) second-order elliptic operator, similar conclusions can be drawn under suitable assumptions on $\sigma$, $\kappa$, and the boundary $\partial D$. This result is discussed in \citet[Lemma 4.2]{cox2020regularity}. 
\end{comment}
\par 

From \eqref{eq:Matern} and the associated SPDE, we know that the Whittle–Mat\'ern field lies in a "\emph{nicer}" space than Gaussian white noise. The use of GRF is 
motivated by the following reasons. (a) In our construction the wave speed is generated as $c = \mathrm{T} \circ Z^{\small \beta}$, in where $\mathrm{T}$ is an affine transformation and $Z^{\small \beta}$ lies in $H^{2\beta - d/2 - \varepsilon}(D)$ a.s., specifically $d =2$, $\beta = 1$ so $Z^{\small \beta}\in {H}^{1 - \varepsilon}(D)$ a.s., and for all $\varepsilon \in [0,1]$, the wave speed lies on non-negative Sobolev spaces. The field satisfies the conditions in \cite[Theorem 2.5]{kovachki2021universal}. If we would have $c \in {\mathrm{L}}^\infty(D)$, the operator $\Gcal$ would not be covered by the universality in \citep{kovachki2021universal} \footnote{${\mathrm{L}}^\infty$ is not a separable Banach space.}. (b) GRF samples are easily generated, and for the case of Whittle–Mat\'ern field, the variance, smoothness, and correlation length are easy to control; this observation plays a crucial role in \cref{{section:experimental:OOD}} to test the \emph{out-of-distribution behavior}. (c) This distribution is independent of the grid resolution. (d) GRF are often used in Bayesian statistics as prior probability measures with covariance kernels related to the Laplace operator (\citep[Section 2.1]{nickl2020polynomial}, \citep{dashti2017bayesian} and \cite{chada2018parameterizations}).

\begin{remark}\label{exp:GRF_param} 
    \textbf{The parameters in the experiments} are the following: $d = 2$, $s=1$, $\lambda_1 = \lambda_2 = 0.1$, and smoothness coefficient $\nu = 1$. For the implementation of Gaussian fields, see  \citep{dietrich1997fast, lord2014introduction, Kumar:2019:Online}, and particularly \citep{bolin2020numerical}.
\end{remark}

\section{Training and testing in-distribution for $\Gcal$}\label{Experiments}
%%%%location of the code 

% \begin{remark}
%     \textbf{The official code} is publicly available at \nocite{official_repo:2023:Online}\cite{official_repo:2023:Online}, and the dataset is located at \nocite{official_dataset:2023:Online}\cite{official_dataset:2023:Online}.
% \end{remark}
\begin{comment}
\begin{center}
\tikzset{external/export next=false}
\begin{tikzpicture}[remember picture]
\node[draw, inner sep=0.2cm] (text) at (0,0) {%
  \begin{minipage}{0.7\textwidth}
    \textbf{The official code} is publicly available at \nocite{official_repo:2023:Online}\cite{official_repo:2023:Online}, and the dataset is located at \nocite{official_dataset:2023:Online}\cite{official_dataset:2023:Online}.
  \end{minipage}%
};
\end{tikzpicture}
\end{center}
\end{comment}
    In this section, our focus lies on training the architectures to accurately predict the coefficient to solution map $\Gcal$ for the Helmholtz equation at a frequency of $15$ Hz, \eqref{eq:Helmholtz}. For the sake of completeness, more experiments with different frequencies and domain's configuration are presented in \cref{appendix:more_experiments}. Throughout this section and \cref{appendix:more_experiments}, all the models are tested with in-distribution data. However, we significantly increase the test set compared to traditional applications of deep learning \footnote{we deliberately avoid using the traditional 80:20 split of training and test data.}. We choose a test set of the same size than our training. This choice enables us to obtain more reliable estimates of the neural operators' generalization capabilities specifically for in-distribution data. A detailed analysis of the generalization to in-distribution data is presented in \cref{GEB}. 

\begin{remark}
    \textbf{The code is publicly available} at \nocite{official_repo:2023:Online}\cite{official_repo:2023:Online}, and the dataset is located at \nocite{official_dataset:2023:Online}\cite{official_dataset:2023:Online}.
\end{remark}

\begin{remark}
    \bourbaki\footnote{Bourbaki dangerous bend \citep{Bourbaki:symbol}.}In our experiments, we adhere to specific constraints. When adjusting the parameters of the networks, the increase in the parameter count is typically negligible, adding around 100 additional parameters to maintain comparability with the base neural operator. If we increase the number of layers, it is based on mathematical considerations, particularly when incorporating stochastic depth. \emph{We consciously refrain from increasing the training epochs or the size of the training dataset}. Our emphasis is on making fundamental changes to the network architecture rather than compensating for these alterations by merely expanding the model's capacity, dataset size, or training time.  
\end{remark}

\begin{comment}
\begin{center}
\begin{tikzpicture}
\node[draw,  inner sep=0.2cm,] (text) at (0,0) {%
\begin{minipage}{1\textwidth}
  \bourbaki \citep{Bourbaki:symbol}
    In our experiments, we adhere to specific constraints. When adjusting the parameters of the networks, the increase is typically negligible, adding around 100 additional parameters to maintain comparability with the base neural operator. If we increase the number of layers, it is based on mathematical considerations, particularly when incorporating stochastic depth. \par 
   We consciously refrain from increasing the training epochs or the size of the training dataset. Our emphasis is on making fundamental changes to the network architecture rather than compensating for these alterations by merely expanding the model's capacity, dataset size, or training time.  
  \end{minipage}
};
\end{tikzpicture}
\end{center}
\end{comment}

\subsection{Neural network ''prediction'' of the wavefield}
% -----------------------------------------
\setlength{\modelwidth} {4.10cm}
\setlength{\modelheight}{4.10cm}
% -----------------------------------------
\begin{figure}[!ht] \centering
   \input{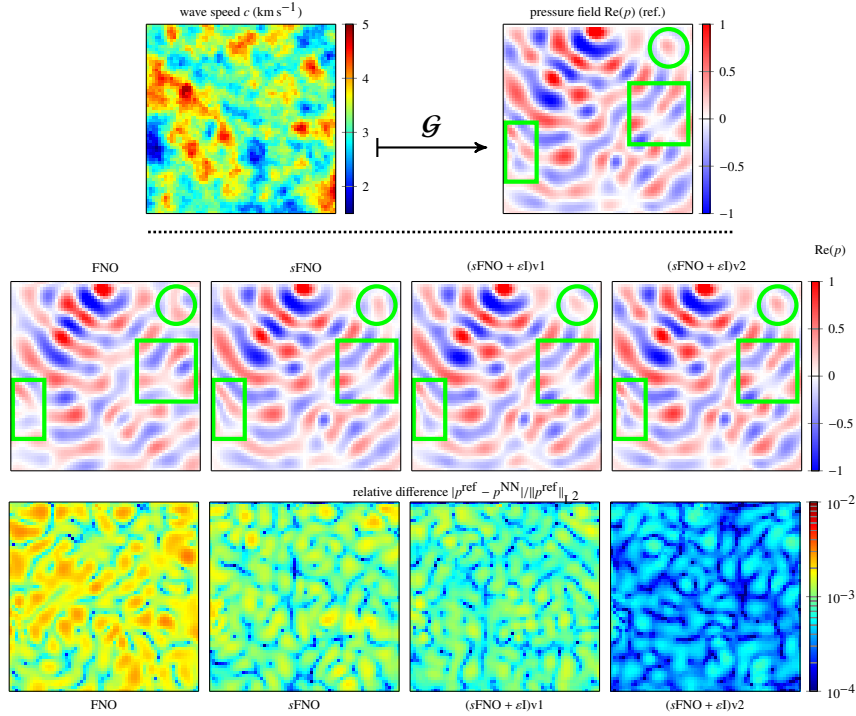}
    \vspace*{-1em}
    \caption{\small{Comparison of the reconstructed wavefields obtained  
    with the different architectures (middle row) and relative error with the reference solution (bottom row). The circles and rectangles serve as a visual aid to highlight the distinction in the propagation of waves.
    The dataset corresponds to wave propagation from Gaussian Random Field realizations of 
    wave speed in a domain of size \num{1.27}$\times$\num{1.27}\si{\km\squared}, with reference wavefield obtained by solving 
    the wave PDEs with software \texttt{hawen} \citep{faucher2021hawen} (top row). }}
    \label{fig:reconstruction_w_annotations}
\end{figure}

Upon the previous constraints, we conducted training on the wave dataset as described in the previous section for all the neural networks outlined in \cref{{Proposed Method}}. The results shown in \cref{fig:reconstruction_w_annotations} clearly demonstrate that each architecture leads to a superior reconstruction of the wave field. The figure displays only the real part of the wave field. For the approximation of both the real and imaginary parts of the pressure field, we refer to \cref{Appendux:15Hz_experiment}.

%--------------------------------------------------
\subsection{Hyperparameters of the neural networks}
%--------------------------------------------------
The summary of parameters used in the training is presented in \cref{table:hyperparam:experiment2}.

\begin{table}[ht!]
\centering
\small
\begin{tabular}{llllll}
\toprule
Model & FNO & $\MFNO$ & ($\FNONeXt$)v1 & ($\FNONeXt$)v2 \\ 
\midrule
Fourier modes: 12          & \tikzcmark           & \tikzcmark            & \tikzcmark                   & \tikzcmark    \\ 
Layers: 4         & \tikzcmark            & \tikzcmark             & \tikzcmark                    & [3, 3, 9, 3]         \\ 
Features: 36      & \tikzcmark           & \tikzcmark            & \tikzcmark                   & [36, 36, 36, 36]     \\ 
GeLU & \tikzcmark         & \tikzcmark          & \tikzcmark                 & \tikzcmark                 \\ 
Positional Encoder  $[0,1]^2$ & \tikzcmark &\tikzcmark & \tikzcmark & \tikzcmark \\ 
Lifting       &  $3\mapsto 18\mapsto 36$ &  $3\mapsto 18\mapsto 36$ & $3\mapsto 18\mapsto 36$ &  $3\mapsto 18\mapsto 36$ \\ 
Proj.     & $36\mapsto 2$ & $36\mapsto 2$ &  $36\mapsto 2$ &  $36\mapsto 2$ \\ 
Dropout        & \tikzxmark           & \tikzxmark            & \tikzxmark                   & \tikzxmark                   \\ 
DropPath      & \tikzxmark           & \tikzxmark            & \tikzxmark                   & 0.3                  \\ 
\bottomrule
\end{tabular} 
\caption{Architectures' parameters. The networks recovered the real, and imaginary part of the pressure field, i.e., the output is a vector field in $\mathbb{R}^2$ which can be associated with $\mathbb{C}$, and the projection operator is simplified by a linear layer instead of a MLP to speed up the training process. The only architecture that differs is ($\FNONeXt$) version $2$ (with stochastic depth).}
\label{table:hyperparam:experiment2}
\end{table} 

 The Fourier modes represent the truncated Fourier modes in the approximation of the integral kernel per layer $\bk_\ell$ as described in \cite{li2020fourier}. The number of layers represents the compositions of equations of the form \crefrange{eq:arch:NO}{eq:FNO+epsilon2}, The positional encoder means that the wave speed, $c$, is input in the neural operators as a couple $\{\left(c(x_i, y_k), \mathrm{T}(x_i),\mathrm{T}(y_k) \right) \}_{i,k=1}^{n}$. Here, $\mathrm{T}$ denotes an affine transformation applied to each grid realization to move the grid to the interval $[0,1]\times [0,1]$ usually for training stability. \par
 
The feature space refers to the range of the lifting operator (as explained after \cref{eq:arch:NO}), denoted as $\br$\footnote{Lifting map, following notation in \cite{kovachki2021universal}}. It maps $\mathbb{R}^3$ to $\mathbb{R}^{36}$, with $\left(c(x_i, y_k), \mathrm{T}(x_i),\mathrm{T}(y_k) \right)$ being transformed to $\mapsto v_1(z_1, \ldots, z_{36}) = \br\left(c(x_i, y_k), \mathrm{T}(x_i),\mathrm{T}(y_k) \right) \in \mathbb{R}^{36}$. It is implemented using a $2$-layers MLP with weight matrices,  $W_1^{\br} \in \mathbb{R}^{18\times 3}$ and $W_2^{\br} \in \mathbb{R}^{36\times 18}$, and bias $b_1^{\br}\in \mathbb{R}^{18}$, $b_2^{\br}\in \mathbb{R}^{18}$.  \par 

The projection, $\bq$ \footnote{projection map, following notation in \cite{kovachki2021universal}}, maps $v_L(z_1, \ldots, z_{36})\in \mathbb{R}^{36}$ to $c(x_i, y_k)\in \mathbb{R}^2$, with a linear affine transformation such that $W^{\bq}\in \mathbb{R}^{2\times 36}$ and  $b^{\bq}\in \mathbb{R}^{2}$. We associate $\mathbb{R}^2$ with $\mathbb{C}$ to recover the imaginary and real part of the solution\footnote{pressure field.}. \par 

In our experiments, we do not implement dropout. For stochastic depth (also known as drop path), the random variables have a linear decay. The probability is set as follows in the experiments, $\prob\{\boldsymbol{\mathrm{X}}_1 = 1\} = 1$ for the first layer, and $\prob\{\boldsymbol{\mathrm{X}}_L = 1\} = 0.7=1-0.3$ for the last layer, \citep{stochasticdepth:implemen}. For the layers in between, a survival probability is assigned using linear interpolation. \par
 
The architectures used in our study have similar parameter count, except for $\FNONeXt$ with stochastic depth (v2). This architecture consists of four stages, each containing a different number of blocks. Specifically, the number of blocks in each stage is $k\in [3, 3, 9, 3]$, and the blocks follow \cref{eq:FNO+epsilon2}. This results in a total of $22$ layers, with each layer truncated to $12$ principal modes in the Fourier expansion of $\bk_\ell$, and the feature spaces of dimension $36$ \footnote{that is for each $(x_i,y_k)$ we have $\br\left(c(x_i, y_k), \mathrm{T}(x_i),\mathrm{T}(y_k) \right) \in \mathbb{R}^{36}$}. The parameter for the other networks, namely FNO, $\MFNO$, and $\FNONeXt$ without stochastic depth (v1), are essentially the same. 

\paragraph{Number of parameters of the neural operators}\label{table:Models_Size}
As mentioned earlier, both $\MFNO$ and $\FNONeXt$ have a similar ''size'' to FNO when stochastic depth is not considered. However, the significant difference arises when stochastic depth is incorporated, resulting in a much deeper neural network. In all the networks, the lifting and projection sections have a parameter count of $756$ and $685$, respectively. \par

The main part of the networks, which encompasses the ''operator'' layers described in \crefrange{eq:arch:NO}{eq:FNO+epsilon2}, are divided into two categories: layers without stochastic depth, and  layers with stochastic depth. In the former type (\crefrange{eq:arch:NO}{eq:FNO+epsilon}), the parameters are fixed at $1.5$ million for all the layers, while in the case of $\FNONeXt$v2 (\cref{eq:FNO+epsilon2}), the parameter count increases to $8.1$ million.

\begin{comment} For more detailed information, please refer to \cref{table:Models_Size}.

\begin{table}[ht!]
\centering
\small
\captionsetup{format=plain,labelfont=bf,font=small} % Customize caption style
\begin{tabular}{@{}lllll@{}}
\toprule
\# Param & FNO (Baseline) & $\MFNO$ & $(\FNONeXt)$v1 & $(\FNONeXt)$v2 \\
\midrule
Lifting & 756 & 756 & 756 & 756 \\
Proj & 685 & 685 & 685 & 685 \\
Operator NN & 1.5M & 1.5M & 1.5M & 8.1M \\
\bottomrule
\end{tabular}
\caption{Summary of parameters of each network, following the specifications in \cref{table:hyperparam:experiment2}}
\label{table:Models_Size}
\end{table}
\end{comment}
%------------------------------------
\subsection{Training of the experiment}\label{experiment:training}
%------------------------------------
For all the architectures, following the tradition of transformers, we employ the \textit{AdamW} optimizer \citep{loshchilov2017decoupled} with an initial learning rate of $10^{-3}$. We utilize a linear step scheduler with parameters: step size = $40$, and a multiplicative factor of learning rate decay of $\gamma = 0.5$. \par 

The number of epochs is set to $100$ ($300$ epochs yielded the best results for $\FNONeXt$ with stochastic depth, but this is not documented here as we try to keep the same parameters across networks). In all architectures, we apply a small $\Lp$ weight regularizer with a parameter of $10^{-5}$. The training process is conducted using $25,000$ out of $50,000$ generated samples \cref{experiment2}, while $5,000$ samples are used for validation, and $20,000$ for testing. Our \emph{testing dataset is substantially larger} than what is typically encountered in the machine learning literature. This choice reflects our objective of showcasing the networks' generalization capabilities.

%--------------------------------------------------
\subsection{Multiple random initialization} \label{experiment: multiple_initialization:gelu}
%--------------------------------------------------
To ensure the consistency of our results, we train each network using six different random initializations of the parameters and in consequence, different trajectories of the optimization algorithm. The trend is consistently observed across all initializations, as depicted in \cref{fig:boxplot}. The values of the relative $\Lp$-loss among multiple training paths can be found in \cref{table:experiment1:l2error}. 

% -----------------------------------------
\noindent
\begin{minipage}[c]{0.45\textwidth}
\small
\renewcommand{\arraystretch}{0.8}
    \begin{tabular}{@{}llll@{}}
\toprule
 {FNO} & {$\MFNO$} & {($\FNONeXt$)v1} & {($\FNONeXt$)v2} \\ 
\midrule
 $0.174050$     & $0.119564$      & $0.097434$                       & $0.046988$                       \\ 
 $0.180532$     & $0.115850$      & $0.089249$                       & $0.042121$                       \\ 
 $0.145947 $     & $0.110553$      & $0.096739$                       & $0.041300$                       \\ 
 $0.153028$     & $0.102238$      & $0.097211$                       & $0.045696$                       \\ 
 $0.144907$     & $0.102998$      & $0.102930$                       & $0.049157$                       \\ 
 $0.172738$     & $0.103829$      & $0.092119$                       & $0.037969$                       \\ 
\bottomrule
\end{tabular}
\captionof{table}{Magnitude of the relative $\Lp$-norm. Multiple realizations of the trained networks with different seeds. Each row represents a different realization, and the values correspond to the test loss after training. The visualization of the table is presented in \cref{fig:boxplot}.}
\label{table:experiment1:l2error}
\end{minipage}%
\hfill
\begin{minipage}[c]{0.5\textwidth}
    \centering
        \includegraphics[width=1\linewidth]{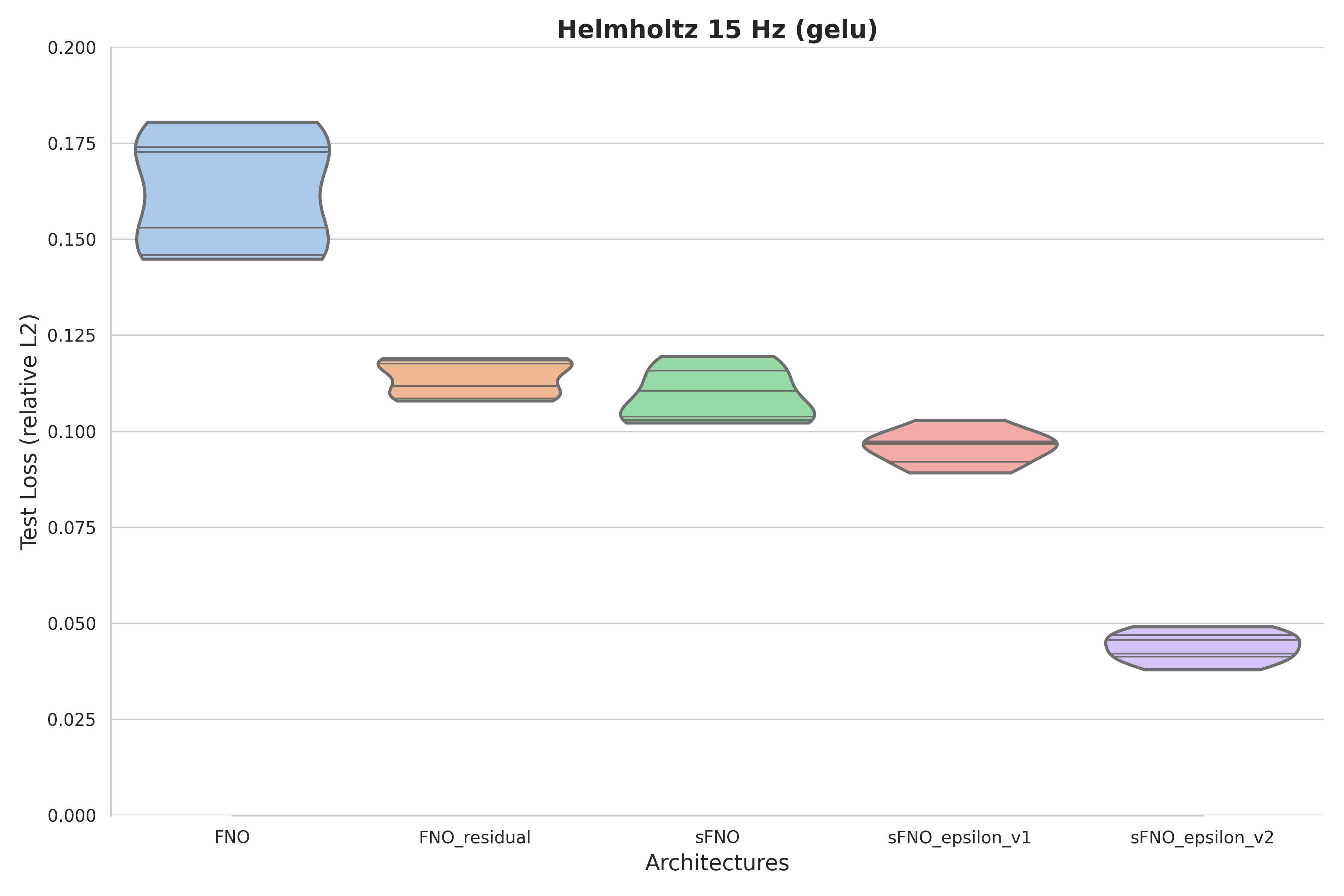}
        \captionof{figure}{Violin plot \citep{hintze1998violin} of the test-loss in Experiment $15$ Hz of \eqref{eq:Helmholtz}. Each architecture is trained $6$ times, the rel. $\Lp$-loss, $|\Gcal^{\mathrm{ref}} - \Gcal^{\mathrm{approx}}|_{\Lp}/ |\Gcal^{\mathrm{ref}}|_{\Lp}$, on the test set.}
        \label{fig:boxplot}
\end{minipage}

\subsection{Visualization of the loss landscape} \label{section:loss landscape}

The observed differences in the performance of the four considered architectures prompted us to study their respective learning landscapes in search of structural characteristics that could explain the results. To that end, we sampled the training loss in a two-dimensional domain spanned by the first two principal components of the learning trajectory~\citep{li2018visualizing}. By construction, this planar domain best captures the portion of the landscape visited during the training of each model and, therefore, may offer valuable insight into the training convergence. 

Corresponding results are shown in Figure~\ref{fig:loss landscapes}.

\begin{figure}[hbt]
\includegraphics[width=\linewidth]{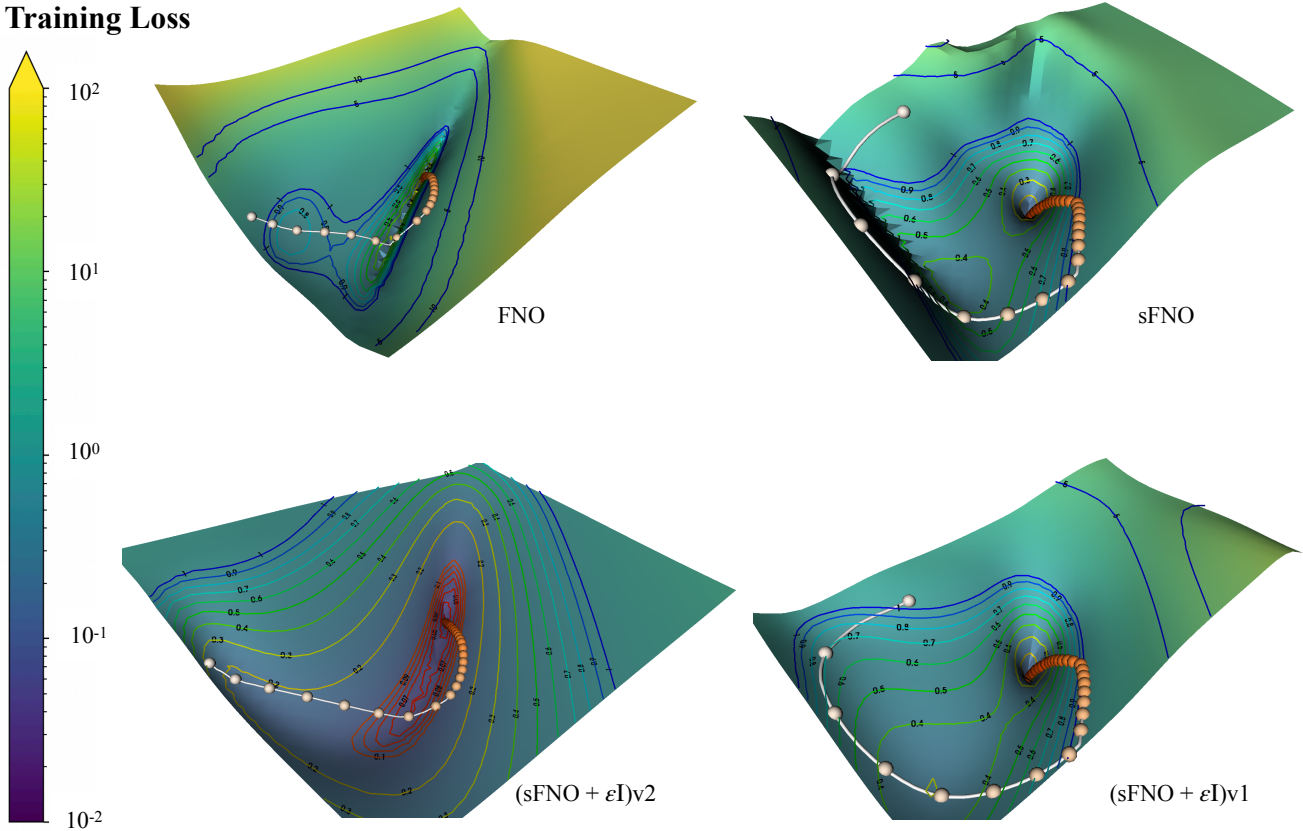}
\caption{Learning landscapes of the four considered models. The loss is visualized in logarithmic scale. Level sets reveal significant differences in topologies.\label{fig:loss landscapes}}
\end{figure}

As can be seen, the landscapes fall into three major categories. The FNO landscape is characterized by the presence of a shallow and irregular crease-like structure that runs across the domain. The $\MFNO$ and $\FNONeXt$ landscapes share remarkable similarities, which is consistent with the similar loss values shown in Table~\ref{table:experiment1:l2error}. Both possess a well-delineated and deeper convergence basin. Finally, the $\FNONeXt$ v2 landscape exhibits a crease-like structure similar to the one seen in the FNO landscape, but its topology is much simpler, and the central anisotropic basin is the deepest of all considered models. We refer the reader to \cref{appendix:loss landscape visualization} for more visualization.

 The level sets are plotted with a spectral (rainbow) color map that contrasts with the underlying landscape color scale (shown). The associated values are visible. The individual points along the trajectory in each landscape show every 10th training epoch, and the orange color saturation encodes the epoch. The increased geometric complexity along the diagonal crease present in the FNO and $\FNONeXt$ v2 landscapes was handled with a refined sampling of the training loss in the corresponding area. The principal components that span the two-dimensional sampling domain were computed by splitting real and imaginary parts of the layers' complex weights to form the large column vector representations of each model in the covariance matrix.

%--------------------------------------------------
\subsection{Ablation study} \label{experiment:ablation}
%--------------------------------------------------
\noindent
\begin{minipage}[c]{0.42\textwidth}
We have already conducted a study of ablation to some extent by the design of the networks. For example, when the skip connection is removed, $\FNONeXt$ without stochastic depth (V1) reduces to $\MFNO$. Similarly, when we set $\prob\{\boldsymbol{\mathrm{X}}_\ell = 1\} = 1$ for all layers $\ell$, then ($\FNONeXt$)v2 reduces to ($\FNONeXt$)v1.

In the following, we explore the changes in activation functions, with a particular focus on the identity activation, $\sigma(x) = x$ for the Fourier layers. Additionally, we investigate the behavior of the residual version of FNO as described in \citet{you2022learning}. The parameters are prescribed in \cref{table:hyperparam:experiment2}.

We adopt a strategy similar to \cref{experiment: multiple_initialization:gelu}, training each network multiple times with different random seeds to ensure the consistency of our empirical findings.
 
\end{minipage}
\hfill
\begin{minipage}[c]{0.55\textwidth}
    \centering
        \includegraphics[width=1\linewidth]{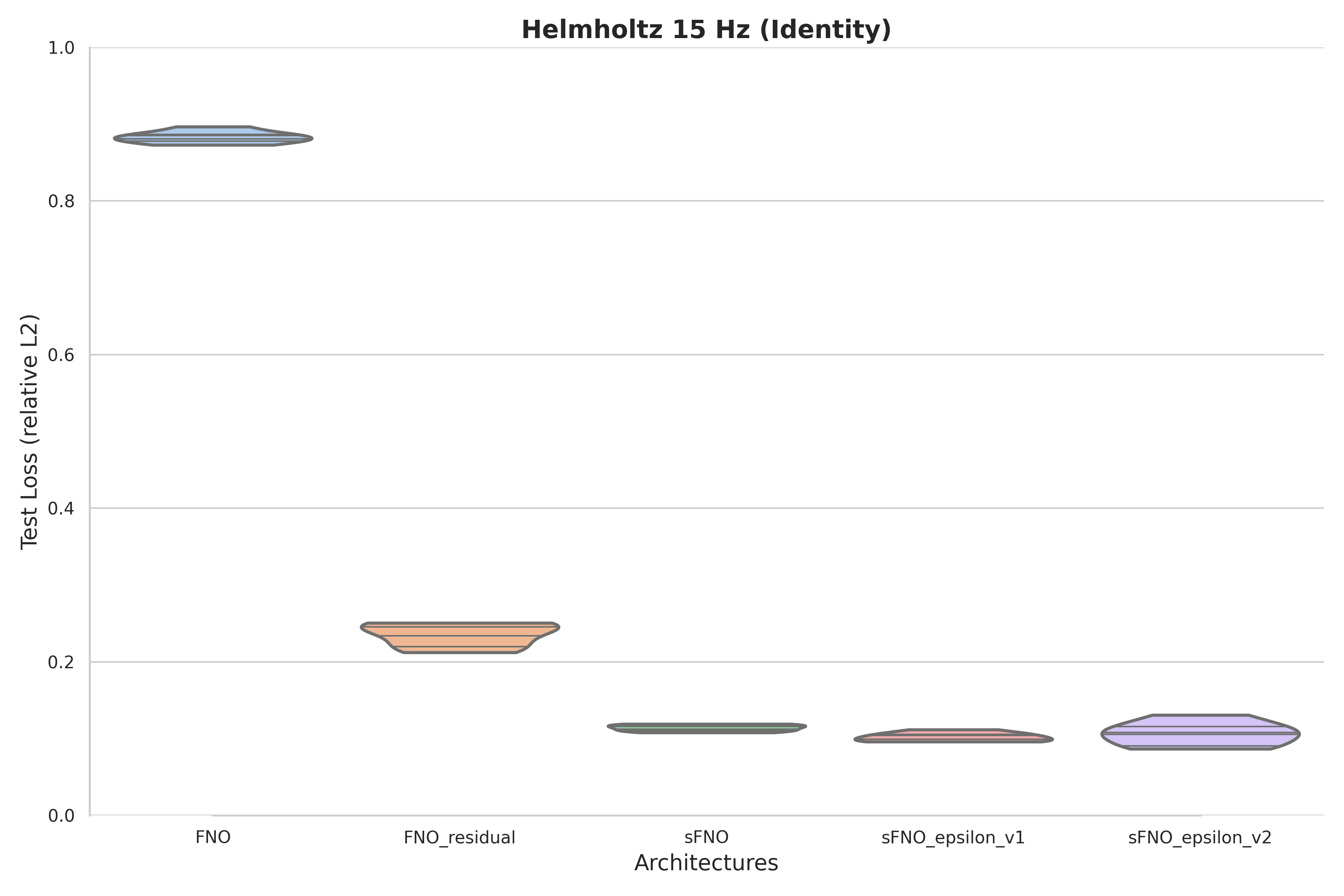}
        \captionof{figure}{Test-loss with no activation $15$ Hz of \eqref{eq:Helmholtz}. Each architecture is trained $6$ times, the rel. $\Lp$-loss on the test set.}
        \label{fig:boxplot_Id}
\end{minipage}
\par

\vspace{3 pt}
In the case where no activation is used in the Fourier layers, we observe that $\MNO$ achieves a lower relative $\Lp$ loss compared to FNO supported by results in \cref{fig:boxplot_Id} and \cref{table:boxplot_Id}. Notably, even when FNO is trained with a non-linear activation function (as proposed in \citet{li2020fourier}), $\MNO$ consistently exhibits significantly smaller test loss. This distinction can be observed by comparing the first violin plot in \cref{fig:boxplot} (detailed values are presented in \cref{table:experiment1:l2error}), \cref{fig:boxplot_leaky_relu}, and \cref{fig:boxplot_relu} representing FNO with GeLU, leaky-ReLU, and ReLU, activation functions respectively, with the third violin plot in \cref{fig:boxplot_Id} (see values in \cref{table:boxplot_Id}). Additionally, the second violin plot in \cref{fig:boxplot_Id}  also presents the residual implementation of FNO. There is a significant improvement observed over FNO.

\begin{table}[ht!]
    \centering
\small
    \begin{tabular}{@{}lllll@{}}
\toprule
 FNO & FNO residual& {$\MFNO$} & {($\FNONeXt$)v1} & {($\FNONeXt$)v2} \\ 
\midrule
 $0.873029$ &  $0.219886$  & $0.107912$      & $0.096015$                       & $0.130741$                       \\ 
 $0.880681$ &  $0.250052$  & $0.110204$      & $0.096212$                       & $0.115885$                       \\ 
 $0.885039$ &  $0.212233$  & $0.119039$      & $0.099064$                       & $0.105632$                       \\ 
 $0.896779$ &  $0.245763$  & $0.118050$      & $0.105330$                       & $0.086690$                       \\ 
 $0.878160$ &  $0.233708$  & $0.112514$      & $0.111709$                       & $0.108062$                       \\ 
 $0.886912$ &  $0.250634$  & $0.115919$      & $0.104147$                       & $0.090379$                       \\ 
\bottomrule
\end{tabular}
    \caption{Test-loss with no activation $15$ Hz of \eqref{eq:Helmholtz}. Each architecture is trained $6$ times, the rel. $\Lp$-loss on the test set.}
    \label{table:boxplot_Id}
\end{table}

\noindent
\begin{minipage}[c]{0.45\textwidth}
    \centering
        \includegraphics[width=1\linewidth]{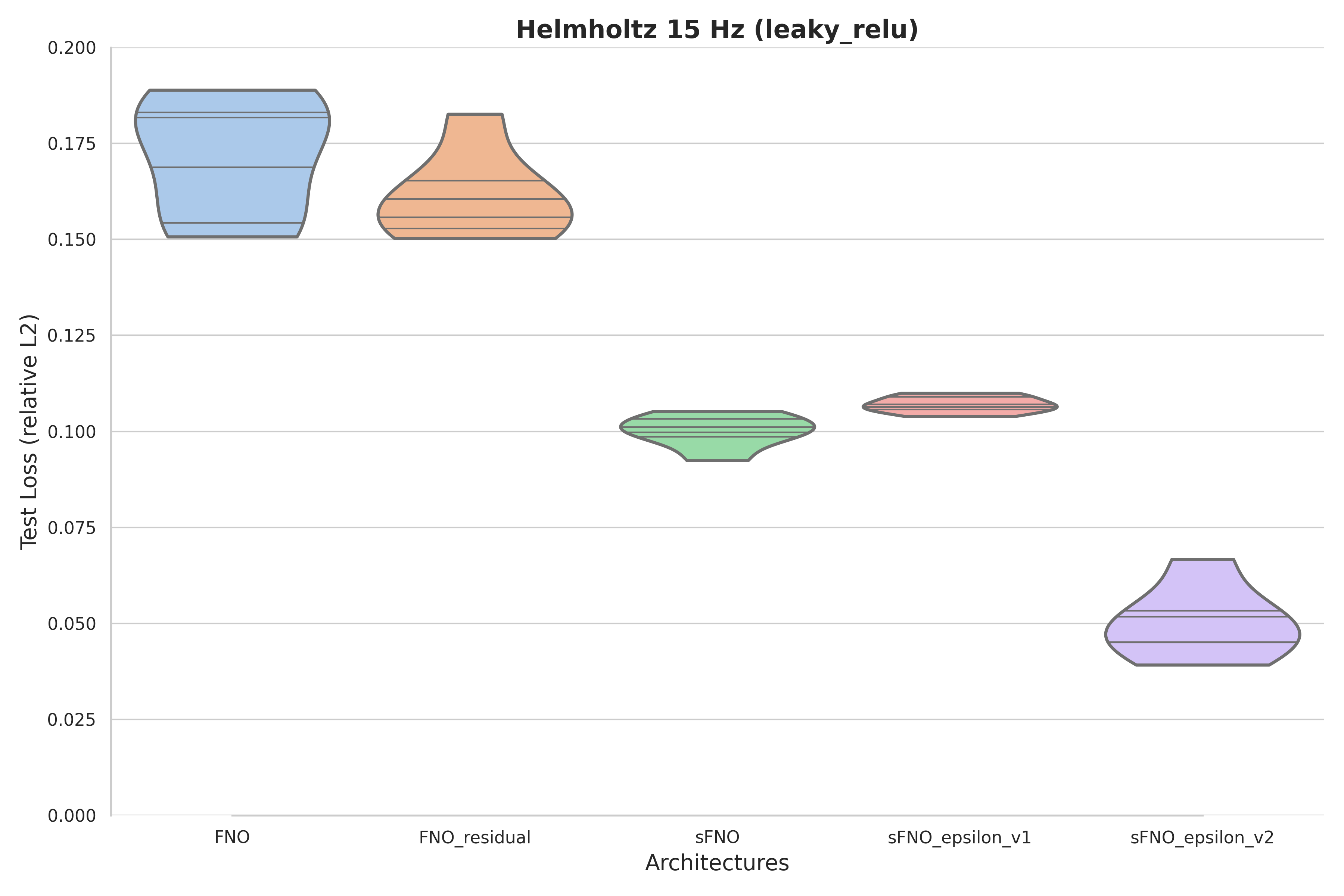}
        \captionof{figure}{Test-loss with Leaky-ReLU $15$ Hz of \eqref{eq:Helmholtz}. Each architecture is trained $6$ times, the rel. $\Lp$-loss on the test set.}
        \label{fig:boxplot_leaky_relu}
\end{minipage}
\hfill
\begin{minipage}[c]{0.45\textwidth}
    \centering
        \includegraphics[width=1\linewidth]{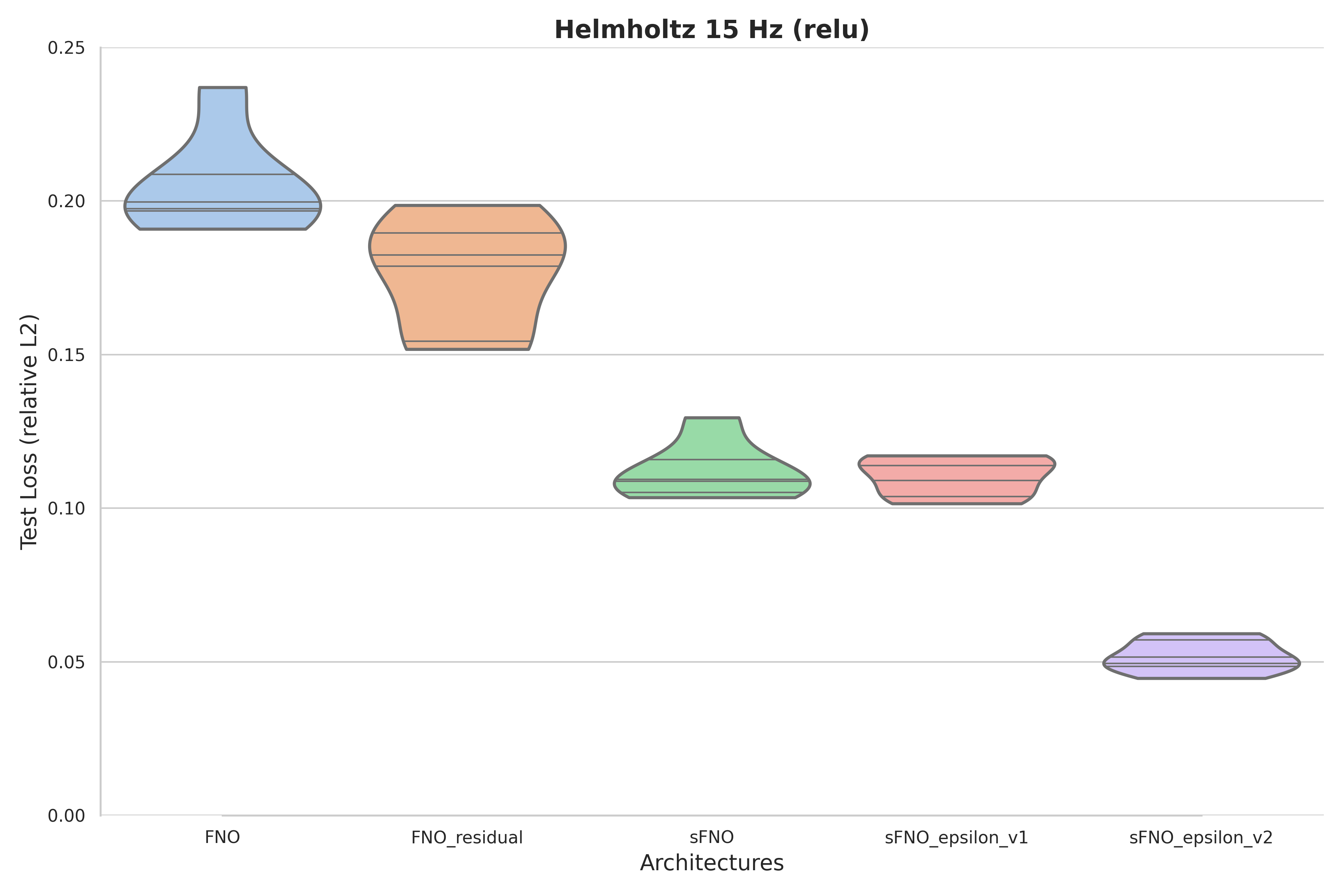}
        \captionof{figure}{Test-loss with ReLU $15$ Hz of \eqref{eq:Helmholtz}. Each architecture is trained $6$ times, the rel. $\Lp$-loss on the test set.}
        \label{fig:boxplot_relu}
\end{minipage}
\par 
The results obtained from \crefrange{fig:boxplot}{fig:boxplot_relu} demonstrate that the residual architecture aligns with the findings of \citet{you2022learning}. In every case, we observe a noticeable improvement in the relative $\Lp$-loss.

Among the different architectures, $\MNO$ achieves the most significant improvement compared to the previous architecture. Within the activation functions, Leaky-ReLU and ReLU exhibit the most significant change when transitioning from the architecture FNO to $\MNO$. In contrast, when the identity activation is used in the Fourier layers, $\FNONeXt$ with stochastic depth does not show a noticeable improvement compared to its counterpart ($\FNONeXt$)v1. However, in all cases where a nonlinear activation is employed, ($\FNONeXt$)v2 consistently outperforms other architectures without any sign of overfitting. Notably, for ReLU and Leaky-ReLU activations, the potential benefits of the skip connection are difficult to observe, when compared to $\MNO$.

    %input{Experiments/2.tex} %%Xavier

\section{Testing out-of-distribution analysis (OOD) for $\Gcal$}\label{section:experimental:OOD}
    In this section, we study the out-of-distribution (OOD for short) behavior for all the architectures. Specifically, we investigate how the models perform when faced with perturbations in the covariance operators of the Gaussian fields used for training. Our findings demonstrate that the $\NONeXt$ architecture with stochastic depth shows resilience to these perturbations. However, despite these encouraging results, the theoretical understanding of the impact of Bernoulli's random variable on the generalization ability of the neural operators in the context of OOD is still in its early stages. To provide further insights, we present a theoretical analysis in \cref{section:theory_OOD} for Gaussian measures.  \par

We recall from \cref{subsection:GRF-wavespeed}, and particularly \eqref{Whittle–Mat} that the Whittle-Matérn fields have three essential parameters: variance $s$, smoothness $\nu$, and correlation range $\boldsymbol{\lambda}$. As mentioned in \cref{Experiments}, the neural operators were trained using Gaussian random fields (GRF) with an isotropic Whittle-Mat\'{e}rn covariance operator such that the wave speed $c$ varied between 1500 and 5000, $\boldsymbol{\lambda} = (1,1)$, and the smoothness coefficient is $\nu= 1$. \emph{Throughout this section the models obtained in \cref{Experiments} are not retrained}. We refer to the settings in \ref{experiment2}, and \cref{exp:GRF_param} for details. 

%-----------------------------------------
\subsection{OOD experiments with different correlation and affine transformation}
%-----------------------------------------
We investigate the effect of changing the correlation parameter $\boldsymbol{\lambda}$ and the range on which the wave speeds vary. By adjusting $\boldsymbol{\lambda}$, it changes the  correlation range of the field. The scenario where $\lambda_1 \neq \lambda_2$ in equation \eqref{Whittle–Mat} is particularly interesting as it introduces non-euclidean distances and leads to the generation of anisotropic fields. The range of the wave speeds are adjusted using a different affine transformation denoted as $\mathrm{T}$, as explained in \cref{subsection:GRF-wavespeed}. Here, we keep the smoothness coefficient fixed, ensuring that the wave speeds remain within the same Sobolev space as the training data. Then, the new realizations of the wave speeds are given by $c' = {\mathrm{T}}' \circ Z^{(\lambda_1, \lambda_2)}$, in which $\mathrm{T}'$ changes the wave speed interval, and $(\lambda_1, \lambda_2)$ the correlation of points in the domain.  \par
To generate new samples of the wave speed $c$, we sample GRF following the parameters described in \cref{table:ood}. For each family, we generate $100$ samples and we obtain the corresponding solution of Helmholtz using the software \texttt{hawen}. The smoothness of $c$, domain's configuration $D$, source position, and frequency $\omega$ are fixed following \eqref{experiment2}.

\begin{comment}
the software \texttt{hawen} was utilized to derive the corresponding wave fields. As seen from \cref{table:ood}, in each family case, we made adjustments to either the mean or the parameter $\boldsymbol{\lambda}$ to introduce different ranges in the realizations. A total of six families, as listed in \cref{table:ood}, were generated. The domain's size, source positions, and frequency were kept fixed, consistent with the trained data. Each family comprised 100 samples of the $(c_i, p_i)$ pairs following the new distribution. It is important to note that the neural networks are not retrained for the evaluation on these new samples.
\end{comment}

\begin{center}
\begin{tabular}{@{}llll@{}}
\toprule
GRF model       & $\lambda_1$ & $\lambda_2$  & wave speed interval \\
\midrule
\textbf{Training} (baseline)  &  0.10 & 0.10 &  $[1500,5000]$ \\
OOD family 1     &  0.20 & 0.20 &  $[1500,5000]$\\
OOD family 2     &  0.10 & 0.20 &  $[1500,5000]$\\
OOD family 3     &  0.20 & 0.20 &  $[2000,3500]$ \\
OOD family 4     &  0.10 & 0.20 &  $[2000,3500]$ \\
OOD family 5     &  0.10 & 0.30 &  $[2000,6000]$ \\
OOD family 6     &  0.25 & 0.75 &  $[2000,6000]$ \\
\bottomrule
\end{tabular} 
\captionof{table}{Parameters for the experiments out-of-distribution. $\boldsymbol{\lambda}= (\lambda_1, \lambda_2)$ is defined in \cref{eq:lambda_cov}. The parameter $\nu$ is fixed to $1$.} 
\label{table:ood} 
\end{center}
%-----------------------------------------
\paragraph{Empirical analysis of OOD for each family} \label{subsection:OOD}
%-----------------------------------------
For the experiment, we selected three out of the six previously trained models (specifically, the first three models in \cref{fig:boxplot}) that utilized the GeLU activation function, see \cref{experiment: multiple_initialization:gelu}. We recall that we have obtained an estimation of the expected error within the distribution by evaluating the empirical loss in a test data set of the same size as the training data set\footnote{We trained the models using a dataset of $25,000$ samples and evaluated their performance on a separate test dataset also consisting of $25,000$ samples.}, for more details we refer to the training baseline in \cref{table:experiment1:l2error}, and \cref{experiment:training}. \par

By sampling multiple realizations of new random fields according to the families outlined in \cref{table:ood} we are able to estimate the expected error of the trained network with respect to these new probability distributions, and in consequence the robustness of the networks towards these changes. This enables us to assess the models' performance on the new samples and evaluate its generalization capabilities beyond the in-distribution data. \par

The empirical results for all architectures are presented in \crefrange{OOD:table1}{OOD:table6}. 
\vspace{7pt}

\noindent
\begin{minipage}[c]{0.47\textwidth}
\small
\renewcommand{\arraystretch}{0.8}
\begin{tabular}{@{}lllll@{}}
\toprule
OOD 1 & FNO & $\MNO$ & ($\FNONeXt$)v1 & ($\FNONeXt$) v2 \\ 
\midrule
 model 1 & 0.6689 & 0.6025 & 0.5341 & 0.2502 \\ 
 model 2 & 0.6842 & 0.5437 & 0.5451 & 0.2347 \\ 
 model 3 & 0.6817 & 0.5837 & 0.5404 & 0.2428 \\ 
\bottomrule
\end{tabular}
\captionof{table}{Relative test loss of three networks tested with the probability defined by family 1.}
\label{OOD:table1}
\end{minipage}
\hfill
\begin{minipage}[c]{0.47\textwidth}
\small
\renewcommand{\arraystretch}{0.8}
\begin{tabular}{@{}lllll@{}}
\toprule
OOD 2 & FNO & $\MNO$ & ($\FNONeXt$)v1 & ($\FNONeXt$) v2 \\ 
\midrule
 model 1 & 0.6602 & 0.6106 & 0.5438 & 0.2340  \\
 model 2 & 0.6715 & 0.5644 & 0.5561 & 0.2239  \\
 model 3 & 0.6726 & 0.5959 & 0.5509 & 0.2407  \\
\bottomrule
\end{tabular}
\captionof{table}{Relative test loss of three networks tested with the probability defined by family 2.}
\label{OOD:table2}
\end{minipage}

%------------------

\vspace{5pt}

\noindent
\begin{minipage}[c]{0.47\textwidth}
\small
\renewcommand{\arraystretch}{0.8}
\begin{tabular}{@{}lllll@{}}
\toprule
OOD 3 & FNO & $\MNO$ & ($\FNONeXt$)v1 & ($\FNONeXt$) v2 \\ 
\midrule
 model 1 & 0.5116 & 0.4368 & 0.3645 & 0.1324\\
 model 2 & 0.4757 & 0.3490 & 0.3678 & 0.1220 \\
 model 3 & 0.5001 & 0.4061 & 0.3490 & 0.1368 \\
\bottomrule
\end{tabular}
\captionof{table}{Relative test loss of three networks tested with the probability defined by family 3.}
\label{OOD:table3}
\end{minipage}
\hfill
\begin{minipage}[c]{0.47\textwidth}
\small
\renewcommand{\arraystretch}{0.8}
\begin{tabular}{@{}lllll@{}}
\toprule
OOD 4 & FNO & $\MNO$ & ($\FNONeXt$)v1 & ($\FNONeXt$) v2 \\ 
\midrule
 model 1 & 0.5249 & 0.4685 & 0.3845 & 0.1335 \\
 model 2 & 0.4992 & 0.3798 & 0.3869 & 0.1249 \\
 model 3 & 0.5146 & 0.4264 & 0.3713 & 0.1376 \\
\bottomrule
\end{tabular}
\captionof{table}{Relative test loss of three networks tested with the probability defined by family 4.}
\label{OOD:table4}
\end{minipage}

%------------------

\vspace{5pt}

\noindent
\begin{minipage}[c]{0.47\textwidth}
\small
\renewcommand{\arraystretch}{0.8}
\begin{tabular}{@{}lllll@{}}
\toprule
OOD 5 & FNO & $\MNO$ & ($\FNONeXt$)v1 & ($\FNONeXt$) v2 \\ 
\midrule
 model 1 & 0.9248 & 0.8698 & 0.8827 & 0.3899 \\
 model 2 & 0.9471 & 0.8379 & 0.8209 & 0.3910 \\
 model 3 & 1.0488 & 0.9269 & 0.8130 & 0.4188 \\
\bottomrule
\end{tabular}
\captionof{table}{Relative test loss of three networks tested with the probability defined by family 5.}
\label{OOD:table5}
\end{minipage}
\hfill
\begin{minipage}[c]{0.47\textwidth}
\small
\renewcommand{\arraystretch}{0.8}
\begin{tabular}{@{}lllll@{}}
\toprule
OOD 6 & FNO & $\MNO$ & ($\FNONeXt$)v1 & ($\FNONeXt$) v2 \\ 
\midrule
 model 1 & 0.9707 & 0.8903 & 0.9606 & 0.4426 \\
 model 2 & 1.0087 & 0.8851 & 0.8576 & 0.4585 \\
 model 3 & 1.1578 & 0.9831 & 0.8712 & 0.4864 \\
\bottomrule
\end{tabular}
\captionof{table}{Relative test loss of three networks tested with the probability defined by family 6.}
\label{OOD:table6}
\end{minipage}

\vspace{7pt}
The families presented in \crefrange{OOD:table4}{OOD:table6} exhibit anisotropy due to the difference in the values of $\lambda_1$ and $\lambda_2$. When considering the relative $\Lp$ loss as a reference, it is evident that the trained FNOs perform significantly worse compared to other neural operators. This indicates that FNOs may struggle with generalizing to new distributions. However, we observe that the $\NONeXt$ architecture coupled with stochastic depth demonstrates notable robustness when faced with changes in distribution across all the families. In particular, we notice for the experiments in \cref{OOD:table6} where both $\mathrm{T}'$ and $\boldsymbol{\lambda}$ are changed, the $\FNONeXt$v2 exhibits superior adaptability compared to other architectures, resulting in test losses that are half the values of any other neural operator.

%-----------------------------------------
\paragraph{OOD wave field ''prediction'' by the neural networks}
%-----------------------------------------

In \cref{fig:ood6}, we present the wave field predictions of the trained networks from family $6$. The figure showcases two samples from family $6$, illustrating the shortcomings of the FNO in accurately reproducing the desired behavior. In particular, we emphasize the discrepancy within the green rectangle, which indicates a notable deviation between the predicted wave field and the ground truth. This discrepancy further highlights the limitations of the FNO in capturing the complex dynamics of the wave propagation from different distributions. Among the models considered, it is observed that only the $\FNONeXt$v2 model is capable of accurately predicting admissible wave propagation in the family $6$.   
%From the solution prediction, $\FNONeXt$v2 is the only model able to predict an admissible wave propagation. 
\setlength{\modelwidth} {3.7cm}
\setlength{\modelheight}{3.7cm}
\begin{figure}[!ht] \centering
   \input{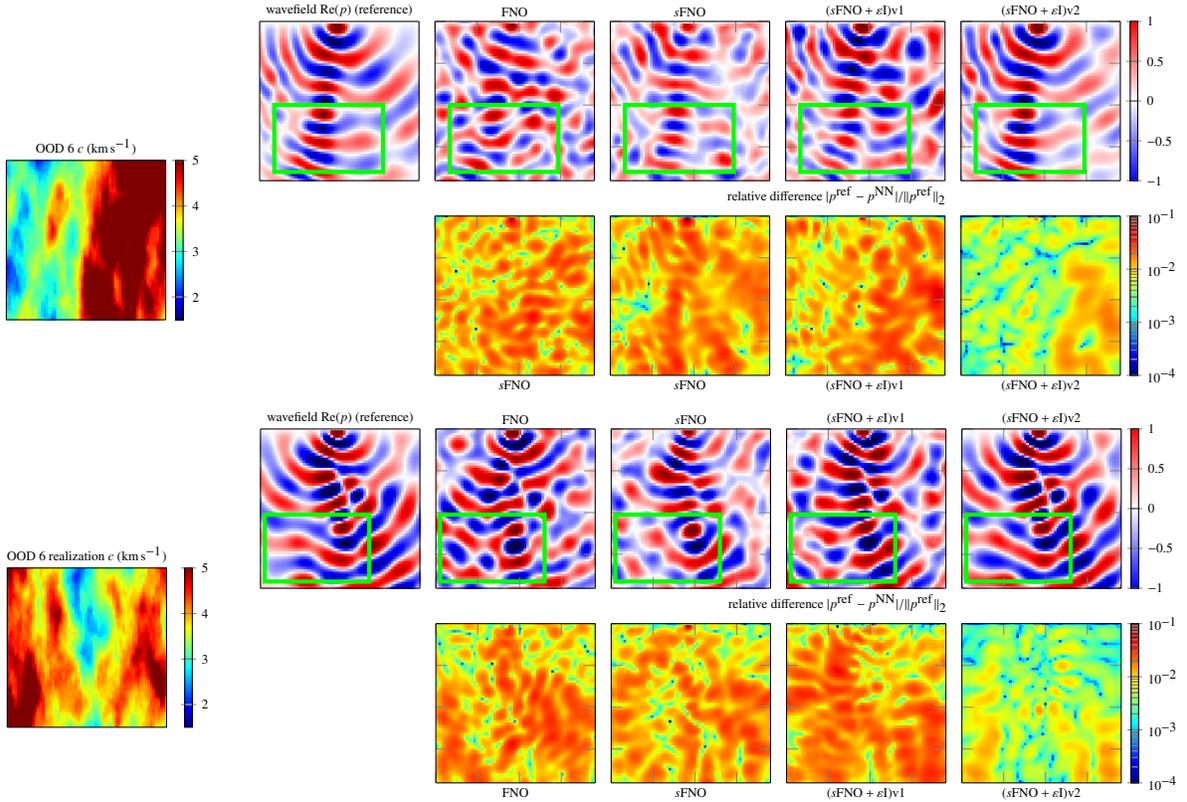}
    \caption{OOD (family 6). Real part of the wave field of OOD family $6$. Anisotropic case, see table \cref{table:ood} and \cref{OOD:6}. The green square positioned on the image serves as a visual aid to help identify and compare the differences in the reconstructed fields.}   
    \label{fig:ood6}
\end{figure}

     %\subsection{OOD moving the smoothness coefficient of the Whittle–Matérn field.}
\subsection{OOD experiments changing the smoothness of the field}
Here, we change the smoothness of the wave speed by modifying the parameter $\nu$. We recall from \eqref{eq:Matern} that the regularity of the field, $\beta$ is directly connected with the dimension $d$ of the domain (in our case $d=2$), and the coefficient $\nu$ (in our training $\nu=1$). Thus, by changing $\nu$, we generate Gaussian random fields of different Sobolev regularity, than those using in the training dataset. Our experiments are divided into two categories. (a) We first keep all but $\nu$ parameters fixed, as described in \cref{exp:GRF_param}, that is, we only change the Sobolev class of the wave speed without altering any other factor (e.g. if the field is isotropic or anistropic). (b) Finally, we move the rest of the parameters, by following the description of the family $6$ in the \cref{table:ood}, the ''\emph{hardest}'' family in terms of solution field prediction and test loss, see \cref{OOD:table6}. 

\begin{center}
    \begin{tabular}{@{}lllll@{}}
\toprule
GRF model       & $\lambda_x$ & $\lambda_y$  & wavespeed interval & $\nu$ \\
\midrule
OOD family 7     &  0.10 & 0.10 &  $[1500,5000]$ & 0.5\\
OOD family 8     &  0.10 & 0.10 &  $[1500,5000]$ &3.5\\
OOD family 9     &  0.25 & 0.75 &  $[2000,6000]$ &0.5\\
OOD family 10    &  0.25 & 0.75 &  $[2000,6000]$ & 3.5\\

\bottomrule
\end{tabular}
\captionof{table}{Parameters for the experiments out-of-distribution. $\boldsymbol{\lambda}= (\lambda_1, \lambda_2)$ is defined in \cref{eq:lambda_cov}. The parameter $\nu$ is changing.} 
\label{table:ood_2}
\end{center}

\begin{comment}
\noindent
\begin{minipage}{0.5\textwidth}
By selecting the random field as described in \cref{subsection:GRF-wavespeed}, we gain control over crucial aspects such as variance, smoothness, and correlation range. In the experiments conducted in the previous section (\cref{subsection:OOD}), we maintained the smoothness parameter constant, i.e., $\nu = 1$. However, to provide a exhaustive analysis, we will now examine the network's resilience to variations in the smoothness coefficient similarly, as in the previous subsection \textbf{we do not re-train the networks}. We move the smoothness coefficient of the the baseline, and family $6$ from the previous section, while keeping the rest of parameters fixed, wave speed range and correlation parameter, $\boldsymbol{\lambda}$, as outlined in \cref{table:ood_2}.
\end{minipage}
\hfill
\begin{minipage}[c]{0.45\textwidth}
\small
\begin{tabular}{@{}lllll@{}}
\toprule
GRF model       & $\lambda_x$ & $\lambda_y$  & wavespeed interval & $\nu$ \\
\midrule
OOD family 7     &  0.10 & 0.10 &  $[1500,5000]$ & 0.5\\
OOD family 8     &  0.10 & 0.10 &  $[1500,5000]$ &3.5\\
OOD family 9     &  0.25 & 0.75 &  $[2000,6000]$ &0.5\\
OOD family 10    &  0.25 & 0.75 &  $[2000,6000]$ & 3.5\\

\bottomrule
\end{tabular}
\captionof{table}{Parameters for the experiments out-of-distribution. $\boldsymbol{\lambda}= (\lambda_x, \lambda_y)$ is defined in \cref{eq:lambda_cov}. The parameter $\nu$ is changing.} 
\label{table:ood_2} 
\end{minipage}
\vspace{5pt}
\end{comment}

\paragraph{Empirical analysis of OOD for each family} We follow a similar procedure as in the previous subsection. We select three out of the six trained models shown in \cref{fig:boxplot}) and evaluate their performance against $100$ realizations of the wave speed for each of the new families described in \cref{table:ood_2}. We notice that the first two families preserves the rest of the parameters as in our training baseline, while in the last two all the parameters are changed. \par

In the selection of $\nu = 0.5$ and $\nu = 3$, the random fields in families 7 and 9 have Sobolev regularity $H^{1/2 - \varepsilon}(D)$, while the random fields $Z$ in families 8 and 10 have Sobolev regularity $H^{3+1/2 - \varepsilon}(D)$, almost surely. This is in contrast to the training data set, which lies almost surely in $H^{1 - \varepsilon}(D)$ for any $\varepsilon > 0$. For families 9 and 10, the wave speeds $c' = {\mathrm{T}}' \circ Z^{(\lambda_1, \lambda_2)}$ are different on each aspect than the training set. The affine maps $\mathrm{T}'$ are different from those used in the training. The correlation range $\boldsymbol{\lambda} = (0.25, 0.75)$ differs from the training correlation range, introducing anisotropy into the fields. Furthermore, the regularity of each realization in these families also varies from the baseline. \par

The empirical results for all architectures are presented in \crefrange{OOD:table7}{OOD:table10}. 
\vspace{7pt}

\noindent
\begin{minipage}[c]{0.47\textwidth}
\small
\renewcommand{\arraystretch}{0.8}
\begin{tabular}{@{}lllll@{}}
\toprule
OOD 7 & FNO & $\MNO$ & ($\FNONeXt$)v1 & ($\FNONeXt$) v2 \\ 
\midrule
 model 1 & 0.3257 & 0.3037 & 0.2889 & 0.1814 \\ 
 model 2 & 0.3244 & 0.3207 & 0.2905 & 0.1748 \\ 
 model 3 & 0.3261 & 0.3024 & 0.2921 & 0.1845 \\ 
\bottomrule
\end{tabular}
\captionof{table}{Relative test loss of three networks tested with the probability defined by family 7.}
\label{OOD:table7}
\end{minipage}
\hfill
\begin{minipage}[c]{0.47\textwidth}
\small
\renewcommand{\arraystretch}{0.8}
\begin{tabular}{@{}lllll@{}}
\toprule
OOD 8 & FNO & $\MNO$ & ($\FNONeXt$)v1 & ($\FNONeXt$) v2 \\ 
\midrule
 model 1 & 0.5508 & 0.4836 & 0.4621 & 0.2547  \\
 model 2 & 0.5527 & 0.5001 & 0.4706 & 0.2137  \\
 model 3 & 0.5547 & 0.4771 & 0.4604 & 0.2235  \\
\bottomrule
\end{tabular}
\captionof{table}{Relative test loss of three networks tested with the probability defined by family 8.}
\label{OOD:table8}
\end{minipage}

%------------------

\vspace{5pt}

\noindent
\begin{minipage}[c]{0.47\textwidth}
\small
\renewcommand{\arraystretch}{0.8}
\begin{tabular}{@{}lllll@{}}
\toprule
OOD 9 & FNO & $\MNO$ & ($\FNONeXt$)v1 & ($\FNONeXt$) v2 \\ 
\midrule
 model 1 & 0.9328 & 0.7053 & 0.6249 & 0.4419\\
 model 2 & 0.9209 & 0.8811 & 0.7141 & 0.3303 \\
 model 3 & 0.8794 & 0.7270 & 0.6231 & 0.3167 \\
\bottomrule
\end{tabular}
\captionof{table}{Relative test loss of three networks tested with the probability defined by family 9.}
\label{OOD:table9}
\end{minipage}
\hfill
\begin{minipage}[c]{0.47\textwidth}
\small
\renewcommand{\arraystretch}{0.8}
\begin{tabular}{@{}lllll@{}}
\toprule
OOD 10 & FNO & $\MNO$ & ($\FNONeXt$)v1 & ($\FNONeXt$) v2 \\ 
\midrule
 model 1 & 1.1806 & 0.8269 & 0.8217 & 0.5586 \\
 model 2 & 1.1391 & 0.9973 & 0.8727 & 0.4133 \\
 model 3 & 1.1049 & 0.8957 & 0.7825 & 0.4096 \\
\bottomrule
\end{tabular}
\captionof{table}{Relative test loss of three networks tested with the probability defined by family 10.}
\label{OOD:table10}
\end{minipage}

\vspace{7pt}
%-----------------------------------------
\paragraph{OOD wave field ''prediction'' by the neural networks (different smoothness)}
%-----------------------------------------

We showcase the performance of the trained networks by presenting wave field predictions for family $9$ and $10$, which correspond to rough and smooth anisotropic fields, respectively. These families pose a greater challenge for the neural operators, as evidenced by the test loss values shown in \crefrange{OOD:table9}{OOD:table10}. In the figures, we highlight in green some of the main discrepancies between the predicted wave fields of the architectures, and the reconstruction by \emph{numerical methods}. These discrepancies serve to illustrate the limitations and areas where the models may fall short in accurately reproducing the desired behavior.

\begin{comment}
We demonstrate the performance of the trained networks by presenting wave field prediction on family $9$ and $10$, corresponding to the rough, and smooth anisotropic fields, respectively. As seen in \crefrange{OOD:table0}{OOD:table10} for the test loss value, these families are the most complicated for the neural operators. Similarly as in the previous cases the figures highlights samples discrepancy of the modesl with respect to the HDG reconstruction. 
\end{comment}
\setlength{\modelwidth} {3.7cm}
\setlength{\modelheight}{3.7cm}
\begin{figure}[!htb] \centering
   \input{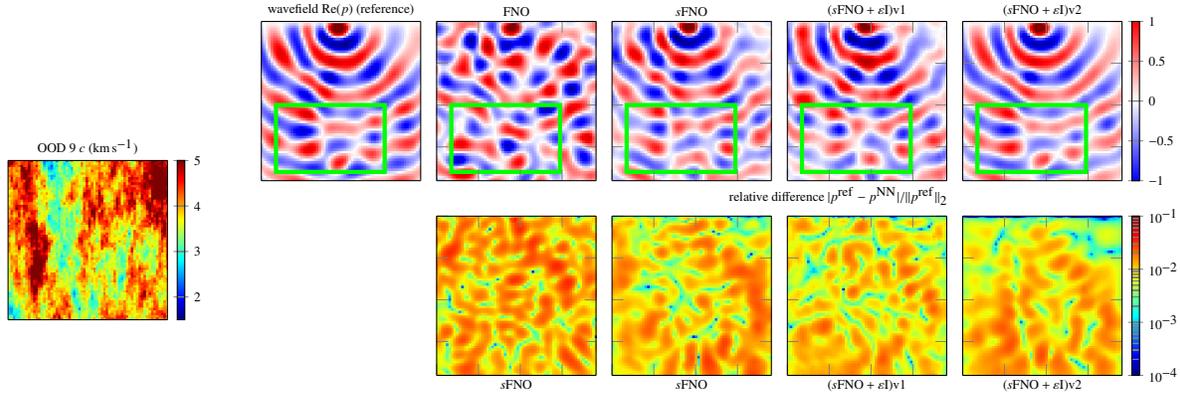}
    \caption{OOD (family 9). Real part of the wave field of OOD family $9$. Anisotropic case, with $\nu = 0.5$ see table \cref{table:ood_2}. The green square positioned on the image serves as a visual aid to help identify and compare the differences in the reconstructed fields.}   
    \label{fig:ood9}
\end{figure}

\begin{figure}[!htb] \centering
   \input{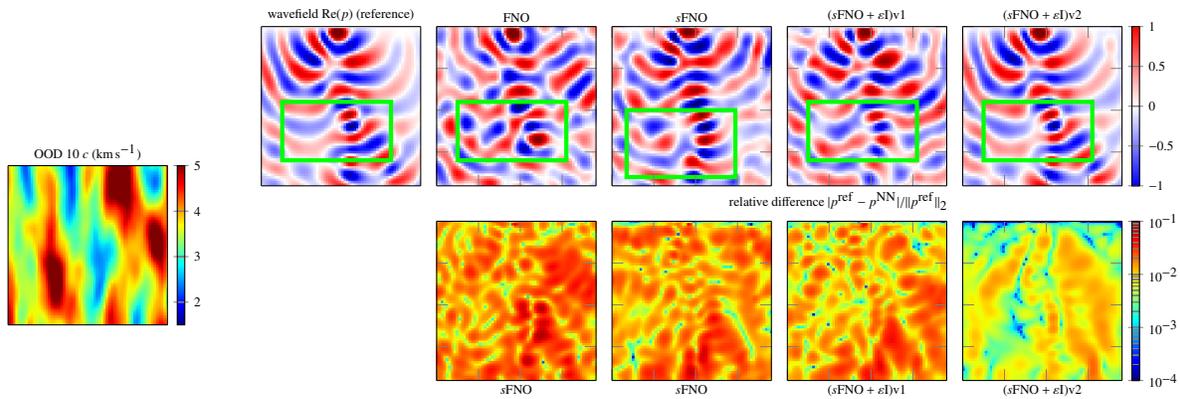}
    \caption{OOD (family 10). Real part of the wave field of OOD family $10$. Anisotropic case, with $\nu = 3.5$ see table \cref{table:ood_2}. The green square positioned on the image serves as a visual aid to help identify and compare the differences in the reconstructed fields.}   
    \label{fig:ood10}
\end{figure}

\begin{remark}
    We finally notice that the $\FNONeXt$ network has promising results with respect to the BP 2004 \citep{billette20052004} model. See \cref{BP_model}. It is worth noting that these findings go beyond the scope of the current theoretical framework described in \cref{section:theory_OOD}.
\end{remark}

\section{Hyperneural operator as a surrogate model of the forward operator: $\forward^{f}: (c,f) \mapsto \{ \pressure(\bx_j,f) \}_{j=1,\ldots,n_{\mathrm{rcv}}}$} \label{section:StR}
    We propose a hyperneural operator as a surrogate model for the forward operator associated with the inverse boundary value problem for the Helmholtz equation, as discussed in \cref{section:Forward}. Our experiments are based on two key assumptions that persist throughout this work: (1) the sources are point sources, and (2) the output is a fixed-size vector, representing measurements of $\pressure$ at the receiver locations. These assumptions align with the typical practical considerations of seismic wave propagation in an acoustic medium. However, in \cref{{eq:NN_2}}, we provide a potential relaxation of the first assumption to accommodate more general sources.\par

Notice that the direct application of a neural operator or any other derived architecture is difficult for the following reasons.
\begin{enumerate}[(a)]
    \item  The representation of the forward operator using neural networks faces challenges due to the distinct computational properties of point sources and wave speed. Wave speed can be discretized as a matrix $(\bx, c(\bx))$, while point sources are defined by their spatial position $\delta_{\bx} \leftrightarrow \bx$. To approximate the forward operator, a neural network must handle inputs of different natures (a point and a matrix) and generate an output with fixed discretization, based on the receiver positions. \par 

    In our experiments, the wave speed is discretized with a spacing of approximately 20 meters. However, point sources may not align precisely with the grid points of the wave speed field $c$. For example, the support of $\delta_{\bx}$, where $\bx = (x_1, x_2)$, may not necessarily be a multiple of 20. These discrepancies require careful consideration in designing the neural network architecture. The networks need to exhibit discretization invariance for both the wave speed parameter and the position of sources while ensuring that the output is discretized based on the receiver locations. 
    \item From a theoretical perspective, the forward operator is as a map from a function space to a linear bounded operator (the data operator). By construction, neural operators only deal with maps from functions to functions, not from functions to operators (some interesting alternatives are proposed in \cite{molinaro2023neural} and \cite{de2022deep}). See \citet[Sec 2.1]{beretta2016inverse} for the description of the forward operator in the time-harmonic case.
\end{enumerate}

\noindent Given the previous difficulties, we proposed a hypernetwork solution, partially inspired by the empirical work of \citep{zhmoginov2022hypertransformer} and the theoretical results on hypernetworks of \citep{Acciaio2022_GHT}, subsequently improved in \cite{galimberti2022designing}. See \cref{fig:HyperNetwork}. 

\begin{remark}
Although our primary focus has been on the experimental implementation of the forward problem, we will consider the inverse problem in the future. Bayesian statistical approaches of the inverse problems such as Markov chain Monte Carlo \citep{stuart2010inverse} and ensemble Kalman filter \citep{iglesias2016regularizing, iglesias2013ensemble} are commonly employed in inversion. However, these methods rely solely on the forward operators, but the computational challenge arises from multiple forward models. Our approach provides a \textbf{surrogate forward operators}, that, once trained, enables straightforward and efficient computation of multiple forward models. Therefore, we anticipate that by combining Bayesian statistical approaches with our method, we will be able to solve \emph{Bayesian statistical inverse problems}.
\end{remark}

\paragraph{Architecture}
\renewcommand{\mycolor}    {red!60!black}
\begin{figure}[!ht]
\centering
\begin{tikzpicture}
  \pgfmathsetmacro{\first}{0.} 
  \pgfmathsetmacro{\second}{-2.3}
  \pgfmathsetmacro{\colfirst}{-4.5}
  \pgfmathsetmacro{\colsec}{2.5}

  %coordinates
  \coordinate (r1)   at (\colfirst/2, \second); 
  \coordinate (r2)   at (\colfirst/2,  \second-1.3);
  \coordinate (r3)   at ( \colsec, \second-1.3);
  \coordinate (r4)   at (\colsec, \second );

  \coordinate (lc1)   at (\colsec-3.9, \first );
  \coordinate (lc2)   at (\colsec-3.9, \first +1.5);
  \coordinate (lc3)   at (\colsec-1.4, \first +1.5);
  \coordinate (lc4)   at (\colsec-1.4, \first);
%%%%%%%%%%%%%%%%%%%%%%%%%%%%%%%%%%%%%%%%%%%%%%%%%%%%%%%%%
  \coordinate (l2c1)   at (\colsec-1.1, \first );
  \coordinate (l2c2)   at (\colsec-1.1, \first +1.5);
  \coordinate (l2c3)   at (\colsec+1.1, \first+1.5 );
  \coordinate (l2c4)   at (\colsec+1.1, \first);

% Add a dashed square that covers rect1 and rect_mlp1

%circles
\node[draw,circle,fill=gray!20,rotate=0,minimum width=0.75cm,text width=0.75cm,text centered] (circ0) at (\colfirst,\first) {$f$};
\node[draw,circle,fill=gray!20,rotate=0,minimum width=0.75cm,text width=0.75cm,text centered] (circ1) at (\colfirst,\second) {$c$};
\node[draw,circle,fill=gray!20,rotate=0,minimum width=0.75cm,text width=0.75cm,text centered] (circ2) at (\colsec+2.5,\first) {$\mathcal{R} \left( \pressure^{f} \right)$};

% Add a dashed square that covers rect1 and rect_mlp1
\node[draw,rectangle,fill=pink!40,rotate=90,minimum width=1.cm,minimum height=1cm,text width=1.7cm,text centered, rounded corners] (feature_extr) at (\colfirst/2,\first) {$\mathrm{MLP}$}; 
\node[draw,rectangle,fill=blue!20,rotate=90,minimum width=1.cm,minimum height=1cm,text width=1.7cm,text centered, rounded corners] (L1) at (0, \first) {$\boldsymbol{\mathcal{L}_1\left(\cdot, {\color{\mycolor}{\theta_1^c}}\right)}$};
\node[draw,rectangle,fill=blue!20,rotate=90,minimum width=1.cm,minimum height=1cm,text width=1.7cm,text centered, rounded corners] (L2) at (\colsec,\first) {$\boldsymbol{\mathcal{L}_2\left(\cdot, {\color{\mycolor}{\theta_2^c}}\right)}$};

\node[draw,rectangle,fill=orange!25,rotate=0,minimum width=1.cm,minimum height=1cm,text width=1.7cm,text centered, rounded corners] (NO1) at (0, \second) {$\mathscr{E}_1 \circ  (\NONeXt)$};
\node[draw,rectangle,fill=orange!25,rotate=0,minimum width=1.cm,minimum height=1cm,text width=1.7cm,text centered, rounded corners] (NO2) at (\colsec, \second) {$\mathscr{E}_2 \circ (\NONeXt)$};

% Add a dashed square that covers rect1 and rect_mlp1
\draw[dashed, line width=1.5pt, rounded corners, rotate=90] ($(feature_extr.north west) + (-0.3, 0.4)$) rectangle ($(L2.south east) + (1.2,-0.85)$);

\draw[\mycolor ,->, line width=1.5pt] (circ1) -- (NO1);
\draw[\mycolor,line width=1.5pt] (r1) -- (r2);
\draw[\mycolor, line width=1.5pt] (r2) -- (r3);
\draw[\mycolor, ->, line width=1.5pt] (r3) -- (NO2);
\draw[ ->, line width=1.5pt] (circ0) -- (feature_extr);
\draw[ ->, line width=1.5pt] (feature_extr) -- (L1);
%%%%%%%%%%%%%%%%%%%%%%%%%%%%%%%%%%%%%%%
\draw[ ->, line width=1.5pt] (L1) -- (L2);
\draw[ ->, line width=1.5pt] (lc3) -- (lc4);
\draw[line width=1.5pt] (lc2) -- (lc3);
\draw[line width=1.5pt] (lc1) -- (lc2);

%%%%%%%%%%%%%%%%%%%%%%%%%%%%%%%%%%%%%%%
\draw[ line width=1.5pt] (l2c1) -- (l2c2);
\draw[ ->, line width=1.5pt] (l2c3) -- (l2c4);
\draw[line width=1.5pt] (l2c2) -- (l2c3);
\draw[line width=1.5pt] (l2c1) -- (l2c2);

\draw[\mycolor ,  ->,line width=1.5pt] (NO1) -- (L1) node[pos=0.25, right] {$\boldsymbol{\theta_1^c}$};
\draw[\mycolor ,  ->, line width=1.5pt] (NO2) -- (L2)node[pos=0.25, right] {$\boldsymbol{\theta_2^c}$};

\draw[line width=1.5pt] (l2c1) -- (l2c2);
\draw[ ->, line width=1.5pt] (L2) -- (circ2);

\end{tikzpicture}
\caption{Hypernetwork surrogate of the forward operator used in the experiments. We call the network inside the dashed rectangle a \textit{metanetwork}, and the bottom network a \textit{hyperneural operator}.}
\label{fig:HyperNetwork}
\end{figure}
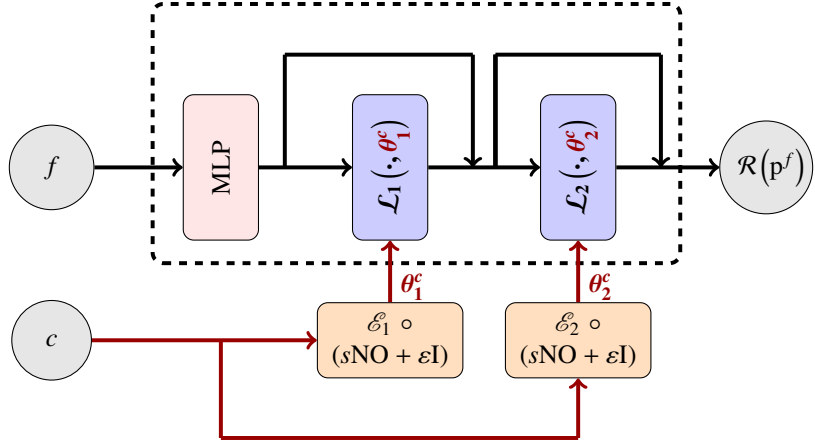

The layers $\mathcal{L}_k$  for $k=1, 2$ in the \cref{fig:HyperNetwork} are simple layers of Euclidean neural networks, that is $\mathcal{L}_k(\bx) = \sigma \circ \left( W_k^c + b_k^c\right) \circ \bx$. So that, 
$[W_k^c, b_k^c] = \mathscr{E}_k \circ \boldsymbol{\mathcal{G}}(c)$, where $\mathscr{E}_k$ is an encoder sending the values of $\boldsymbol{\mathcal{G}}(c)$ to a fixed parameter size, determining the capacity of the metanetwork (dashed rectangle from \cref{fig:HyperNetwork}). A more general setting can be considered from the layers $\mathcal{L}$, depending on the nature of the sources (point-sources type or more general sources). However, given the simplicity of the Source-to-Receiver map, and the imposed discretization in the output, we can associate the point-source with its support $\bx = (x_1, x_2)$ and the output is discretized by the number of receivers, corresponding to the columns of the of response matrix in \cref{figure:cp-StR:experiment}. The main difficulty of the approximation is coming from the nonlinear dependency of the Helmholtz equation with respect to the wave speed. 
\par 
We have the following association, if we call the metanetwork $\mathcal{NN}$ and the hyperneural operator $\mathcal{G}$, then for a point-source $\delta^\omega_{\bx}$ we have 
\begin{equation}\label{eq:hyperneuraloperator}
    \mathcal{NN}(\delta_{\bx}^\omega, \Theta(c)) = \mathcal{NN}(\delta_{\bx}^\omega, \mathscr{E}^{\mathrm{hyper}} \circ\boldsymbol{\mathcal{G}}(c)) \approx \mathcal{R} \left( \pressure^{\delta_{\bx}, \omega} \right).
\end{equation}
where $\mathcal{R}$ is a restriction operator which reduces the fields to the set of receivers positions \cref{section:Forward}, and $\Theta(c) =  \mathscr{E}^{\mathrm{hyper}} \circ\boldsymbol{\mathcal{G}}(c) \in \mathbb{C}^m$. In experiments, we see that a layer-wise form as \cref{fig:HyperNetwork} is more stable in the presence of the optimization algorithm. Similar conclusions were drawn in \cite{zhmoginov2022hypertransformer}. 
\par 
From \cref{eq:hyperneuraloperator}, it is evident that the point source is independent of the discretization used for $c$, and multiple sources can be implemented efficiently, by increasing the vector inputs, indicating the source's position. This means that the support of the point source can be finer than the discretization of $c$. On the other hand, the encoder, $\mathscr{E}^{\mathrm{hyper}}$, is used in a similar manner as in \emph{DeepOnet} described in \citet{lanthaler2022error}. Its purpose is to map the range of $\Gcal$ (the functional space) to a finite-dimensional space $\mathbb{C}^m$, which contains the parameters of the neural network $\mathcal{NN}$. Namely, $\mathscr{E}\circ \Gcal: H \to \mathbb{C}^m$, where $H$ represents the functional space where each realization of $c$ lies. In our case, $H$ can be identified with $H^{2\beta - d/2 - \varepsilon}(D)$, as the wave speed are realizations of the Whittle–Matérn field (see \cref{remark:grf}). Finally, the dimension $m$ in $\mathbb{C}^m$ depends on the capacity chosen for the metanet family, $\mathcal{NN}$. In our experiments, we restrict it to a small two-layers network. \par 

If $f$ are not point-sources, $\mathcal{NN}$ can be expressed firstly by one layer, followed by a second encoder $\mathscr{E}^{\mathrm{meta}}$ as the output is always discrete given the position of the receivers. That is, 

\begin{equation}\label{eq:NN_1}
    \mathcal{NN}(f, \Theta(c)) =\mathscr{E}^{\mathrm{meta}} \circ \mathrm{MLP}^{\theta_2} \circ \, \IDFT \left(  G_{k, \ell}^{\theta_1} (\xi) \, \DFT (f) \right) \approx \mathcal{R} \left( \pressure^{f, \omega} \right) , %^{\mathrm{approx}},
\end{equation}

\noindent and $\Theta(c) = [\theta_1, \theta_2] =  \mathscr{E}^{\mathrm{hyper}} \circ\boldsymbol{\mathcal{G}}(c)$. Rather than \eqref{eq:NN_1}, more general operator layers, neural operators, or DeepOnet networks can be used as a metanetwork. However, \cref{eq:Helmholtz} is linear with respect to $f$ for a fixed $c$ and $\omega$.  \par 

The most general form of the $\mathcal{NN}$ is

\begin{equation}\label{eq:NN_2}
    \mathcal{NN}(f, \Theta(c)) =\mathscr{E}^{\mathrm{meta}} \circ \mathcal{G}^{\mathrm{meta}}\bigg(f , \mathscr{E}^{\mathrm{hyper}} \circ\boldsymbol{\mathcal{G}}(c) \bigg) \approx \mathcal{R} \left( \pressure^{f, \omega} \right). %^{\mathrm{approx}}.
\end{equation}

\noindent for $\Theta(c) =  \mathscr{E}^{\mathrm{hyper}} \circ\boldsymbol{\mathcal{G}}(c)$, and $\mathcal{NN} = \mathscr{E}^{\mathrm{meta}} \circ \mathcal{G}^{\mathrm{meta}}$, composition of an encoder sending the values to the position of the receiver, and an operator network $\mathcal{G}^{\mathrm{meta}}$. Notice that $\mathscr{E}^{\mathrm{meta}}$ is playing a similar role to the restriction operator $\mathcal{R}$. Moreover, $c \mapsto \mathscr{E}^{\mathrm{meta}} \circ \mathcal{G}^{\mathrm{meta}}(\cdot , \mathscr{E}^{\mathrm{hyper}} \circ\boldsymbol{\mathcal{G}}(c))$ can be realized as an observational operator.

\paragraph{''Prediction'' of the "matrix" response for the forward operator}
The wave field reconstruction at  the receiver position, by probing multiple point sources is presented in \cref{figure:cp-StR:reconstruction}. The rows correspond to the multiple point sources, and the columns to the pressure field detected at multiple positions of the domain. In the top left side, we appreciate the wave speed, and bottom left side, the error of the approximation. The dataset of the experiment is described in \cref{section:Forward}, and the network is a special case of \eqref{eq:NN_2}, exactly described in \cref{fig:HyperNetwork}. 
%%%%%%%%%%%%%%%%%%%%%%%%%%%
% Reconstruction
%%%%%%%%%%%%%%%%%%%%%%%%%%%%%%%%

\setlength{\modelwidth} {6cm}
\setlength{\modelheight}{3.8cm}

% ---------------------------
\begin{figure}[ht!]\centering
\input{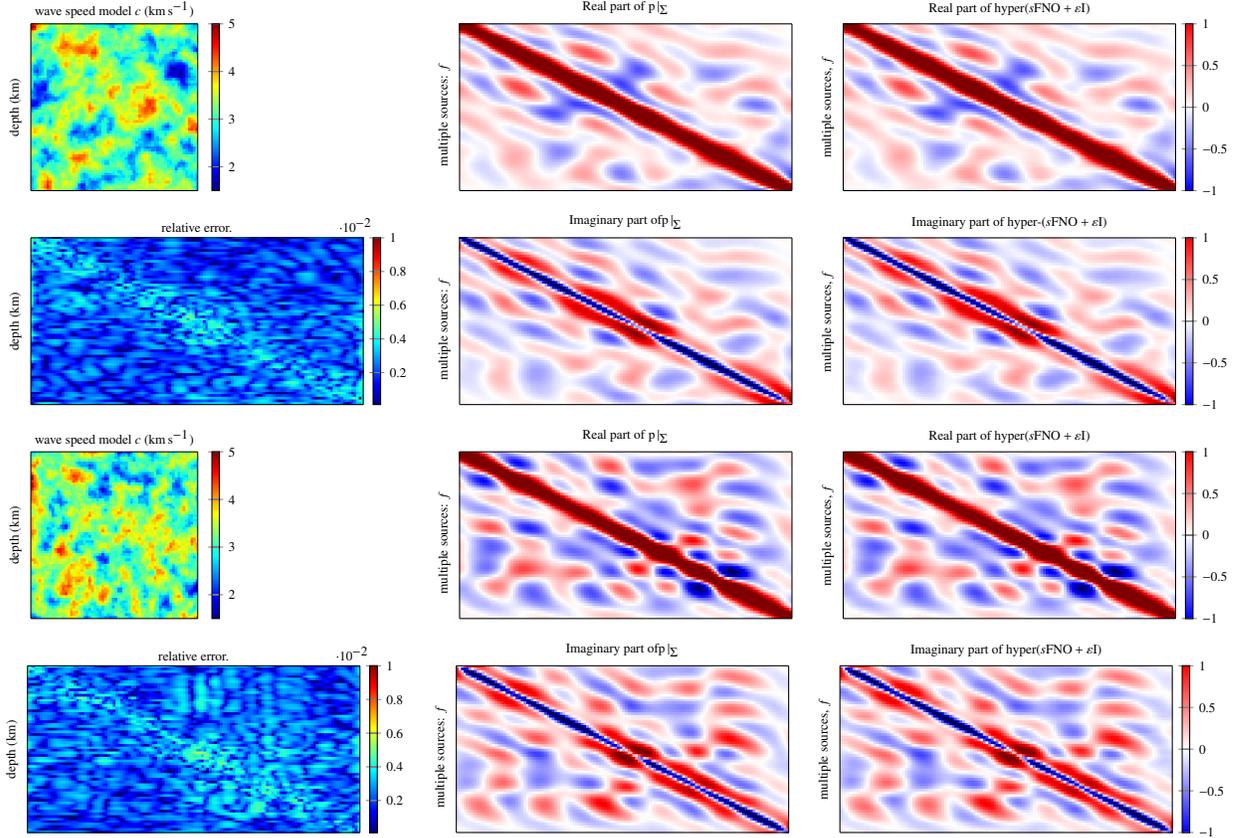}
\caption{Forward operator. Approximation of the forward operator by hyperneural operators.}
         \label{figure:cp-StR:reconstruction}
\end{figure}

%------------------------------------
\paragraph{Details of the experiment}\label{experiment:experiment_StR}
%------------------------------------
We employ the \textit{AdamW} optimizer \citep{loshchilov2017decoupled} with an initial learning rate of $10^{-3}$. We utilize a linear step scheduler with parameters: step size = $40$, and a multiplicative factor of learning rate decay of $\gamma = 0.5$. The number of epochs is set to $100$. In all architectures, we apply a small $\ell_2$ weight regularizer with a parameter of $10^{-5}$. Given that we already restricted the training to the empirical analysis of the architectures, the training process is conducted using $40,000$ out of $50,000$ generated samples, while $5,000$ samples are used for validation and $5,000$ for testing. \par 
The relative $\Lp$-error is $3\times 10^{-2}$. In the implementation, $ \mathscr{E} \circ\boldsymbol{\mathcal{G}}(c) \in \mathbb{R}^{2m}$ and the complex-valued product is defined independently. Also, we did not split the learning rate from the metanetwork and hypernetwork, nor did we incorporate a more robust feature-extractor, as \citep{zhmoginov2022hypertransformer}. Provided that the sources are point sources, the complexity of the task is encoded in the high capacity  of the neural operator defining the metanetwork. The latter is a residual network with $2$-layers, and leaky ReLU activation.

    \section{Out-of-distribution under Gaussian sampling} \label{section:theory_OOD}
       %\section{Out of Distribution Under Gaussian Sampling}
%\label{s:OOD}

%{\color{blue} Let us decide which Banach space to take later.}
While \cref{section:experimental:OOD} presents the empirical out-of-distribution performance of our network design, specifically in the context of time-harmonic waves, we consider here an analysis of the out-of-distribution phenomenon under centered Gaussian measures for Banach spaces. We recall from \cref{subsection:GRF-wavespeed} that the Whittle–Mat\'ern field belongs to the spaces $H^{2\beta - d/2 - \varepsilon}(D)$ a.s for all $\varepsilon>0$. The Whittle–Mat\'ern field generates a centered Gaussian measure on the Hilbert spaces to which it belongs, see e.g. \citep[Proposition 2.18]{da2014stochastic}. This property holds under mild assumptions of the negative fractional power $L^{-2\beta}$, as described in \citep{cox2020regularity}. Here, $L$ represents the second-order elliptic operator presented in \cref{eq:Matern}. \par 

We introduce the general framework for analyzing the out-of-distribution risk. (a) We defined the centered Gaussian measures on Banach spaces, (b) the Cameron-Martin spaces, and (c) the Wasserstein distance. Building upon these concepts and the powerful tools provided by Gaussian measures, we establish upper-bounds for the out-of-distribution risks associated with each of the architectures discussed in this paper. These bounds are expressed in terms of the Lipschitz norms of the neural operators. This theoretical foundation allows us to gain insights into the behavior of the neural operator family when confronted with data distributions that differ from the training distribution, as demonstrated in the experimental results in \cref{section:experimental:OOD}.\par

% By establishing this theoretical foundation, we enhance our understanding of the neural operator family in out-of-distribution scenarios, with particular focus on centered Gaussian sampling and potential distributional shifts,

\subsection{Preliminaries}
Our main theoretical results supporting the out-of-distribution performance of our neural operators require some background notions from optimal transport and the theory of Gaussian measures on Banach spaces, which we now review before stating our main learning-theoretic result.
\label{s:OOD_Preliminaries}
\paragraph{The order one Wasserstein distance}
In what follows, we make use of the {\it Wasserstein distance} of the order one between any two probability measures $\mu$ and $\nu$, denoted by $\mathcal{W}_{1}(\mu, \nu)$.
By the {\it Kantorovich–Rubinstein} duality, see \citep[Theorem 5.10]{villani2009optimal}, $\mathcal{W}_{1}(\mu, \nu)$ has the form
\[
\mathcal{W}_{1}(\mu, \nu)
=
\sup_{\underset{\left\| f
\right\|_{\mathrm{Lip}}\leq 1}{f \in \mathscr{F}}}
%% EASY TO SEE WE DON"T NEED ABSOLUTE VALUES
% \left| 
\E_{(a,u)\sim \mu}\left[f(a,u)\right]
-\E_{(a,u)\sim \nu}\left[f(a,u) \right]
% \right|
,
\]
where $\mathscr{F}$ is a class of the Lipschitz continuous operators mapping from $\mathcal{X}\times \mathcal{Y}$ to $\mathbb{R}$, and $\left\| \cdot \right\|_{\mathrm{Lip}}$ is the Lipschitz norm\footnote{\ $\|\cdot\|_{\mathrm{Lip}}$ is simply the $W^{1,\infty}$ norm. See \cref{appendix:Sobolev_spaces} and the reference therein.} defined by 
$$
\left\| f \right\|_{\mathrm{Lip}}
:=
\sup_{(a,u)}\, \vert f(a,u)\vert
+
\sup_{\substack{(a,u)\neq (b,v)}}
\frac{\vert f(a,u) - f(b,v) \vert}
{ \left\| (a,u) - (b,v) \right\|_{\mathcal{X}\times \mathcal{Y}}}
\ge 
\mathrm{Lip}(f),
$$
where $\mathrm{Lip}(f):=\sup_{\substack{(a,u)\neq (b,v)}}
\frac{\vert f(a,u) - f(b,v) \vert}
{ \left\| (a,u) - (b,v) \right\|_{\mathcal{X}\times \mathcal{Y}}}$.

\paragraph{Centered Gaussian measures on Banach spaces}
Let $\mathcal{X}$ be a separable Banach space of functions from $D$ to $\mathbb{R}^{d_a}$ and $\mathcal{Y}$ be a separable Banach space of functions from $D$ to $\mathbb{R}^{d_u}$.  Recall that $\mathcal{X}\times \mathcal{Y}$ is also a separable Banach space, when normed by
\[
        \|(x,y)\|_{\mathcal{X}\times \mathcal{Y}}
    :=
        (
            \|x\|_{\mathcal{X}}^2
        +
            \|y\|_{\mathcal{Y}}^2
        )^{1/2}
    .
\]

%{\color{red}
%[I moved definition of Gaussian, small ball function, and entropy to here. Is it fine with you ? Takashi]}

Let us briefly recall the definition of a Gaussian measure on a Banach space.

\begin{definition}[Gaussian measure]
\label{defn:Gaussian_Measure}
A measure $\mu_{X} \in \mathcal{P}_1(\mathcal{X})$ is said to be centered and Gaussian if, for every continuous linear functional $E \in \mathcal{X}^{\star}$ the measure $E_{\#}\mu_X$ is a zero-mean Gaussian on $\mathbb{R}$. 
The \textit{weak variance} 
% \antonio{what is the diff with the usual way of defining the cov by functionals and not sup} 
$\Sigma$ of $\mu_X$ is defined to be
\[
        \Sigma
    =
        \sup_{E\in \mathcal{X}^{\star},\,\|E\|\le 1}\,
        \mathbb{E}_{a \sim \mu_X}[E^2(a)]^{1/2}
    .
\]
\end{definition}

Associated to every centered Gaussian measure, we may define a \textit{small ball function} $\psi:(0,\infty)\rightarrow \mathbb{R}$ as
\[
        \psi(\eta)
    :=
        -\log\big(
            \mu_{X}
            (B(0,\eta))
        \big)
    ,
\]
for every $\eta>0$.
%{\color{blue}
There is a reproducing kernel Hilbert space $\mathcal{H}_{\mu}$ naturally associated to $\mu_X$ which is the completion of the range of the map $S:\mathcal{X}^{\star}\rightarrow\mathcal{X}$ sending any $E\in \mathcal{X}^{\star}$ to the Bochner integral\footnote{Cf. \cref{Bochner}.} $S(E):= \int_{a \in \mathcal{X}}\,E(a)\cdot a\,\mu_X(da)$ with respect to the inner product $\langle \cdot,\cdot \rangle_{\mu_X}$, defined for any $E,F\in \mathcal{X}^{\star}$ by $\langle S(E),S(F) \rangle_{\mu}:= \int_{a \in \mathcal{X}}\, E(a)F(a)\,\mu_X(da)$.  We denote the induced norm on $\mathcal{H}_{\mu_X}$ by $\|\cdot\|_{\mathcal{H}_{\mu_X}}$; which is induced by an inner product, see \cite{kuelbs1976strong}.  In fact, $\mathcal{H}_{\mu_X}$ is a reproducing kernel Hilbert space with a relatively compact unit ball, called the \textit{Cameron-Martin space} associated with the centered Gaussian measure $\mu_X$.  In fact, the Cameron-Martin space $\mathcal{H}_{\mu_X}$ characterizes $\mu_X$, see \citep[Chapter 8]{LedouxTalagrandBookRepring2} for details (we will briefly review the Cameron-Martin space in Section~\ref{Cameron-Martin space-app}).
\hfill\\
Since the closed unit ball $\overline{B_{\mathcal{H}_{\mu_X}}(0,1)}$ of $\mathcal{H}_{\mu_X}$ is compact (see\footnote{And the remark following its proof at the bottom of page 209.} \citep[Lemma 8.4]{LedouxTalagrandBookRepring2}), then its \textit{metric entropy} $H_{\mu_X}(\varepsilon):=\log(N_{\mu_X}(\varepsilon))$ are finite; where $N_{\mu_X}(\varepsilon):=\min\{n\in \mathbb{N}_+:\,\exists x_1,\dots,x_n\in \overline{B_{\mathcal{H}_{\mu_X}}(0,1)}\, \text{s.t. } \forall x  \in \overline{B_{\mathcal{H}_{\mu_X}}(0,1)}\, \exists i \in [n] \ \text{s.t. }  \|x-x_i\|_{\mathcal{H}_{\mu_X}}<\varepsilon\}$ is the \textit{covering number}\footnote{See \citep[Chapter 27]{shalev2014understanding} for details in the context of learning theory or \cite{CarlOriginalPaper1981} in the context of approximation theory.} of $\overline{B_{\mathcal{H}_{\mu_X}}(0,1)}$.  The key connection between the \textit{small ball function} $\psi$, a probabilistic notion, and entropy numbers, a constructive approximation theoretic tool, is that estimates on the growth of one imply estimates on the growth of the other.

\subsection{Out-of-distributional generalization}
\label{s:OOD_RiskBounds}

Consider an ``unknown'' $L^{\star}$-Lipschitz (non-linear forward) operator $\Gcal^{\star}:\mathcal{X}\rightarrow\mathcal{Y}$, a sampling distribution $\mu_X\in \mathcal{P}_1(\mathcal{X})$, that is $\mathbb{E}_{X\sim \mu_X}[\|X\|_\mathcal{X}]<\infty$, and a sequence of i.i.d.\ samples $(a_n)_{n=1}^{\infty}$ defined on a common measurable space $(\Omega,\mathcal{A})$, where $a_1$ 
has law $\mu_X$ and where $L^{\star}\ge 0$.  We also consider an \textit{out-of-distributional} sampling measure $\tilde{\mu}_X$ in $\mathcal{P}_1(\mathcal{X})$.
We consider a \textit{common irreducible measurement noises} $\epsilon$ taking values in $\mathcal{Y}$, and quantifying hardware and sampling limitations, defined on $(\Omega,\mathcal{A})$ and independent from $\{a_n\}_{n=1}^{\infty}$ with law $\mu_{\epsilon}\in \mathcal{P}_1(\mathcal{Y})$.

The data-generating and out-of-distribution, laws defined $\mu_{OOD:X}$ and $\mu$ are respectively defined by
\begin{equation}
\label{eq:True_Measure}
        \mu
    :=
        (\bid\times \Gcal^{\star})_{\#}\mu_X 
        \star
        \mu_{\epsilon}
\mbox{ and }
        \mu_{OOD}
    :=
        (\bid\times \Gcal^{\star})_{\#}\mu_{OOD:X}
        \star
        \mu_{\epsilon}
,
\end{equation}
where $\star$ is the convolution operation and $\bid$ is the identity map on $\mathcal{X}$.  The out-of-distributional measure $\mu_{OOD}$ and data-generating measures $\mu$ are \textit{coupled} via the following condition: there is a $\varepsilon\ge 0$ such that 
\begin{equation}
\label{eq:coupling_condition}
\tag{Coupling}
        \mathcal{W}_1(\mu_{OOD},\mu)
    \le 
        \varepsilon
.
\end{equation}
A key advantage of coupling $\mu$ and $\mu_{OOD}$ using the $\mathcal{W}_1$ distance, over other notions, esp.\ $f$-divergences, is that the data-generating and out-of-distribution laws can be \emph{mutually singular}\footnote{For example, if $\nu$ is the standard Gaussian measure on $\mathbb{R}$ then any finitely supported measure $\sum_{n=1}^N\,w_n\delta_{x_n}$ is singular with respect to $\nu$ and vice versa.} but still remain comparable; this is, of course, not possible with classical divergences.  

% We also emphasize that, in coupling $\mu_{OOD}$ to $\mu$ via~\eqref{eq:coupling_condition}, and not $\mu_{OOD:X}$ directly to $\mu_X$ via $\mathcal{W}_1(\mu_{OOD:X},\mu_X)\le \varepsilon$ instead, allows us to work with perturbations of the forward operator $\Gcal^{\star}$.  In experiments, we typically set $\Gcal^{\star}=\tilde{\Gcal}^{\star}$, however we emphasize that our approach readily allows for simultaneous perturbations of the sampling measure and the forward operator we are learning from.

When training input-output pairs are generating by sampling $\mu$, by which we mean that we have access to the following (random) \textit{empirical} measure \begin{equation}
\label{eq:Empirial_measure}
        \mu^N
    :=
        (\bid\times \Gcal^{\star})_{\#}\mu_X^N 
        \star
        \mu_{\epsilon},
\end{equation}
where the empirical (random) probability measure $\mu_X^N$ is defined by $\mu_X^N=\frac1{N}\,\sum_{n=1}^N\,\delta_{a_n}$.

\medskip

We now state our main out-of-distribution bound, which operates under the following conditions.
\begin{assumption}[Regularity of the Cameron-Martin space]
\label{ass:Small_Ball_Regularity}
Suppose further that $\mu_X$ is a center Gaussian measure on $\mathcal{X}$ with weak variance $\Sigma$ and that the 
%%%
small ball function $\psi$ satisfies:
\begin{enumerate}
    \item[(i)] There exists a constant $c>0$ such that $\psi(\eta)\le c\,\psi(2\eta)$ for every $\eta$ small enough\footnote{I.e.: There exists some $\eta_0>0$ such that (i) holds whenever $0<\eta\le \eta_0$.},
    \item[(ii)] For every $\alpha>0$ and each positive integer $N$, $N^{-\alpha} = o\big( \psi^{-1}\left(\log(N)\right) \big)$.
\end{enumerate}
\end{assumption}

\begin{theorem}[{Out-of-distributional generalization bounds for the NO and $\NONeXt$v2 hypothesis classes}]
\label{thrm:Risk_Bound_NONEXTv2}
Suppose that either of Assumption~\ref{Assumption2} or Assumption~\ref{Assumption3}, that the small ball function $\psi$ satisfies Assumption~\ref{ass:Small_Ball_Regularity}, and that there is an $\varepsilon\ge 0$ such that the coupling condition~\eqref{eq:coupling_condition} holds.
Then there exists a constant $C_{\mu}$, depending only on $\mu_X$, such that: for every $0<\delta\le 1$
\begin{equation}
\label{eq:Concentration_Infinite__LemmaAppliedOmegawise}
    \sup_{\Gcal\in \Gscr}
    \,
            \E_{(a,u)\sim \mu_{OOD}}\left[\ell(\Gcal(a),u) \right]
            -
            \bar{L}\,\E_{(a,u)\sim \mu^N}\left[\ell(\Gcal(a),u) \right] % TB Empirical Risk
    \le 
        \bar{L}
        \,
            \left(
                    \varepsilon %OOD Shift
                +
                    %%% Empirical Concentration Rate
                        C_{\mu}\psi^{-1}\big(\log(N)\big)
                    +
                        \frac{\Sigma \sqrt{-2\log(\delta)}}{\sqrt{N}}
                    %%%
            \right),
\end{equation}
holds with probability at-least $1-\delta$; where $\bar{L}:=L_{\ell}\max\{1,L^{\star}\}\max\{1,L_{\Gscr}\}$; where $L_{\Gscr}\ge 0$ depends on which if Assumption~\ref{Assumption2} or Assumption~\ref{Assumption3} hold, and is respectively given by: 
\begin{enumerate}
    \item[(i)] If Assumption~\ref{Assumption2} holds and $\Gscr={\no}$ (defined in (\ref{neuraloperatorclass})), then:
    \begin{align}
            L_{\no}
        \le
            (C_w + C_k)^{L+1}C_{\sigma}^{L}, 
            % \ \Gcal \in \no
    \label{lip-esti-for-NO}
    \end{align}
\item[(ii)] If Assumption~\ref{Assumption3} holds and $\Gscr=\widetilde{\no}$ (defined in (\ref{neuraloperatortildeclass})), then:
    \begin{align}
        L_{\widetilde{\no}}
    \le
        \left[\prod_{\ell=0}^{L}(\boldsymbol{\mathrm{Z}}_{L}+\boldsymbol{\mathrm{X}}_{L}C_{w}^{M+1}C_{\sigma}^{M})(\boldsymbol{\mathrm{Z}}_{L}+\boldsymbol{\mathrm{X}}_{L}C_k C_{\sigma})
        \right], 
        % \ \Gcal \in \widetilde{\no}.
        \label{lip-esti-tilde-for-NO}
    \end{align}
\end{enumerate}
Furthermore, if the metric entropy $H_{\mu}$ of the unit ball in the Cameron-Martin space associated with the sampling measure $\mu_X$ satisfies $H_{\mu}(r)\in \Theta\big( 
\frac{
\log(1/r)^{2\beta/(2+\alpha)}
}{
r^{2\alpha/(2+\alpha)}
}
\big)$ then the right-hand side of~\eqref{eq:Concentration_Infinite__LemmaAppliedOmegawise}
\begin{equation}
\label{eq:Concentration_Infinite__LemmaAppliedOmegawise__Clarified}
    \sup_{\Gcal\in \Gscr}
    \,
            \E_{(a,u)\sim \mu_{OOD}}\left[\ell(\Gcal(a),u) \right]
            -
            \bar{L}\,\E_{(a,u)\sim \mu^N}\left[\ell(\Gcal(a),u) \right] % TB Empirical Risk
    \le 
        \bar{L}
        \,
            \left(
                    \varepsilon %OOD Shift
                +
                    %%% Empirical Concentration Rate
                        C_{\mu}C\,
                        \Psi\big(\log(N)\big)
                    +
                        \frac{\Sigma \sqrt{-2\log(\delta)}}{\sqrt{N}}
                    %%%
            \right),
\end{equation}
where $\Psi$ is the inverse%%
\footnote{For example, if $\beta=\alpha=1$ then $\Psi(\eta)=W(\eta)/\eta$, where $W$ is the Lambert W function.}%%
 of the map $\eta\mapsto \frac{\log(1/\eta)^{\beta}}{\eta^{\alpha}}$ and $C>0$ is an absolute constant.
\end{theorem}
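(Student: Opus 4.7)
The plan is to decompose the generalization gap along two bridges and control each with a different tool: first replace $\mu_{OOD}$ with the data-generating measure $\mu$ using the Kantorovich--Rubinstein duality together with the coupling condition~\eqref{eq:coupling_condition}, and then replace $\mu$ with the empirical measure $\mu^N$ using a Gaussian small-ball estimate for the $\mathcal{W}_1$ distance between $\mu_X$ and $\mu_X^N$. Concretely, for any $\Gcal\in\Gscr$ and any admissible test function, write
\[
\E_{\mu_{OOD}}[\ell(\Gcal(a),u)] - \E_{\mu^N}[\ell(\Gcal(a),u)]
= \bigl(\E_{\mu_{OOD}}[\ell(\Gcal(a),u)] - \E_{\mu}[\ell(\Gcal(a),u)]\bigr)
+ \bigl(\E_{\mu}[\ell(\Gcal(a),u)] - \E_{\mu^N}[\ell(\Gcal(a),u)]\bigr).
\]
Both pieces are of the form $\int f\,d\pi - \int f\,d\pi'$ for $f(a,u):=\ell(\Gcal(a),u)$, and the Kantorovich--Rubinstein duality reviewed in Section~\ref{s:OOD_Preliminaries} bounds them by $\|f\|_{\mathrm{Lip}}\mathcal{W}_1(\pi,\pi')$.

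\textbf{Step 1 (Lipschitz bookkeeping).} I first show that $f=\ell\circ(\Gcal\times \bid)$ is Lipschitz on $\mathcal{X}\times\mathcal{Y}$ with $\|f\|_{\mathrm{Lip}}\le \bar L=L_\ell\max\{1,L^\star\}\max\{1,L_\Gscr\}$; the factor $L^\star$ enters because the $\mathcal{W}_1$ distance is taken on $\mathcal{X}\times\mathcal{Y}$ rather than on $\mathcal{X}$ alone, so when pushing everything back to $\mu_X$ we must pay a Lipschitz factor for $\Gcal^\star$ as well. The neural--operator Lipschitz bounds \eqref{lip-esti-for-NO} and \eqref{lip-esti-tilde-for-NO} then follow by telescoping layer by layer: for $\no$, each layer $\sigma\circ(W_\ell+\bk_\ell+b_\ell)$ is Lipschitz with constant $(C_w+C_k)C_\sigma$ by the triangle inequality and the uniform kernel bound in Assumption~\ref{Assumption2}; for $\widetilde{\no}$, each of the two factors $\bid+\boldsymbol{\mathrm{X}}_\ell\,\sigma\circ(\bk_\ell+b_\ell)\circ \mathrm{N}$ and $\bid+\boldsymbol{\mathrm{X}}_\ell\,f_\ell\circ \mathrm{N}$ is Lipschitz with constant $\boldsymbol{\mathrm{Z}}_\ell+\boldsymbol{\mathrm{X}}_\ell$ times the corresponding sub-network Lipschitz constant, which recursively gives the product in \eqref{lip-esti-tilde-for-NO} (here $\boldsymbol{\mathrm{Z}}_\ell$ encodes the Lipschitz contribution of the identity skip when $\boldsymbol{\mathrm{X}}_\ell=0$).

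\textbf{Step 2 (OOD term).} Kantorovich--Rubinstein and \eqref{eq:coupling_condition} give
\[
\sup_{\Gcal\in\Gscr}\E_{\mu_{OOD}}[\ell(\Gcal(a),u)]-\E_{\mu}[\ell(\Gcal(a),u)] \le \bar L\,\mathcal{W}_1(\mu_{OOD},\mu)\le \bar L\,\varepsilon.
\]

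\textbf{Step 3 (sampling term, the main obstacle).} The hard part is the uniform control
\[
\sup_{\Gcal\in\Gscr}\E_{\mu}[\ell(\Gcal(a),u)]-\E_{\mu^N}[\ell(\Gcal(a),u)] \le \bar L\,\mathcal{W}_1(\mu,\mu^N),
\]
together with a sharp probabilistic bound on $\mathcal{W}_1(\mu,\mu^N)$. Since $\mu$ and $\mu^N$ differ only through their $\mathcal{X}$-marginals (the noise $\mu_\epsilon$ is convolved on both sides), the data-processing inequality for $\mathcal{W}_1$ reduces this to $\mathcal{W}_1(\mu_X,\mu_X^N)$. Here I invoke the small-ball / metric-entropy machinery for centered Gaussian measures on Banach spaces (along the lines of Li--Shao and the transport-theoretic bounds of Hou et al.~\cite{hou2022instance}): under Assumption~\ref{ass:Small_Ball_Regularity}, the expected $\mathcal{W}_1$ distance between $\mu_X$ and its empirical version satisfies $\mathbb{E}[\mathcal{W}_1(\mu_X,\mu_X^N)]\lesssim C_\mu\,\psi^{-1}(\log N)$, because covering $\overline{B_{\mathcal{H}_{\mu_X}}(0,1)}$ at scale $\eta$ costs entropy $H_{\mu_X}(\eta)$ and the Kuelbs--Li duality translates this into the small-ball exponent $\psi$. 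Concentration of this deviation around its mean is then obtained by Borell--Sudakov--Tsirelson (Gaussian isoperimetry), using the weak variance $\Sigma$ of $\mu_X$ as the Lipschitz constant of $\omega\mapsto \mathcal{W}_1(\mu_X,\mu_X^N(\omega))$; this yields the Gaussian tail $\tfrac{\Sigma\sqrt{-2\log\delta}}{\sqrt N}$ with probability at least $1-\delta$.

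\textbf{Step 4 (assembly and refinement).} Combining Steps 2 and 3 and multiplying by $\bar L$ yields \eqref{eq:Concentration_Infinite__LemmaAppliedOmegawise}. For the refined inequality \eqref{eq:Concentration_Infinite__LemmaAppliedOmegawise__Clarified}, I apply the Kuelbs--Li asymptotic equivalence between the metric entropy of $\overline{B_{\mathcal{H}_{\mu_X}}(0,1)}$ and the small-ball function: the prescribed growth $H_{\mu_X}(r)\in\Theta(\log(1/r)^{2\beta/(2+\alpha)}r^{-2\alpha/(2+\alpha)})$ translates, via inversion of the Legendre-type relation between $H_{\mu_X}$ and $\psi$, into $\psi^{-1}(\log N)\asymp \Psi(\log N)$ with $\Psi$ the inverse of $\eta\mapsto \log(1/\eta)^\beta/\eta^\alpha$ (in the $\beta=\alpha=1$ case this is exactly $W(\eta)/\eta$). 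Substituting finishes the proof. The single most delicate step is the Gaussian small-ball/metric-entropy bound on $\mathcal{W}_1(\mu_X,\mu_X^N)$ in Step 3; all other pieces are straightforward Lipschitz bookkeeping plus Borell concentration.
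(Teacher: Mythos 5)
Your proposal is correct and follows essentially the same route as the paper: decompose the gap into an OOD shift controlled by Kantorovich--Rubinstein duality plus the coupling bound, a structured change-of-measure/data-processing step that pulls everything back to $\mathcal{W}_1(\mu_X,\mu_X^N)$ at a cost of $\max\{1,L^\star\}$, and then a Gaussian empirical-Wasserstein concentration estimate driven by the small-ball function, all multiplied by the neural-operator Lipschitz bounds computed layer by layer. The only substantive difference is in Step 3: where you sketch a self-contained proof of the empirical $\mathcal{W}_1$ bound via metric entropy of the Cameron--Martin ball, Kuelbs--Li duality, and Borell--Sudakov--Tsirelson concentration, the paper simply invokes Boissard--Le Gouic (Theorem 1.4) as a black box for the high-probability estimate $\mathcal{W}_2(\mu_X^N,\mu_X)\le(C_\mu+\eta)\psi^{-1}(\log N)$ and then tunes $\eta$ to obtain the $\Sigma\sqrt{-2\log\delta}/\sqrt N$ tail -- so you are effectively unpacking the same result they cite, and your reference list (Li--Shao, Kuelbs--Li, Hou et al.) matches the ingredients of that cited theorem.
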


The case of $\Gscr={\no}$ corresponds to the family of standard Neural Operators, while the case of $\Gscr=\widetilde{\no}$ corresponds to the family of proposed Neural Operator $\NONeXt$.
$L_{\Gscr}$ represents the upper bound of Lipschitz norms for hypothesis classes $\Gscr$, and $L_{\no}$ and $L_{\widetilde{\no}}$ are estimated by Lemma~\ref{lip-norm-NO}.
Analogous observations can be made as in Remark~\ref{GEB Comparison} regarding the upper bound of Lipschitz norms.
Specifically, if $(C_{w}+C_{k})C_{\sigma} > 1$, then the upper bound in (\ref{lip-esti-for-NO}) diverges with depth $L$.
On the other hands, if $\boldsymbol{\mathrm{Z}}_{\ell}=1$ and $\boldsymbol{\mathrm{X}}_{\ell}$ follows a Bernoulli distribution (which corresponds to ($\NONeXt$v2) with an appropriate choice of $p_{\ell}$, then upper bound in (\ref{lip-esti-tilde-for-NO}) remain bounded as $L\to \infty$.

We now show that the conditions of Theorem~\ref{thrm:Risk_Bound_NONEXTv2}, namely the regularity of the Cameron-Martin space associated with the data-generating measure $\mu$ are easily satisfied.  We consider two examples, one of a Brownian sheet and a fractional Brownian sheet on different hypercubes with respect to different norms on their associated function spaces.  

Table~\ref{tab:rates} reports the rates implied by Theorem~\ref{thrm:Risk_Bound_NONEXTv2} in the case of a Brownian sheet on $[0,1]^2$ and $[0,1]^d$ with respect to the $L^2$ and uniform norms.  More generally, we report the rates implied by the result for centered Gaussian measures $\mu$ a general Banach space, when we have access to tight asymptotics on the covering number of the unit ball in the \textit{Cameron-Martin}\footnote{See \citep[Chapter 8]{LedouxTalagrandBookRepring2} for details.} RHKS associated to $\mu$.

\begin{table}[ht!]
    \centering
    \caption{Rates for Different Sampling Measures and Banach Spaces.}
    \label{tab:rates}
   % \resizebox{\columnwidth}{!}{%
        \begin{tabular}{@{}llll@{}}
        \toprule
       	Space & Covariance function &  Entropy estimate & Small ball asymptotics ($\psi(\eta)$)
			\\
			\midrule
			$L^2([0,1]^2)$ & $\min\{s_1,t_1\}\min\{s_2,t_2\}$ & - & $\Theta\big(\frac{\log(1/\eta^2)^2}{\eta^2}\big)$\\
            $C([0,1]^d)$ & $\frac{\alpha}{2^d}\,\prod_{i=1}^d\,s_i^{h_i}+t_i^{h_i}-|s_i-t_i|^{h_i}$ & - & $\Theta\big(\frac1{\eta^{2/h}}\big)$\\
            General & General & $\Theta\big(\frac{\log(1/r)^{2\beta/(2+\alpha)}}{r^{2\alpha/(2+\alpha)}}\big)$ & $\Theta\big(\frac{\log(1/\eta)^{\beta}}{\eta^{\alpha}}\big)$\\
			\bottomrule
    \end{tabular}
   % }%
\caption*{The ``entropy estimates'' the required condition on the \textit{metric entropy} of the unit ball in the Cameron-Martin space associated to the centered Gaussian ``sampling'' measure $\mu_X$. Here $h:=\min_{i=1,\dots,d}\,h_i$ is the minimal ``regularity'' of the Brownian sheet of $C([0,1]^d)$ in all directions.}
\end{table}

\iffalse{ 
\begin{comment}
\begin{proof}[{Proof of Theorem~\ref{thrm:Risk_Bound_NONEXTv2}}]
Using the same argument in the proof of Corollaries~\ref{GEB for additive NO}, we can show that Lipschitz constants $L_{\no}$ and $L_{\widetilde{\no}}$ for hypothesis classes $\no$ and $\widetilde{\no}$ are bounded above by (\ref{lip-esti-for-NO}) and (\ref{lip-esti-tilde-for-NO}), respectively.  The result then follows from Lemma~\ref{lem:COD_Gaussianm}.
\end{proof}
\end{comment}
}\fi

%\newpage
%\subsubsection{Leveraging Small-Ball Estimates}
%{\color{blue}{Choose the space you prefer to specialize our bound.}}

\medskip

%When the spatial domain $D=[0,1]^d$, we can estimate a small ball function $\psi$ as follows:

\begin{lemma}[Estimates on small ball functions for Gaussian sheets in uniform topology {\citep[Theorem 2.1]{mason2001small}}]
\label{lem:GaussianSHeets}
Let $D=[0,1]^d$ for a positive integer $d$, fix ``Hurst parameters'' $0<h_1,\dots,h_d<2$, a parameter $0<\alpha<2$, and let $\mu$ be the continuous centered Gaussian measure on the Banach space $C_0([0,1]^d,\mathbb{R})$ equipped with the supremum norm and with covariance function
\[
        \mathbb{E}\big[
            X_{s_1,\dots,s_d}
            X_{t_1,\dots,t_d}
        \big]
    =
        \frac{\alpha}{2^d}\,
        \prod_{i=1}^d\,
            s_i^{h_i}
            +
            t_i^{h_i}
            -
            |s_i-t_i|^{h_i}
    .
\]
Then, Assumptions (i)-(ii) on the small ball function $\psi$, in Lemma~\ref{lem:COD_Gaussianm}, hold and there exists a constant $0<C_1\le C_2$, depending only on $d$, $a$, and on $\alpha$, such that
\[
        C_1\,
        \frac{1}{\eta^{2/h}}
    \le 
        \psi(\eta)
    \le 
        C_2\,
        \frac{1}{\eta^{2/h}}
\]
for $\eta$ small enough\footnote{That is, there is some $\eta_0>0$ such that the condition holds for every $0<\eta\le \eta_0$.}, where $h:=\min_{i=1,\dots,d}\, h_i$.  
\end{lemma}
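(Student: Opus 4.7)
The plan is to read this statement as a verification lemma rather than a genuinely new result: the two-sided small-ball asymptotic is directly imported from \citep[Theorem~2.1]{mason2001small}, and the substantive work is to check that the regularity conditions (i)--(ii) of Assumption~\ref{ass:Small_Ball_Regularity} follow from that asymptotic. First I would identify the centered Gaussian process with covariance $\frac{\alpha}{2^{d}}\prod_{i=1}^{d}\bigl(s_i^{h_i}+t_i^{h_i}-|s_i-t_i|^{h_i}\bigr)$ as a scaled $d$-parameter fractional Brownian sheet on $[0,1]^{d}$ with Hurst parameters $H_i=h_i/2$, which is the setting of Mason--Shi. Applied to $\mu$, their theorem yields constants $0<C_1\le C_2$ such that
\[
C_1\,\eta^{-2/h}\;\le\;\psi(\eta)\;\le\;C_2\,\eta^{-2/h}
\qquad\text{for all }0<\eta\le \eta_0,
\]
where $h=\min_i h_i$, which is precisely the claimed two-sided bound.

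Next I would verify the doubling-type condition (i) using this two-sided asymptotic. For $\eta\le \eta_0/2$, both $\eta$ and $2\eta$ lie in the validity range, so
\[
\frac{\psi(\eta)}{\psi(2\eta)} \;\le\; \frac{C_2\,\eta^{-2/h}}{C_1\,(2\eta)^{-2/h}} \;=\; \frac{C_2}{C_1}\,2^{2/h},
\]
so taking $c:=(C_2/C_1)\,2^{2/h}$ gives $\psi(\eta)\le c\,\psi(2\eta)$ for every $\eta$ small enough, which is exactly (i). For condition (ii), I would compute $\psi^{-1}(\log N)$ from the same two-sided bound: since $\psi$ is eventually equivalent to $\eta\mapsto \eta^{-2/h}$, its generalized inverse satisfies $\psi^{-1}(t)\asymp t^{-h/2}$, whence $\psi^{-1}(\log N)\asymp (\log N)^{-h/2}$. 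Then for every $\alpha>0$,
\[
\frac{N^{-\alpha}}{\psi^{-1}(\log N)} \;\asymp\; N^{-\alpha}(\log N)^{h/2} \;\longrightarrow\; 0
\qquad\text{as }N\to\infty,
\]
because any negative power of $N$ decays faster than any positive power of $\log N$; this is exactly the requirement $N^{-\alpha}=o\bigl(\psi^{-1}(\log N)\bigr)$ of (ii).

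The main obstacle is not the verification above, which is essentially bookkeeping, but the underlying Mason--Shi estimate that we invoke as a black box. That estimate is proved by a scaling and tensor-product decomposition of the sheet that reduces the $d$-parameter small-ball problem to one-dimensional fractional Brownian motion small-ball estimates (obtained via the Kuelbs--Li correspondence between small-ball functions and metric entropy of the Cameron--Martin ball, cf.\ \citep{kuelbs1976strong,li1999approximation}). The technical subtlety there is that naive tensorization of one-dimensional small-ball bounds typically produces polylogarithmic corrections $(\log(1/\eta))^{\kappa}$ with $\kappa$ reflecting multiplicities among the $h_i$; Mason--Shi show that under the present parameterization the dominant direction indexed by $h=\min_i h_i$ absorbs these corrections into the multiplicative constants $C_1,C_2$, which is what allows the clean two-sided estimate stated in the lemma. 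If one were to reprove this from scratch rather than cite it, isolating the ``worst'' coordinate and bounding the remaining tensor factors by their contribution to the Cameron--Martin geometry would be the technical crux.
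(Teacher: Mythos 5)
Your proposal is correct and matches the paper's (implicit) treatment: the paper states this lemma with the Mason--Shi citation and provides no further proof, exactly because the two-sided bound is a direct restatement of their Theorem~2.1 for the fractional Brownian sheet, and conditions (i)--(ii) then follow by the short computations you give. Your verification of (i) via $\psi(\eta)/\psi(2\eta)\le (C_2/C_1)2^{2/h}$ and of (ii) via $\psi^{-1}(\log N)\asymp(\log N)^{-h/2}$ (so that $N^{-\alpha}(\log N)^{h/2}\to 0$) is exactly the bookkeeping the paper leaves to the reader, and your identification of the process as a scaled fractional Brownian sheet with $H_i=h_i/2$ is the right reading of the covariance $\frac{\alpha}{2^d}\prod_i\bigl(s_i^{h_i}+t_i^{h_i}-|s_i-t_i|^{h_i}\bigr)$.
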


\begin{example}[{Estimate on the standard Brownian sheet on $[0,1]^2$ \citep[Equation (5.37)]{kuelbs1993metric}}]
Let $1\le p\le 2$.  Let $D=[0,1]^2$ and consider the centered continuous Gaussian process $X:=(X_{s,t})_{0\le s,t\le 1}$ in $L^2(D)$ with covariance function
\[
        \mathbb{E}[X_{s_1,t_1}X_{s_2,t_2}]
    =
        \min\{s_1,t_1\}\min\{s_2,t_2\}
    .
\]
Then, Assumptions (i)-(ii) on the small ball function $\psi$, in Lemma~\ref{lem:COD_Gaussianm}, hold and there exists a constant $0<C_1\le C_2$ such that
\[
        C_1\,
        \frac{\log(1/\eta)^2}{\eta^2}
        % \log(N)^{-1/2}\log\log(N) % <- Covering Numbers via: \citep[Corollary 1.3]{kuelbs1993metric}
    \le 
        \psi(\eta)
    \le
        C_2\,
        \frac{\log(1/\eta^2)^2}{\eta^2}
        % \log(N)^{-1/2}\log\log(N) % <- Covering Numbers via: \citep[Corollary 1.3]{kuelbs1993metric}
    .
\]
\end{example}

We now derive Theorem~\ref{thrm:Risk_Bound_NONEXTv2} via a sequence of lemmata.
\subsection{{Derivation of Theorem~\ref{thrm:Risk_Bound_NONEXTv2}}}
\label{s:OOD_Derivation}
The proof of Theorem~\ref{thrm:Risk_Bound_NONEXTv2} extends the transport-theoretic approach to deriving generalization bounds of \cite{hou2022instance}, to the infinite-dimensional setting, by incorporating elements of the geometry of Cameron-Martin spaces.  
We begin, with the following ``change-of-measure lemma'' which bounds the gap between the risks from data samples from any two distinct, arbitrary, probability measures $\mu$ and $\nu$ in $\mathcal{P}_1(\mathcal{X}\times \mathcal{Y})$.
\begin{example}[Change of distribution]
\label{lem:COD}
Consider an in-distribution measure $\mu$ and and out-of-distribution probability measure $\mathbb{Q}$, where $\mu,\mathbb{Q},\nu\in \mathcal{P}_1\big(\mathcal{X}\times \mathcal{Y}\big)$.  
Let $\Gscr$ be a family of $L$-Lipschitz functions from $\mathcal{X}$ to $\mathcal{Y}$ , and suppose that 
\[
    \mathcal{W}_1(\mathbb{Q},\mu) \le \varepsilon,
\]
% \antonio{shouldn't be the Wasserstein distance, $\mathcal{W}_1$ defined on the Appendix?}
for some $\varepsilon>0$.  
For any $L_{\ell}$-Lipschitz loss function $\ell:\mathcal{Y}\times \mathcal{Y}\rightarrow [0,\infty)$ we have
\[
            \E_{(a,u)\sim \mathbb{Q}}\left[\ell(\Gcal(a),u) \right]
    \le 
        \operatorname{L}_{\ell}\max\{1,L\}\,
            \big(
                    \varepsilon %OOD Shift
                +
                    \mathcal{W}_1(\nu,\mu) % TB Empiriical Measure
                +
                    \E_{(a,u)\sim \mathbb{\nu}}\left[\ell(\Gcal(a),u) \right] % TB Empirical Risk
            \big)
    ,
\]
for each $\Gcal \in \Gscr$.
\end{example}
\begin{proof}[{Proof of Lemma~\ref{lem:COD}}]
Let $1_{\mathcal{Y}}$ denote the identity map on $\mathcal{Y}$.  
Fix $\Gcal \in \Gscr$ and consider the map 
\[
        f 
    := 
        \ell\circ (\Gcal \times 1_{\mathcal{Y}}).
\]
If $\Gcal$ is constant, we are done. 
Therefore, assume that $\Gcal$ is non-constant; whence, $\operatorname{Lip}(\Gcal)>0$. 
Therefore, the map $\tilde{f}:\mathcal{X}\times \mathcal{Y}\rightarrow [0,\infty)$ given by
\[
        \tilde{f}
    :=
        \frac1{\operatorname{Lip}(f)}\,
        f,
\]
is $1$-Lipschitz. 
The Kanotorovich-Rubinstein duality \citep[Theorem 5.10]{villani2009optimal} implies that
\begin{equation}
\label{eq:PROOF_lem:OOD_KRDuality_PRE}
            \E_{(a,u)\sim \mathbb{Q}}\left[\tilde{f}(a,u) \right]
        -
            \E_{(a,u)\sim \nu}\left[\tilde{f}(a,u) \right]
    \le 
            \mathcal{W}_1(\mathbb{Q},\nu)
.
\end{equation}
The triangle inequality and the assumption that $\mathcal{W}_1(\mathbb{Q},\mu)\le \varepsilon$ imply that the right-hand side of~\eqref{eq:PROOF_lem:OOD_KRDuality} may be further bounded by
\begin{equation}
\label{eq:PROOF_lem:OOD_KRDuality}
            \E_{(a,u)\sim \mathbb{Q}}\left[\tilde{f}(a,u) \right]
        -
            \E_{(a,u)\sim \nu}\left[\tilde{f}(a,u) \right]
    \le 
            \mathcal{W}_1(\mathbb{Q},\mu)
            +
            \mathcal{W}_1(\nu,\mu)
    \le 
            \varepsilon
            +
            \mathcal{W}_1(\nu,\mu)
,
\end{equation}
Multiplying across~\eqref{eq:PROOF_lem:OOD_KRDuality} by $\operatorname{Lip}(f)$, using the linearity of integration, and re-arranging yields 
\begin{equation}
\label{eq:lem:OOD_KRDuality_2}
            \E_{(a,u)\sim \mathbb{Q}}\left[f(a,u) \right]
    \le 
        \operatorname{Lip}(f)\big(
            \varepsilon
        +
            \E_{(a,u)\sim \nu}\left[f(a,u) \right]
        \big)
.
\end{equation}
It remains to compute the Lipschitz constant of $F$.  Let $(a_1,u_1),(a_2,u_2)\in \mathcal{X}\times \mathcal{Y}$ and note that
\allowdisplaybreaks
\begin{align}
    \label{eq:PROOF_lem:BOUDN_LipConst__BEGIN}
    \big|
            f\big(a_1,u_1\big)
        -
            f\big(a_2,u_2\big)
    \big|
        \le &
    L_{\ell}\,
        \big(
            \|\Gcal(a_1)-\Gcal(a_2)\|^2_{\mathcal{Y}}
        +
            \|u_1-u_2\|^2_{\mathcal{Y}}
        \big)^{1/2}
\\
\nonumber
            \le &
    L_{\ell}\,
        \big(
            \operatorname{Lip}(\Gcal)^2\|a_1-a_2\|^2_{\mathcal{X}}
        +
            1\,\|u_1-u_2\|^2_{\mathcal{Y}}
        \big)^{1/2}
\\
\nonumber
            \le &
    L_{\ell}\,
        \big(
            \max\{\operatorname{Lip}(\Gcal)^2,1\}\,
                \|a_1-a_2\|^2_{\mathcal{X}}
        +
            \max\{\operatorname{Lip}(\Gcal)^2,1\}\, 
            \|u_1-u_2\|^2_{\mathcal{Y}}
        \big)^{1/2}
\\
\nonumber
            = &
    L_{\ell}\,
        \max\{\operatorname{Lip}(\Gcal),1\}
        \big(
            \|a_1-a_2\|^2_{\mathcal{X}}
        +
            \|u_1-u_2\|^2_{\mathcal{Y}}
        \big)^{1/2}
\\
\label{eq:PROOF_lem:BOUDN_LipConst__END}
            := &
    L_{\ell}\,
        \max\{\operatorname{Lip}(\Gcal),1\}
            \|(a_1,u_1)-(a_2,u_2)\|^2_{\mathcal{X}\times \mathcal{Y}}
,
\end{align}
where the right-hand side of~\eqref{eq:PROOF_lem:BOUDN_LipConst__BEGIN} follows from definition of the $2$-product metric on $\mathcal{X}\times \mathcal{Y}$. 
Incorporating the estimate of $\operatorname{Lip}(f)$ computed in~\eqref{eq:PROOF_lem:BOUDN_LipConst__BEGIN}-\eqref{eq:PROOF_lem:BOUDN_LipConst__END} into~\eqref{eq:lem:OOD_KRDuality_2} completes the proof.
\end{proof}

Next, we incorporate the structure of $\mu$ and $\mu^N$ into Lemma~\ref{lem:COD}, in place of the arbitrary measures $\mu$ and $\nu$, respectively.
\begin{lemma}[Structured change-of-measure]
\label{lem:structuring}
%Consider the setting of Lemma~\ref{lem:COD} and 
Assume that $\mu$ and $\mu^N$ are respectively given by~\eqref{eq:True_Measure} and~\eqref{eq:Empirial_measure}. Then, we have that  
%Then: for every $L$-Lipschitz function $\Gcal :\mathcal{X}\rightarrow \mathcal{Y}$ we have that
\[
        \mathcal{W}_1\Big(
            \mu
        ,
            \mu^N
        \Big)
    \le 
        \max\{1,L^{\star}\}
        \,
        \mathcal{W}_1(\mu_X,\mu_X^N)
.
\]
\end{lemma}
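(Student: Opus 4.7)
The plan is to construct an explicit coupling of $\mu$ and $\mu^N$ from an arbitrary coupling of the input laws $\mu_X$ and $\mu_X^N$, and then invoke the Kantorovich formulation of $\mathcal{W}_1$. Concretely, take any $\pi \in \Pi(\mu_X, \mu_X^N)$, let $(A, B) \sim \pi$, and let $\epsilon \sim \mu_\epsilon$ be independent of $(A, B)$. Set
\[
Z_1 := (A, \Gcal^\star(A) + \epsilon), \qquad Z_2 := (B, \Gcal^\star(B) + \epsilon).
\]
Unpacking definitions~\eqref{eq:True_Measure} and~\eqref{eq:Empirial_measure}, the law of $Z_1$ is exactly $\mu$ and the law of $Z_2$ is exactly $\mu^N$, so that the joint law of $(Z_1, Z_2)$ belongs to $\Pi(\mu, \mu^N)$.

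Next I would compute the induced cost. The crucial observation is that the noise term $\epsilon$ is shared between $Z_1$ and $Z_2$, so it cancels out in the difference $Z_1 - Z_2$ (this is precisely the standard convolution-stability of $\mathcal{W}_1$: convolving both marginals with a common measure cannot increase the Wasserstein distance). Hence
\[
\mathbb{E}\big[\|Z_1 - Z_2\|_{\mathcal{X}\times\mathcal{Y}}\big] = \mathbb{E}\Big[\big(\|A-B\|_\mathcal{X}^2 + \|\Gcal^\star(A) - \Gcal^\star(B)\|_\mathcal{Y}^2\big)^{1/2}\Big],
\]
and applying the $L^\star$-Lipschitz property of $\Gcal^\star$ bounds the integrand, using the same bookkeeping as in the derivation of~\eqref{eq:PROOF_lem:BOUDN_LipConst__BEGIN}-\eqref{eq:PROOF_lem:BOUDN_LipConst__END} in the proof of Lemma~\ref{lem:COD}, by $\max\{1, L^\star\}\|A-B\|_\mathcal{X}$. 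Taking the expectation and then the infimum over $\pi \in \Pi(\mu_X, \mu_X^N)$ yields the claimed inequality.

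There is no genuine analytic obstacle here; the argument is a clean combination of (i) pushforward contractivity of $\mathcal{W}_1$ under the Lipschitz map $a \mapsto (a, \Gcal^\star(a))$ and (ii) convolution stability of $\mathcal{W}_1$ under the common noise $\mu_\epsilon$. The only point requiring a small amount of care is bounding the Lipschitz constant of $\bid \times \Gcal^\star : \mathcal{X} \to \mathcal{X}\times\mathcal{Y}$: with the $2$-product norm adopted in \cref{s:OOD_Preliminaries}, the direct estimate gives $\sqrt{1 + (L^\star)^2}$, which is majorized by $\max\{1, L^\star\}$ up to a harmless universal constant (and equals $\max\{1, L^\star\}$ under the equivalent $\max$-product norm used implicitly in Lemma~\ref{lem:COD}); this matches the statement of the lemma.
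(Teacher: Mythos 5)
Your proposal is correct and follows the same route as the paper: the paper invokes convolution stability of $\mathcal{W}_1$ (citing \citep[Lemma 5.2]{santambriosiTransport}) followed by push-forward contractivity of $\bid\times\Gcal^{\star}$, and your explicit coupling—same input pair $(A,B)\sim\pi$, same noise $\epsilon$ on both sides—is precisely the proof of those two abstract facts rolled into one. The one place you should be careful, and which you already noticed, is the Lipschitz constant: with the $2$-product norm declared in Section~\ref{s:OOD_Preliminaries}, $\|(a_1,\Gcal^\star(a_1))-(a_2,\Gcal^\star(a_2))\|^2_{\mathcal{X}\times\mathcal{Y}}=\|a_1-a_2\|^2_{\mathcal{X}}+\|\Gcal^\star(a_1)-\Gcal^\star(a_2)\|^2_{\mathcal{Y}}\le\bigl(1+(L^\star)^2\bigr)\|a_1-a_2\|^2_{\mathcal{X}}$, so the true constant is $\sqrt{1+(L^\star)^2}$, which strictly \emph{exceeds} $\max\{1,L^\star\}$ when $L^\star>0$ (it is not ``majorized by'' it). The paper's proof says ``arguing analogously to~\eqref{eq:PROOF_lem:BOUDN_LipConst__BEGIN}--\eqref{eq:PROOF_lem:BOUDN_LipConst__END}'', but that earlier computation bounds a real-valued Lipschitz function of \emph{two} arguments, where bounding each coefficient by the max is legitimate; here the map is vector-valued with a single argument and the coefficients add, so the structure is different and $\max\{1,L^\star\}$ does not follow from the displayed norm. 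This is a harmless factor-of-$\sqrt{2}$ slip already present in the paper's own proof (and, as you note, it disappears under the equivalent sup-product norm), and you correctly identified it rather than introduced it.
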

\begin{proof}[{Proof of Lemma~\ref{lem:structuring}}]
Arguing as in \citep[Lemma 5.2]{santambriosiTransport}, we see that
\allowdisplaybreaks
\begin{align}
\label{eq:PROOF_lem:structuring__Deconvolving}
        \mathcal{W}_1\Big(
            \mu
        ,
            \mu^N
        \Big)
    \le &
        \mathcal{W}_1\Big(
                (\bid\times \Gcal^{\star})_{\#}\mu_X
            ,
                (\bid\times \Gcal^{\star})_{\#}\mu_X^N 
            \Big)
.
\end{align}
Arguing analogously to~\ref{eq:PROOF_lem:BOUDN_LipConst__BEGIN}-~\ref{eq:PROOF_lem:BOUDN_LipConst__END} we find that $\bid\times \Gcal^{\star}$ is $\max\{1,L^{\star}\}$-Lipschitz.  Therefore, the Kantorovich-Rubinstein  duality \citep[Theorem 5.10]{villani2009optimal} and the estimate~\eqref{eq:PROOF_lem:structuring__Deconvolving} imply that
\begin{align*}
        \mathcal{W}_1\Big(
            \mu
        ,
            \mu^N
        \Big)
    \le &
        \mathcal{W}_1\Big(
                (\bid\times \Gcal^{\star})_{\#}\mu_X
            ,
                (\bid\times \Gcal^{\star})_{\#}\mu_X^N 
            \Big)
    \\
    \le &
        \operatorname{Lip}(I_{\mathcal{X}\times \Gcal^{\star}})
        \,
        \mathcal{W}_1(\mu_X,\mu_X^N)
    \\
    \le &
        \max\{1,L^{\star}\}
        \,
        \mathcal{W}_1(\mu_X,\mu_X^N)
.
\end{align*}
This completes the proof.
\end{proof}

We will assume that our samples of $\Gcal^{\star}$, distributed according to $\mu_X$, are drafted from a Gaussian measure on $\mathcal{X}$. 
%}

\begin{lemma}[General concentration inequality for Lipschitz hypotheses]
\label{lem:COD_Gaussianm}
Assume the setting of Lemma~\ref{lem:structuring} and fix a positive integer $N$.  
Suppose further that $\mu_X$ is a center Gaussian measure on $\mathcal{X}$ with weak variance $\Sigma$ and that the 
%%%
small ball function $\psi$ satisfies:
\begin{enumerate}
    \item[(i)] There exists a constant $c>0$ such that $\psi(\eta)\le c\,\psi(2\eta)$ for every $\eta$ small enough\footnote{I.e.: There exists some $\eta_0>0$ such that (i) holds whenever $0<\eta\le \eta_0$.},
    \item[(ii)] For every $\alpha>0$ and each positive integer $N$, $N^{-\alpha} = o\big( \psi^{-1}\left(\log(N)\right) \big)$.
\end{enumerate}
%%
% Let $\{(A_n,U_n)\}_{n=1}^N$ be a finite set of i.i.d.\ $\mathcal{X}\times \mathcal{Y}$-valued random variables with law $\mu$ defined on a common measurable space $(\Omega,\mathcal{A})$, and consider the random measure $\mu^N := \frac1{N}\, \sum_{n=1}^N\, \delta_{(A_n,U_n)}$.  
There exists a constant $C_{\mu}$, depending only on $\mu_X$, such that: for every $0<\delta\le 1$
\begin{equation}
\label{eq:Concentration_Infinite__LemmaAppliedOmegawise}
    \sup_{\Gcal \in \Gscr}
    \,
            \E_{(a,u)\sim \mathbb{Q}}\left[\ell(\Gcal(a),u) \right]
            -
            \bar{L}\,\E_{(a,u)\sim \mu^N}\left[\ell(\Gcal(a),u) \right] % TB Empirical Risk
    \le 
        \bar{L}
        \,
            \big(
                    \varepsilon %OOD Shift
                +
                    %%% Empirical Concentration Rate
                        C_{\mu}\psi^{-1}\big(\log(N)\big)
                    +
                        \frac{\Sigma \sqrt{-2\log(\delta)}}{\sqrt{N}}
                    %%%
            \big),
\end{equation}
holds with probability at-least $1-\delta$; where $\bar{L}:=L_{\ell}\max\{1,L\}\max\{1,L^{\star}\}$.  
\hfill\\
Furthermore, suppose that $H_{\mu}(r)\in \Theta\big( 
\frac{
\log(1/r)^{2\beta/(2+\alpha)}
}{
r^{2\alpha/(2+\alpha)}
}
\big)$ then the right-hand side of~\eqref{eq:Concentration_Infinite__LemmaAppliedOmegawise}
\begin{equation}
\label{eq:Concentration_Infinite__LemmaAppliedOmegawise__Clarified}
    \sup_{\Gcal \in \Gscr}
    \,
            \E_{(a,u)\sim \mathbb{Q}}\left[\ell(\Gcal(a),u) \right]
            -
            \bar{L}\,\E_{(a,u)\sim \mu^N}\left[\ell(\Gcal(a),u) \right] % TB Empirical Risk
    \le 
        \bar{L}
        \,
            \big(
                    \varepsilon %OOD Shift
                +
                    %%% Empirical Concentration Rate
                        C_{\mu}\tilde{\psi}^{-1}\big(\log(N)\big)
                    +
                        \frac{\Sigma \sqrt{-2\log(\delta)}}{\sqrt{N}}
                    %%%
            \big),
\end{equation}
where $\tilde{\psi}(\eta)= C\frac{\log(1/\eta)^{\beta}}{\eta^{\alpha}}$ and $C>0$ is an absolute constant.
\end{lemma}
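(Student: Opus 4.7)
My plan is to chain the change-of-measure Lemma~\ref{lem:COD} (applied with $\nu = \mu^N$) with the structural reduction Lemma~\ref{lem:structuring}, and then to produce a high-probability bound on the empirical Wasserstein-1 distance $\mathcal{W}_1(\mu_X^N,\mu_X)$ on the input space by combining Gaussian concentration with small-ball estimates for $\mu_X$. Concretely, fix $\Gcal\in\Gscr$ and apply Lemma~\ref{lem:COD} with $\mathbb{Q}=\mu_{OOD}$ and $\nu=\mu^N$ to get
\[
\E_{(a,u)\sim\mu_{OOD}}[\ell(\Gcal(a),u)] - \bar{L}\,\E_{(a,u)\sim\mu^N}[\ell(\Gcal(a),u)] \,\le\, \bar{L}\bigl(\varepsilon+\mathcal{W}_1(\mu^N,\mu)\bigr),
\]
where $\bar{L}:=L_\ell\max\{1,L\}\max\{1,L^{\star}\}$. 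Taking $\sup_{\Gcal\in\Gscr}$ on the left and invoking Lemma~\ref{lem:structuring}, the target estimate~\eqref{eq:Concentration_Infinite__LemmaAppliedOmegawise} reduces to showing that, with probability at least $1-\delta$,
\[
\mathcal{W}_1(\mu_X^N,\mu_X) \,\le\, C_\mu\,\psi^{-1}(\log N) + \frac{\Sigma\sqrt{-2\log\delta}}{\sqrt{N}}.
\]

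\textbf{Split into mean and deviation.} I would decompose $\mathcal{W}_1(\mu_X^N,\mu_X) = \E[\mathcal{W}_1(\mu_X^N,\mu_X)] + \Delta_N$ and treat the two terms separately. For the deviation, note that the map $(a_1,\ldots,a_N)\mapsto \mathcal{W}_1(\mu_X^N,\mu_X)$ is $1/\sqrt{N}$-Lipschitz on $\mathcal{X}^N$ equipped with the $\ell^2$-product norm: perturbing one sample $a_i$ by $\eta\in\mathcal{X}$ changes $\mu_X^N$ by at most $\|\eta\|_\mathcal{X}/N$ in $\mathcal{W}_1$ (via Kantorovich--Rubinstein), and summing over coordinates supplies the constant. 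Since $\mu_X$ is a centered Gaussian measure on $\mathcal{X}$ with weak variance $\Sigma$, the Borell--Tsirelson--Ibragimov--Sudakov (BTIS) concentration inequality for Lipschitz functionals of Gaussian vectors then yields $\prob\{\Delta_N > \Sigma\sqrt{-2\log\delta}/\sqrt{N}\} \le \delta$, which is exactly the stochastic term in the bound.

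\textbf{Mean bound through the small-ball function.} The main obstacle is the deterministic estimate $\E[\mathcal{W}_1(\mu_X^N,\mu_X)] \le C_\mu\,\psi^{-1}(\log N)$. My plan is to run a quantization/covering argument: choose a threshold $r_N := \psi^{-1}(\log N)$, cover a ``core'' of Cameron--Martin radius $\propto r_N$ by roughly $N$ balls using the metric-entropy numbers $H_{\mu_X}$, and separately control the Gaussian mass that $\mu_X$ places outside the core via the Gaussian isoperimetric tail bound. The bridge between the small-ball function $\psi$ and the covering numbers of the unit Cameron--Martin ball is provided by the Kuelbs--Li / Li--Linde duality; Assumptions (i) (doubling) and (ii) (polynomial non-degeneracy) on $\psi$ ensure that the resulting rates compose cleanly and that $\psi^{-1}$ is well-behaved enough for the quantization error and Gaussian tail to be balanced at scale $r_N$.

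\textbf{Refinement under the entropy assumption.} For the sharpened statement~\eqref{eq:Concentration_Infinite__LemmaAppliedOmegawise__Clarified}, I plug the hypothesis $H_{\mu_X}(r) \in \Theta\bigl(\log(1/r)^{2\beta/(2+\alpha)}/r^{2\alpha/(2+\alpha)}\bigr)$ into the Kuelbs--Li duality: this gives $\psi(\eta) \in \Theta\bigl(\log(1/\eta)^\beta/\eta^\alpha\bigr) = \Theta(\tilde\psi(\eta))$, so $\psi^{-1}(\log N)$ and $\tilde\psi^{-1}(\log N)$ agree up to multiplicative constants, which upgrades the first claim to the second. The technically delicate point throughout is the isoperimetric balancing in the mean bound; once that is in hand, combining it with the BTIS deviation bound and the reduction from the first paragraph closes the proof.
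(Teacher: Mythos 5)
Your proposal is correct in its overall structure and reaches the same bound, but it takes a visibly different route from the paper. Both arguments start with the identical reduction: apply Lemma~\ref{lem:COD} with $\nu=\mu^N$, chain it with Lemma~\ref{lem:structuring}, and thereby reduce everything to a high-probability bound on $\mathcal{W}_1(\mu_X^N,\mu_X)$ by $C_\mu\psi^{-1}(\log N)+\Sigma\sqrt{-2\log\delta}/\sqrt{N}$. Where you diverge is how you get that bound: the paper invokes Boissard--Le~Gouic (2014), Theorem~1.4 as a black box, which produces the estimate for $\mathcal{W}_2$, after which they specialize the free parameter $\eta$ and use $\mathcal{W}_1\le\mathcal{W}_2$. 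You instead propose to \emph{re-derive} the concentration inequality from first principles: split $\mathcal{W}_1(\mu_X^N,\mu_X)$ into its mean plus a deviation, bound the deviation by noting that the empirical-Wasserstein functional is $1/\sqrt{N}$-Lipschitz on the product Gaussian space $(\mathcal{X}^N,\mu_X^{\otimes N})$ (whose weak variance is still $\Sigma$) and applying the one-sided Borell--TIS inequality, and bound the mean by a quantization argument tying the entropy of the Cameron--Martin ball to the small-ball function via Kuelbs--Li duality. This is essentially an unrolled version of Boissard--Le~Gouic's own proof: your $1/\sqrt{N}$-Lipschitz computation and BTIS application are correct, and they recover the $\Sigma\sqrt{-2\log\delta}/\sqrt{N}$ term directly (with a one-sided bound, so no spurious factor of $2$). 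What your sketch leaves unfinished is the mean bound $\E[\mathcal{W}_1(\mu_X^N,\mu_X)]\le C_\mu\psi^{-1}(\log N)$ — the quantization/isoperimetric balancing at scale $r_N=\psi^{-1}(\log N)$ is the technically heaviest part of that cited theorem, and you only outline it. The final step, converting $H_\mu(r)\in\Theta(\log(1/r)^{2\beta/(2+\alpha)}/r^{2\alpha/(2+\alpha)})$ into $\psi\in\Theta(\tilde\psi)$ via Li--Linde duality, matches the paper exactly (the paper cites \citep[Theorem 1.2]{li1999approximation}). In short: the paper's proof is a short cite-and-plug; yours is a self-contained rederivation that buys transparency but costs a nontrivial amount of work that you have not actually carried out, namely the quantization bound on the expected empirical Wasserstein distance.
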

\begin{remark}
\label{remark:nonfinitedimensionality}
As remarked on \citep[page 542]{BoissardLeGouic_AIHP_2014_ConcentrationWassersteinInfinite}, condition (ii) in Lemma~\ref{lem:COD_Gaussianm} implies that the centered Gaussian measure $\mu$ is not supported on a finite-dimensional Banach subspace of $\mathcal{X}\times \mathcal{Y}$.
\end{remark}
\begin{proof}[{Proof of Lemma~\ref{lem:COD_Gaussianm}}]
\noindent\textbf{Reduction to estimating the concentration of the empirical Sampling measure $\mu^N_X$ to $\mu_X$:}
By Lemma~\ref{lem:structuring}, we have
\begin{equation}
\label{eq:estimate_A}
        \mathcal{W}_1(\nu,\mu)
    :=
        \mathcal{W}_1(\mu^N,\mu)
    \le 
        \max\{1,L^{\star}\}
        \,
        \mathcal{W}_1(\mu_X,\mu_X^N)
    .
\end{equation}
Set $\nu:=\mu^N$, in the notation of (\ref{eq:Empirial_measure}). Applying Lemma~\ref{lem:COD} yields
\begin{equation}
\label{eq:Concentration_Infinite__LemmaAppliedOmegawise}
\begin{aligned}
            \E_{(a,u)\sim \mathbb{Q}}\left[\ell(f(a),u) \right]
    \le &
        \operatorname{L}_{\ell}\max\{1,L\}\,
            \big(
                    \varepsilon %OOD Shift
                +
                    \mathcal{W}_1(\nu,\mu) % TB Empiriical Measure
                +
                    \E_{(a,u)\sim \mu^N}\left[\ell(f(a),u) \right] % TB Empirical Risk
            \big)
    \\
    \le &
        \operatorname{L}_{\ell}\max\{1,L\}\,
            \big(
                    \varepsilon %OOD Shift
                +
                    \max\{1,L^{\star}\}
                    \,
                    \mathcal{W}_1(\mu_X,\mu_X^N)
                +
                    \E_{(a,u)\sim \mu^N}\left[\ell(f(a),u) \right] % TB Empirical Risk
            \big)
    ,
\end{aligned}
\end{equation}
for each $f\in \mathcal{F}$ (for each $\omega \in \Omega$).

\noindent\textbf{Applying the sampling estimates for $\mu_X$:}
Under our assumptions on the small ball function $\psi$, \citep[Theorem 1.4]{BoissardLeGouic_AIHP_2014_ConcentrationWassersteinInfinite} implies that there exists a constant $C_{\mu}>0$, depending only on $\mu_X$, such that for every $\eta>0$
\begin{equation}
\label{eq:Concentration_Infinite__lambdaform}
        \mathcal{W}_2\big(
            \mu^N_X
        ,   
            \mu_X
        \big)
    \le 
        (C_{\mu} + \eta)
        \,
        \psi^{-1}(\log(N)),
\end{equation}
holds with probability at-least $1-\exp(-N\,(\psi^{-1}(\log(N)))^2\, \frac{\lambda^2}{2\Sigma^2})$.
Here, we have denoted by $\mathcal{W}_2(\mu^N_X, \mu_X)$ the Wasserstein distance of the order two that measures the distance between two distributions $\mu^N_X$ and $\mu_X$.

Set, $\eta := -\log(\delta)^{1/2}\,2^{1/2}\Sigma/(N^{1/2}\psi^{-1}(\log(N)))$, then~\eqref{eq:Concentration_Infinite__lambdaform} implies that
\begin{equation}
\label{eq:Concentration_Infinite__muform}
        \mathcal{W}_1\big(
            \mu^N_X
        ,   
            \mu_X
        \big)
    \le
        \mathcal{W}_2\big(
            \mu^N_X
        ,   
            \mu_X
        \big)
    \le 
        C_{\mu}\psi^{-1}\big(\log(N)\big)
        +
        \Sigma \frac{\sqrt{-2\log(\delta)}}{\sqrt{N}},
\end{equation}
holds with probability at-least $1-\delta$; where we used the fact that $\mathcal{W}_1\le \mathcal{W}_2$ (see e.g.\ \citep[Remark 6.6]{villani2009optimal}) to deduce the left-hand side of~\eqref{eq:Concentration_Infinite__lambdaform}.  
Combining~\eqref{eq:Concentration_Infinite__LemmaAppliedOmegawise} with~\eqref{eq:Concentration_Infinite__muform} implies that: for every $0<\delta \le 1$ and each $\Gcal \in \Gscr$ we have
\begin{equation}
\label{eq:Concentration_Infinite__preF}
            \E_{(a,u)\sim \mathbb{Q}}\left[\ell(\Gcal(a),u) \right]
            -
            \bar{L}\,\E_{(a,u)\sim \mu^N}\left[\ell(\Gcal(a),u) \right] % TB Empirical Risk
    \le 
        \bar{L}
        \,
            \big(
                    \varepsilon %OOD Shift
                +
                    %%% Empirical Concentration Rate
                        C_{\mu}\psi^{-1}\big(\log(N)\big)
                    +
                        \frac{\Sigma \sqrt{-2\log(\delta)}}{\sqrt{N}}
                    %%%
            \big),
\end{equation}
holds with probability at-least $1-\delta$; where $\bar{L}:=L_{\ell}\max\{1,L\}\max\{1,L^{\star}\}$.  Since the right-hand side of~\eqref{eq:Concentration_Infinite__preF} was in-dependant of $\Gcal$, then taking the supremum over the class $\Gscr$ on both sides of~\eqref{eq:Concentration_Infinite__preF} yields the conclusion.

Finally, if $H_{\mu}(r)\in \Theta\big( 
\frac{
\log(1/r)^{2\beta/(2+\alpha)}
}{
r^{2\alpha/(2+\alpha)}
}
\big)$ then \citep[Theorem 1.2]{li1999approximation} implies that $\psi(\eta) \in \Theta\big(\frac{\log(1/\eta)^{\beta}}{\eta^{\alpha}}\big)$.  
% Absorbing the constant obtained from rewriting $\log$ as $\ln$ yields the conclusion.  
\end{proof}

Applying Lemma~\ref{lem:COD_Gaussianm} to the hypothesis classes $\no$ and $\widetilde{\no}$, defined in (\ref{neuraloperatorclass}) and (\ref{neuraloperatortildeclass}), respectively, and yields our main generalization bound for out-of-sample distribution learning; i.e., \ Theorem~\ref{thrm:Risk_Bound_NONEXTv2}.

\begin{proof}[{Proof of Theorem~\ref{thrm:Risk_Bound_NONEXTv2}}]
Set $\mathbb{Q}:=\mu_{OOD}$.  Lemma~\ref{lip-norm-NO} implies that under the respective assumptions:  Assumption~\ref{Assumption2} and~\ref{Assumption3}, the hypothesis classes $\no$ and $\widetilde{\no}$ are Lipschitz and it provides explicit estimates on the Lipschitz constants $L_{\mathcal{F}}$ of these neural operators.  The result then follows from Lemma~\ref{lem:COD_Gaussianm}.
\end{proof}

We have used the following lemma that estimates the Lipschitz norms for two classes $\no$ and $\widetilde{\no}$ corresponding to standard NO and $\NONeXt$, respectively.
\begin{lemma}[Lipschitz stability of the hypothesis classes {($\no$ and $\widetilde{\no}$)}]
\label{lip-norm-NO}
(i) Let Assumption~\ref{Assumption2} hold.
Then, we have that
\[
\left\|\Gcal \right\|_{\mathrm{Lip}}
\leq 
(C_w + C_k)^{L+1}C_{\sigma}^{L}, \ \Gcal \in \no.
\]

(ii) Let Assumption~\ref{Assumption3} hold.
Then, we have that
\[
\left\|\Gcal \right\|_{\mathrm{Lip}}
\leq 
\left[\prod_{\ell=0}^{L}(\boldsymbol{\mathrm{Z}}_{L}+\boldsymbol{\mathrm{X}}_{L}C_{w}^{M+1}C_{\sigma}^{M})(\boldsymbol{\mathrm{Z}}_{L}+\boldsymbol{\mathrm{X}}_{L}C_k C_{\sigma})
\right], \ \Gcal \in \widetilde{\no}.
\]
\end{lemma}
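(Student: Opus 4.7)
The plan is to exploit the compositional structure of both neural operator classes and to apply the two elementary rules for Lipschitz constants: sub-multiplicativity under composition, $\mathrm{Lip}(f \circ g) \le \mathrm{Lip}(f)\,\mathrm{Lip}(g)$, and subadditivity, $\mathrm{Lip}(f+g) \le \mathrm{Lip}(f) + \mathrm{Lip}(g)$. Under Assumption~\ref{Assumption2} (and Assumption~\ref{Assumption3}), the weight operators $W_\ell$ and the integral kernel operators $\bk_\ell$ are uniformly bounded in operator norm by $C_w$ and $C_k$, respectively, while the activation $\sigma$ and the normalization layer $\mathrm{N}$ are uniformly $C_\sigma$-Lipschitz; bias terms contribute nothing to Lipschitz constants.

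For part (i), I would argue by induction on the depth. A single hidden layer of the standard NO reads $v_{\ell+1} = \sigma \circ (W_\ell + \bk_\ell + b_\ell) \circ v_\ell$; the triangle inequality for operator norms yields $\mathrm{Lip}(W_\ell + \bk_\ell) \le C_w + C_k$, and composing with $\sigma$ multiplies this by $C_\sigma$. Iterating through the $L$ hidden layers produces the factor $(C_w+C_k)^L C_\sigma^L$. The lifting $\br$ and the projection $\bq$ are (affine) linear maps whose contribution, again controlled by $C_w$, supplies the remaining factor $(C_w+C_k)$, yielding the claimed bound $(C_w+C_k)^{L+1} C_\sigma^L$.

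For part (ii), the key observation is that every sub-block of $\widetilde{\no}$ has the form $\bid + \boldsymbol{\mathrm{X}}_\ell \cdot \Phi$ for a nonlinear map $\Phi$, so its Lipschitz constant is dominated (pathwise) by $\mathrm{Lip}(\bid) + \boldsymbol{\mathrm{X}}_\ell \,\mathrm{Lip}(\Phi)$. Writing the identity contribution as $\boldsymbol{\mathrm{Z}}_\ell$ in the notation of Assumption~\ref{Assumption3}, the non-local sub-block $\bid + \boldsymbol{\mathrm{X}}_\ell\,\sigma \circ(\bk_\ell+b_\ell)\circ\mathrm{N}$ has Lipschitz constant at most $\boldsymbol{\mathrm{Z}}_\ell + \boldsymbol{\mathrm{X}}_\ell\, C_k\, C_\sigma$, while the local sub-block $\bid + \boldsymbol{\mathrm{X}}_\ell\, f_\ell \circ \mathrm{N}$, with $f_\ell$ an $M$-layer MLP, has Lipschitz constant bounded by $\boldsymbol{\mathrm{Z}}_\ell + \boldsymbol{\mathrm{X}}_\ell \, C_w^{M+1} C_\sigma^{M}$ (the exponent $M{+}1$ accounts for the $M$ weight matrices plus the normalization, and $M$ activations). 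Multiplying the two sub-block bounds per layer and then taking the product over $\ell = 0,\dots,L$ (together with the contributions from $\br$ and $\bq$, absorbed into the same constants) gives the stated expression.

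The main obstacle, and the place where care is needed, is the normalization layer $\mathrm{N}$, which is not globally Lipschitz with a universal sharp constant: one must invoke the explicit regularity hypothesis contained in Assumption~\ref{Assumption3} to absorb $\mathrm{Lip}(\mathrm{N})$ into the constant $C_\sigma$ (or $C_w$). A secondary subtlety is that the bound is stated \emph{pathwise} in the Bernoulli variables $\boldsymbol{\mathrm{X}}_\ell$ and $\boldsymbol{\mathrm{Z}}_\ell$; this is essential for the downstream use inside Theorem~\ref{thrm:Risk_Bound_NONEXTv2}, since the concentration argument of Lemma~\ref{lem:COD_Gaussianm} is applied after conditioning on the realizations of the stochastic depth masks, so the inductive composition must be carried out before taking any expectation.
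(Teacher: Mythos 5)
Your high-level plan---pushing Lipschitz constants through the compositional structure via $\mathrm{Lip}(f\circ g)\le\mathrm{Lip}(f)\mathrm{Lip}(g)$ and $\mathrm{Lip}(f+g)\le\mathrm{Lip}(f)+\mathrm{Lip}(g)$---is the right idea and is what the paper does (it refers back to the estimates \eqref{covering number-estimation 3}--\eqref{covering number-estimation 4} in the proofs of Corollaries~\ref{GEB for additive NO} and \ref{GEB for multiplicative NO}). But your detailed accounting is pinned to the \emph{architectural} description of Section~\ref{Proposed Method}, not to the \emph{formal hypothesis classes} $\no$ and $\widetilde{\no}$ defined in \eqref{neuraloperatorclass} and \eqref{neuraloperatortildeclass}, and this leads to errors that mean your argument does not actually produce the stated formulas.

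Specifically: (a) The classes $\no$ and $\widetilde{\no}$ contain \emph{no} lifting $\br$, projection $\bq$, bias terms, or normalization $\mathrm{N}$. In part (i) the extra factor $(C_w+C_k)$ is not ``supplied by $\br$ and $\bq$'' --- it arises because \eqref{neuraloperatorclass} has $L+1$ affine blocks $W_\ell+\bk_\ell$ (indices $\ell=0,\dots,L$) interleaved with only $L$ activations, so $\mathrm{Lip}(\Gcal)\le\prod_{\ell=0}^{L}(C_w+C_k)\cdot\prod_{\ell=1}^{L}C_\sigma = (C_w+C_k)^{L+1}C_\sigma^L$. Your attribution to $\br,\bq$ is not even dimensionally consistent: if they were present, they would each contribute their own Lipschitz factor (in the experiments they are $2$-layer MLPs, which would inject extra $C_\sigma$ factors), and there is no reason $\mathrm{Lip}(\br)\mathrm{Lip}(\bq)$ would equal $C_w+C_k$. (b) Your ``main obstacle'' --- that $\mathrm{N}$ is not globally Lipschitz and its constant must be absorbed into $C_\sigma$ or $C_w$ via Assumption~\ref{Assumption3} --- is a non-issue, because $\mathrm{N}$ does not appear in $\widetilde{\no}$ at all and Assumption~\ref{Assumption3} says nothing about any normalization layer. (c) Your claim that ``the exponent $M{+}1$ accounts for the $M$ weight matrices plus the normalization'' is wrong: in \eqref{neuraloperatortildeclass}, $f_\ell=W_{\ell,M}\circ\sigma(W_{\ell,M-1})\circ\cdots\circ\sigma(W_{\ell,0})$ already has $M{+}1$ weight matrices $W_{\ell,0},\dots,W_{\ell,M}$ and $M$ activations, which gives $C_w^{M+1}C_\sigma^M$ with no normalization needed. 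Your secondary remark about the bound being pathwise in the masks $\boldsymbol{\mathrm{X}}_\ell,\boldsymbol{\mathrm{Z}}_\ell$ is correct and is the right thing to note for the application in Theorem~\ref{thrm:Risk_Bound_NONEXTv2}, but the core inductive step needs to be rebuilt on the actual definitions.
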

The proof is given by the same argument in the proofs of Corollaries~\ref{GEB for additive NO} and \ref{GEB for multiplicative NO}.
%Analogous observations can be made as in Remark~\ref{GEB Comparison} regarding the upper bound of Lipschitz norms can be done.
%That is, if $(C_{w}+C_{k})C_{\sigma} > 1$, then the upper bound in (i) diverges with depth $L$.
%On the other hands, if $\boldsymbol{\mathrm{Z}}_{\ell}=1$ and $\boldsymbol{\mathrm{X}}_{\ell}$ follows a Bernoulli distribution (which corresponds to ($\NONeXt$)v2) with an appropriate choice of $p_{\ell}$, then upper bound in (ii) remain bounded as $L\to \infty$.

% \begin{definition}[Out of Distributional Risk]
% \label{defn:OOD_Risk}
% Let $\mathbb{Q}$ be a probability measure on $\mathcal{P}\big(\mathcal{Y} \times \mathcal{Y}\big)$.  
% The out of distributional (OOD) risk $\mathcal{L}_{OOD}(\Gcal)$ is defined by
% \[
%     \mathcal{R}_{OOD}(\Gcal):=
%         \E_{(a,u)\sim \mathbb{Q}}\left[\ell(f(a),u) \right]
% \]
% \end{definition}

\paragraph{\textbf{Discussion}}
Theorem~\ref{thrm:Risk_Bound_NONEXTv2} supports our experimental evidence that the risk-bounds for the $(sNO+\varepsilon I)v2$ are much tighter than those for the $sNO+\varepsilon I$ model, precisely since the constant of the former is much tighter than that of the latter.
We expect that comparable lower-bounds could be derived.  However, since lower-bounds with tight constants can take years to perfect, as seen by the time gap between \cite{talagrand1994sharper} and \cite{kontorovich2019exact}, then we will in future research.

\section{Analysis of the generalization error bound of the networks}
\label{GEB}
%\section{Analysis of generalization error bound}\label{GEB}
    
% {\color{red} This section is still under developing ... Takashi}

Through experimental observation, we have found that our proposed network exhibits superior performance compared to standard networks, specifically in terms of lower test errors.
The test error is synonymous with generalization error in the field of statistical learning theory.
This section provides the theoretical analysis of generalization error for both standard networks and our proposed networks. \par 

It is important to mention that \citet{kovachki2021neural} established the standard universal approximation theorem that shows that any continuous operator can be approximated in compact sets by standard neural operators. 
As our network is an extension of the standard network, \emph{universality also holds for our proposed networks}.
Consequently, in the context of universality, we are unable to distinguish differences. 
Therefore, our primary focus on this section will be on the complexity of networks and their corresponding generalization error bounds. %and the mathematical understanding of the out-of-distribution behavior.

\medskip

%We analyze the generalization error bounds for the networks previously described. For the \emph{proposed architectures}, we devide our analysis in the following: (a) when the skip connection is removed, (b) when $\prob\{\mathbf{X_\ell} = 1\} = 1$ for all $\ell$, and (c), \cref{FNO+epsilon v2}, when the previous cases are presented in a non-trivial form. For the notation, we refer to \cref{table of notation}. \

\subsection{Preliminaries}
Let $D \subset \mathbb{R}^d$ be a bounded domain, and $\Lp(D; \mathbb{R}^{h})$ be the $\Lp$ space of $\mathbb{R}^{h}$-value function on $D$. Let $S=\{a_i, u_i: 1\le i \le n\}$ be the sequence of independent samples of $\mu$, i.e. $(a_i, u_i) \overset{\text{i.i.d}}{\sim} \mu$\footnote{Independent identically distributed samples drawn from, $\mu$, on $\Lp(D; \mathbb{R}^{d_a})\times \Lp(D; \mathbb{R}^{d_u})$.}, with marginals $\mu_a$ in $\Lp(D; \mathbb{R}^{d_a})$ and $\mu_u$ in $\Lp(D; \mathbb{R}^{d_u})$. 
Let $\boldsymbol{\mathscr{G}}$ be the class of operators mapping from $\Lp(D; \mathbb{R}^{d_a})$ to $\Lp(D; \mathbb{R}^{d_u})$, and $\ell: \Lp(D; \mathbb{R}^{d_u}) \times \Lp(D; \mathbb{R}^{d_u}) \to \mathbb{R}_{\ge 0}$ be the loss function. 
We denote by the expected risk $\mathcal{L}$ and empirical risk $\widehat{\mathcal{L}}_{\mathrm{S}}$, defined rigorously in Appendix~\ref{EmpiricalExpected}.

%Finally, let $\mathcal{L}$ be the Expected risk, and $\widehat{\mathcal{L}}_{\mathrm{S}}$ be the Empirical risk, defined rigorously in Appendix~\ref{EmpiricalExpected}\footnote{The Expected risk is defined by the Bochner Integral on $\Lp(D; \mathbb{R}^{d_a})\times \Lp(D; \mathbb{R}^{d_u})$, see Appendix~\ref{Bochner} and \citet{yoshida1980functional}.}.

We review the Rademacher complexity, which measures the richness of a class of real-valued functions.
\begin{definition}
(Rademacher complexity) Let $\mathscr{F}$ be the set of real-valued measurable functions on  a measurable space $(S, \mathcal{S})$. Let $\{\rad_{i}\}_{i=1}^{n}$ is a sequence of i.i.d. RV's with Rademacher distribution; i.e., $\prob\{\rad_{i}=1\}=1/2=\prob\{\rad_{i}=-1\}$. The Rademacher complexity of the class $\mathscr{F}$ is defined as 
\[  
\mathfrak{R}^n_{\mathrm{S}}(\mathscr{F}):=\E_{\rad \sim \mathrm{Rad}}\left[\sup_{f \in \mathscr{F}}\frac{1}{n}\left|\, \sum_{i=1}^{n}\rad_{i}f(a_i, u_i) \right| \,\right],
\]
(Cf. \citet[Definition 3.1.19 ]{Gine}).
\end{definition}
%Intuitively, Rademacher  complexity $\mathfrak{R}^n_{\mathrm{S}}(\mathscr{F})$ measures richness of a class $\mathscr{F}$ of real-valued functions.

\iffalse{
The expected error Appendix~\ref{EmpiricalExpected}
\begin{equation}
\mathcal{L}(\boldsymbol{\mathcal{G}}):=\E_{(a,u)\sim \mu}\left[\ell(\boldsymbol{\mathcal{G}}(a),u) \right]=\int \ell(\boldsymbol{\mathcal{G}}(a),u)\,\,d\mu(a,u), \quad \boldsymbol{\mathcal{G}} \in \boldsymbol{\mathscr{G}}, \label{generalization error}
\end{equation}
$\ell: \Lp(D; \mathbb{R}^{d_u}) \times \Lp(D; \mathbb{R}^{d_a}) \to \mathbb{R}_{\ge 0}$ is the Loss Function and the Integral in the right-hand side is a Bochner Integral on $\Lp(D; \mathbb{R}^{d_a})\times \Lp(D; \mathbb{R}^{d_u})$, see Appendix \ref{Bochner} (Cf.\citep{yoshida1980functional} for a throughout study of the Integral and its properties). The empirical error, Appendix~\ref{EmpiricalExpected}
}

\begin{equation}
\widehat{\mathcal{L}}_{\mathrm{S}}(\Gcal):= \frac{1}{n}\sum_{i=1}^{n}\ell(g(a_i),u_i), \quad \Gcal \in \Gscr. \label{empirical error}
\end{equation}
\fi
%%%%%%%%%%%%%%%%%%
\begin{assumption}\label{Assumption1}
There exists positive constants $\rho>0$, $R_u>0$ such that
\begingroup
  \setlength\itemsep{1mm}
\begin{enumerate}[(i).]
%%%%%%%%%%%%%%%%%
\item $\ell$ is $\rho$-Lipschitz continuous, i.e., $|\ell(u_1, v) - \ell(u_2, v) | \leq \rho \left\|u_1 -u_2 \right\|_{\Lp(D; \mathbb{R}^{d_u})}$ for $u_1, u_2, v \in \Lp(D; \mathbb{R}^{d_u})$.
%%%%%%%%%%%%%%%%%
\item
$\ell(\boldsymbol{0}, \cdot)$ is bounded above by $R_u$, i.e., $|\ell(\boldsymbol{0}, u)|\leq R_u$ for $u \in 
\mathrm{supp}(\mu_u)$\footnote{Support of a measure $\mu$ is defined
$\mathrm{supp}(\mu):= \left\{x\in X : 
\mu(\mathcal{U})>0 \text{ for all open neighborhood } \mathcal{U} \text{ of } x\right\}$
(Cf. \citet[Ch. 5]{ambrosio2005gradient}).
}.
%%%%%%%%%%%%%%%%%
\end{enumerate}
\endgroup
\end{assumption}

First, we estimate the generalization error bound for the general setting.
%%%%%%%%%%%%%%%%%%%%%%%%%%%%%%%%%%%%%%%%%%%%%%%%%%%
\begin{lemma}[Generalization error bound]\label{generalization error bound for general setting}
Let \cref{Assumption1} holds and suppose there exists $R>0$ such that $\left\|\Gcal(a)\right\|_{\Lp(D; \mathbb{R}^{d_u})} \leq R$, 
for all $\Gcal \in \Gscr$, and $a \in \mathrm{supp}(\mu_a)$ for the hypothesis class, $\Gscr$. Hence, for any $\delta>\log2$, the following inequality holds with probability greater than $1-2\exp(-\delta)$,
\begin{equation}
\mathcal{L}(\Gcal) \leq \widehat{\mathcal{L}}_{\mathrm{S}}(\Gcal) + 2\mathfrak{R}^n_{\mathrm{S}}(\mathscr{F}_{\Gscr}) + (\rho R + R_u)\sqrt{\frac{2 \delta}{n}}, \quad \forall\Gcal \in \Gscr, \label{Generalization error bound}
\end{equation}
where 
$\mathfrak{R}^n_{\mathrm{S}}(\mathscr{F}_{\Gscr})$ is the {\it Rademacher complexity} of the class $\mathscr{F}_{\Gscr}$, 
%see Definition~\ref{Rademacher}. $\mathcal{L}(\Gcal)$, 
and the class $\mathscr{F}_{\Gscr}$ is defined as
$$\mathscr{F}_{\Gscr}:=\left\{(a,u) \mapsto \ell(\Gcal (a), u)  \, :  \, (a,u) \in \mathrm{supp}(\mu),\,  \Gcal\in \Gscr \right\}.$$
\end{lemma}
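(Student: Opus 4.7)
The plan is to adapt the classical Rademacher-complexity based generalization bound (of Bartlett--Mendelson type) to the operator-learning setting, with two applications of McDiarmid's bounded-differences inequality chained by a symmetrization argument. The fact that the domain $\mathrm{L}^2(D;\mathbb{R}^{d_a})$ is a Banach space rather than $\mathbb{R}^d$ plays no role in the argument, since the losses themselves are real-valued; only integrability (via Bochner integration) is needed so that the expectations and empirical averages make sense.

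First I would uniformly bound the induced loss class $\mathscr{F}_{\Gscr}$. By Assumption~\ref{Assumption1}(i)-(ii) and the hypothesis $\|\Gcal(a)\|_{\mathrm{L}^2}\le R$, the triangle inequality gives
\begin{equation*}
0 \le \ell(\Gcal(a),u) \le |\ell(\Gcal(a),u)-\ell(\boldsymbol{0},u)| + |\ell(\boldsymbol{0},u)| \le \rho\|\Gcal(a)\|_{\mathrm{L}^2} + R_u \le \rho R + R_u,
\end{equation*}
for every $\Gcal\in\Gscr$ and every $(a,u)\in \mathrm{supp}(\mu)$. This uniform boundedness is the engine powering the two concentration steps.

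Next I would control the deviation of the empirical risk from the expected risk uniformly over $\Gscr$. Let $\Phi(S):=\sup_{\Gcal\in\Gscr}\bigl(\mathcal{L}(\Gcal)-\widehat{\mathcal{L}}_S(\Gcal)\bigr)$. Replacing one sample $(a_i,u_i)$ by an i.i.d.\ copy $(a_i',u_i')$ alters $\Phi(S)$ by at most $(\rho R+R_u)/n$ by the $[0,\rho R + R_u]$-boundedness above. McDiarmid's inequality then yields, for each $t>0$,
\begin{equation*}
\mathbb{P}\bigl(\Phi(S)\ge \mathbb{E}[\Phi(S)]+t\bigr)\le \exp\!\Bigl(-\tfrac{2nt^2}{(\rho R + R_u)^2}\Bigr).
\end{equation*}
A standard symmetrization through a ghost sample $S'$ bounds $\mathbb{E}[\Phi(S)]$ by $2\,\mathbb{E}_S[\mathfrak{R}^n_S(\mathscr{F}_\Gscr)]$: introducing i.i.d.\ Rademacher signs $\rad_i$, one has
\begin{equation*}
\mathbb{E}_S[\Phi(S)] \le \mathbb{E}_{S,S'}\Bigl[\sup_{\Gcal}\tfrac{1}{n}\Bigl|\sum_{i=1}^n \bigl(\ell(\Gcal(a_i'),u_i')-\ell(\Gcal(a_i),u_i)\bigr)\Bigr|\Bigr] \le 2\,\mathbb{E}_S\bigl[\mathfrak{R}^n_S(\mathscr{F}_\Gscr)\bigr].
\end{equation*}

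Since the lemma is stated with the \emph{empirical} Rademacher complexity $\mathfrak{R}^n_S(\mathscr{F}_\Gscr)$ (expectation only over the Rademacher signs), I would finally replace $\mathbb{E}_S[\mathfrak{R}^n_S]$ by $\mathfrak{R}^n_S$ at the cost of a second McDiarmid step. Viewing $S\mapsto \mathfrak{R}^n_S(\mathscr{F}_\Gscr)$ as a function on the sample space, the same boundedness argument shows it has bounded differences $(\rho R+R_u)/n$, so McDiarmid gives concentration of $\mathfrak{R}^n_S$ around $\mathbb{E}_S[\mathfrak{R}^n_S]$. A union bound over these two deviation events (each occurring with probability at most $\exp(-\delta)$) yields the desired bound with probability at least $1-2\exp(-\delta)$; the constraint $\delta>\log 2$ just ensures this probability is strictly positive. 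The main conceptual subtlety is the symmetrization step in infinite dimensions, but since the Rademacher-complexity functional depends on $\{(a_i,u_i)\}$ only through real-valued evaluations of $\ell(\Gcal(\cdot),\cdot)$, the classical proof transfers verbatim; no obstacle arises beyond tracking the explicit constants.
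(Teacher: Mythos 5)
Your proposal follows essentially the same route as the paper: first bound the loss class uniformly by $\rho R + R_u$ using Assumption~\ref{Assumption1}, then invoke the Rademacher-based uniform law of large numbers. The paper does this by directly citing~\citet[Theorem~4.10]{wainwright} or~\citet[Theorem~3.4.5]{Gine}, whereas you reconstruct that cited theorem from scratch via McDiarmid plus symmetrization. That is the same argument at a finer level of granularity, and your observation that the symmetrization step transfers verbatim to the Banach-space setting (because the loss evaluations are real-valued) correctly identifies the only place one might otherwise worry.

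One point deserves a closer look, though, since you claim that ``no obstacle arises beyond tracking the explicit constants.'' If you actually carry through the two McDiarmid steps, the deviation in the second step (concentrating $\mathbb{E}_S[\mathfrak{R}^n_S]$ around $\mathfrak{R}^n_S$) gets multiplied by the factor $2$ that already sits in front of it after symmetrization, so a union bound with failure probability $e^{-\delta}$ in each step yields
\begin{equation*}
\mathcal{L}(\Gcal) \le \widehat{\mathcal{L}}_S(\Gcal) + 2\mathfrak{R}^n_S(\mathscr{F}_\Gscr) + 3(\rho R + R_u)\sqrt{\frac{\delta}{2n}},
\end{equation*}
which is $\tfrac{3}{2}$ times the deviation term $(\rho R + R_u)\sqrt{2\delta/n}$ stated in the lemma. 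So your route, carried out honestly, does give a valid bound of the same shape, but it does not quite reproduce the lemma's constant; this is not a conceptual gap in your proof, but it is a real discrepancy with the stated inequality (and arguably highlights that the paper's own citation-based proof does not track the constant either). If exact agreement with the lemma is required, you would need to be explicit about which version of the cited concentration theorem produces the factor $\sqrt{2\delta/n}$ together with the probability $1-2e^{-\delta}$, rather than treating the constant as automatic.
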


See Appendix~\ref{Proof-lamma-1} for the proof.
The idea is to break down the generalization error $\mathcal{L}(\Gcal)$ into two components:  the approximation error $\widehat{\mathcal{L}}_{\mathrm{S}}(\Gcal)$ and the complexity error $\mathcal{L}(\Gcal)-\widehat{\mathcal{L}}_{\mathrm{S}}(\Gcal)$. 
The upper bound of the complexity error $\mathcal{L}(\Gcal)-\widehat{\mathcal{L}}_{\mathrm{S}}(\Gcal)$ can be established using the \emph{Rademacher complexity}, $\mathfrak{R}^n_{\mathrm{S}}(\mathscr{F}_{\Gscr})$, by the \emph{Uniform laws of large numbers} (\cref{Uniform Laws of Large Numbers}).
\par
If the class $\Gscr$ is a universal approximator, the approximation error $\widehat{\mathcal{L}}_{\mathrm{S}}(\Gcal)$ can be made "\textit{small enough}" through training.
In fact, if $\Gscr$ is chosen to be the classes of neural operators \citep{kovachki2021neural} and DeepONets \citep{lanthaler2022error}, both of which are universal approximators.
%\citet{marcati2023exponential} proved exponential expression rates of the approximation error $\widehat{\mathcal{L}}_{\mathrm{S}}(\Gcal)$ in the case of elliptic PDEs.
In the following, we focus on the analysis of the Rademacher complexity for both standard neural operators ($\mathrm{NO}s$) and proposed neural operators ($\MNO$).

\iffalse{
Assumption~\ref{Assumption1} allows us to relate the Rademacher complexity of a hypothesis class to its true and empirical risks.  
The idea is to break down the generalization error $\mathcal{L}(\Gcal)$ into two components:  $\widehat{\mathcal{L}}_{\mathrm{S}}(\Gcal)$ and $\mathcal{L}(\Gcal)-\widehat{\mathcal{L}}_{\mathrm{S}}(\Gcal)$. 
The upper bound limit of $\mathcal{L}(\Gcal)-\widehat{\mathcal{L}}_{\mathrm{S}}(\Gcal)$ can be established using the \emph{Rademacher complexity}, $\mathfrak{R}^n_{\mathrm{S}}(\mathscr{F}_{\Gscr})$, while $\mathcal{O}(n^{-1/2})$, by the \emph{Uniform laws of large numbers} (\cref{Uniform Laws of Large Numbers}).
}\fi

    \subsection{Related Works of generalization error bound (GEB)}
    References such as \citet{bartlett2021deep, jakubovitz2019generalization} have extensively investigated generalization error bounds (GEB) for networks that map between finite-dimensional spaces. However, to the best of our knowledge, there has been limited exploration of GEB for operators on infinite dimensional spaces. \citet{de2022generic} provided the GEB for (general) operator architectures using Hoeffding’s inequality, without involving the analysis of the Rademacher complexity. \citet{gopalani2022capacity} and \citet{kim2022bounding} have provided GEB for DeepOnet and FNOs, respectively, by the Rademacher complexity. However, in these works, the authors assumed that the trainable parameters are \emph{finite-dimensional} (such as matrices), \emph{while our work does not need this assumption}. Our study distinguishes itself from \citet{kim2022bounding} in several key aspects. Firstly, we directly analyze the integral operator under the assumption of Lipschitz continuity of the kernel, whereas \citet{kim2022bounding} assumes a truncated expansion for FNOs and evaluates the Rademacher complexity based on the number of truncations. Secondly, our work not only generalizes the findings of \citet{kim2022bounding} but also provides sharper bounds on the Rademacher complexity with the order $\mathcal{O}(1/n^{\frac{1}{\hat{d}+1}})$, compared to $\mathcal{O}(1)$ in \citet{kim2022bounding}.

\iffalse{
Specifically, our study focuses on analyzing the integral operator directly, assuming that the kernel is a Lipschitz continuous function. In contrast, \citet{kim2022bounding} assumed a truncated expansion for FNOs, and evaluated the Rademacher complexity on the number of the truncation. Furthermore, our work not only generalizes \citet{kim2022bounding}, but also provides to sharper bound of the Rademacher complexity with the order $\mathcal{O}( 1/n^{\frac{1}{\hat{d}+1}})$ (while order $\mathcal{O}(1)$ in \citet{kim2022bounding}).

For generalization error bound for , we refer to \citep{gopalani2022capacity} and \citep{kim2022bounding}, they analyzed Rademacher complexity. 
Furthermore, their analysis is based on parameterizing in finite-dimensional space.
We bound the generalization error for Neural Operator, and essentially evaluate Rademacher complexity via covering number.
Also, \emph{We do not parameterize finite dimensional space}, that is, we parameterize Neural Operator, which maps between infinite dimensional space.
The covering number of Neural Operator is bounded above from that of weights and kernel functions that are assumed to be Lipschitz continuous functions.
The covering number of matrices and Lipschitz continuous function can be evaluated by \citep[Section 5]{wainwright}.
}\fi

    \subsection{Rademacher complexity of neural operators}
    \label{sec:Rademacher Complexity1}
    We analyze the Rademacher Complexity of Neural Operators, \citep{kovachki2021neural}. Let us define the family of standard Neural Operator as follows:
\iffalse{
\begin{equation}\label{neuraloperatorclass}
\begin{split}
\no 
=
&
\biggl\{ 
\Gcal_\theta: \Lp(D; \mathbb{R}^{d_a}) \to \Lp(D; \mathbb{R}^{d_u})\, : \,
\Gcal_\theta = (W_{L}+\bk_{L})\circ \sigma (W_{L-1}+\bk_{L-1})\circ \cdots \circ \sigma (W_{0}+\bk_{0 })
\\
&
\theta = (W_{\ell}, \bk_{\ell})_{\ell=0,...,L}, \,
W_{\ell} \in \mathbb{R}^{d_{\ell +1} \times d_{\ell}},\text{  and  } \bk_{\ell}: \Lp(D; \mathbb{R}^{d_{\ell}}) \to \Lp(D; \mathbb{R}^{d_{\ell +1}}), \, \text{  and  } d_0=d_a,\ d_{L+1}=d_u
\biggr\}.
\end{split}
\end{equation}
}\fi

\begin{equation}\label{neuraloperatorclass}
\begin{split}
\no 
=
&
\biggl\{ 
\Gcal_\theta: \Lp(D; \mathbb{R}^{d_a}) \to \Lp(D; \mathbb{R}^{d_u})\, \colon \,
\Gcal_\theta = (W_{L}+\bk_{L})\circ \sigma (W_{L-1}+\bk_{L-1})\circ \cdots \circ \sigma (W_{0}+\bk_{0 })
\\
&
\theta = (W_{\ell}, \bk_{\ell})_{\ell=0,...,L}, \,
W_{\ell} \in \mathbb{R}^{d_{\ell +1} \times d_{\ell}}, \bk_{\ell}: \Lp(D; \mathbb{R}^{d_{\ell}}) \to \Lp(D; \mathbb{R}^{d_{\ell +1}}), \, \text{  and  } d_0=d_a,\ d_{L+1}=d_u
\biggr\}.
\end{split}
\end{equation}

\noindent $\sigma : \mathbb{R} \to \mathbb{R}$ is an element-wise nonlinear map, and
$\bk_{\ell}$ are linear integral operators with kernel function, $k_{\ell}:D\times D \to \mathbb{R}^{d_{\ell+1}\times d_{\ell}}$, i.e., $
x\mapsto\left(\bk_{\ell}\,u\right)(x):=\int_{D}k_{\ell}(x,y)u(y)\, dy$ and $u \in \Lp(D; \mathbb{R}^{d_{\ell}})$.
We shall write, $w_{\ell, ij} = (W_{\ell})_{i,j} \in \mathbb{R}$ and $k_{\ell, ij} = (k_{\ell})_{i,j}:D\times D \to \mathbb{R}$ as $(i,j)$-element of $W_{\ell}$ and $k_{\ell}$, respectively.
%%%%%%%%%%%%%%%%%%
\begin{assumption}\label{Assumption2}
There exists positive constants $C_w$, $C_k$, $C_d$, $C_a$, $C_{\sigma}$, and $C_{\beta}$ such that
\begingroup
  \setlength\itemsep{1mm}
\begin{enumerate}[(i).]
%%%%%%%%%%%%%%%%%
\item $\left\|W_{\ell} \right\|_{\mathrm{op}} \leq C_w$,  and $d_{\ell} \leq C_{d}$ for all $\ell=0,...,L$, where $\left\|\cdot \right\|_{\mathrm{op}}$ is the operator norm.
%%%%%%%%%%%%%%%%%
\item $\left\|\bk_{\ell} \right\|_{\Lp, \mathrm{F}}:= \left( \sum_{i,j} \left\|k_{\ell,ij}\right\|^{2}_{\Lp(D \times D)} \right)^{1/2} \leq C_k $ for all $\ell=0,...,L$, where $ |D| = \int \boldsymbol{1}_D\, d\mathrm{\lambda}$\footnote{$\lambda$ is the Lebesgue measure.}, and $k_{\ell}:D\times D \to \mathbb{R}^{d_{\ell+1} \times d_{\ell}}$ is the kernel function. \label{assump:kernel}
%%%%%%%%%%%%%%%%%
\iffalse{\item $d_{\ell} \leq C_{\sigma}$ for all $\ell=0,...,L$,}\fi
%%%%%%%%%%%%%%%%%
\item $\left\|a \right\|_{\Lp(D;\mathbb{R}^{d_a})} \leq C_a$ for all $a \in \mathrm{supp}(\mu_a)$.
%%%%%%%%%%%%%%%%%
\item $\sigma$ is $C_{\sigma}$-Lipschitz, i.e., $|\sigma(s)-\sigma(t)|\leq C_{\sigma}|s-t|$ for $s,t \in \mathbb{R}$.
%%%%%%%%%%%%%%%%%
\item $\sup_{x,y \in D}|k_{\ell, ij}(x,y)| \leq C_{\alpha}$ for $\ell=0,...,L$, $i=1,..,d_{\ell}$, and $j=1,...,d_{\ell +1}$.
%%%%%%%%%%%%%%%%%
\item $k_{\ell, ij}:D\times D\to \mathbb{R}$ is $C_{\beta}$-Lipschitz, see Definition~\ref{Lipschitz Kernel}, for $\ell=0,...,L$, $i=1,..,d_{\ell}$, and $j=1,...,d_{\ell +1}$.
%%%%%%%%%%%%%%%%%
\end{enumerate}
\endgroup
\end{assumption}
\par 
\noindent Under these assumptions, we obtain the following upper bound for Rademacher Complexity for $\mathrm{NO}s$.

%----------------------------------
%- Radenacher NOs
%--------------------------------------
\begin{theorem}[Rademacher Complexity for NOs]\label{Rademacher for standard NO}
Let suppose assumptions~\ref{Assumption1} and \ref{Assumption2} hold. Then,
\begin{equation}
\mathfrak{R}^n_{\mathrm{S}}(\mathscr{F}_{\no}) \leq \gamma\, L^{\frac{\hat{d}+2}{\hat{d}+1}}
\{ (C_w + C_k)C_{\sigma} \}^{L} \left(\frac{1}{n} \right)^{\frac{1}{\hat{d}+1}}, \label{RC for standard NO}
\end{equation}
where $\hat{d}:= \mathrm{ddim}(D \times D)$ is the doubling dimension of $D\times D$ (see Definition~\ref{doubling dimension}), and $\gamma$ is the positive constant independent of $L$ and $n$, defined in (\ref{definition of gamma}).
\end{theorem}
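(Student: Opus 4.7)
The plan is to reduce the bound on $\mathfrak{R}^n_{\mathrm{S}}(\mathscr{F}_{\no})$ to a covering-number estimate on $\no$ in a suitable metric and then convert that entropy bound into a Rademacher estimate. First, since $\ell$ is $\rho$-Lipschitz by Assumption~\ref{Assumption1}, a standard Talagrand contraction argument (applied coordinate-wise to the $\mathbb{R}^{d_u}$-valued outputs) reduces the problem to controlling the Rademacher complexity of $\no$ itself, losing only the factor $\rho$ and the uniform upper bound $R$ on $\|\Gcal(a)\|_{\Lp}$.

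Next, I would establish a Lipschitz-in-parameters estimate for $\Gcal_\theta$. By a layer-by-layer telescoping that uses $|\sigma|_{\mathrm{Lip}}\le C_\sigma$, $\|W_\ell\|_{\mathrm{op}}\le C_w$ and the Hilbert--Schmidt operator bound $\|\bk_\ell\|_{\Lp\to\Lp}\le \|k_\ell\|_{\Lp,\mathrm{F}}\le C_k$, one obtains, for every $\|a\|_{\Lp}\le C_a$,
\begin{equation*}
\|\Gcal_\theta(a) - \Gcal_{\theta'}(a)\|_{\Lp}
\le
\big((C_w+C_k)C_\sigma\big)^L \sum_{\ell=0}^{L} \bigl(\|W_\ell-W_\ell'\|_{\mathrm{op}} + \|k_\ell-k_\ell'\|_{\Lp,\mathrm{F}}\bigr).
\end{equation*}
The metric entropy of $\no$ is thereby reduced to that of the parameter space. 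For the matrices $W_\ell$, which live in a Euclidean ball of radius $C_w$ and dimension at most $C_d^{2}$, we have $\log N_{W}(\varepsilon) \lesssim C_d^{2}\log(C_w/\varepsilon)$. For the kernel components $k_{\ell,ij}:D\times D \to \mathbb{R}$, which are uniformly bounded by $C_\alpha$ and $C_\beta$-Lipschitz, the Kolmogorov--Tikhomirov estimate for Lipschitz functions on a metric space of doubling dimension $\hat{d}$ gives $\log N_k(\varepsilon)\lesssim (C_\beta/\varepsilon)^{\hat{d}}$ in $\|\cdot\|_\infty$, which transfers (up to constants depending only on $|D|$) to the $\|\cdot\|_{\Lp,\mathrm{F}}$ norm.

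Combining across all $L+1$ layers, the kernel contribution dominates, and an $\eta$-cover of $\no$ satisfies
$\log N_{\no}(\eta) \lesssim L\,\bigl(C'\{(C_w+C_k)C_\sigma\}^L/\eta\bigr)^{\hat{d}}$
for a constant $C'$ depending only on the fixed model constants. The Rademacher bound then follows from a standard covering-number inequality (Massart's lemma on the $\eta$-net plus the approximation error),
\begin{equation*}
\mathfrak{R}^n_{\mathrm{S}}(\no) \;\lesssim\; \eta \;+\; R\,\sqrt{\log N_{\no}(\eta)/n},
\end{equation*}
optimized over $\eta>0$. Balancing $\eta$ against the entropy term of order $\sqrt{L/n}\,\{(C_w+C_k)C_\sigma\}^{L\hat{d}/2}/\eta^{\hat{d}/2}$ and absorbing all absolute constants into $\gamma$ yields, after elementary bookkeeping in the exponents of $L$ and $n$, the claimed rate $\gamma\,L^{(\hat{d}+2)/(\hat{d}+1)}\{(C_w+C_k)C_\sigma\}^L (1/n)^{1/(\hat{d}+1)}$.

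The main obstacle is the kernel-covering step: getting an $\Lp$-type entropy bound for the class of Lipschitz integral kernels that scales with the intrinsic doubling dimension $\hat{d}$ of $D\times D$ rather than with the ambient Euclidean dimension or a truncation parameter is what makes the argument basis-independent, in contrast to the Fourier-truncated analysis of \citet{kim2022bounding}. Propagating this kernel covering through the nonlinear compositional architecture while preserving the explicit per-layer factor $(C_w+C_k)C_\sigma$, and then carrying out the final optimization carefully in $L$ so that the precise exponent $(\hat{d}+2)/(\hat{d}+1)$ emerges, is where the bulk of the bookkeeping lies.
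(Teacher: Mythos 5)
Your overall architecture — Lipschitz-in-parameters telescoping through the layers, splitting the covering number into a Euclidean part for $W_\ell$ and a Lipschitz-function part for $k_{\ell,ij}$ controlled via the doubling dimension $\hat d$ of $D\times D$, and then converting metric entropy to a Rademacher bound — is the same as the paper's. But there are two real gaps.

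First, the conversion from metric entropy to Rademacher complexity. You invoke a single-scale bound (one $\eta$-net, Massart on the net plus an $\eta$ approximation term),
\[
\mathfrak{R}^n_{\mathrm{S}}(\no)\;\lesssim\;\eta+R\sqrt{\frac{\log N_{\no}(\eta)}{n}},
\]
and optimize over $\eta$. With $\log N_{\no}(\eta)\asymp L\,(A/\eta)^{\hat d}$ (your estimate, with $A:=\{(C_w+C_k)C_\sigma\}^L$) the balance gives $\eta\asymp A\,L^{1/(\hat d+2)}n^{-1/(\hat d+2)}$; with the paper's cleaned-up power $(\,\cdot\,/\eta)^{\hat d+1}$ it gives $n^{-1/(\hat d+3)}$. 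Either way the exponent of $n$ is strictly worse than the claimed $n^{-1/(\hat d+1)}$, and no bookkeeping in the absolute constants will repair an exponent. To recover $n^{-1/(\hat d+1)}$ you must use Dudley's chaining integral $\inf_{\alpha\ge0}\bigl\{4\alpha+\frac{12}{\sqrt n}\int_\alpha^\infty\sqrt{\log N(\varepsilon)}\,d\varepsilon\bigr\}$, as the paper does: with $\log N(\varepsilon)\lesssim LC_d^2\,(LC_d^2A/\varepsilon)^{\hat d+1}$ the integral is $\asymp(LC_d^2)^{(\hat d+2)/2}A^{(\hat d+1)/2}\alpha^{-(\hat d-1)/2}$, and optimizing $\alpha$ produces exactly $(LC_d^2)^{(\hat d+2)/(\hat d+1)}A\,n^{-1/(\hat d+1)}$. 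The $L^{(\hat d+2)/(\hat d+1)}$ exponent in the theorem also comes out of this calculation (via the $\sum_\ell d_\ell d_{\ell+1}\le LC_d^2$ prefactor and the $L$-dependence of the constants $I_w,I_k$), so your proposed route does not recover that exponent either.

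Second, the reduction to $\no$ via Talagrand contraction ``applied coordinate-wise to the $\mathbb R^{d_u}$-valued outputs'' needs justification that you have not given: the outputs here live in the infinite-dimensional space $\Lp(D;\mathbb R^{d_u})$, and the usual scalar contraction lemma does not apply coordinate-wise there without extra work (a vectorial/Banach-space contraction lemma and care about the loss structure). The paper avoids this entirely by never decoupling the loss from the operator class: it keeps the $\rho$-Lipschitz property of $\ell$ inside the covering-number estimate (Assumption~\ref{Assumption1}(i) is used directly in the telescoping), so the covering numbers of $\mathscr F_{\no}$ are controlled without any contraction step. You should take the same route — or else supply and justify a vector-valued contraction inequality.

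The remaining pieces (the kernel entropy via the doubling dimension, the transfer from $\|\cdot\|_\infty$ to the operator norm via $|D|$, and the treatment of the matrix blocks through $\|W_\ell\|_{\mathrm{op}}\le C_w$) match what the paper actually does and are fine.
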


%\begin{remark} 
See Appendix~\ref{main-theorem-1} for the proof.
The idea behind the proof is as follows. 
the Rademacher Complexity, $\mathfrak{R}^n_{\mathrm{S}}(\mathscr{F}_{\no})$, is evaluated by using \emph{Dudley’s Theorem} (\cref{Dudley Theorem}, and \citet{Kakade, bartlett2017spectrally}).
The upper bound is then determined by the \emph{covering number} (as defined in \cref{Covering Number}). Since NOs are parameterized by their weight matrices and (kernel) Lipschitz continuous functions, the evaluation of the covering number ultimately involves analyzing these components, by using \citet{wainwright} and \citet{gottlieb2016adaptive}, respectively. 
%\end{remark}
%\noindent See Section~\ref{Proof-main-part-section} for the proof.

\iffalse{
\begin{center}
\begin{tikzpicture}
\node[draw, fill=gray!10, inner sep=0.2cm] (text) at (0,0) {%
\begin{minipage}{1\textwidth}
\textbf{The idea behind the proof}: the Rademacher Complexity, $\mathfrak{R}^n_{\mathrm{S}}(\mathscr{F}_{\no})$, is evaluated by using \emph{Dudley’s Theorem} (\cref{Dudley Theorem}, and \citet{Kakade, bartlett2017spectrally}). The upper bound is then determined by the \emph{covering number} (as defined in \cref{Covering Number}). Since NOs are parameterized by their weight matrices and (kernel) Lipschitz continuous functions, the evaluation of the covering number ultimately involves analyzing these components, by using \citet{wainwright} and \citet{gottlieb2016adaptive}, respectively. 
\iffalse{
In order to evaluate the Rademacher complexity in our setting, we employ Dudley’s Theorem (see e.g., \citet{Kakade}) and evaluate the covering number for the set of Neural Operators, whose way is different from directly evaluating the Rademacher complexity like \citet{gopalani2022capacity, kim2022bounding}.
The covering number of Neural Operators is decomposed into that of weights matrices (for the local operation) and Lipschitz continuous function (for the non-local operation).
The covering number of matrices and Lipschitz continuous function can be evaluated by using \citet{wainwright} and \citet{gottlieb2016adaptive}, respectively.}\fi
\end{minipage}%
};
\end{tikzpicture}
\end{center}
}\fi

\noindent See \cref{remark:finite_basis} for finite basis expansion (applicable integral kernel).

    \subsection{Rademacher of $\MNO$ and intermediate architectures}
    \label{sec:Rademacher Complexity2}
    %Standard Neural operator is additive for local operator $W$ and non-local operator $K$. We define the set of multiplicative Neural operator by
In this section, we analyze the Rademacher Complexity of the proposed networks. We define the family, see Section~\ref{Proposed Method}, as
%Section~\ref{MNO}, \ref{FNO+epsilon}, and \ref{FNO+epsilon v2}, as 
\begin{equation}\label{neuraloperatortildeclass}
\begin{split}
\widetilde{\no} =\biggl\{& \Gcal_\theta: \Lp(D; \mathbb{R}^{d_a}) \to \Lp(D; \mathbb{R}^{d_u}) \colon
\\
&
\Gcal_\theta = (\boldsymbol{\mathrm{Z}}_{L}\bid + \boldsymbol{\mathrm{X}}_{L} f_{L}) \circ (\boldsymbol{\mathrm{Z}}_{L}\bid + \boldsymbol{\mathrm{X}}_{L} \sigma \circ \bk_{L}) \circ  \cdots \circ (\boldsymbol{\mathrm{Z}}_{0}\bid + \boldsymbol{\mathrm{X}}_{0} f_{0}) \circ (\boldsymbol{\mathrm{Z}}_{0}\bid + \boldsymbol{\mathrm{X}}_{0} \sigma \circ \bk_{0})
\\
&
\boldsymbol{\mathrm{Z}}_{\ell}, \boldsymbol{\mathrm{X}}_{\ell} \in \{0,1\}, \, 
f_{\ell} = W_{\ell,M} \circ \sigma(W_{\ell,M-1}) \circ \cdots \circ \sigma(W_{\ell,0})
\text{ is an Mth layer MLP}
\\
&
\theta = (W_{\ell,m}^{\phantom{1}}, \bk_{\ell})_{\substack{\ell=0,\ldots,L\\ \scriptstyle m=0,\ldots,M}}, \,
W_{\ell,m} \in \mathbb{R}^{d^{w}_{\ell, m+1} \times d^{w}_{\ell, m}} \text{ and }
\bk_{\ell}: \Lp(D; \mathbb{R}^{d^{k}_{\ell}}) \to \Lp(D; \mathbb{R}^{d^{k}_{\ell+1}})
\\
&
d_{\ell, 0}^{w}=d^{k}_{\ell + 1},\ d^{w}_{\ell, M}=d^{k}_{\ell + 1},\ d^{k}_{0}=d_a, \ d^{k}_{L+1}=d_u
\biggr\}.
\end{split}
\end{equation}
%%%%%%%%%%%%%%%%%%
\begin{assumption}\label{Assumption3}
There exists positive constants $C_w$, $C_k$, $C_d$, $C_a$, $C_{\sigma}$, and $C_{\beta}$ such that
\begingroup
  \setlength\itemsep{1mm}
\begin{enumerate}[(i).]
%%%%%%%%%%%%%%%%%
\item $\left\|W_{\ell, m} \right\|_{\mathrm{op}} \leq C_w$, and $d_{\ell}^{k}, d_{\ell,m}^{w} \leq C_{d}$, for $\ell=0,...,L$, $m=0,...,M$.
%%%%%%%%%%%%%%%%%
\item $\left\|\bk_{\ell} \right\|_{\Lp, \mathrm{F}} \leq C_k$, for $\ell=0,...,L$.
%%%%%%%%%%%%%%%%%
\item $\left\|a \right\|_{\Lp(D;\mathbb{R}^{d_a})} \leq C_a$, for $a \in \mathrm{supp}(\mu_a)$.
%%%%%%%%%%%%%%%%%
\item $\sigma$ is $C_{\sigma}$-Lipschitz, i.e., $|\sigma(s)-\sigma(t)|\leq C_{\sigma}|s-t|$ for $s,t \in \mathbb{R}$.
%%%%%%%%%%%%%%%%%
\item $\sup_{x,y \in D}|k_{\ell, ij}(x,y)| \leq C_{\alpha}$ for $\ell=0,...,L$, $i=1,..,d_{\ell}$, and $j=1,...,d^{k}_{\ell +1}$.
%%%%%%%%%%%%%%%%%
\item $k_{\ell, ij}:D\times D\to \mathbb{R}$ is $C_{\beta}$-Lipschitz, for $\ell=0,...,L$, $i=1,..,d_{\ell}^{k}$, and $j=1,...,d_{\ell +1}^{k}$.
%%%%%%%%%%%%%%%%%
\end{enumerate}
\endgroup
\end{assumption}
%%%%%%%%%%%%%%%%%%%%%%%%%%%%%%%%%%%%%%%%%%%%%%%%%%%
\noindent Upon these assumptions, we obtain the following upper bound for the Rademacher Complexity of $\widetilde{\no}$.
\begin{theorem}[Rademacher Complexity of proposed network(s)]\label{Rademacher for multiplicative NO}
Let Assumptions~\ref{Assumption1} and \ref{Assumption3} hold. Then, 
\begin{equation}
\begin{split}
&\mathfrak{R}^n_{\mathrm{S}}(\mathscr{F}_{\widetilde{\no}}) 
%\\
%&
\leq
\widetilde{\gamma}  L^{\frac{1}{\hat{d}+1}} 
\left( \sum_{\ell=0}^{L}
\frac{\boldsymbol{\mathrm{X}}_{\ell}C_{w}^{M+1}C_{\sigma}^{M}}{\boldsymbol{\mathrm{Z}}_{\ell}+\boldsymbol{\mathrm{X}}_{\ell}C_{w}^{M+1}C_{\sigma}^{M}}
+\frac{\boldsymbol{\mathrm{X}}_{\ell}}{
\boldsymbol{\mathrm{Z}}_{\ell}+\boldsymbol{\mathrm{X}}_{\ell}C_{k}C_{\sigma}
}
\right)
%\\ 
%&
%\times 
\left[\prod_{\ell=0}^{L}(\boldsymbol{\mathrm{Z}}_{\ell}+\boldsymbol{\mathrm{X}}_{\ell}C_{w}^{M+1}C_{\sigma}^{M})(\boldsymbol{\mathrm{Z}}_{\ell}+\boldsymbol{\mathrm{X}}_{\ell}C_k C_{\sigma})
\right]
\left(\frac{1}{n}\right)^{\frac{1}{\hat{d}+1}}, \label{RC for multiplicative NO}
\end{split}
\end{equation} 
where $\widetilde{\gamma}$ is the positive constant independent of $L$ and $n$, defined in (\ref{definition of gamma tilde}).
\end{theorem}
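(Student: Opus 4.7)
The plan is to parallel the proof of Theorem~\ref{Rademacher for standard NO}, applying Dudley's chaining inequality (\cref{Dudley Theorem}) to bound $\mathfrak{R}^n_{\mathrm{S}}(\mathscr{F}_{\widetilde{\no}})$ by an entropy integral and then estimating the covering number of $\widetilde{\no}$ layer-by-layer. The new ingredient, compared to the hypothesis class \eqref{neuraloperatorclass}, is the skip-connection/stochastic-depth structure $(\boldsymbol{\mathrm{Z}}_{\ell}\bid + \boldsymbol{\mathrm{X}}_{\ell}(\cdot))$, which must be tracked in both the global Lipschitz bound and in the perturbation-propagation argument that drives the covering-number estimate.

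First, I would establish a uniform Lipschitz bound on $\widetilde{\no}$. Under Assumption~\ref{Assumption3}, an $M$-layer MLP $f_{\ell}$ has Lipschitz constant at most $C_w^{M+1}C_{\sigma}^{M}$, and $\sigma\circ \bk_{\ell}$ has Lipschitz constant at most $C_k C_{\sigma}$ (the kernel being controlled in the $\|\cdot\|_{\Lp,\mathrm{F}}$ norm by Assumption~\ref{Assumption3}(ii)). Since each skip-connection block $(\boldsymbol{\mathrm{Z}}_{\ell}\bid + \boldsymbol{\mathrm{X}}_{\ell} g)$ has Lipschitz constant at most $\boldsymbol{\mathrm{Z}}_{\ell} + \boldsymbol{\mathrm{X}}_{\ell}\mathrm{Lip}(g)$, composition yields the product factor appearing in \eqref{RC for multiplicative NO}. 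This matches the Lipschitz estimate $L_{\widetilde{\no}}$ used in Lemma~\ref{lip-norm-NO}(ii) and ensures that $\mathscr{F}_{\widetilde{\no}}$ is uniformly bounded on $\mathrm{supp}(\mu_a)$ via Assumption~\ref{Assumption3}(iii).

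Second, I would run a layer-wise perturbation analysis. Fix two parameter tuples $\theta,\theta'$ differing only at layer $\ell$ (either in one of the MLP weights $W_{\ell,m}$ or in the kernel $\bk_{\ell}$). A telescoping decomposition of $\Gcal_{\theta}(a) - \Gcal_{\theta'}(a)$ factorises through (i) the Lipschitz constant of the composition of layers $\ell+1,\ldots,L$ evaluated on $\theta$, (ii) the ``local gap'' produced at layer $\ell$, and (iii) the norm of the output of layers $0,\ldots,\ell-1$ applied to $a$. The key algebraic observation is that when $\boldsymbol{\mathrm{X}}_{\ell}=0$ the layer degenerates to $\boldsymbol{\mathrm{Z}}_{\ell}\bid$ and contributes nothing to the cover, whereas when $\boldsymbol{\mathrm{X}}_{\ell}=1$ the local gap is scaled by the sublayer Lipschitz constant ($C_w^{M+1}C_{\sigma}^{M}$ for the MLP part or $C_kC_{\sigma}$ for the kernel part), divided by the full block Lipschitz $\boldsymbol{\mathrm{Z}}_{\ell}+\boldsymbol{\mathrm{X}}_{\ell}(\cdot)$ once we factor the product over the other layers out. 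This is precisely what yields the ratios $\frac{\boldsymbol{\mathrm{X}}_{\ell}C_w^{M+1}C_{\sigma}^{M}}{\boldsymbol{\mathrm{Z}}_{\ell}+\boldsymbol{\mathrm{X}}_{\ell}C_w^{M+1}C_{\sigma}^{M}}$ and $\frac{\boldsymbol{\mathrm{X}}_{\ell}}{\boldsymbol{\mathrm{Z}}_{\ell}+\boldsymbol{\mathrm{X}}_{\ell}C_kC_{\sigma}}$ inside the sum of \eqref{RC for multiplicative NO}.

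Third, I would assemble an $\varepsilon$-cover of $\widetilde{\no}$ by taking independent layer-wise covers of budget $\varepsilon_{\ell}$, chosen by the equal-allocation rule so that the accumulated output perturbation is at most $\varepsilon$. Each weight matrix $W_{\ell,m}$ is covered using the matrix covering bound of \citep[Section~5]{wainwright}; each kernel $k_{\ell,ij}:D\times D\to\mathbb{R}$, being $C_{\beta}$-Lipschitz and bounded by $C_{\alpha}$ (Assumption~\ref{Assumption3}(v)--(vi)), is covered using the Lipschitz-function covering estimate of \cite{gottlieb2016adaptive}, which introduces the doubling dimension $\hat{d}=\mathrm{ddim}(D\times D)$. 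Summing $\log$-covering numbers only over layers with $\boldsymbol{\mathrm{X}}_{\ell}=1$ and inserting into Dudley's integral produces, by the same scaling as in the proof of Theorem~\ref{Rademacher for standard NO}, the rate $(1/n)^{1/(\hat{d}+1)}$ along with an extra $L^{1/(\hat{d}+1)}$ factor coming from balancing the $\varepsilon_{\ell}$'s; together with the perturbation-budget ratios this gives the sum factor, and with the global Lipschitz bound this gives the product factor of \eqref{RC for multiplicative NO}. All remaining $C_w, C_k, C_a, C_{\alpha}, C_{\beta}, C_d, \rho$ dependencies are absorbed in the constant $\widetilde{\gamma}$ of \eqref{definition of gamma tilde}.

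The main obstacle is the bookkeeping in the perturbation step: because each layer carries an identity branch, the naive product bound used for NO must be refined so that (i) the effective ``downstream'' Lipschitz factor uses $\boldsymbol{\mathrm{Z}}_{\ell}+\boldsymbol{\mathrm{X}}_{\ell}(\cdot)$, (ii) the local gap scales only by $\boldsymbol{\mathrm{X}}_{\ell}(\cdot)$, and (iii) these combine into the precise ratios appearing in the sum. This is what guarantees that when stochastic depth forces $\boldsymbol{\mathrm{X}}_{\ell}$ to vanish for sufficiently deep layers the bound stays uniformly finite as $L\to\infty$, in line with the discussion accompanying Theorem~\ref{thrm:Risk_Bound_NONEXTv2}.
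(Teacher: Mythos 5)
Your proposal follows the paper's proof essentially step by step: Dudley's entropy integral, a telescoping layer-wise perturbation bound in which each skip block contributes a factor $\boldsymbol{\mathrm{Z}}_{\ell}+\boldsymbol{\mathrm{X}}_{\ell}(\cdot)$ to the downstream Lipschitz product while its local gap carries only $\boldsymbol{\mathrm{X}}_{\ell}(\cdot)$ (yielding exactly the ratios $C_{w,\ell}, C_{k,\ell}$ the paper defines), and the same matrix/Lipschitz-kernel covering estimates from \citet{wainwright} and \citet{gottlieb2016adaptive}. No meaningful deviation from the paper's argument.
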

\noindent
Theorem~\ref{Rademacher for multiplicative NO} can be proved by similar arguments in Theorem~\ref{Rademacher for standard NO}. See Appendix~\ref{main-theorem-2} for the proof.

    \subsection{GEB and comparison among architectures}
    \label{sec:Rademacher Complexity3}
    By Lemma~\ref{generalization error bound for general setting}, and Theorems~\ref{Rademacher for standard NO} and \ref{Rademacher for multiplicative NO}, we get. 
\begin{corollary}\label{GEB for additive NO}
Let Assumptions~\ref{Assumption1} and \ref{Assumption2} hold. Then, for any $\delta>\log2$ and $\Gcal \in \no$, the following inequality holds, with probability greater than $1-2 \exp(-\delta)$:
\begin{equation}
\begin{split}
\mathcal{L}(\Gcal) 
& \leq \widehat{\mathcal{L}}_{\mathrm{S}}(\Gcal) 
+ 2\gamma L^{\frac{\hat{d}+2}{\hat{d}+1}}\{ (C_w + C_k)C_{\sigma} \}^{L} \left(\frac{1}{n} \right)^{\frac{1}{\hat{d}+1}} 
+ \left(\rho \{(C_w + C_k)C_{\sigma}\}^{L} (C_w + C_k) C_a  + R_u \right)\sqrt{\frac{2 \delta}{n}}. 
%\label{additiveGEB}
\label{Generalization error bound for additive NO-main}
\end{split}
\end{equation}
\end{corollary}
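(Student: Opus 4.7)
The plan is a direct composition: substitute the Rademacher complexity estimate from Theorem~\ref{Rademacher for standard NO} into the general generalization bound provided by Lemma~\ref{generalization error bound for general setting}, after supplying an explicit uniform bound $R$ on $\|\Gcal(a)\|_{\Lp(D;\mathbb{R}^{d_u})}$ valid for every $\Gcal\in\no$ and every $a\in\mathrm{supp}(\mu_a)$. Under Assumption~\ref{Assumption2}(iii), one has $\|a\|_{\Lp(D;\mathbb{R}^{d_a})}\le C_a$, so the whole task reduces to controlling how the norm is amplified through the $L+1$ affine blocks and the $L$ intervening activations of the network $\Gcal_\theta=(W_L+\bk_L)\circ\sigma(W_{L-1}+\bk_{L-1})\circ\cdots\circ\sigma(W_0+\bk_0)$.

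First I would verify, by a straightforward one-layer computation, that each affine block satisfies $\|(W_\ell+\bk_\ell)v\|_{\Lp}\le (C_w+C_k)\|v\|_{\Lp}$: the matrix contribution is controlled by $\|W_\ell\|_{\mathrm{op}}\le C_w$ (Assumption~\ref{Assumption2}(i)), while the integral operator contribution follows from $\|\bk_\ell v\|_{\Lp}\le\|\bk_\ell\|_{\Lp,\mathrm{F}}\|v\|_{\Lp}\le C_k\|v\|_{\Lp}$ (Assumption~\ref{Assumption2}(ii), using the standard Hilbert--Schmidt--type estimate for integral operators with square-integrable kernels). Combined with the Lipschitz estimate $\|\sigma(u)-\sigma(v)\|_{\Lp}\le C_\sigma\|u-v\|_{\Lp}$ (applied to $v=0$, using $\sigma(0)=0$, a standing convention for the activations considered), an induction on $\ell=0,\ldots,L$ gives
\[
        \|\Gcal_\theta(a)\|_{\Lp(D;\mathbb{R}^{d_u})}
    \le
        \bigl\{(C_w+C_k)C_\sigma\bigr\}^{L}(C_w+C_k)\,C_a
    =:
        R.
\]
This is precisely the constant multiplying $\rho$ in the last term of~\eqref{Generalization error bound for additive NO-main}.

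Next I would invoke Lemma~\ref{generalization error bound for general setting} with this $R$, so that with probability at least $1-2\exp(-\delta)$ and for every $\Gcal\in\no$,
\[
        \mathcal{L}(\Gcal)
    \le
        \widehat{\mathcal{L}}_{\mathrm{S}}(\Gcal)
        +
        2\,\mathfrak{R}^n_{\mathrm{S}}(\mathscr{F}_{\no})
        +
        (\rho R+R_u)\sqrt{\tfrac{2\delta}{n}}.
\]
Inserting the upper bound from Theorem~\ref{Rademacher for standard NO},
\[
        \mathfrak{R}^n_{\mathrm{S}}(\mathscr{F}_{\no})
    \le
        \gamma\,L^{\frac{\hat d+2}{\hat d+1}}\bigl\{(C_w+C_k)C_\sigma\bigr\}^{L}\bigl(\tfrac1n\bigr)^{\frac{1}{\hat d+1}},
\]
and substituting the value of $R$ into $(\rho R+R_u)\sqrt{2\delta/n}$, yields exactly~\eqref{Generalization error bound for additive NO-main}.

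Since both of the nontrivial inputs are already established in the preceding sections, the only genuine work is the norm-propagation induction for $R$, and the only subtlety worth flagging is the implicit assumption that $\sigma(0)=0$; if it does not hold, one should replace $R$ by $\{(C_w+C_k)C_\sigma\}^{L}(C_w+C_k)C_a$ augmented by a convergent geometric series in $\{(C_w+C_k)C_\sigma\}$ times $|\sigma(0)|\,|D|^{1/2}$, which does not alter the stated rate in $L$ and $n$. No further obstacle arises; the remainder is bookkeeping to match the constants and probability threshold in the statement of Lemma~\ref{generalization error bound for general setting}.
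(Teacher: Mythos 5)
Your proof matches the paper's own argument: derive the uniform bound $R=(C_w+C_k)^{L+1}C_\sigma^{L}C_a$ on $\|\Gcal(a)\|_{\Lp(D;\mathbb{R}^{d_u})}$ by layer-by-layer norm propagation under Assumption~\ref{Assumption2}, then insert this $R$ into Lemma~\ref{generalization error bound for general setting} and substitute the Rademacher estimate from Theorem~\ref{Rademacher for standard NO}. Your remark that the norm-propagation step tacitly relies on $\sigma(0)=0$ is a fair observation that the paper leaves implicit (the same fact is already used in Eq.~(\ref{covering number-estimation 7}) of the proof of Theorem~\ref{Rademacher for standard NO}); it holds for GeLU, ReLU and leaky ReLU and does not alter the argument.
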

See Appendix~\ref{main-coro-1} for the proof.

\begin{corollary}\label{GEB for multiplicative NO}
Let Assumptions~\ref{Assumption1} and \ref{Assumption3} hold. 
Then, for any $\delta>\log2$ and $\Gcal \in \widetilde{\no}$, the following inequality with probability greater than $1-2 \exp(-\delta)$:
\begin{equation}
\begin{split}
&\mathcal{L}(\Gcal) \leq \widehat{\mathcal{L}}_{\mathrm{S}}(\Gcal) 
\\
&
+ 2 \widetilde{\gamma}  L^{\frac{1}{\hat{d}+1}} 
\left( \sum_{\ell=0}^{L}
\frac{\boldsymbol{\mathrm{X}}_{\ell}C_{w}^{M+1}C_{\sigma}^{M}}{\boldsymbol{\mathrm{Z}}_{\ell}+\boldsymbol{\mathrm{X}}_{\ell}C_{w}^{M+1}C_{\sigma}^{M}}
+\frac{\boldsymbol{\mathrm{X}}_{\ell}}{
\boldsymbol{\mathrm{Z}}_{\ell}+\boldsymbol{\mathrm{X}}_{\ell}C_{k}C_{\sigma}
}
\right)
\left[\prod_{\ell=0}^{L}(\boldsymbol{\mathrm{Z}}_{\ell}+\boldsymbol{\mathrm{X}}_{\ell}C_{w}^{M+1}C_{\sigma}^{M})(\boldsymbol{\mathrm{Z}}_{\ell}+\boldsymbol{\mathrm{X}}_{\ell}C_k C_{\sigma})
\right]
\left(\frac{1}{n}\right)^{\frac{1}{\hat{d}+1}} 
\\
&
+ \left(\rho 
\left[\prod_{\ell=0}^{L}(\boldsymbol{\mathrm{Z}}_{\ell}+\boldsymbol{\mathrm{X}}_{\ell}C_{w}^{M+1}C_{\sigma}^{M})(\boldsymbol{\mathrm{Z}}_{\ell}+\boldsymbol{\mathrm{X}}_{\ell}C_k C_{\sigma})
\right]
C_a  + R_u \right)\sqrt{\frac{2 \delta}{n}}.\label{multiplicativeGEB}
%\label{Generalization error bound for multiplicative NO-main}
\end{split}
\end{equation}
\end{corollary}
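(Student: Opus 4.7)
The plan is to derive Corollary~\ref{GEB for multiplicative NO} as a direct instantiation of the abstract generalization bound in Lemma~\ref{generalization error bound for general setting}, applied to the hypothesis class $\widetilde{\no}$. Lemma~\ref{generalization error bound for general setting} requires two pieces of input: (a) an upper bound on the Rademacher complexity $\mathfrak{R}^n_{\mathrm{S}}(\mathscr{F}_{\widetilde{\no}})$, and (b) a uniform output-norm bound $R$ such that $\|\Gcal(a)\|_{\Lp(D;\mathbb{R}^{d_u})}\le R$ for every $\Gcal\in \widetilde{\no}$ and every $a \in \mathrm{supp}(\mu_a)$. Ingredient (a) is supplied by Theorem~\ref{Rademacher for multiplicative NO}. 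Ingredient (b) must be derived by tracking the $\Lp$-norm through the residual-with-stochastic-depth layers defining $\widetilde{\no}$. This mirrors exactly the strategy used in Corollary~\ref{GEB for additive NO}, where the analogous factor $\{(C_w+C_k)C_\sigma\}^L(C_w+C_k)C_a$ arises.

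The key intermediate step is the following layerwise norm propagation. Fix $\Gcal\in \widetilde{\no}$ and write $v_0:=a$, $v_{\ell+1} := (\boldsymbol{\mathrm{Z}}_\ell \bid + \boldsymbol{\mathrm{X}}_\ell f_\ell)\circ(\boldsymbol{\mathrm{Z}}_\ell \bid + \boldsymbol{\mathrm{X}}_\ell \sigma\circ\bk_\ell)\circ v_\ell$. For the non-local half-block, Assumption~\ref{Assumption3}(ii),(iv) together with the convention $\sigma(0)=0$ (already used in the proof of Corollary~\ref{GEB for additive NO}) give
\[
\|\sigma\circ\bk_\ell v_\ell\|_{\Lp}\le C_\sigma \|\bk_\ell v_\ell\|_{\Lp}\le C_k C_\sigma \|v_\ell\|_{\Lp},
\]
so that the intermediate output $w'_\ell$ satisfies $\|w'_\ell\|_{\Lp}\le (\boldsymbol{\mathrm{Z}}_\ell+\boldsymbol{\mathrm{X}}_\ell C_k C_\sigma)\|v_\ell\|_{\Lp}$ pathwise in $(\boldsymbol{\mathrm{Z}}_\ell,\boldsymbol{\mathrm{X}}_\ell)\in\{0,1\}^2$. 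For the local half-block, the MLP $f_\ell = W_{\ell,M}\circ\sigma(W_{\ell,M-1})\circ\cdots\circ\sigma(W_{\ell,0})$ has Lipschitz constant at most $C_w^{M+1}C_\sigma^{M}$ by Assumption~\ref{Assumption3}(i),(iv), yielding $\|v_{\ell+1}\|_{\Lp}\le (\boldsymbol{\mathrm{Z}}_\ell+\boldsymbol{\mathrm{X}}_\ell C_w^{M+1}C_\sigma^{M})\|w'_\ell\|_{\Lp}$. Iterating over $\ell=0,\ldots,L$ and using $\|v_0\|_{\Lp}\le C_a$ from Assumption~\ref{Assumption3}(iii) produces
\[
\|\Gcal(a)\|_{\Lp}\le \left[\prod_{\ell=0}^{L}(\boldsymbol{\mathrm{Z}}_{\ell}+\boldsymbol{\mathrm{X}}_{\ell}C_{w}^{M+1}C_{\sigma}^{M})(\boldsymbol{\mathrm{Z}}_{\ell}+\boldsymbol{\mathrm{X}}_{\ell}C_k C_{\sigma})\right] C_a =: R.
\]

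Substituting this $R$ and the Rademacher complexity bound from Theorem~\ref{Rademacher for multiplicative NO} directly into inequality~(\ref{Generalization error bound}) of Lemma~\ref{generalization error bound for general setting} reproduces~(\ref{multiplicativeGEB}) term by term, completing the argument. The main subtlety is not the arithmetic but the bookkeeping: one must verify that the output bound $R$ and the Rademacher bound both hold \emph{pathwise} in the Bernoulli variables $(\boldsymbol{\mathrm{X}}_\ell,\boldsymbol{\mathrm{Z}}_\ell)$, so that Lemma~\ref{generalization error bound for general setting}, which is a pointwise (uniform over $\Gcal$) statement, can be invoked on each realization; this is what ultimately lets stochastic depth control the effective size of the constants, because whenever $\boldsymbol{\mathrm{X}}_\ell=0$ the corresponding factor collapses to $\boldsymbol{\mathrm{Z}}_\ell\in\{0,1\}$. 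A minor care point is ensuring the $\sigma(0)=0$ convention (standard for GeLU/ReLU/leaky-ReLU); otherwise an additive constant must be absorbed, which inflates the bound by a harmless additive term but does not change the functional form.
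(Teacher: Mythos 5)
Your proof is correct and follows essentially the same route as the paper: bound $\|\Gcal(a)\|_{\Lp}$ layerwise by $R=\bigl[\prod_{\ell}(\boldsymbol{\mathrm{Z}}_{\ell}+\boldsymbol{\mathrm{X}}_{\ell}C_w^{M+1}C_\sigma^{M})(\boldsymbol{\mathrm{Z}}_{\ell}+\boldsymbol{\mathrm{X}}_{\ell}C_k C_\sigma)\bigr]C_a$ and plug this $R$, together with the Rademacher bound of Theorem~\ref{Rademacher for multiplicative NO}, into Lemma~\ref{generalization error bound for general setting}. Your remark about the implicit $\sigma(0)=0$ convention is a fair observation (the paper uses it silently in the norm-propagation steps), and the pathwise-in-$(\boldsymbol{\mathrm{X}}_\ell,\boldsymbol{\mathrm{Z}}_\ell)$ reading is indeed how the statement is meant.
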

See Appendix~\ref{main-coro-1} for the proof.

\begin{remark}
When $\boldsymbol{\mathrm{Z}}_{\ell}=0$ and $\boldsymbol{\mathrm{X}}_{\ell}=1$ corresponds to $\MNO$, if $\boldsymbol{\mathrm{Z}}_{\ell}=1$ and $\boldsymbol{\mathrm{X}}_{\ell}=1$ to ($\NONeXt$)v1. Finally, if $\boldsymbol{\mathrm{Z}}_{\ell}=1$ and $\boldsymbol{\mathrm{X}}_{\ell}$ is a Bernoulli RV with $\prob\{\boldsymbol{\mathrm{X}}_{\ell}  =  1 \} = p_\ell$, and $\prob\{\boldsymbol{\mathrm{X}}_{\ell}  = 0\} = 1-p_\ell$ for $p_\ell \in [0,1]$ corresponds to ($\NONeXt$)v2.
\end{remark}

\begin{remark} \label{GEB Comparison}%$\widehat{\mathcal{L}}_{\mathrm{S}}(\Gcal)$.
\iffalse{NO can be decomposed into local and non-local, providing an architecture as $\MNO$, where $f_\ell$ is $1$-layer, the converse is also true (see, \citet[Section 2.5.1]{kovachki2021universal}). Albeit, the difference in the generalization error bound has not been studied.}\fi 
The $2$nd, and $3$rd terms decay as the samples increases, $n \to \infty$, with orders $\mathcal{O}\left(1/n^{\frac{1}{\hat{d}+1}}\right)$ and $\mathcal{O}(1/n^{\frac{1}{2}})$, respectively.
We finally observe the coefficients depending on the number of layers, $L$ (see also Remark~\ref{Comparision-bounds}).
\end{remark}
\begingroup
  \setlength\itemsep{2mm}
   \begin{enumerate}
       \item If $(C_{w}+C_{k})C_{\sigma} < 1$ (or  $C_{w}^{M+1}C_{\sigma}^{M+1}C_{k} < 1$), the upper bounds of standard NO (or $\MNO$) remain bounded as $L$ tends to infinity. On the other hand, if $(C_{w}+C_{k})C_{\sigma} > 1$ (or $C_{w}^{M+1}C_{\sigma}^{M+1}C_{k} > 1$), then, the upper-bounds diverges with depth, similarly than finite-dimensional networks \citet{truong2022rademacher}.
       \item \label{sNO_GEB} If the condition $C_{w}<1$ and $C_{\sigma} \leq 1$ holds true, then $C_{w}^{M+1} C_{\sigma}^{M+1} C_{k}\leq (C_w + C_k)C_{\sigma}$, which implies that the upper bounds of $\MNO$ are lower-bound of standard NOs. See \cref{Comparision-bounds}.
       \item Since $C_{w}^{M+1} C_{\sigma}^{M} C_{k}\leq (1+C_{w}^{M+1}C_{\sigma}^{M}) (1+C_{\sigma} C_{k})$, the upper bounds of standard NOs are lower-bound of ($\NONeXt$)v1, despite the outcomes of our experiments, see Figure~\ref{fig:boxplot}. 
       \item \label{sNOeps_GEB} Finally the RVs can control the GEB. If $\prob\{\boldsymbol{X}_\ell=1\}= p_{\ell}= x_{\ell}/ L^{\frac{1}{\hat{d}+1}}$, where $x_{\ell} \in [0,1]$ satisfies $\sum_{\ell=0}^{\infty}x_{\ell}< \infty$, the upper bound for ($\NONeXt$)v2 does not blow up as $L$ increases, \emph{regardless of $C_{w}$, $C_{k}$, and $C_{\sigma}$}. The expectation with respect to $\boldsymbol{\mathcal{X}}=(\boldsymbol{\mathrm{X}}_{0},..., \boldsymbol{\mathrm{X}}_{L})$ is bounded above by the expression (see Lemma~\ref{BRV-estimate} in Appendix~\ref{Appendix 4})\[
\begin{split}
\mathbb{E}_{\boldsymbol{\mathcal{X}}}[\text{RHS of (\ref{multiplicativeGEB})}]
& 
\lesssim
\widehat{\mathcal{L}}_{\mathrm{S}}(\Gcal) 
+
\left(\sum_{\ell=1}^{L}
x_{\ell}\right) 
\prod_{\ell=0}^{L}
\left[
1+(C_{w}^{M+1}C_{\sigma}^{M}+C_k C_{\sigma}+C_{w}^{M+1}C_kC_{\sigma}^{M+1})
x_{\ell}
\right]
\left(\frac{1}{n}\right)^{\frac{1}{\hat{d}+1}} 
\\
&
+ \left(\rho 
\prod_{\ell=0}^{L}
\left[
1+(C_{w}^{M+1}C_{\sigma}^{M}+C_k C_{\sigma}+C_{w}^{M+1}C_kC_{\sigma}^{M+1})
x_{\ell}
\right]
C_a  + R_u \right)\sqrt{\frac{2 \delta}{n}}, %\label{multiplicativeGEB}
%\label{Generalization error bound for multiplicative NO}
\end{split}
\]
whose coefficients do not blow up as $L \to \infty$ (the infinite products converge because $\sum_{\ell=0}^{\infty}x_{\ell}< \infty$, see, \citet{William}). Here, $\lesssim$ implies that the left-hand side is bounded above by the right-hand side times a constant independent of $n$ and $L$. 
%For example, $x_\ell \in \mathcal{O}(\ell^{-(1+\varepsilon)})$ for $\varepsilon> 0$, satisfies $\sum_{\ell=0}^{\infty}x_{\ell}< \infty$. 
For example, if $x_\ell$ decay with order $\mathcal{O}(\ell^{-(1+\varepsilon)})$ for some $\varepsilon> 0$, then it holds that  $\sum_{\ell=0}^{\infty}x_{\ell}< \infty$. \footnote{
Notice that, the assumption of $\sum_{\ell=0}^{\infty}p_{\ell}< \infty$ by the Borel–Cantelli lemma, implies that the probability that infinitely many of $X_{\ell}=1$ (layers that are active) occur is zero}.  \citep{huang2016deep} proposed linear decay, which does not satisfy $\sum_{\ell=0}^{\infty}x_{\ell}< \infty$. However, it is assumed that the number of layers $L$ is finite (typically around $100$), our analysis on the other hand showed that the upper bound is valid regardless of the number of layers if the Bernoulli RVs satisfied the above-mentioned condition. A less \emph{restrictive} decay on the RVs can be chosen.

\end{enumerate}

\noindent Therefore, our proposed architecture, especially ($\NONeXt$)v2, would have a smaller generalization error than the standard architecture under assumptions of the RVs.

\section{Summary and discussion}\label{Conclusions and Future Work}
    We perform a detailed empirical and theoretical analysis of the generalization capabilities of neural operators and $\NONeXt$ for approximating the parametric form of the Helmholtz equation, as well as a surrogate model for the forward operator associated with the study of the inverse boundary value 
problem for the Helmholtz equation. We work with high-frequency given the documented difficulties of numerical methods, \citep{ernst2011difficult, erlangga2008advances, gander2019class}, and the amount of previous work associated with other PDEs, which traditional neural operators already approximate remarkably well\footnote{Darcy flow}. \par

The $\NONeXt$ family demonstrated improved performance without increasing the number of parameters (in the case without stochastic depth) or compromising the approximation capabilities of traditional neural operators for high-frequency Helmholtz problems. We maintained strict constraints throughout our analysis, including not increasing the size of the training dataset, and testing on datasets of comparable size as those used in the training.\par

We conduct a thorough empirical analysis of the stability of the trained networks to different realizations of the wave speed, and $(\FNONeXt)$v2 demonstrated resilience to these changes. In light of these results, we derive upper bounds for out-of-distribution generalization for Gaussian measures in abstract Banach spaces, and we link the experimental behavior to the presence of the random variables presented in stochastic depth. For the results in-distribution, we also provide an upper bound of the generalization error by estimating the Rademacher complexity of each of the networks. Similarly, showing that the random variables in stochastic depth are effectively controlling the complexity of the hypothesis class for the $(\NONeXt)$v2 family. \par  

%We have made progress in understanding of the theoretical guarantees for neural operators and similar architectures beyond their universal approximation property. However, one of the shortcomings of this work is that our bounds are not tight. While deriving lower bounds remains a challenging task, we anticipate further progress in this area is plausible.\par 
We have made progress in understanding the theoretical guarantees of neural operators and similar architectures, going beyond their approximation property. However, it is worth noting that one of the limitations of our work is that the bounds we derived are not tight. Although deriving lower bounds presents a challenge, we remain optimistic about the possibility of making further advancements in this area.

On the experimental side, our results suggest that it is possible to capture the forward operator effectively (mapping functions to operators), and we expect to apply this surrogate model to solve inverse problems, particularly for Bayesian inversion and for using algorithms that only require multiple evaluations of the costly forward operator, such as the derivative-free ensemble Kalman method \citep{iglesias2016regularizing}.

\section*{Acknowledgments}
    J.A.L.B. is grateful to colleagues at PGS Imaging group for their insightful discussions during their 2022 internship, valuable assistance with resources, and diligent work on applying a previous version of the proposed network to real field data \cite{huang2023deep}. J.A.L.B. also appreciates the support from ChatGPT in optimizing TikZ code. T.F.\ was supported by Research Grant for Young Scholars funded by yamanashi Prefecture. F.F.\ acknowledges the use of the cluster PlaFRIM\footnote{\texttt{https://www.plafrim.fr/}.} for the dataset generation. X.M.T. would like to thank the Community Cluster Program and the Rosen Center for Advanced Computing and Purdue. M.V.\ de~H. gratefully acknowledges support from the Department of Energy under grant DE-SC0020345, the National Science Foundation under grant DMS-2108175, the Simons Foundation under the MATH + X program, and the corporate members of the Geo-Mathematical Imaging Group at Rice University. A.K.\ was funded by the NSERC (grants no.\ RGPIN-2023-04482 and DGECR-2023-00230).

\begin{comment}
M.V.\ de~H. was supported by the Simons Foundation under the MATH + X program, the National Science Foundation under grant DMS-2108175, and the corporate members of the Geo-Mathematical Imaging Group at Rice University. A.K.\ was funded by the NSERC (grants no.\ RGPIN-2023-04482 and DGECR-2023-00230).
\end{comment}

%%%%%%%%%%%%%%%%%%%%%%%%%%%%%%%%%%%%%%%%%%%%%%%%%%%%%%%%%%%%%%%%%
%\bibliographystyle{plain}s

\bibliography{ref.bib}

%%%%%%%%%%%%%%%%%%%%%%%%%%%%%%%%%%%%%%%%%%%%%%%%%%%%%%%%%%%%%%%%%
\newpage

\appendix

\part*{Appendix}

\renewcommand{\thesection}{\Alph{section}}

%\part{Appendix}

%\section{Table of Notations}
%\input{Appendix/tableofNotation.tex}

\section{Preliminaries}

\subsection{Notation}\label{table of notation}

\begin{table}[h!]
\centering
\begin{tabular}{|c|c|}
\hline
\rowcolor[HTML]{EFEFEF}
\textbf{Notation}  & \textbf{Meaning}  \\
\hline 
$d$ & Dimension of spatial domain \\

$d_a$ & Dimension of input function $a(x)$ \\

$d_u$ & Dimension of output function $u(x)$ \\

$d_{\ell}$ & Number of the column for $W_{\ell}$ \\

$d_{\ell,m}^{w}$ & Number of the column for $W_{\ell,m}$ \\

$d_{\ell}^{k}$ & Number of the column for $k_{\ell}(x,y)$ \\

$D \subset \mathbb{R}^{d}$  & Spatial domain \\

$a \in \Lp(D; \mathbb{R}^{d_a})$  & Input function \\

$u \in \Lp(D; \mathbb{R}^{d_u})$  & Output function \\

$n$ & Number of training data  \\

$\mathrm{S}=\{a_i, u_i\}_{i=1}^{n}$  &  Training dataset drawn from probability measure $\mu$ \\

$\mu$  &  Probability measure on $\Lp(D; \mathbb{R}^{d_a}) \times \Lp(D; \mathbb{R}^{d_u})$ \\

$\mu_a$  & Marginals of $\mu$ on $\Lp(D; \mathbb{R}^{d_a})$ \\

$\mu_u$  & Marginals of $\mu$ on $\Lp(D; \mathbb{R}^{d_u})$ \\

$\ell$  & Loss function \\

$\sigma$  & Activation function \\

$\no$ & Space of Neural Operators \\

$W_{\ell}$ & $d_{\ell + 1}\times d_{\ell}$-matrix in $\no$ \\

$\bk_{\ell}$ & Integral operator with kernel $k_{\ell}$ in $\no$ \\

$k_{\ell}$ & $d_{\ell + 1}\times d_{\ell}$-kernel matrix for $\bk_{\ell}$ \\

$L$ & Number of layers \\

$\hat{d}$ & Doubling dimension of $D \times D$ \\

$\widetilde{\no}$ & Space of $\multiplicative$ Neural Operators \\

$f_{\ell}$ & MLPs in $\widetilde{\no}$ \\

$M$ & Number of layers in MLPs $f_{\ell}$ \\

$W_{\ell,m}$ & $d^{w}_{\ell, m + 1}\times d^{w}_{\ell, m}$-matrix in MLPs $f_{\ell}$ \\

$k_{\ell}$ & $d^{k}_{\ell + 1}\times d^{k}_{\ell}$-kernel matrix in $\widetilde{\no}$ \\

$\Lp(D; \mathbb{R}^{h})$ & $L^2$ space of $\mathbb{R}^{h}$-value function on $D$ \\

$\left\| \cdot \right\|_{\Lp(D; \mathbb{R}^{h})}$ & $L^2$-norm \\

$\left\| \cdot \right\|_{2}$ & $\ell_2$-norm \\

$\left\| \cdot \right\|_{\mathrm{F}}$ & Frobenius norm \\

$\left\| \cdot \right\|_{\mathrm{S}}$ & Sampling norm, $\left\| f \right\|_{\mathrm{S}}:= \left(\frac{1}{n}\sum_{i=1}^{n}f(a_i, u_i)^2 \right)^{\frac{1}{2}}$ \\

$\left\| \cdot \right\|_{\mathrm{op}}$ & Operator norm \\ \hline
\end{tabular}
\caption{Table of Notations}
\end{table}

%%%%%%%%%%%%%%%%%%%%%%%%%%
\iffalse{
%%%%%%%%%%%%%%%%%%%%%%%%%%%%%%%%%%%
\subsection{$\mathrm{L}^2$ spaces and Sobolev space.}  \label{appendix:Sobolev_spaces}
\noindent
We denote $\Lp(D)=\Lp(D; \mathbb{R})$ by $\Lp$ space of real-value functions $u$ on $D \subset \mathbb{R}^d$ such that
\[
\left\| u \right\|_{\Lp(D)}^{2}:= \int_{D} |u(x)|^{2}dx < \infty.
\]
For natural number $k \in \mathbb{N}_{0}$, we define Sobolev space $H^{k}(D)$ by
\[
H^{k}(D):=\left\{ u \in \Lp(D) : \partial_{x}^{\alpha} u \in \Lp(D) \ \forall |\alpha| \leq k \right\}.
\]
For positive non-integer $s >0$, we define Sobolev space $H^{s}(D)$ by
\begin{align}\label{positive-Sobolev space}
H^{s}(D):=\left\{ u \in H^{\lfloor s \rfloor}(D) : \sup_{|\alpha|=
\lfloor s \rfloor
} [\partial_{x}^{\alpha} u]_{\theta_s, D} < \infty
\right\},
\end{align}
where $\theta_s:=s-\lfloor s \rfloor \in (0,1)$. 
Here, $[f]_{\theta, D}$ is defined by
\[
[f]_{\theta, D}:= \left(\int_D \int_D \frac{|f(x)-f(y)|^2}{|x-y|^{2\theta + d}}dxdy
\right)^{1/2}.
\]
For further details, we refer to e.g., \citet{adams2003sobolev}.
}\fi
%%%%%%%%%%%%%%%%%%%%%%%%%%%%%%%%%%

%%%%%%%%%%%%%%%%%%%%%%%%%%%%%%%%%%%
\subsection{Vector-Valued $\mathrm{L}^2$ spaces and Sobolev space.} \label{appendix:Sobolev_spaces}
\par
$\Lp(D; \mathbb{R}^{d_a})$ is the $\Lp$ space of $\mathbb{R}^{d_a}$-value functions on $D \subset \mathbb{R}^d$. It is defined as the space of functions such that,
\[
\left\| a \right\|_{\Lp(D; \mathbb{R}^{d_a})}^{2}:= \int_{D} \left\|a(x) \right\|_{2}^{2}dx< \infty, 
\]
where $D\ni x\mapsto \left\|a (x) \right\|_{2}^{2} = \sum_j a_j^2(x)$; notices that, $\left\|\cdot \right\|_{2}^{2}$ is the usual $\ell_2$-norm in $\mathbb{R}^{d_a}$.
\par
For natural number $k \in \mathbb{N}_{0}$, we define Sobolev space $H^{k}(D; \mathbb{R}^{d_a})$ by
\[
H^{k}(D;\mathbb{R}^{d_a}):=\left\{ u \in \Lp(D;\mathbb{R}^{d_a}) : \partial_{x}^{\alpha} u \in \Lp(D;\mathbb{R}^{d_a}) \ \forall |\alpha| \leq k \right\}.
\]
For positive non-integer $s >0$, we define Sobolev space $H^{s}(D;\mathbb{R}^{d_a})$ by
\begin{align}\label{positive-Sobolev space}
H^{s}(D;\mathbb{R}^{d_a}):=\left\{ u \in H^{\lfloor s \rfloor}(D;\mathbb{R}^{d_a}) : \sup_{|\alpha|=
\lfloor s \rfloor
} [\partial_{x}^{\alpha} u]_{\theta_s, D} < \infty
\right\},
\end{align}
where $\theta_s:=s-\lfloor s \rfloor \in (0,1)$. 
Here, $[f]_{\theta, D}$ is defined by
\[
[f]_{\theta, D}:= \left(\int_D \int_D \frac{\left\|f(x)-f(y)\right\|^{2}_{2}}{\left\|x-y\right\|_{2}^{2\theta + d}}dxdy
\right)^{1/2}.
\]
For further details, we refer to, e.g., \citet{adams2003sobolev}.
%%%%%%%%%%%%%%%%%%%%%%%%%%%%%%%%%%%%%%%%%%%%%%%%%%%%%%%%%%%%%
\subsection{Linear Bounded Operator}\label{BO}
\par
\begin{definition}[Linear Bounded Operator] We say that $\boldsymbol{A}: X \to Y$ is the Linear Bounded Operator mapping from a Banach space $X$ to a Banach space $Y$, if it is linear and if there exists a positive constant $C>0$ such that,
\[
\|\boldsymbol{A} x \|_{Y}\leq C\|x\|_{X}, \ x \in X.
\]    
\end{definition}

\begin{definition}[Operator norm]We also recall that the Operator norm $\|A\|_{\mathrm{op}}$ for linear bounded operator $A$ as \[
\|\boldsymbol{A}\|_{\mathrm{op}}:=\inf\left\{C \in \mathbb{R}_{\ge 0} \, : \ \|\boldsymbol{A} x \|_{Y}\leq C\|x\|_{X} \right\}.
\]    
\end{definition}

In particular, Neural Operators \citep{li2020fourier} include the Linear Integral Bounded Operator

\begin{definition}[Linear Integral Bounded Operator] \label{IO} It is an Linear Bounded Operator $\bk:\Lp(D; \mathbb{R}^{n}) \to \Lp(D; \mathbb{R}^{m})$ defined by  
\[
x\mapsto \left(\bk \,g\right)(x):= \int_{D}\,k(x,y)\,g(y)\,dy, \quad x \in D, \ g \in \Lp(D; \mathbb{R}^{n}),  \]
where $k:D \times D \subset \mathbb{R}^{d\times d} \to \mathbb{R}^{m \times n}$ is the Integral Kernel. 
\end{definition}

\begin{definition}[Lipschitz Kernel]\label{Lipschitz Kernel} We say a vector-valued Integral Kernel is Lipschitz continuous if there exists $C>0$ such that \[
|k_{i,j}(x,y) - k_{i,j}(x',y')| \leq C \left\| (x,y)-(x', y') \right\|_{2}, \quad (x,y), (x', y') \in D \times D.
\]
for $i,j\in \{1,\ldots,d\}$.
\end{definition}

%%%%%%%%%%%%%%%%%%%%%%%%%%%%%%%%%%%%%%%%%%%%%%%%%%%%%%%%%%%%%
% NO
%%%%%%%%%%%%%%%%%%%%%%%%%%%%%%%%%%%%%%%%%%%%%%%%%%%%%%%%%%%%%
\subsection{Neural Operator.}\label{NO}
\par

Let $D$ a bounded domain and let $\mathcal{A}(D; \mathbb{R}^{d_a})$, $\mathcal{U}(D; \mathbb{R}^{d_{v_i}})$, and $\mathcal{U}(D; \mathbb{R}^{d_u})$ be abstract (separable) Banach spaces. 

\begin{definition}[Neural Operator] Let define $\Gcal_{\theta}: \mathcal{A}(D; \mathbb{R}^{d_a}) \to \mathcal{U}(D; \mathbb{R}^{d_u})$ such that 
\begin{equation}
    u = \Gcal_{\theta}(a) = \bq \circ \mathcal{L}_k \circ \ldots \circ \mathcal{L}_1 \circ \br(a), 
\end{equation}
in where $\bq: \mathcal{A}(D; \mathbb{R}^{d_a}) \to \mathcal{U}(D; \mathbb{R}^{d_{v_1}})$ (Lifting map), and  $\br: \mathcal{U}(D; \mathbb{R}^{d_{v_{k+1}}}) \to \mathcal{U}(D; \mathbb{R}^{d_u})$ (Projection map), such that 
\begin{subequations}
    \begin{align}
        \br(a)(x) := \left(\,Ra(x) \,\right), \quad R \in \mathbb{R}^{d_{v_1} \times d_a}.\\
        \bq(v)(x) := \left(\,Qv(x) \,\right), \quad Q \in \mathbb{R}^{d_u \times d_{v_{k+1}}}.
    \end{align}
\end{subequations}
and $\mathcal{L}_i$, $(i= 1, \ldots, k)$ is defined as

\begin{equation}
    D\ni x\mapsto \left(\mathcal{L}_iv \right)(x) : = \sigma \left( \, W_iv(x) + (\bk_i v) (x)\ \right), \quad W_i \in \mathbb{R}^{v_{i+1} \times v_{i}}. \tag{Layers}
\end{equation}
\end{definition}

$i=1, \ldots, k$, and $\bk_{i}$ is an integral operator mapping from $\mathcal{U}(D; \mathbb{R}^{d_{v_i}})$ to $\mathcal{U}(D; \mathbb{R}^{d_{v_{i+1}}})$, see definition \ref{IO}. In the definition of \citet[Section 9.1]{kovachki2021neural}, Neural Operators parameterize the integral kernel as neural networks, which satisfies the Lipschitz continuity used in the Assumption~\ref{Assumption3}.
%(Cf. \cite{kovachki2021universal} for details).

\iffalse{
\begin{figure*}[!ht]
    \centering
    \includegraphics[scale= 0.4]{Figures/NO.png}
    \caption{\small{\textbf{NO}. Neural Operator architecture.}}
    \label{fig:NO}
\end{figure*}
}
\fi 

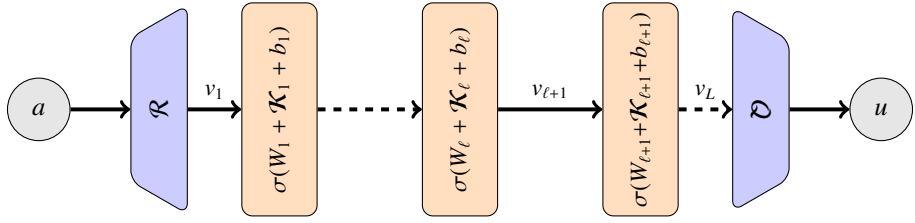
\begin{figure}[!ht]
    \centering
    \begin{tikzpicture}[scale=0.8]
    % Define the nodes
    % Define the nodes
    \node[draw,trapezium,trapezium left angle=60,trapezium right angle=60,fill=blue!20,rotate=90,minimum width=.5cm,minimum height=.75cm,text width=1cm,text centered, rounded corners] (trap1) at (-2,0) {$ \mathcal{R}$};
    \node[draw,trapezium,trapezium left angle=60,trapezium right angle=60,fill=blue!20,rotate=270, minimum width=.5cm,minimum height=.75cm,text width=1cm,text centered, rounded corners] (trap2) at (8,0) {$ \mathcal{Q}$};
    \node[draw,rectangle,fill=orange!25,rotate=90,minimum width=0.5cm,minimum height=1.cm,text width=2.6cm,text centered, rounded corners] (rect1) at (0,0) {\small $ \sigma (W_1 + \bk_1 + b_1)$};
    \node[draw,rectangle,fill=orange!25,rotate=90,minimum width=0.5cm,minimum height=1.cm,text width=2.6cm,text centered, rounded corners] (rect2) at (3,0) {\small $\sigma (W_\ell + \bk_\ell + b_\ell)$};
    \node[draw,rectangle,fill=orange!25,rotate=90,minimum width=0.5cm,minimum height=1.cm,text width=2.6cm,text centered, rounded corners] (rect3) at (6,0) { \small $ \sigma (W_{\ell+1} + \bk_{\ell+1} + b_{\ell+1})$};
    \node[draw,circle,fill=gray!20,rotate=0,minimum width=0.75cm,text width=0.5cm,text centered] (circ0) at (-4,0) {$a$};
    \node[draw,circle,fill=gray!20,rotate=0,minimum width=0.75cm,text width=0.5cm,text centered] (circ1) at (10,0) {$u$};
    % Draw the arrows
    \draw[ ->, line width=1.5pt] (circ0) -- (trap1);
    \draw[ ->, line width=1.5pt] (trap1) -- (rect1) node[midway,above] {\small $v_1$};
    \draw[dashed, ->, line width=1.5pt] (rect1) -- (rect2);
    \draw[->, line width=1.5pt] (rect2) -- (rect3) node[midway,above] { \small $v_{\ell+1}$};
    \draw[dashed, ->, line width=1.5pt] (rect3) -- (trap2) node[midway,above] { \small $v_{L}$};
    \draw[->, line width=1.5pt] (trap2) -- (circ1);
    \end{tikzpicture}
    \caption{\small{\textbf{NO}. Neural Operator architecture.}}
    \label{fig:NO}
\end{figure}

\subsubsection{Fourier Neural Operators (FNOs).}
A natural ansatz in the integral operator is assuming to be convolutional, so that, 
\begin{equation} \label{convthm}
    (k \star v)=\boldsymbol{\mathcal{F}}^{-1}\left( \boldsymbol{\mathcal{F}}(k)\cdot \boldsymbol{\mathcal{F}}(v) \right).
\end{equation}
\footnote{$\boldsymbol{\mathcal{F}}$, and $\boldsymbol{\mathcal{F}}^{-1}$ represents the Fourier and Inverse Fourier transform respectively.} if the kernel function and $v$ lies on the adequate space, say $\Lp$. When Equation~\ref{convthm} is estimated by the FFT algorithm, the Neural Operator is efficiently implemented, leading to the network presented in \citet{li2020fourier}.

\subsubsection{Remark: Universality of $\MNO$.}\label{sNO: universal}
\citet[Theorem 11]{kovachki2021neural} have shown that the compositional operator $( \sigma \circ \bk_L ) \circ \cdots \circ( \sigma \circ \bk_1 ) $ of the linear integral operator $\bk_\ell$ and the element-wise nonlinear activation function $\sigma$, can approximate any nonlinear continuous operator. Therefore, the addition of any local operation in Neural Operators does not affect the universality property, i.e., standard, and $\multiplicative$ NOs have the same universality property.

%%%%%%%%%%%%%%%%%%%%%%%%%%%%%%%%%%%%%%%%%%%%%%%%%%%%%%%%%

\subsection{Bochner integral}\label{Bochner}
\par
In the study of generalization error bounds, the Expected error, see Appendix~\ref{EmpiricalExpected}, is defined through the Bochner Integral. We briefly introduce it, informally, as the natural generalization of the Lebesgue integral on (separable) Banach spaces. \par  
For our purpose, it suffices to define the integral (informally) on $\Lp(D; \mathbb{R}^{d_a})\times \Lp(D; \mathbb{R}^{d_u})$. Assume that a function $(a,u) \mapsto \ f(a, u) \in \mathbb{R}$ is Bochner integrable with respect to the measure $\mu$ on $\Lp(D; \mathbb{R}^{d_a})\times \Lp(D; \mathbb{R}^{d_u})$, i.e., there exists a sequence of integrable simple functions $s_{n}$ (the finite linear combination of indicator functions of measurable sets) such that
\[
\lim_{n \to \infty} \int \left|
f(a, u) - s_{n}(a,u)\right|d\mu(a,u) = 0.
\]
%$\ell (\cdot, \cdot)$ shall be interpreted as the loss function. 
Thus, the Bochner Integral is defined by
\[
\int
\ell(\Gcal(a), u)\,d\mu(a,u) =
\lim_{n \to \infty} \int s_{n}(a,u)\,d\mu(a,u).
\]
For a detailed (formal) definition of the Bochner integral, as well as its properties, see \citet{yoshida1980functional}.

%%%%%%%%%%%%%%%%%%%%%%
\subsection{Gaussian measure} \label{appendix:grf}
\par
The typical choice of the measure $\mu$ in the context of PDEs is the Gaussian Measure, which will be reviewed as follows (refer to, e.g., \citet[Section 6]{stuart2010inverse}):
First, a function $m \in X$ is called the mean of $\mu$ if for all $\ell \in X^{\ast}$, where $X^{\ast}$ denote the dual space of linear functionals on $X$, 
$$
\ell(m)=\int_{X}\ell(x) \mu(dx).
$$
%where the integral in the right hand side is defined by Bocher integral (see Appendix~\ref{Bochner}) on $X$ with respect to $\mu$.
A linear operator $\boldsymbol{\mathcal{C}}: X^{\ast} \to X$ is called the Covariance Operator if for all $k, \ell \in X^{\ast}$,
$$
k(\boldsymbol{\mathcal{C}}\ell)=\int_{X}k(x-m)\ell(x-m) \mu(dx).
$$
We say that $u$ draws from {\it Gaussian Measure} $\mathcal{N}(m, \boldsymbol{\mathcal{C}})$ (write $u \sim \mathcal{N}(m, \boldsymbol{\mathcal{C}})$ ) if for all $\ell \in X^{\ast}$, $\ell(u)$ draws from the one-dimensional Gaussian distribution $\mathcal{N}(\ell(m), \ell(\boldsymbol{\mathcal{C}}\ell))$.
\par
If $X$ is a Hilbert space, then we can characterize random draws from a Gaussian Measure by using the {\it Karhunen-Lo\'{e}ve expansion} as follows (see, e.g., \citet[Theorem 6.19]{stuart2010inverse}):
\begin{theorem}
Let $X$ be a Hilbert space, and let $\boldsymbol{\mathcal{C}}:X \to X$ be a self-adjoint, positive semi-definite, compact operator, and let $m \in X$. 
Let $\{\phi_k, \gamma_k\}_{k=1}^{\infty}$ be an orthonormal set of eigenvectors and eigenvalues for $\boldsymbol{\mathcal{C}}$ ordered so that
$$
\gamma_1 \geq \gamma_2 \geq \cdots.
$$
Take $\{\xi_{k}\}_{k=1}^{\infty}$ to be an i.i.d. sequence with $\xi_1 \sim \mathcal{N}(0, 1)$.
Then, the random variable $u \in X$ given by the Karhunen-Lo\'{e}ve expansion 
\begin{equation}
u = m + \sum_{k=1}^{\infty}\sqrt{\gamma_k}\xi_k \phi_k \label{KL ex}
\end{equation}
draws from $\mathcal{N}(m, \boldsymbol{\mathcal{C}})$.
\end{theorem}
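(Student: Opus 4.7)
The plan is to verify the two defining properties of the Gaussian measure $\mathcal{N}(m,\boldsymbol{\mathcal{C}})$ recalled in \cref{appendix:grf}: first, that the series~\eqref{KL ex} converges in $X$ so that $u$ is a well-defined $X$-valued random element, and second, that for every $\ell\in X^{\ast}$, the real-valued pushforward $\ell(u)$ is Gaussian with mean $\ell(m)$ and variance $\ell(\boldsymbol{\mathcal{C}}\ell)$. Since $\boldsymbol{\mathcal{C}}$ is the covariance operator of a Gaussian measure on a Hilbert space it is trace class, so $\sum_{k}\gamma_{k}<\infty$; this is the quantitative ingredient that makes the convergence analysis work.

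For convergence, I would look at the partial sums $S_{N}:=m+\sum_{k=1}^{N}\sqrt{\gamma_{k}}\,\xi_{k}\,\phi_{k}$ in $L^{2}(\Omega;X)$. By orthonormality of $\{\phi_{k}\}$ and independence of the $\{\xi_{k}\}$,
\[
\mathbb{E}\,\|S_{N}-S_{M}\|_{X}^{2}
=\sum_{k=M+1}^{N}\gamma_{k}\,\mathbb{E}[\xi_{k}^{2}]\,\|\phi_{k}\|_{X}^{2}
=\sum_{k=M+1}^{N}\gamma_{k}\xrightarrow[M,N\to\infty]{}0,
\]
so $(S_{N})$ is Cauchy in $L^{2}(\Omega;X)$ and hence converges to some $u\in L^{2}(\Omega;X)$; an application of the Itô–Nisio theorem (or a direct Borel–Cantelli argument using the independence of increments) upgrades this to almost-sure convergence in $X$.

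To identify the law, fix $\ell\in X^{\ast}$ and use the Riesz representation $\ell(\cdot)=\langle\cdot,y_{\ell}\rangle_{X}$. Continuity of $\ell$ together with the convergence above yields
\[
\ell(u)=\ell(m)+\sum_{k=1}^{\infty}\sqrt{\gamma_{k}}\,\langle y_{\ell},\phi_{k}\rangle\,\xi_{k}.
\]
Each truncation is a finite linear combination of independent standard Gaussians, hence Gaussian with mean $\ell(m)$ and variance $\sum_{k=1}^{N}\gamma_{k}\,\langle y_{\ell},\phi_{k}\rangle^{2}$. Because the Gaussian family on $\mathbb{R}$ is closed under $L^{2}$-limits, passing $N\to\infty$ shows $\ell(u)$ is Gaussian with mean $\ell(m)$ and variance
\[
\sum_{k=1}^{\infty}\gamma_{k}\,\langle y_{\ell},\phi_{k}\rangle^{2}
=\sum_{k=1}^{\infty}\langle \boldsymbol{\mathcal{C}}\phi_{k},y_{\ell}\rangle\,\langle y_{\ell},\phi_{k}\rangle
=\langle y_{\ell},\boldsymbol{\mathcal{C}}\,y_{\ell}\rangle
=\ell(\boldsymbol{\mathcal{C}}\ell),
\]
where the penultimate equality expands $\boldsymbol{\mathcal{C}}\,y_{\ell}=\sum_{k}\gamma_{k}\langle y_{\ell},\phi_{k}\rangle\phi_{k}$ in the eigenbasis. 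Finally, one checks the mean condition by taking $L^{2}$-limits on the linear functional $\ell$.

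The step I expect to be the most delicate is handling a possibly non-trivial kernel of $\boldsymbol{\mathcal{C}}$: the orthonormal system $\{\phi_{k}\}$ need not be a complete basis of $X$, only of $\overline{\operatorname{range}(\boldsymbol{\mathcal{C}})}$. Decomposing $y_{\ell}=y_{\ell}^{\parallel}+y_{\ell}^{\perp}$ with $y_{\ell}^{\perp}\in\ker(\boldsymbol{\mathcal{C}})=\operatorname{range}(\boldsymbol{\mathcal{C}})^{\perp}$, I would verify that $y_{\ell}^{\perp}$ contributes zero both to the series' variance (since $\langle y_{\ell}^{\perp},\phi_{k}\rangle=0$ for all $k$) and to $\ell(\boldsymbol{\mathcal{C}}\ell)=\langle y_{\ell},\boldsymbol{\mathcal{C}}y_{\ell}\rangle$ (since $\boldsymbol{\mathcal{C}}y_{\ell}^{\perp}=0$), so the identity $\operatorname{Var}(\ell(u))=\ell(\boldsymbol{\mathcal{C}}\ell)$ persists regardless of degeneracies. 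Together with the characterization in \cref{appendix:grf} — $u\sim\mathcal{N}(m,\boldsymbol{\mathcal{C}})$ iff $\ell(u)\sim\mathcal{N}(\ell(m),\ell(\boldsymbol{\mathcal{C}}\ell))$ for every $\ell\in X^{\ast}$ — this concludes the proof.
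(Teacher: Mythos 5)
The paper does not actually prove this theorem; it is quoted verbatim from \citet[Theorem~6.19]{stuart2010inverse}, so there is no in-paper argument to compare yours against. Your proof is the standard one and its structure is sound: establish $L^2(\Omega;X)$-Cauchy convergence of the partial sums via orthonormality and independence, upgrade to a.s.\ convergence, and then identify the law by showing every continuous linear functional of $u$ is real Gaussian with the right first two moments, handling $\ker(\boldsymbol{\mathcal{C}})$ via the orthogonal splitting of $y_\ell$.

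The one point you should tighten is the justification that $\sum_k\gamma_k<\infty$. You write that $\boldsymbol{\mathcal{C}}$ is trace class ``since it is the covariance operator of a Gaussian measure on a Hilbert space,'' but that presupposes the very measure $\mathcal{N}(m,\boldsymbol{\mathcal{C}})$ whose existence the Karhunen--Lo\`eve construction is supposed to establish, so the reasoning is circular. The stated hypotheses only give compactness, which yields $\gamma_k\to 0$ but not summability, and without trace class the series genuinely fails to converge in $X$ a.s.\ (the second-moment computation you wrote down shows $\mathbb{E}\|u-m\|_X^2=\sum_k\gamma_k$). The honest fix is to observe that trace class must be \emph{added} as a hypothesis (as it is in Stuart's original statement, and as it implicitly must be whenever one writes $\mathcal{N}(m,\boldsymbol{\mathcal{C}})$ on an infinite-dimensional Hilbert space, by Fernique/Sazonov-type results). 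With that hypothesis made explicit, your argument goes through as written.
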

%%%%%%%%%%%%%%%%%%%%%%%%%%
\iffalse{
\begin{definition}[Support of a measure]\label{supp mu} We define the {\it support} $\mathrm{supp}(\mu) \subset X$ of $\mu$ on a Topological space $(X, \mathscr{T})$, as the set of points, $x$, such that for any open neighborhood, $\mathcal{U}_x\in \mathscr{T}$, containing $x$, has positive-valued measure, that is
$$\mathrm{supp}(\mu):= \left\{x\in X : \mu(\mathcal{U}_x)>0 \quad  \forall: \mathcal{U}_x \in \mathscr{T} \text{ and } x\in \mathcal{U}_x \right\}.$$
(Cf. \cite[Ch. 5]{ambrosio2005gradient})
\end{definition}

Let $X$ be a separable Banach space, and let $\mathscr{B}(X)$ be the family of the Borel sets of $X$, and let $\mu$ be a measure on $(X, \mathscr{B}(X))$.
\begin{definition}[Support of a measure]\label{supp mu} We define the {\it support} $\mathrm{supp}(\mu) \subset X$ of $\mu$ as the set of points, $x$, such that for any neighborhood containing $x$, has positive-valued measure, that is
$$\mathrm{supp}(\mu):= \left\{x\in X : 
\mu(\mathcal{U}_x)>0 \quad \text{for all open neighborhood  } \mathcal{U}_x \text{ of } x\right\}.$$
(Cf. \cite[Ch. 5]{ambrosio2005gradient})
\end{definition}
}
\fi
%%%%%%%%%%%%%%%%%%%%%%%%%%%%%%%%%%%%%%%%%%%%%%%%%%%%%%
%%%%%%%%%%%%%%%%%%%%%%
\subsection{Cameron-Martin space}\label{Cameron-Martin space-app}
We briefly review the definition of the Cameron-Martin space (refer to, e.g., \citet[Section 3.2.]{hairer2009introduction}). 
\begin{definition}
Let $\mu$ be a Gaussian Measure on a separable Banach space $X$.
The Cameron-Martin space $\mathcal{H}_{\mu}$ of $\mu$ is the completion of 
the linear subspace
\[
\{
h \in X : \exists h^{*} \in X^{*} \text{ with } 
C_{\mu}(h^{\ast}, \ell)=\ell(h) \ \forall \ell \in X^{*}
\},
\]
under the norm 
\[
\left\| h \right\|_{\mu}^{2}
=\langle h, h\rangle_{\mu}=C_{\mu}(h^{*}, h^{*}),
\]
where $C_{\mu}:X^{*} \times X^{*} \to \mathbb{R}$ is defined by
\[
C_{\mu}(k, \ell):=\int_{X} k(x)\ell(x) \mu(dx), \ \ k, \ell \in X^{*}.
\]
In can be shown that $\mathcal{H}_{\mu}$ is a reproducing kernel Hilbert space with the inner product 
$\langle h, k\rangle_{\mu}=C_{\mu}(h^{*}, k^{*})$.
\end{definition}
When $X$ is a finite-dimensional space, the Cameron-Martin space is given by the range of the covariance matrix  \citep[Exercise 3.28]{hairer2009introduction}. 
\par
\noindent
We now review properties of the Cameron-Martin space (see \citet[Theorem 3.41 and Proposition 3.4.2]{hairer2009introduction}).

\begin{theorem}
For $h \in X$, we define the map $T_{h}: X \to X$ by $T_{h}(x)=x+h$.
Then, the push-forward measure $T_{h \sharp} \mu$ of $\mu$ by $T_{h}$ is absolutely continuous with respect to $\mu$ if and only if $h \in \mathcal{H}_{\mu}$.
\end{theorem}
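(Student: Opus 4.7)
The plan is to split the equivalence into its two implications and handle each by classical Gaussian-measure techniques, using characteristic functions for sufficiency and an unboundedness/dichotomy argument for necessity.

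For the ``if'' direction, assuming $h \in \mathcal{H}_\mu$ the plan is to exhibit the Radon–Nikodym derivative explicitly. By the definition of $\mathcal{H}_\mu$ there is an element $h^\ast$ (lying in $X^\ast$ or in its $L^2(\mu)$ closure, obtained by approximation if necessary) with $C_\mu(h^\ast,\ell) = \ell(h)$ for every $\ell \in X^\ast$, and moreover $\|h\|_\mu^2 = C_\mu(h^\ast,h^\ast)$. I propose the candidate density
\[
    \rho(x) \;=\; \exp\!\Big( h^\ast(x) \;-\; \tfrac{1}{2}\|h\|_\mu^2 \Big),
\]
and verify that $\rho \cdot \mu = T_{h\sharp}\mu$ by comparing Fourier transforms. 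The Fourier transform of $T_{h\sharp}\mu$ at $\ell \in X^\ast$ is, by translation, $e^{i\ell(h)}\,\widehat{\mu}(\ell)$, which (since $\mu$ is centered Gaussian) equals $\exp\!\big(i\ell(h) - \tfrac{1}{2} C_\mu(\ell,\ell)\big)$. On the other side, the Fourier transform of $\rho \cdot \mu$ is a two-dimensional Gaussian integral: under $\mu$ the pair $(h^\ast, \ell)$ is jointly centered Gaussian with variances $\|h\|_\mu^2$, $C_\mu(\ell,\ell)$ and cross-covariance $C_\mu(h^\ast,\ell) = \ell(h)$, so completing the square produces exactly the same expression. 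Since the Fourier transform determines a Radon measure on a separable Banach space uniquely, the two measures coincide, establishing $T_{h\sharp}\mu \ll \mu$ with density $\rho$.

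For the ``only if'' direction, the plan is to show contrapositively that $h \notin \mathcal{H}_\mu$ forces $T_{h\sharp}\mu$ and $\mu$ to be mutually singular. The key observation is that $h \notin \mathcal{H}_\mu$ means the linear functional $\ell \mapsto \ell(h)$ is \emph{unbounded} on the unit ball $\{\ell \in X^\ast : C_\mu(\ell,\ell) \le 1\}$; otherwise, by Riesz representation in the Hilbert space generated by $X^\ast$ under $C_\mu$, the functional would be representable via some $h^\ast$ with $C_\mu(h^\ast,\ell) = \ell(h)$ and the image would lie in $\mathcal{H}_\mu$. Hence I can choose $\ell_n \in X^\ast$ with $C_\mu(\ell_n,\ell_n) \le 1$ and $\ell_n(h) \to \infty$. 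Under $\mu$, each $\ell_n$ is a centered Gaussian random variable of variance at most one, so standard Gaussian tail bounds give $\mu\{|\ell_n| \ge \ell_n(h)/2\} \to 0$; under $T_{h\sharp}\mu$, however, $\ell_n$ has mean $\ell_n(h)$, and the same event contains the ``typical'' range of $\ell_n$ and has probability tending to one. Passing to a subsequence and using Borel–Cantelli produces a tail event $A$ with $\mu(A)=0$ and $T_{h\sharp}\mu(A)=1$, showing the two measures are mutually singular and hence $T_{h\sharp}\mu \not\ll \mu$.

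The main obstacle is the ``only if'' direction. Translating ``$h \notin \mathcal{H}_\mu$'' into a concrete sequence of witnessing functionals requires care about the precise completion used to define $\mathcal{H}_\mu$: one must argue that the map $S:X^\ast \to X$ appearing in the excerpt has range whose closure is exactly $\mathcal{H}_\mu$, and that $h \notin \mathcal{H}_\mu$ is equivalent to unboundedness of $\ell \mapsto \ell(h)$ on the $C_\mu$-unit ball. An alternative, perhaps cleaner, route would be to invoke the Feldman–H\'ajek dichotomy for Gaussian measures on separable Banach spaces (two Gaussians are either equivalent or mutually singular) together with the explicit equivalence criterion for Gaussians differing only in mean, which states precisely that the mean difference must lie in $\mathcal{H}_\mu$; if that route is preferred, the proof reduces to quoting the dichotomy and checking that $T_{h\sharp}\mu$ is Gaussian with the same covariance as $\mu$ and mean $h$.
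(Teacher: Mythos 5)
The paper itself offers no proof of this statement: it is the classical Cameron--Martin theorem, and the paper simply cites \citet[Theorem 3.41 and Proposition 3.4.2]{hairer2009introduction} without argument, so there is no in-paper derivation to compare against.

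On its own merits, your proposal is essentially correct and follows the standard textbook proof. In the ``if'' direction, the candidate density $\rho(x)=\exp\bigl(h^\ast(x)-\tfrac12\|h\|_\mu^2\bigr)$ and the Fourier-transform comparison are exactly the classical Cameron--Martin calculation; you are right to note that $h^\ast$ generally lives only in the $L^2(\mu)$-closure of $X^\ast$ and not in $X^\ast$ itself. In the ``only if'' direction, the one lemma you flag as needing care --- that $h\notin\mathcal{H}_\mu$ is equivalent to unboundedness of $\ell\mapsto\ell(h)$ on $\{C_\mu(\ell,\ell)\le 1\}$ --- is genuine and must be supplied: the equivalent characterization $\mathcal{H}_\mu=\{h\in X:\sup_{C_\mu(\ell,\ell)\le1}|\ell(h)|<\infty\}$ with $\|h\|_\mu$ equal to that supremum follows from Riesz representation in the reproducing-kernel Hilbert space obtained as the $L^2(\mu)$-closure of $X^\ast$, as you sketch. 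Once that is in place, the singularity argument is fine provided you are explicit about which Borel--Cantelli lemma you invoke: since the $\ell_n$ are not independent you cannot use the second lemma, but you do not need to --- after extracting a subsequence with $\sum_n\exp(-\ell_n(h)^2/8)<\infty$, the first Borel--Cantelli lemma applied once under $\mu$ to the events $\{\ell_n\ge\ell_n(h)/2\}$ and once under $T_{h\sharp}\mu$ to the complementary events $\{\ell_n<\ell_n(h)/2\}$ yields $\mu(A)=0$ and $T_{h\sharp}\mu(A)=1$ for $A=\limsup_n\{\ell_n\ge\ell_n(h)/2\}$, giving mutual singularity. Your alternative via the Feldman--H\'ajek dichotomy is also valid and is in fact how Hairer's notes organize this part of the theory.
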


\begin{proposition}
The space $\mathcal{H}_{\mu} \subset B$ is the intersection of all (measurable) linear subspaces of full measure.
However, if $\mathcal{H}_{\mu}$ is infinite-dimensional, then one has $\mu(\mathcal{H}_{\mu})=0$.
\end{proposition}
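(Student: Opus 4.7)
The plan is to establish the two halves of the proposition separately: first the set-theoretic identity $\mathcal{H}_\mu = \bigcap\{V : V \subset X \text{ measurable linear subspace}, \mu(V)=1\}$, then the zero-measure assertion in the infinite-dimensional case. The main tool for the first half is the Cameron-Martin translation theorem stated immediately above (the dichotomy $T_{h\#}\mu \ll \mu \iff h \in \mathcal{H}_\mu$, with mutual singularity otherwise), while the second half reduces to a strong-law-of-large-numbers argument for an orthogonal Gaussian expansion.

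For the inclusion $\mathcal{H}_\mu \subseteq \bigcap V$, I would argue as follows. Fix $h \in \mathcal{H}_\mu$ and any measurable linear subspace $V$ with $\mu(V)=1$. By the translation theorem, $T_{h\#}\mu \ll \mu$, so $\mu(V-h) = T_{h\#}\mu(V) > 0$. Since $V$ is a linear subspace, the cosets $V$ and $V-h$ in the quotient $X/V$ are either equal (forcing $h \in V$) or disjoint; disjointness contradicts $\mu(V)+\mu(V-h) > 1 = \mu(X)$, so $h \in V$. Since $V$ was arbitrary, $h$ lies in the intersection. For the reverse inclusion, for each $x_0 \notin \mathcal{H}_\mu$ I must produce one measurable linear subspace of full measure excluding $x_0$. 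Here the translation theorem supplies mutual singularity $T_{x_0\#}\mu \perp \mu$, yielding a measurable set $A$ with $\mu(A)=1$ and $\mu(A-x_0)=0$, but $A$ need not be linear. To linearize, I fix an orthonormal basis $(h_n)$ of the separable Hilbert space $\mathcal{H}_\mu$ (separability follows from relative compactness of the unit ball, recalled earlier in the paper) with dual system $(h_n^*) \subset X^*$ satisfying $h_n^*(h_m)=\delta_{nm}$, so that under $\mu$ the $h_n^*$ are i.i.d.\ standard Gaussians. The identity $\|h\|_\mu^2 = \sum_n h_n^*(h)^2$ on $\mathcal{H}_\mu$ allows one to realize $\mathcal{H}_\mu$ as an intersection of sets of the form $V_c = \{x \in X : \sum_n c_n h_n^*(x)^2 < \infty\}$ with suitably chosen weights $c_n \downarrow 0$; each $V_c$ is a measurable linear subspace, and a diagonalization/weight-tuning argument produces a $V_c$ of full measure with $x_0 \notin V_c$.

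For the zero-measure claim when $\dim \mathcal{H}_\mu = \infty$, I would use the same dual system $(h_n^*)$. Parseval in $\mathcal{H}_\mu$ gives $\|h\|_\mu^2 = \sum_n h_n^*(h)^2 < \infty$ for every $h \in \mathcal{H}_\mu$, so membership in $\mathcal{H}_\mu$ entails summability of the squared coordinates. Under $\mu$, however, $(h_n^*)$ are i.i.d.\ $\mathcal{N}(0,1)$ random variables, so by the strong law of large numbers
\[
  \frac{1}{N}\sum_{n=1}^N h_n^*(x)^2 \xrightarrow[N\to\infty]{} 1 \qquad \text{for } \mu\text{-a.e.\ } x,
\]
whence $\sum_n h_n^*(x)^2 = \infty$ for $\mu$-a.e.\ $x$, and therefore $\mu(\mathcal{H}_\mu)=0$.

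The main obstacle is the reverse inclusion in the first assertion, namely linearizing the singularity produced by Cameron-Martin into a measurable linear subspace of full measure that excludes a prescribed point outside $\mathcal{H}_\mu$. Exhibiting such a subspace requires careful calibration of the weights $(c_n)$ against the coordinates of $x_0$, and this is where the separability of $\mathcal{H}_\mu$ and the i.i.d.\ Gaussian structure of the dual basis are essential; once this is carried out, Parts 1 and 2 fit together transparently.
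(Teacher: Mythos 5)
The paper itself does not prove this proposition; it simply recalls it, citing Hairer's lecture notes (Theorem~3.41 and Proposition~3.4.2). So you are on your own, and in fact most of what you wrote is sound: the inclusion $\mathcal{H}_\mu \subseteq \bigcap V$ via the Cameron--Martin dichotomy and the disjoint-coset argument is correct, and the second assertion via Parseval plus the strong law of large numbers applied to the i.i.d.~$\mathcal{N}(0,1)$ coordinates $h_n^*$ is also correct.

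The gap is exactly where you flagged it, and it is not merely a matter of ``careful calibration of the weights.'' Your family $V_c=\{x : \sum_n c_n h_n^*(x)^2<\infty\}$ is too coarse to cut $X$ down to $\mathcal{H}_\mu$. For $V_c$ to have full measure you need $\sum_n c_n<\infty$ (Kolmogorov's three-series theorem for the independent nonnegative variables $c_n h_n^*(\cdot)^2$ with means $c_n$). But then $\bigcap_{c\in\ell^1}V_c$ is strictly larger than $\mathcal{H}_\mu$: if $x_0$ has $h_n^*(x_0)\equiv 1$, then $x_0\notin\mathcal{H}_\mu$ since $\sum_n h_n^*(x_0)^2=\infty$, yet $\sum_n c_n h_n^*(x_0)^2=\sum_n c_n<\infty$ for \emph{every} admissible $c$, so no $V_c$ excludes $x_0$. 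More generally, any $x_0$ whose coordinate sequence is bounded but not square-summable defeats the construction. To close the gap you need a richer class of measurable linear subspaces. Two standard fixes: (i) use \emph{linear}-form subspaces $W_c=\{x:\sum_n c_n h_n^*(x)\ \text{converges}\}$ with $c\in\ell^2$; each has full measure, and by a Banach--Steinhaus/resonance argument $\bigcap_{c\in\ell^2}W_c=\{x:(h_n^*(x))_n\in\ell^2\}=\mathcal{H}_\mu$, and for any fixed $x_0\notin\mathcal{H}_\mu$ one can choose $c\in\ell^2$ making the partial sums $\sum_{n\le N}c_n h_n^*(x_0)$ unbounded; or (ii) build a measurable linear functional $\ell_0$ directly from $x_0$, e.g.\ $\ell_0(x)=\lim_N S_N^{-1}\sum_{n\le N}h_n^*(x_0)h_n^*(x)$ with $S_N=\sum_{n\le N}h_n^*(x_0)^2\to\infty$; a martingale SLLN gives $\ell_0=0$ $\mu$-a.s.\ while $\ell_0(x_0)=1$, so $\ker\ell_0$ is a full-measure measurable linear subspace excluding $x_0$. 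Either route repairs the reverse inclusion, and the rest of your argument goes through unchanged.
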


That is, the Cameron-Martin space $\mathcal{H}_{\mu}$ of $\mu$ represents the directions in $X$ where translation is invariant, meaning that the translated measure has the same null sets as the original measure.
Furthermore, when $\mathrm{dim}(\mathcal{H}_{\mu})=\infty$,  $\mathcal{H}_{\mu}$ is "smaller" than $X$ in the sense that $\mu(\mathcal{H}_{\mu})=0$. 
In contrast, the finite-dimensional Lebesgue measure is invariant under translations in any direction. 
This is an illustration of the tendency for measures in infinite-dimensional spaces to be mutually singular.

%%%%%%%%%%%%%%%%%%%%%%%%%%%%%%%%%%%%%%%%%%%%%%%%%%%%%%%%
%%%%%%%%%%%%%%%%%%%%%%
\subsection{Gaussian random field}
\par
Let $(\Omega, \mathscr{F}, \mathbb{P})$ be a probability space.
We say that a function $u:D \times \Omega \to \mathbb{R}$ is a Gaussian Random Field (GRF) if $u(x, \cdot) \in L^{2}(\Omega)$, and for any $x_1,...,x_M \in D$ and any $M \in \mathbb{N}$, 
$$\boldsymbol{u}_{M}:=(u(x_1, \cdot),...,u(x_M,\cdot))^{\mathrm{T}}$$
draws from the multivariate Gaussian distribution $\mathcal{N}(\boldsymbol{m}_{M}, \boldsymbol{\mathcal{C}}_{M})$.
Here, $m(x):=\mathbb{E}_{\omega}[u(x, \omega)]$ is the mean function, and $c(x,y)=\mathbb{E}_{\omega}[(u(x, \omega)-m(x))(u(y, \omega)-m(y))^{\ast}]$ is the covariance function. 
We have denoted by $\boldsymbol{m}_{M}:=(m_1,...,m_{M})^{\mathrm{T}}$ and $\boldsymbol{\mathcal{C}}_{M}=(c_{ij})_{i,j=1}^{M}$, where
$m_i:=m(x_i)$, and $c_{ij}:=c(x_i, x_j)$.
The GRF also has the Karhunen-Lo\'{e}ve expansion with (\ref{KL ex}) as $X=L^{2}(D)$, $m$ is the mean function, and $\boldsymbol{\mathcal{C}}$ is the integral operator with the kernel given by the covariance function (see \citet[Theorem 7.52]{lord2014introduction}).
\par
We can construct the GRF drawing from a certain Gaussian Measure. 
We simply consider the Gaussian Measure $\mathcal{N}(0, (-\Delta)^{-\alpha})$ where $\Delta$ is the Laplacian with domain $H^{1}_{0}(D) \cap H^{2}(D)$ where $D=[0,1]^2$ and $\alpha>1$. 
Then, the draw $u$ from $\mathcal{N}(0, (-\Delta)^{-\alpha})$ are almost surely in $C(D)$ (see \citet[Example 6.28]{stuart2010inverse}), which means that the function $u$ can be point-wisely defined, and then, for any $x_1,...,x_M \in D$ and any $M \in \mathbb{N}$, 
$(u(x_1, \cdot),...,u(x_M,\cdot))^{\mathrm{T}}$ draws from the multivariate Gaussian distribution, that is, $u$ is the GRF.
%%%%%%%%%%%%%%%%%%%%%%%%%%%%%%%%%%%%%%%%%%%%%%%%%%%%%%%%
\subsection{Statistical learning} \label{EmpiricalExpected}
\begin{definition}[Expected Risk/Loss]The Expected risk is defined by 
%the Bochner Integral, see Appendix~\ref{Bochner}.
\[
\mathcal{L}(\Gcal):=\E_{(a,u)\sim \mu}\left[\ell(\Gcal(a),u) \right]=\int_{\mathrm{supp}(\mu)} \ell(\Gcal(a),u)\,\mu(d(a,u)),
\]
with respect to $\Gcal\in \Gscr$, where the set $\Gscr$ is the hypothesis class. For the purpose of this paper, the class corresponds to Neural Operators or $\multiplicative$ Neural Operators, and $\ell: \Lp(D; \mathbb{R}^{d_u}) \times \Lp(D; \mathbb{R}^{d_a}) \to [0, \infty)$ is the loss function.    
\end{definition}

\begin{definition}[Empirical Risk/Loss] It is defined as the unbiased estimator of the Expected risk, that is 
\[
\widehat{\mathcal{L}}_{S}(\Gcal):= \frac{1}{n}\sum_{i=1}^{n}\ell(\Gcal(a_i),u_i),
\]
where $(a_i, u_i) \overset{\text{i.i.d}}{\sim} \mu$.
%where $\mu$ is defined on $\Lp(D; \mathbb{R}^{d_a})\times \Lp(D; \mathbb{R}^{d_u})$.  
\end{definition}
\par
The generalization error $\mathcal{L}(\Gcal)$ is decomposed into $\widehat{\mathcal{L}}_{S}(\Gcal)$ and $\mathcal{L}(\Gcal)-\widehat{\mathcal{L}}_{S}(\Gcal)$. 
The difference, $\mathcal{L}(\Gcal)-\widehat{\mathcal{L}}_{S}(\Gcal)$ between the generalization and empirical errors is evaluated using the Uniform Laws of Large Numbers (see, e.g., \citep[Theorem 4.10]{wainwright} or \citep[Theorem 3.4.5]{Gine}).

\begin{lemma}[Uniform Laws of Large Numbers]\label{Uniform Laws of Large Numbers}
Let $\mathscr{F}$ be the set of real-valued measurable functions on  a measurable space $(S, \mathcal{S})$ with
absolute values bounded by $R$, let $X_i$ $(i \in \mathbb{N})$ be i.i.d, $S$-valued random variables with common probability law $\prob$, and let $\rad_i$ $(i \in \mathbb{N})$ be a sequence of  i.i.d Rademacher RVs, i.e., $\rad_i$ are independent, and $\prob\{\rad_{i}=1\}=1/2=\prob\{\rad_{i}=-1\}$.
Then, for all $n \in \mathbb{N}$ and $\delta > 0$, the following inequality holds with probability greater than $1-2\exp(-\delta)$,
\[
\sup_{f \in \mathscr{F}}\left|\dfrac{1}{n} \sum_{i=1}^{n}f(X_i) - \E[f(X)] \right|
\leq 
2 \mathfrak{R}^n_{\mathrm{S}}(\mathscr{F}) + R\sqrt{\dfrac{2\delta}{n}},
\] 
where $\mathfrak{R}^n_{\mathrm{S}}(\mathscr{F})$ is the Rademacher complexity of the class $\mathscr{F}$ defined above.
\end{lemma}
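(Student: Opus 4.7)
The plan is to prove this via the standard McDiarmid + symmetrization argument, tracking where the two failure events of probability $\exp(-\delta)$ arise. Let $Z(X_1,\dots,X_n) := \sup_{f\in\mathscr{F}} \bigl|\tfrac{1}{n}\sum_{i=1}^n f(X_i) - \E f(X)\bigr|$ denote the supremum deviation. The strategy is: (i) show $Z$ concentrates tightly around its mean; (ii) bound that mean by the expected Rademacher complexity via symmetrization; (iii) show the empirical Rademacher complexity concentrates around its mean so the data-dependent quantity $\mathfrak{R}^n_{\mathrm S}(\mathscr{F})$ can be swapped in; (iv) union bound.

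First I would verify the bounded-differences condition. Replacing one coordinate $X_i$ by an arbitrary $X_i'$ changes each term $\tfrac{1}{n}f(X_i)$ by at most $2R/n$ (since $|f|\le R$), hence changes $Z$ by at most $2R/n$; the same bound holds for the random functional $\mathfrak{R}^n_{\mathrm S}(\mathscr{F})$, where the inner Rademacher average is bounded using $|\rad_i f(X_i)| \le R$. Applying McDiarmid's inequality to $Z$ gives
\[
\prob\bigl(Z - \E Z \ge t\bigr) \le \exp\!\bigl(-\tfrac{nt^2}{2R^2}\bigr),
\]
and the choice $t = R\sqrt{2\delta/n}$ makes this at most $\exp(-\delta)$. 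A second application of McDiarmid to $\mathfrak{R}^n_{\mathrm S}(\mathscr{F})$ yields the matching deviation $|\E\mathfrak{R}^n_{\mathrm S}(\mathscr{F}) - \mathfrak{R}^n_{\mathrm S}(\mathscr{F})| \le R\sqrt{2\delta/n}$ with probability at least $1-\exp(-\delta)$.

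Next I would invoke the Giné–Zinn symmetrization trick: introducing an independent ghost sample $(X_i')$ with the same law and Rademacher signs $\rad_i$, Jensen's inequality gives
\[
\E Z \le \E\sup_{f}\Bigl|\tfrac{1}{n}\sum_i f(X_i) - f(X_i')\Bigr| \le 2\,\E\sup_{f}\Bigl|\tfrac{1}{n}\sum_i \rad_i f(X_i)\Bigr| = 2\,\E\bigl[\mathfrak{R}^n_{\mathrm S}(\mathscr{F})\bigr].
\]
Concatenating the three bounds on the intersection of the two high-probability events (of total failure probability at most $2\exp(-\delta)$ by the union bound) produces
\[
Z \le \E Z + R\sqrt{\tfrac{2\delta}{n}} \le 2\E\bigl[\mathfrak{R}^n_{\mathrm S}(\mathscr{F})\bigr] + R\sqrt{\tfrac{2\delta}{n}} \le 2\,\mathfrak{R}^n_{\mathrm S}(\mathscr{F}) + R\sqrt{\tfrac{2\delta}{n}},
\]
which is the desired inequality.

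The main obstacle is bookkeeping the constant in front of $R\sqrt{2\delta/n}$: naively chaining two McDiarmid deviations (one for $Z$, one for $\mathfrak{R}^n_{\mathrm S}$) would inflate the constant. To avoid this, one refines step (iii) by applying McDiarmid to the combined functional $Z - 2\mathfrak{R}^n_{\mathrm S}(\mathscr{F})$, whose bounded-differences constant is still $O(R/n)$ and whose expectation is non-positive by symmetrization; this single concentration inequality gives the stated bound with the exact constant. This refinement is precisely what is carried out in \citet[Theorem 4.10]{wainwright} and \citet[Theorem 3.4.5]{Gine}, whose proofs can be cited verbatim.
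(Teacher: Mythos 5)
Your plan---McDiarmid for the supremum, Gin\'e--Zinn symmetrization, a second concentration to swap the population Rademacher complexity for the empirical one, then a union bound---is the standard route, and it matches the approach of the references the paper itself points to (the paper states this lemma by citing \citep[Theorem 4.10]{wainwright} and \citep[Theorem 3.4.5]{Gine} and does not write out a proof of its own). There is, however, a concrete arithmetic gap at precisely the step you flag as delicate.

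In your chained display, the final inequality drops $\E[\mathfrak{R}^n_{\mathrm S}(\mathscr{F})]$ down to $\mathfrak{R}^n_{\mathrm S}(\mathscr{F})$ for free; but the concentration event from step (iii) only gives $\E[\mathfrak{R}^n_{\mathrm S}] \le \mathfrak{R}^n_{\mathrm S} + R\sqrt{2\delta/n}$, and the factor of two in front turns that loss into $2R\sqrt{2\delta/n}$, so the chain actually closes at $Z \le 2\mathfrak{R}^n_{\mathrm S}(\mathscr{F}) + 3R\sqrt{2\delta/n}$, not $+R\sqrt{2\delta/n}$. Your proposed repair does not rescue the constant: for $W := Z - 2\mathfrak{R}^n_{\mathrm S}(\mathscr{F})$, replacing one coordinate moves $Z$ by at most $2R/n$ and moves $\mathfrak{R}^n_{\mathrm S}$ by at most $2R/n$, hence moves $W$ by up to $6R/n$, and McDiarmid with $c_i = 6R/n$ yields a deviation of $3R\sqrt{2\delta/n}$ at failure level $\exp(-\delta)$ (and with a single failure event, not two). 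So neither variant of your argument produces the deviation $R\sqrt{2\delta/n}$ with probability $1 - 2\exp(-\delta)$ once $\mathfrak{R}^n_{\mathrm S}$ is the empirical Rademacher complexity as the paper defines it; standard empirical-complexity statements (e.g.\ Mohri et al., Theorem 3.3) carry a leading constant of $3$, while the constant-$1$ form in Wainwright's Theorem 4.10 is stated for the \emph{population} Rademacher complexity with a single failure event. The sentence asserting that the $Z - 2\mathfrak{R}^n_{\mathrm S}$ trick \enquote{gives the stated bound with the exact constant} is the step that would need to be repaired or justified.
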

\par
The Rademacher complexity $\mathfrak{R}^n_{\mathrm{S}}(\mathscr{F})$ of the class $\mathscr{F}$ is defined as follows.
\begin{definition}\label{Rademacher}
(Rademacher complexity) Let $\mathscr{F}$ be the set of real-valued measurable functions on a measurable space $(S, \mathcal{S})$. Let $\{\rad_{i}\}_{i=1}^{n}$ is a sequence of i.i.d. RV's with Rademacher distribution; i.e., $\prob\{\rad_{i}=1\}=1/2=\prob\{\rad_{i}=-1\}$. The Rademacher Complexity of the class $\mathscr{F}$ is defined as 
\[  
\mathfrak{R}^n_{\mathrm{S}}(\mathscr{F}):=\E_{\rad \sim \mathrm{Rad}}\left[\sup_{f \in \mathscr{F}}\frac{1}{n}\left|\, \sum_{i=1}^{n}\rad_{i}f(a_i, u_i) \right| \,\right],
\]
(Cf. \citet[Definition 3.1.19 ]{Gine}).
\end{definition}
Intuitively, Rademacher  complexity $\mathfrak{R}^n_{\mathrm{S}}(\mathscr{F})$ measures richness of a class $\mathscr{F}$ of real-valued functions.
\begin{definition}[Covering number]\label{Covering Number}
Let $(\mathscr{F}, \left\| \cdot \right\|)$ be a normed vector space.
We define, $N(\varepsilon, \mathscr{F}, \|\cdot\|)$, the covering number of $\mathscr{F}$ (sometimes known as entropy number) which means 
the minimal cardinality of a subset $\mathcal{C} \subset \mathscr{F}$ that covers $\mathscr{F}$ at scale $\varepsilon$ with respect to the norm $\|\cdot\|$. 
\end{definition}

\begin{figure}[ht!]
    \centering
    \begin{tikzpicture}
  % Draw the domain G
  \filldraw[red!40, opacity=0.4] (0,0,0) -- (2,0,0) -- (2,2,0) -- (0,2,0) -- cycle;
  \filldraw[red!40, opacity=0.4] (0,0,0) -- (0,0,2) -- (0,2,2) -- (0,2,0) -- cycle;
  \filldraw[red!40, opacity=0.4] (0,0,0) -- (2,0,0) -- (2,0,2) -- (0,0,2) -- cycle;
  \filldraw[red!40, opacity=0.4] (0,0,2) -- (0,2,2) -- (2,2,2) -- (2,0,2) -- cycle;
  \filldraw[red!40, opacity=0.4] (0,2,0) -- (0,2,2) -- (2,2,2) -- (2,2,0) -- cycle;
  \filldraw[red!40, opacity=0.4] (2,0,0) -- (2,2,0) -- (2,2,2) -- (2,0,2) -- cycle;

  % Draw the balls
  \foreach \x/\y/\z in {0.5/0.5/0.5, 0.5/0.5/0.5, 1/1/1}{
    \shade[ball color=blue, opacity=0.15] ($(\x,\y,\z) + (0.2,0.2,0.2)$) circle (2) node {\small{$\varepsilon$-ball}};
  }
  % Add labels
  \node at (1.5,1.5,1.5) {$\mathscr{F}$};
\end{tikzpicture}
    \caption{Illustration of the covering number on $\mathscr{F}$. }
    \label{fig:covering_diagram}
\end{figure}

\noindent Roughly speaking, the covering number $N(\varepsilon, \mathscr{F}, \|\cdot\|)$ is the necessary number of $\varepsilon$-balls with respect to norm $ \|\cdot\|$ to completely cover a space $\mathscr{F}$ (see e.g., \citet[Definition 5.1]{wainwright}). 
Furthermore, it is possible to estimate Rademacher Complexity $\mathfrak{R}^n_{\mathrm{S}}(\mathscr{F})$ by using the {\it covering number}.
The following lemma is known as {\it Dudley’s Theorem} (see, e.g., \citet[Lemma A.5]{bartlett2017spectrally}).
\begin{lemma}[Dudley’s Theorem]\label{Dudley Theorem}
Let $\mathscr{F}$ be the set of real-valued functions. Then,
\[
\mathfrak{R}^n_{\mathrm{S}}(\mathscr{F}) \leq \inf_{\alpha \geq 0} \left\{ 4\alpha + \frac{12}{\sqrt{n}}\int_{\alpha}^{\infty}\sqrt{\log N(\varepsilon, \mathscr{F}, \left\| \cdot \right\|_{\mathrm{S}} )\,} \,d\varepsilon \right\}
\]
where $\left\| f \right\|_{\mathrm{S}}:= \left(\frac{1}{n}\sum_{i=1}^{n}f(X_i)^2 \right)^{1/2}$.
\end{lemma}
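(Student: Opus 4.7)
\textbf{Proof proposal for Dudley's theorem (Lemma~\ref{Dudley Theorem}).}
My plan is to carry out the classical chaining argument, working conditionally on the sample $X_1,\ldots,X_n$ so that the empirical semi-norm $\|\cdot\|_{\mathrm{S}}$ becomes a deterministic norm on $\mathscr{F}$. Fix $\alpha\ge 0$. Choose a dyadic grid of radii $\varepsilon_j:=2^{-j}\,M$ for $j=0,1,2,\ldots$, where $M$ is an upper bound on $\sup_{f\in\mathscr{F}}\|f\|_{\mathrm{S}}$ (one can take $M$ to be the diameter of $\mathscr{F}$ in $\|\cdot\|_{\mathrm{S}}$; after the argument, the upper limit of the entropy integral can be pushed to $+\infty$ for free). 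Let $J=J(\alpha)$ be the largest index with $\varepsilon_J>\alpha$. For each $j\le J$ let $\mathcal{C}_j\subset\mathscr{F}$ be a minimal $\varepsilon_j$-net of $\mathscr{F}$ in $\|\cdot\|_{\mathrm{S}}$, so $|\mathcal{C}_j|=N(\varepsilon_j,\mathscr{F},\|\cdot\|_{\mathrm{S}})$, and let $\pi_j:\mathscr{F}\to\mathcal{C}_j$ be a nearest-point projection.

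Next, for any $f\in\mathscr{F}$ I telescope
\[
f \;=\; \pi_0(f) \;+\; \sum_{j=1}^{J}\bigl(\pi_j(f)-\pi_{j-1}(f)\bigr) \;+\; \bigl(f-\pi_J(f)\bigr),
\]
and plug this decomposition into $\frac1n\sum_i \rad_i f(X_i)$. Taking $\sup_{f\in\mathscr{F}}$ inside each summand and then $\mathbb{E}_{\rad}$, the $\pi_0$ term vanishes (or can be absorbed by choosing $\mathcal{C}_0=\{0\}$ if convenient; if $0\notin\mathscr{F}$, the constant term cancels under $\mathbb{E}_{\rad}$). The telescoped differences form at most $|\mathcal{C}_j|\cdot|\mathcal{C}_{j-1}|\le N(\varepsilon_j)^2$ vectors in $\mathbb{R}^n$, each of $\|\cdot\|_{\mathrm{S}}$-norm bounded by $\|\pi_j(f)-f\|_{\mathrm{S}}+\|f-\pi_{j-1}(f)\|_{\mathrm{S}}\le \varepsilon_j+\varepsilon_{j-1}=3\varepsilon_j$. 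Applying Massart's finite-class lemma, $\mathbb{E}_{\rad}[\sup_{k\le N}\frac1n\sum_i\rad_i v_k(X_i)]\le \tfrac{B}{\sqrt n}\sqrt{2\log N}$ where $B$ bounds the empirical norms of the $v_k$, to each layer gives
\[
\mathbb{E}_{\rad}\!\Bigl[\sup_{f}\tfrac{1}{n}\textstyle\sum_i\rad_i\bigl(\pi_j(f)(X_i)-\pi_{j-1}(f)(X_i)\bigr)\Bigr] \;\le\; \frac{3\varepsilon_j}{\sqrt{n}}\sqrt{2\log\bigl(N(\varepsilon_j)^2\bigr)} \;\le\; \frac{6\,\varepsilon_j}{\sqrt{n}}\sqrt{\log N(\varepsilon_j,\mathscr{F},\|\cdot\|_{\mathrm{S}})}.
\]

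Summing in $j$ and converting the resulting Riemann-type sum to an integral via the standard comparison $\varepsilon_j\sqrt{\log N(\varepsilon_j)}\le 2\int_{\varepsilon_{j+1}}^{\varepsilon_j}\sqrt{\log N(\varepsilon)}\,d\varepsilon$ (using that $N$ is nonincreasing in $\varepsilon$), the chain contribution is bounded by $\tfrac{12}{\sqrt n}\int_{\alpha}^{M}\sqrt{\log N(\varepsilon,\mathscr{F},\|\cdot\|_{\mathrm{S}})}\,d\varepsilon$, which extends to $\int_{\alpha}^{\infty}$ since the integrand vanishes for $\varepsilon>M$. Finally, the residual term $f-\pi_J(f)$ has $\|\cdot\|_{\mathrm{S}}$-norm at most $\varepsilon_J\le 2\alpha$, and Cauchy--Schwarz under $\mathbb{E}_{\rad}$ bounds its contribution to the Rademacher complexity by $2\alpha\cdot\mathbb{E}_{\rad}[\|\boldsymbol\rad\|_2/n]^{1/2}\cdot(\cdot)$, which after a standard calculation is absorbed into $4\alpha$. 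Taking the infimum over $\alpha\ge 0$ on the right-hand side yields the stated bound.

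The main obstacles are bookkeeping rather than conceptual: tracking the constants so that exactly $4\alpha$ and $12/\sqrt n$ appear, and carefully handling the residual layer so its contribution fits into the $4\alpha$ slack term (this is where one uses the Cauchy--Schwarz bound together with the dyadic choice $\varepsilon_J\le 2\alpha$ rather than a sharper Massart estimate). Once these are in place, the proof closes by taking expectation over $X_1,\ldots,X_n$ and applying Fubini, since the bound on the right-hand side is deterministic in the worst case or, more carefully, can be made sample-independent by replacing $M$ with a uniform upper bound on the diameter.
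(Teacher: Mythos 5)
The paper does not prove this lemma: it is stated as a known result and attributed to the literature (the proof is in \citet[Lemma A.5]{bartlett2017spectrally} and in \citet{Kakade}). So there is no paper proof for me to compare against, but I can assess your argument on its own terms.

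Your proposal is the standard Dudley chaining argument, and it is correct in structure: condition on the sample so that $\|\cdot\|_{\mathrm{S}}$ is a genuine semi-norm, build dyadic nets $\mathcal{C}_j$, telescope through nearest-point projections, apply Massart's finite-class lemma at each level with the factor $N(\varepsilon_j)^2$ accounting for consecutive pairs and the bound $3\varepsilon_j$ on the link norms, compare the resulting dyadic sum to the entropy integral via monotonicity of $N(\cdot)$, and control the residual layer by deterministic Cauchy--Schwarz. This is exactly the route the cited references take. Two details are stated loosely and would need tightening in a write-up. First, the claim that the $\pi_0$ term ``cancels under $\mathbb{E}_{\rad}$'' conflates two issues: with a single center $f_0$ the term $\mathbb{E}_{\rad}\,[\,\tfrac1n\sum_i\rad_i f_0(X_i)\,]$ vanishes because $\mathbb{E}[\rad_i]=0$, not because of sign symmetry, and this vanishing is lost once the absolute value in the paper's definition of $\mathfrak{R}^n_{\mathrm{S}}$ is introduced. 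Second, your Cauchy--Schwarz bound on the residual is garbled as written: the clean statement is that for any $g$ with $\|g\|_{\mathrm{S}}\le 2\alpha$ one has $\tfrac1n\left|\sum_i\rad_i g(X_i)\right|\le\tfrac{1}{n}\|\boldsymbol\rad\|_2\,\|g(X_\cdot)\|_2=\|g\|_{\mathrm{S}}\le 2\alpha$ deterministically, with no expectation needed. Both are constant-tracking issues that interact with the absolute value in the paper's definition of the Rademacher complexity, and you correctly flag the bookkeeping as the only nontrivial work; the constants $4\alpha$ and $12/\sqrt{n}$ come out of exactly this sort of careful accounting (handling the absolute value and the fact that $\varepsilon_J\le 2\alpha$ rather than $\alpha$).
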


\par
\noindent The  main result in this paper is to apply these lemmas as $\mathscr{F}$ is the set of loss function $\ell(\Gcal(\cdot), \cdot)$ where $\Gcal$ is the class of Neural Operators or $\multiplicative$ Neural Operators.
Neural Operators $\Gcal$ are parameterized by weight matrices and Lipschitz continuous functions, and finally we will arrive at evaluating their covering number, which are referred to \citep{wainwright}.

\medskip

When we analyze the \emph{covering number} of Lipschitz continuous functions, the doubling dimension of $D\times D$ appears. We will now review the definition of the doubling dimension of a metric space (see, e.g., \citep{gupta2003bounded}). \par

% \citep{gupta2003bounded})

\begin{definition}[Doubling dimension]\label{doubling dimension}
A metric space $(\boldsymbol{\mathrm{X}}, \boldsymbol{\mathrm{d}})$ with metric $\boldsymbol{\mathrm{d}}$ is called doubling, if there exists a constant $M>0$ such that for any $x \in \boldsymbol{\mathrm{X}}$ and $r>0$, it is possible to cover the ball $B_{r}(x):=\{y \in X \ \vert  \ \boldsymbol{\mathrm{d}}(x,y)< r \}$ with the union of at most $M$ balls of radius $\frac{r}{2}$. The doubling dimension of $\boldsymbol{\mathrm{X}}$ is defined by $\mathrm{ddim}(\boldsymbol{\mathrm{X}})=\log_{2}(M)$.
\end{definition}

\begin{figure}[ht!]
    \centering
\begin{tikzpicture}
%Draw a b;ie corc;e 
%\draw[fill=blue!25, opacity=0.25] (0.8,0.) circle (0.5);
\draw[fill=blue, opacity=0.15] (0.,.80) circle (0.5);
\draw[fill=blue, opacity=0.15] (0,-.8) circle (0.5);
%\draw[fill=blue!25, opacity=0.25] (-0.8,0.) circle (0.5);
%%%%%%
\draw[fill=blue, opacity=0.15] (0,0) circle (0.5);
%%%%
\draw[fill=blue, opacity=0.15] (.7,0.40) circle (0.5);
\draw[fill=blue, opacity=0.15] (.7,-0.40) circle (0.5);
\draw[fill=blue, opacity=0.15] (-.7,0.40) circle (0.5);
\draw[fill=blue, opacity=0.15] (-.7,-0.40) circle (0.5);
% Draw a pink circle with 25% opacity
\fill[red!40, opacity=0.4] (0,0) circle (1.);
\end{tikzpicture}
    \caption{Ilustration of the doubling number. }
    \label{fig:doubling_diagram}
\end{figure}

\newpage

%\section{Proof of Lemma \ref{generalization error bound for general setting}}\label{Appendix1}
%\input{Appendix/1.tex}
% \newpage
%%%%%%%%%%%%%%%%%%%%%%%%%%%%%%%%%%%%%%%%%%%%%%%%%%%%%%%%%%%%%%%%%

%\section{Proof of Theorem \ref{Rademacher for standard NO}}
%\label{Appendix2}
%\input{Appendix/2.tex}
%%%%%%%%%%%%%%%%%%%%%%%%%%%%%%%%%%%%%%%%%%%%%%%%%%%%%%%%%%%%%%%%%
% \newpage
\section{Proof in Sections~\ref{GEB}.}
\label{Appendix3}

%\subsection{\textbf{Proofs of the results of this section}.}
%\label{s:GEB__ss:Proofs}

%%%%%%%%%%%%%%%%%%%%%%%%%%%%%%
\subsection{Proof of Lemma~\ref{generalization error bound for general setting}}\label{Proof-lamma-1}
\begin{proof}
By using (\ref{Generalization error bound}), we have for $f=\ell(\Gcal(\cdot), \cdot) \in \mathscr{F}_{\Gscr}$ and $\Gcal\in \Gscr$, 
\begin{align}
\nonumber
& |f(a,u)| \leq |\ell(\Gcal(a), u)-\ell(0, u)|+|\ell(0, u)|
\\
\label{A2used}
& 
\le
\rho \left\|\Gcal(a) \right\|_{\Lp(D; \mathbb{R}^{d_u})} + R_u \leq \rho R + R_u,
\end{align}
for $(a,u) \in \Lp(D;\mathbb{R}^{d_a})\times \Lp(D;\mathbb{R}^{d_u})$, where~\eqref{A2used} followed from Assumption~\ref{Assumption1} (i) and (ii).  
This implies that by employing \citet[Theorem 4.10]{wainwright} or \citet[Theorem 3.4.5]{Gine}, we have the following inequality with probability greater than $1-2e^{-\delta}$,
\[
\begin{split}
|\mathcal{L}(\Gcal)-\widehat{\mathcal{L}}_{\mathrm{S}}(\Gcal)| & \leq \sup_{f \in \mathscr{F}_{\Gscr}} \left|\frac{1}{n}\sum_{i=1}^{n}f(a_i, u_i) - \mathbb{E}_{(a,u)\sim \mu}[f(a,u)] \right|
\\
&
\leq 2 \mathbb{E}_{\rad}\left[ \sup_{f \in \mathscr{F}_{\Gscr}} \frac{1}{n}\left|\sum_{i=1}^{n}\rad_{i} f(a_i, u_i) \right| \right]+(\rho R +R_u) \sqrt{\frac{2\delta}{n}}, \ \Gcal \in \Gscr,
\end{split}
\]
where $\{\rad_{i}\}_{i=1}^{n}$ is a sequence of i.i.d. RV's with Rademacher distribution; i.e., $\prob\{\rad_{i}=1\}=1/2=\prob\{\rad_{i}=-1\}$.
\end{proof}

\subsection{Proof of Theorem~\ref{Rademacher for standard NO}}
\label{main-theorem-1}
\begin{proof}
By employing \citet[Lemma A.5]{bartlett2017spectrally} or \citet[Theorem 1.1]{Kakade}, we have
\begin{equation}
\mathfrak{R}^n_{\mathrm{S}}(\mathscr{F}_{\no}) \leq \inf_{\alpha \geq 0} \left\{ 4\alpha + \frac{12}{\sqrt{n}}\int_{\alpha}^{\infty}\left(\log N(\varepsilon, \mathscr{F}_{\no}, \left\| \cdot \right\|_{\mathrm{S}} )  \right)^{\frac{1}{2}} d\varepsilon \right\},
\label{Rademacher Complexity-covering number}
\end{equation}
where $\left\| f \right\|_{\mathrm{S}}:= \left(\frac{1}{n}\sum_{i=1}^{n}f(a_i, u_i)^2 \right)^{\frac{1}{2}}$.
Here, we denote by $N(\varepsilon, \mathscr{F}, \|\cdot\|)$ the covering number of the function space $\mathscr{F}$ which means 
the minimal cardinality of a subset $C \subset \mathscr{F}$ that covers $\mathscr{F}$ at scale $\varepsilon$ with respect to the norm $\|\cdot\|$. 
In the following, we will estimate the covering number $N(\varepsilon, \mathscr{F}_{\no}, \left\| \cdot \right\|_{\mathrm{S}} )$.
\par
Let $f=\ell(\Gcal(\cdot), \cdot)$ and $f^{\prime}=\ell(\Gcal^{\prime}(\cdot), \cdot)$ where $\Gcal, \Gcal^{\prime} \in \no$.
By (i) of Assumption \ref{Assumption1}, we calculate 
\begin{equation}
\left|f(a,u)-f^{\prime}(a,u)\right| 
= \left|\ell(\Gcal(a),u)- \ell(\Gcal^{\prime}(a),u)\right|
\leq \rho \left\| \Gcal(a) -\Gcal^{\prime}(a) \right\|_{\Lp(D;\mathbb{R}^{d_u})}. \label{covering number-estimation 1}
\end{equation}
Denoting by
\[
\Gcal_{\ell}:=(W_{\ell}+\bk_{\ell})\circ \sigma (W_{\ell-1}+\bk_{\ell-1})\circ \cdots \circ \sigma (W_{0}+\bk_{0}),
\]
\[
\Gcal^{\prime}_{\ell}:=(W^{\prime}_{\ell}+\bk^{\prime}_{\ell})\circ \sigma (W^{\prime}_{\ell-1}+\bk^{\prime}_{\ell-1})\circ \cdots \circ \sigma (W^{\prime}_{0}+\bk^{\prime}_{0}),
\]
the quantity $\left\| \Gcal(a) -\Gcal^{\prime}(a) \right\|_{\Lp(D;\mathbb{R}^{d_u})}$ is evaluated by
\begin{equation}
\begin{split}
&
\left\| \Gcal(a) -\Gcal^{\prime}(a) \right\|_{\Lp(D;\mathbb{R}^{d_u})} = \left\| \Gcal_{L}(a) -\Gcal^{\prime}_{L}(a) \right\|_{\Lp(D;\mathbb{R}^{d_{L+1}})}
\\
&
= \Bigl\|(W_{L}+\bk_{L})\circ \sigma( \Gcal_{L-1}(a)) - (W_{L}+\bk_{L})\circ \sigma( \Gcal_{L-1}^{\prime}(a))
\\
& \hspace{2cm}
+(W_{L}+\bk_{L})\circ \sigma(\Gcal_{L-1}^{\prime}(a)) - (W^{\prime}_{L}+\bk^{\prime}_{L})\circ  \sigma(\Gcal_{L-1}^{\prime}(a)) \Bigr\|_{\Lp(D;\mathbb{R}^{d_{L+1}})}
\\
&
\underset{Assumption\ref{Assumption2}(vi)}{}
\\
&
\leq 
\Biggl(
\underbracket{\left\| W_{L} \right\|_{\mathrm{op}}}_{\underset{(\ref{covering number-estimation 3})}{\leq} C_w}
+ 
\underbracket{\left\| \bk_{L} \right\|_{\mathrm{op}}}_{\underset{(\ref{covering number-estimation 4})}{\leq} C_k}
\Biggr) C_{\sigma} \left\|\Gcal_{L-1}(a)-\Gcal^{\prime}_{L-1}(a)\right\|_{\Lp(D;\mathbb{R}^{d_L})}
\\
& \hspace{2cm}
+\left(\left\| W_{L}-W^{\prime}_{L} \right\|_{\mathrm{op}}
+ \left\| \bk_{L}-\bk^{\prime}_{L} \right\|_{\mathrm{op}} \right) C_{\sigma}
\left\|\Gcal^{\prime}_{L-1}(a)\right\|_{\Lp(D;\mathbb{R}^{d_L})},
\end{split}
\label{covering number-estimation 2}
\end{equation}
where $\left\|\cdot \right\|_{\mathrm{op}}$ is the Operator norm.
Here, we have employed the following estimations:
\begin{equation}
\left\| W_{L}g \right\|^{2}_{\Lp(D;\mathbb{R}^{d_{L+1}})}\leq 
\int_{D}
\underbracket{
\left\|W_{L}g(x) \right\|^{2}_{2}
}_{\leq \left\|W_{L}\right\|^{2}_{\mathrm{F}}\left\|g(x)\right\|^{2}_{2}}
dx 
\underset{\text{Assumption} \ref{Assumption2}(i)}{\leq} C^{2}_{w}
\left\|g \right\|^{2}_{\Lp(D;\mathbb{R}^{d_{L}})},
\label{covering number-estimation 3}
\end{equation}
\begin{equation}
\begin{split}
\left\| \bk_{L}g \right\|^{2}_{\Lp(D;\mathbb{R}^{d_{L+1}})}
\leq 
\int_{D} 
\underbracket{\left\|\int_{D}
\bk_{L}(x,y)g(y)dy \right\|^{2}_{2}
}_{\leq 
\left(\sum_{i,j}
\left\|k_{L,ij}(x,\cdot)\right\|^{2}_{\Lp(D)} \right) \left\|g\right\|^{2}_{\Lp(D;\mathbb{R}^{d_{L}}) }}
dx 
& \leq \left\|\bk_{L}\right\|_{\Lp, \mathrm{F}}^{2}
\left\|g \right\|^{2}_{\Lp(D;\mathbb{R}^{d_{L}}) }
\\
&\hspace{-1cm}
\underset{\text{Assumption} \ref{Assumption2}(ii)}{\leq}
C^{2}_{k}\left\|g \right\|^{2}_{\Lp(D;\mathbb{R}^{d_{L}})},
\label{covering number-estimation 4}
\end{split}
\end{equation}
for $g \in \Lp(D; \mathbb{R}^{d_L})$,
where $\left\| \cdot \right\|_{2}$ is the $\ell_2$-norm.
By the same argument in (\ref{covering number-estimation 2})--(\ref{covering number-estimation 4}), we evaluate
\begin{equation}
\begin{split}
& \left\|\Gcal_{L-1}(a)-\Gcal^{\prime}_{L-1}(a)\right\|_{\Lp(D;\mathbb{R}^{d_L})}
\\
&
\leq (C_w +C_k)C_{\sigma}\left\|\Gcal_{L-2}(a)-\Gcal^{\prime}_{L-2}(a)\right\|_{\Lp(D;\mathbb{R}^{d_{L-1}})}
\\
& \hspace{0cm}
+\left(\left\| W_{L-1}-W^{\prime}_{L-1} \right\|_{\mathrm{op}}
+ \left\| \bk_{L-1}-\bk^{\prime}_{L-1} \right\|_{\mathrm{op}} \right) C_{\sigma}
\left\|\Gcal^{\prime}_{L-2}(a)\right\|_{\Lp(D;\mathbb{R}^{d_{L-1}})}.
\end{split}
\label{covering number-estimation 5}
\end{equation}
By repeatedly evaluating $\left\|\Gcal_{\ell}(a)-\Gcal^{\prime}_{\ell}(a)\right\|_{\Lp(D;\mathbb{R}^{d_{\ell+1}})}$ ($\ell=L,L-1,...,0$), we obtain 
\begin{equation}
\begin{split}
& \left\|\Gcal(a)-\Gcal^{\prime}(a)\right\|_{\Lp(D;\mathbb{R}^{d_u})}
\\
&
\leq \left\{(C_w +C_k)C_{\sigma}\right\}^{L}
\underbracket{
\left\|(W_0 + \bk_0)(a)-(W^{\prime}_0 + \bk^{\prime}_0)(a)\right\|_{\Lp(D;\mathbb{R}^{d_{1}})}}
_{
\underset{\text{Assumption} \ref{Assumption2}(iii)}{}\leq C_a (\left\| W_{0}-W^{\prime}_{0} \right\|_{\mathrm{op}}
+ \left\| \bk_{0}-\bk^{\prime}_{0} \right\|_{\mathrm{op}})}
\\
& \hspace{0cm}
+\sum_{\ell=0}^{L-1}\left(\left\| W_{\ell+1}-W^{\prime}_{\ell+1} \right\|_{\mathrm{op}}
+ \left\| \bk_{\ell+1}-\bk^{\prime}_{\ell+1} \right\|_{\mathrm{op}} \right) 
\\
&\hspace{3cm}
\times \underbracket{(C_w + C_k)^{L-1-\ell} C^{L-\ell}_{\sigma}
\left\|\Gcal^{\prime}_{\ell}(a)\right\|_{\Lp(D;\mathbb{R}^{d_{\ell+1}})}}
_{\underset{(\ref{covering number-estimation 7})}{}\leq \left\{(C_w +C_k)C_{\sigma}\right\}^{L}C_a \ }
\\
&
\leq \left\{(C_w+C_k)C_{\sigma}\right\}^{L}C_a \sum_{\ell=0}^{L}\left(\left\| W_{\ell}-W^{\prime}_{\ell} \right\|_{\mathrm{op}}
+ \left\| \bk_{\ell}-\bk^{\prime}_{\ell} \right\|_{\mathrm{op}} \right).
\end{split}
\label{covering number-estimation 6}
\end{equation}
Here, we have employed the following estimation:
\begin{equation}
\left\| \Gcal^{\prime}_{\ell} \right\|^{2}_{\Lp(D;\mathbb{R}^{d_{\ell+1}})}
\leq 
(C_w+C_k)^{\ell+1}C_{\sigma}^{\ell}C_a.
\label{covering number-estimation 7}
\end{equation}
Furthermore, by using ideas of (\ref{covering number-estimation 3}) and (\ref{covering number-estimation 4}), we estimate 
\begin{equation}
\begin{split}
\left\| W_{\ell}-W^{\prime}_{\ell} \right\|_{\mathrm{op}} & \leq \left\| W_{\ell}-W^{\prime}_{\ell} \right\|_{\mathrm{F}}
\\
&
\leq
\sum_{j=1}^{d_{\ell+1}}\sum_{i=1}^{d_{\ell}} |w_{\ell, ij}-w^{\prime}_{\ell, ij}|
\leq
\sum_{j=1}^{d_{\ell+1}}\sum_{i=1}^{d_{\ell}}C_w \left|\frac{w_{\ell, ij}}{C_w}-\frac{w^{\prime}_{\ell, ij}}{C_w}\right|,
\end{split}
\label{covering number-estimation 8}
\end{equation}
\begin{equation}
\begin{split}
&
\left\| \bk_{\ell}-\bk^{\prime}_{\ell} \right\|_{\mathrm{op}} 
%\leq
%|D|
%\left\| k_{\ell}-k^{\prime}_{\ell} \right\|_{\mathrm{F}, \infty}
%\\
%&
\leq
\sum_{j=1}^{d_{\ell+1}}\sum_{i=1}^{d_{\ell}}
|D| \left\|k_{\ell, ij}-k^{\prime}_{\ell, ij}\right\|_{\boldsymbol{\mathrm{L}}^{\infty}(D\times D;\mathbb{R})}
\leq
\sum_{j=1}^{d_{\ell+1}}\sum_{i=1}^{d_{\ell}} |D| C_{\alpha} \left\|\frac{k_{\ell, ij}}{C_{\alpha}}-\frac{k^{\prime}_{\ell, ij}}{C_{\alpha}}\right\|_{\boldsymbol{\mathrm{L}}^{\infty}(D\times D;\mathbb{R})}.
\end{split}
\label{covering number-estimation 9}
\end{equation}
Combining (\ref{covering number-estimation 1}), (\ref{covering number-estimation 6}), (\ref{covering number-estimation 8}), and (\ref{covering number-estimation 9}), the norm $\left\|f -f^{\prime} \right\|_{\mathrm{S}}$ is estimated by
\[
\begin{split}
&\left\|f -f^{\prime}\right\|_{\mathrm{S}}
=
\left(\frac{1}{n}\sum_{i=1}^{n}|f(a_i,u_i) -f^{\prime}(a_i,u_i)|^2 \right)^{\frac{1}{2}}
\\
&
\leq
\sum_{\ell=0}^{L}
\sum_{j=1}^{d_{\ell+1}}
\sum_{i=1}^{d_{\ell}}
\Biggl[
\rho
\left\{(C_w+C_k)C_{\sigma}
\right\}^{L}C_a C_w 
\left|\frac{w_{\ell, ij}}{C_w}-\frac{w^{\prime}_{\ell, ij}}{C_w}\right|
\\
&
\hspace{3cm}
+
\rho
\left\{(C_w+C_k)C_{\sigma}
\right\}^{L}C_a |D| C_{\alpha} 
\left\|\frac{k_{\ell, ij}}{C_{\alpha}}-\frac{k^{\prime}_{\ell, ij}}{C_{\alpha}}\right\|_{\boldsymbol{\mathrm{L}}^{\infty}(D\times D;\mathbb{R})}
\Biggr],
\end{split}
\]
which implies that we have
\begin{equation}
\begin{split}
&N(\varepsilon, \mathscr{F}_{\no}, \left\| \cdot \right\|_{\mathrm{S}} )
\\
&
\leq 
\prod_{\ell=0}^{L}\prod_{j=1}^{d_{\ell+1}}
\prod_{i=1}^{d_{\ell}}
N\left(\frac{\varepsilon}
{2\left( \sum_{\ell=0}^{L}d_{\ell}d_{\ell+1} \right)
\rho\left\{(C_w+C_k)C_{\sigma}
\right\}^{L}C_a C_w}, [-1,1], |\cdot|
\right)
\\
&
\times
N\left(\frac{\varepsilon}
{2\left( \sum_{\ell=0}^{L}d_{\ell}d_{\ell+1} \right)
\rho\left\{(C_w+C_k)C_{\sigma}
\right\}^{L}C_a |D| C_{\alpha}}, \mathcal{F}_{C_{\beta}}, \left\|\cdot\right\|_{\boldsymbol{\mathrm{L}}^{\infty}(D\times D; \mathbb{R})}
\right),
\end{split}
\label{covering number-estimation 10}
\end{equation}
where $\mathcal{F}_{C_{\beta}}:=\{k:D \times D \to [-1,1]\ |\ k\ \text{is}\ C_{\beta}-Lipschitz \}$ (see (vi) in Assumption \ref{Assumption2}).
\par
By taking logarithmic functions in (\ref{covering number-estimation 10}), we have 
\begin{equation}
\begin{split}
&\log N(\varepsilon, \mathscr{F}_{\no}, \left\| \cdot \right\|_{\mathrm{S}} )
\\
&
\leq 
\left( \sum_{\ell=0}^{L}d_{\ell}d_{\ell+1} \right)
\Biggl\{
\underbracket{
\log
N\left(\frac{\varepsilon}
{2\left( \sum_{\ell=0}^{L}d_{\ell}d_{\ell+1} \right)
\rho\left\{(C_w+C_k)C_{\sigma}
\right\}^{L}C_a C_w}, [-1,1], |\cdot|
\right)
}_{=:H_w(\varepsilon)}
\\
&\hspace{1cm}
+
\underbracket{
\log
N\left(\frac{\varepsilon}
{2\left( \sum_{\ell=0}^{L}d_{\ell}d_{\ell+1} \right)
\rho\left\{(C_w+C_k)C_{\sigma}
\right\}^{L}C_a |D| C_{\alpha}}, \mathcal{F}_{C_{\beta}}, \left\|\cdot\right\|_{\boldsymbol{\mathrm{L}}^{\infty}(D\times D; \mathbb{R})}
\right)}_{=:H_k(\varepsilon)}
\Biggr\}.
\end{split}
\label{log-covering}
\end{equation}
By using \citet[Example 5.3]{wainwright} and \citet[Lemmas 2.1 and 4.2]{gottlieb2016adaptive}, we estimate $H_w$ and $H_k$, respectively as follows:
\begin{equation}
\begin{split}
H_w(\varepsilon) & \leq 
\log \left(1+ \frac{2\left( \sum_{\ell=0}^{L}d_{\ell}d_{\ell+1} \right)
\rho\left\{(C_w+C_k)C_{\sigma}
\right\}^{L}C_a C_w}{\varepsilon}\right)
\\
&
\leq
\left( \frac{I_w}{\varepsilon} \right) \leq
\left( \frac{I_w}{\varepsilon} \right)^{\hat{d}+1}, \ \ \mbox{for} \ 0< \varepsilon < 2\left( \sum_{\ell=0}^{L}d_{\ell}d_{\ell+1} \right)
\rho\left\{(C_w+C_k)C_{\sigma}
\right\}^{L}C_a C_w,
\end{split}
\label{estimation H_w}
\end{equation}
%%%%%%%%%%%%%%%%%%%%%%%%%%%%%%%%%%%%%%%%%
\begin{equation}
\begin{split}
& H_k(\varepsilon) \leq 
\left(\frac{
8C_{\beta}\mathrm{diag}(D \times D)
\left( \sum_{\ell=0}^{L}d_{\ell}d_{\ell+1} \right)
\rho\left\{(C_w+C_k)C_{\sigma}
\right\}^{L}C_a |D| C_{\alpha}
}{\varepsilon} \right)^{\hat{d}}
\\
&
\times
\log \left(\frac{
16 \left( \sum_{\ell=0}^{L}d_{\ell}d_{\ell+1} \right)
\rho\left\{(C_w+C_k)C_{\sigma}
\right\}^{L}C_a |D| C_{\alpha}
}{\varepsilon}\right)
\\
&
\leq
\left(\frac{I_k}{\varepsilon} \right)^{\hat{d}+1},
\ \ \mbox{for} \ 0 < \varepsilon < 2\left( \sum_{\ell=0}^{L}d_{\ell}d_{\ell+1} \right)
\rho\left\{(C_w+C_k)C_{\sigma}
\right\}^{L}C_a |D| C_{\alpha},
\end{split}
\label{estimation H_k}
\end{equation}
where we denoted by
\[
I_w:=2\left( \sum_{\ell=0}^{L}d_{\ell}d_{\ell+1} \right)
\rho\left\{(C_w+C_k)C_{\sigma}
\right\}^{L}C_a C_w,
\]
\[
I_k:=8\max\left\{C_{\beta}\mathrm{diag}(D\times D), 2 \right\}\left( \sum_{\ell=0}^{L}d_{\ell}d_{\ell+1} \right)
\rho\left\{(C_w+C_k)C_{\sigma}
\right\}^{L}C_a |D| C_{\alpha}.
\]
By employing (\ref{log-covering}), (\ref{estimation H_w}), and (\ref{estimation H_k}), we calculate
\[
\int_{\alpha}^{\infty}\left(\log N(\varepsilon, \mathscr{F}_{\no}, \left\| \cdot \right\|_{\mathrm{S}} )  \right)^{\frac{1}{2}} d\varepsilon
\leq
\left( \sum_{\ell=0}^{L}d_{\ell}d_{\ell+1} \right)^{\frac{1}{2}}
\underbracket{
\int_{\alpha}^{\infty}(H_w(\varepsilon)+H_k(\varepsilon))^{\frac{1}{2}}d\varepsilon
}
_{=:(\ast)}
\]
\[
\begin{split}
& (\ast) \leq
\int_{\alpha}^{\infty}
H_w(\varepsilon)^{\frac{1}{2}} d\varepsilon
+
\int_{\alpha}^{\infty}
H_w(\varepsilon)^{\frac{1}{2}} d\varepsilon
\\
&
\leq
\int_{\alpha}^{2\left( \sum_{\ell=0}^{L}d_{\ell}d_{\ell+1} \right)
\rho\left\{(C_w+C_k)C_{\sigma}
\right\}^{L}C_a C_w}\left( \frac{I_w}{\varepsilon} \right)^{\frac{\hat{d}+1}{2}}d\varepsilon
\\
& \hspace{3cm}
+\int_{\alpha}^{2\left( \sum_{\ell=0}^{L}d_{\ell}d_{\ell+1} \right)
\rho\left\{(C_w+C_k)C_{\sigma}
\right\}^{L}C_a |D| C_{\alpha}}\left( \frac{I_k}{\varepsilon} \right)^{\frac{\hat{d}+1}{2}}d\varepsilon
\\
&
\leq
\left(I_{w}^{\frac{\hat{d}+1}{2}}
+
I_{k}^{\frac{\hat{d}+1}{2}}
\right)
\frac{2}{\hat{d}-1}\alpha^{-\frac{\hat{d}-1}{2}} 
\\
&
\leq
\frac{4}{\hat{d}-1}
\left(
\max\left[2C_w, 8|D| C_{\alpha}\max\left\{C_{\beta} \mathrm{diag}(D\times D), 2\right\} \right]\left( \sum_{\ell=0}^{L}d_{\ell}d_{\ell+1} \right)
\rho\left\{(C_w+C_k)C_{\sigma}
\right\}^{L}C_a
\right)^{\frac{\hat{d}+1}{2}}
\alpha^{-\frac{\hat{d}-1}{2}},
\end{split}
\]
that is, we have by (i) of Assumption \ref{Assumption2}.
\[
\begin{split}
&
\int_{\alpha}^{\infty}\left(\log N(\varepsilon, \mathscr{F}_{\no}, \left\| \cdot \right\|_{\mathrm{S}} )  \right)^{\frac{1}{2}} d\varepsilon
\\
&
\leq
\underbracket{
\frac{4}{\hat{d}-1}
\left(
\max\left[2C_w, 8|D| C_{\alpha}\max\left\{C_{\beta} \mathrm{diag}(D\times D), 2\right\} \right]
(LC_{d}^{2})^{\frac{\hat{d}+2}{\hat{d}+1}}
\rho\left\{(C_w+C_k)C_{\sigma}
\right\}^{L}C_a
\right)^{\frac{\hat{d}+1}{2}}
}
_{=:K} \ 
\alpha^{-\frac{\hat{d}-1}{2}}
\end{split}
\]
which implies that we conclude that with (\ref{Rademacher Complexity-covering number})
\[
\begin{split}
&
\mathfrak{R}^n_{\mathrm{S}}(\mathscr{F}_{\no}) \leq 4 \inf_{\alpha \geq 0} \left\{ \alpha
+
\underbracket{
\frac{3K}{\sqrt{n}}
}_{=:K^{\prime}}
\alpha^{-\frac{\hat{d}-1}{2}}\right\}
\\
&
=
4\left( \left(\frac{(\hat{d}-1)K^{\prime}}{2} \right)^{\frac{2}{\hat{d}+1}} 
+
K^{\prime}
\left(\frac{(\hat{d}-1)K^{\prime}}{2} \right)^{\frac{2}{\hat{d}+1} \left(-\frac{\hat{d}-1}{2}\right)}
\right)
=\gamma L^{\frac{\hat{d}+2}{\hat{d}+1}} \left\{(C_w+C_k)C_{\sigma}
\right\}^{L}\left(\frac{1}{n}\right)^{\frac{1}{\hat{d}+1}}
\end{split}
\]
where $\gamma$ is the positive constant defined by
\begin{equation}
\begin{split}
\gamma
&
:=4
\left\{
\left(\frac{\hat{d}-1}{2} \right)^{\frac{2}{\hat{d}+1}}
+
\left(\frac{\hat{d}-1}{2} \right)^{-\frac{\hat{d}-1}{\hat{d}+1}}
\right\}
\left(\frac{12}{\hat{d}-1} \right)^{\frac{2}{\hat{d}+1}}
\\
& \hspace{3cm}
\times
\max\left[2C_w, 8|D| C_{\alpha}\max\left\{C_{\beta} \mathrm{diag}(D\times D), 2\right\} \right]
C_{d}^{\frac{2(\hat{d}+2)}{\hat{d}+1}}
\rho C_a.
\label{definition of gamma}
\end{split}
\end{equation}
\end{proof}

\subsection{Proof of Theorem~\ref{Rademacher for multiplicative NO}}
\label{main-theorem-2}
\begin{proof}
The following argument is almost same with the proof of Theorem \ref{Rademacher for standard NO}.

By employing \citet[Lemma A.5]{bartlett2017spectrally} or \citet[Theorem 1.1]{Kakade}, we have 
\begin{equation}
\mathfrak{R}^n_{\mathrm{S}}(\mathscr{F}_{\widetilde{\no}}) \leq \inf_{\alpha \geq 0} \left\{ 4\alpha + \frac{12}{\sqrt{n}}\int_{\alpha}^{\infty}\left(\log N(\varepsilon, \mathscr{F}_{\widetilde{\no}}, \left\| \cdot \right\|_{\mathrm{S}} )  \right)^{\frac{1}{2}} d\varepsilon \right\}
\label{Rademacher Complexity-covering number tilde}
\end{equation}
In the following, we will estimate the covering number $N(\varepsilon, \mathscr{F}_{\widetilde{\no}}, \left\| \cdot \right\|_{\mathrm{S}} )$.
\par
Let $f=\ell(\Gcal(\cdot), \cdot)$ and $f^{\prime}=\ell(\Gcal^{\prime}(\cdot), \cdot)$ where $\Gcal, \Gcal^{\prime} \in \widetilde{\no}$.
By (i) of Assumption \ref{Assumption1}, we calculate 
\begin{equation}
\left|f(a,u)-f^{\prime}(a,u)\right| 
= \left|\ell(\Gcal(a),u)- \ell(\Gcal^{\prime}(a),u)\right|
\leq \rho \left\| \Gcal(a) -\Gcal^{\prime}(a) \right\|_{\Lp(D;\mathbb{R}^{d_u})}. \label{covering number-estimation 1 tilde}
\end{equation}
Denoting by
\[
\Gcal_{\ell}:=(\boldsymbol{\mathrm{Z}}_{\ell}\bid + \boldsymbol{\mathrm{X}}_{\ell} f_{\ell}) \circ (\boldsymbol{\mathrm{Z}}_{\ell}\bid + \boldsymbol{\mathrm{X}}_{\ell} \sigma \circ \bk_{\ell}) \circ  \cdots \circ (\boldsymbol{\mathrm{Z}}_{0}\bid + \boldsymbol{\mathrm{X}}_{0} f_{0}) \circ (\boldsymbol{\mathrm{Z}}_{0}\bid + \boldsymbol{\mathrm{X}}_{0} \sigma \circ \bk_{0}),
\]
\[
\Gcal^{\prime}_{\ell}:=(\boldsymbol{\mathrm{Z}}_{\ell}\bid + \boldsymbol{\mathrm{X}}_{\ell} f^{\prime}_{\ell}) \circ (Z_{\ell}\bid + \boldsymbol{\mathrm{X}}_{\ell} \sigma \circ \bk^{\prime}_{\ell}) \circ  \cdots \circ (\boldsymbol{\mathrm{Z}}_{0}\bid + \boldsymbol{\mathrm{X}}_{0} f^{\prime}_{0}) \circ (Z_{0}\bid + \boldsymbol{\mathrm{X}}_{0} \sigma \circ \bk^{\prime}_{0}),
\]
the quantity $\left\| \Gcal(a) -\Gcal^{\prime}(a) \right\|_{\Lp(D;\mathbb{R}^{d_u})}$ is evaluated by
\begin{equation}
\begin{split}
&
\left\| \Gcal(a) -\Gcal^{\prime}(a) \right\|_{\Lp(D;\mathbb{R}^{d_u})} = \left\| \Gcal_{L}(a) -\Gcal^{\prime}_{L}(a) \right\|_{\Lp(D;\mathbb{R}^{d_{L+1}})}
\\
&
= \Bigl\|(\boldsymbol{\mathrm{Z}}_{L}\bid + \boldsymbol{\mathrm{X}}_{L} f_{L})\circ (\boldsymbol{\mathrm{Z}}_{L}\bid + \boldsymbol{\mathrm{X}}_{L} \sigma \circ \bk_{L}) (\Gcal_{L-1}(a)) - (\boldsymbol{\mathrm{Z}}_{L}\bid + \boldsymbol{\mathrm{X}}_{L} f_{L})\circ (\boldsymbol{\mathrm{Z}}_{L}\bid + \boldsymbol{\mathrm{X}}_{L} \sigma \circ \bk_{L}) (\Gcal_{L-1}^{\prime}(a))
\\
& \hspace{0cm}
+(\boldsymbol{\mathrm{Z}}_{L}\bid + \boldsymbol{\mathrm{X}}_{L} f_{L}) \circ (\boldsymbol{\mathrm{Z}}_{L}\bid + \boldsymbol{\mathrm{X}}_{L} \sigma \circ \bk_{L}) (\Gcal_{L-1}^{\prime}(a)) 
- (\boldsymbol{\mathrm{Z}}_{L}\bid + \boldsymbol{\mathrm{X}}_{L} f^{\prime}_{L}) \circ (\boldsymbol{\mathrm{Z}}_{L}\bid + \boldsymbol{\mathrm{X}}_{L} \sigma \circ \bk^{\prime}_{L}) (\Gcal_{L-1}^{\prime}(a)) \Bigr\|%_{\Lp(D;\mathbb{R}^{d_{L+1}})}
\\
&
\underset{Assumption\ref{Assumption3}(i)(ii)(iv)}{}
\\
&
\leq 
(\boldsymbol{\mathrm{Z}}_{L}+\boldsymbol{\mathrm{X}}_{L}C_{w}^{M+1}C_{\sigma}^{M})(\boldsymbol{\mathrm{Z}}_{L}+\boldsymbol{\mathrm{X}}_{L}C_k C_{\sigma} )
 \left\|\Gcal_{L-1}(a)-\Gcal^{\prime}_{L-1}(a)\right\|_{\Lp(D;\mathbb{R}^{d_L})}
\\
& 
+ \left( (\boldsymbol{\mathrm{Z}}_{L} + \boldsymbol{\mathrm{X}}_{L} C_{\sigma}C_{k})\boldsymbol{\mathrm{X}}_{L} \left\| f_{L}-f^{\prime}_{L} \right\|_{\mathrm{op}}
+ (\boldsymbol{\mathrm{Z}}_{L}+\boldsymbol{\mathrm{X}}_{L}C_{w}^{M+1}C_{\sigma}^{M})
\boldsymbol{\mathrm{X}}_{L}
\left\| \bk_{L}-\bk^{\prime}_{L} \right\|_{\mathrm{op}} \right) 
\left\|\Gcal^{\prime}_{L-1}(a)\right\|_{\Lp(D;\mathbb{R}^{d_L})}.
\end{split}
\label{covering number-estimation 2 tilde}
\end{equation}
Here, we have employed the following estimation:
\begin{equation}
\left\| f_{L} \right\|_{\mathrm{op}}
= 
\left\| W_{L, M} \circ \sigma(W_{L, M-1}) \circ \cdots \circ \sigma (W_{L, 1}) \circ \sigma  (W_{L, 0}) \right\|_{\mathrm{op}}
\leq
C_{w}^{M+1}C_{\sigma}^{M},
\label{covering number-estimation 3 tilde}
\end{equation}
By the same argument in (\ref{covering number-estimation 2 tilde})--(\ref{covering number-estimation 3 tilde}), we evaluate
\begin{equation}
\begin{split}
& \left\|\Gcal_{L-1}(a)-\Gcal^{\prime}_{L-1}(a)\right\|_{\Lp(D;\mathbb{R}^{d_L})}
\\
&
\leq 
(\boldsymbol{\mathrm{Z}}_{L-1}+\boldsymbol{\mathrm{X}}_{L-1}C_{w}^{M+1}C_{\sigma}^{M})(\boldsymbol{\mathrm{Z}}_{L-1}+\boldsymbol{\mathrm{X}}_{L-1}C_k C_{\sigma} )
 \left\|\Gcal_{L-2}(a)-\Gcal^{\prime}_{L-2}(a)\right\|_{\Lp(D;\mathbb{R}^{d_{L-1}})}
\\
& 
+ \Biggl( (\boldsymbol{\mathrm{Z}}_{L-1} + \boldsymbol{\mathrm{X}}_{L-1} C_{\sigma}C_{k})\boldsymbol{\mathrm{X}}_{L-1} \left\| f_{L-1}-f^{\prime}_{L-1} \right\|_{\mathrm{op}}
\\
&
\hspace{2cm}
+ (\boldsymbol{\mathrm{Z}}_{L-1}+\boldsymbol{\mathrm{X}}_{L-1}C_{w}^{M+1}C_{\sigma}^{M})
\boldsymbol{\mathrm{X}}_{L-1}
\left\| \bk_{L-1}-\bk^{\prime}_{L-1} \right\|_{\mathrm{op}} \Biggr) 
\left\|\Gcal^{\prime}_{L-2}(a)\right\|_{\Lp(D;\mathbb{R}^{d_{L-1}})}.
\end{split}
\label{covering number-estimation 5 tilde}
\end{equation}
By repeatedly evaluating $\left\|\Gcal_{\ell}(a)-\Gcal^{\prime}_{\ell}(a)\right\|_{\Lp(D;\mathbb{R}^{d_{\ell+1}})}$ ($\ell=L,L-1,...,0$), we obtain 
\begin{equation}
\begin{split}
& \left\|\Gcal(a)-\Gcal^{\prime}(a)\right\|_{\Lp(D;\mathbb{R}^{d_u})}
\\
&
\leq 
C_{a}\prod_{\ell=0}^{L}(\boldsymbol{\mathrm{Z}}_{\ell}+\boldsymbol{\mathrm{X}}_{\ell}C_{w}^{M+1}C_{\sigma}^{M})(\boldsymbol{\mathrm{Z}}_{\ell}+\boldsymbol{\mathrm{X}}_{\ell}C_k C_{\sigma} )
\\
& 
\times
\sum_{\ell=0}^{L}
\left(
\frac{\boldsymbol{\mathrm{X}}_{\ell}}{\boldsymbol{\mathrm{Z}}_{\ell}+\boldsymbol{\mathrm{X}}_{\ell}C_{w}^{M+1}C_{\sigma}^{M}}\left\| f_{L-1}-f^{\prime}_{L-1} \right\|_{\mathrm{op}}
+
\frac{\boldsymbol{\mathrm{X}}_{\ell}}{\boldsymbol{\mathrm{Z}}_{\ell}+\boldsymbol{\mathrm{X}}_{\ell}C_{k}C_{\sigma}}
\left\| \bk_{\ell}-\bk^{\prime}_{\ell} \right\|_{\mathrm{op}} \right).
\\
&
\leq 
C_{a}
\underbracket
{\prod_{\ell=0}^{L}(\boldsymbol{\mathrm{Z}}_{\ell}+\boldsymbol{\mathrm{X}}_{\ell}C_{w}^{M+1}C_{\sigma}^{M})(\boldsymbol{\mathrm{Z}}_{\ell}+\boldsymbol{\mathrm{X}}_{\ell}C_k C_{\sigma} )}_{=:T_{L}}
\\
& 
\times
\sum_{\ell=0}^{L}
\left(
\underbracket
{
\frac{\boldsymbol{\mathrm{X}}_{\ell}C_{w}^{M+1}C_{\sigma}^{M}}{\boldsymbol{\mathrm{Z}}_{\ell}+\boldsymbol{\mathrm{X}}_{\ell}C_{w}^{M+1}C_{\sigma}^{M}
}
}_{=:C_{w, \ell}}
\sum_{m=0}^{M}\left\|W_{\ell,m}-W_{\ell,m}^{\prime} \right\|_{\mathrm{op}}
+
\underbracket
{
\frac{\boldsymbol{\mathrm{X}}_{\ell}}{\boldsymbol{\mathrm{Z}}_{\ell}+\boldsymbol{\mathrm{X}}_{\ell}C_{k}C_{\sigma}}
}_{=:C_{k, \ell}}
\left\| \bk_{\ell}-\bk^{\prime}_{\ell} \right\|_{\mathrm{op}} \right).
\end{split}
\label{covering number-estimation 6 tilde}
\end{equation}
%%%%%%%%%%%%%%
%%%%%%%%%%%%%%%%%%%%%%%%%%%%%%%%%%%
Combining (\ref{covering number-estimation 1 tilde}), (\ref{covering number-estimation 6 tilde}), (\ref{covering number-estimation 8}), and (\ref{covering number-estimation 9}), the norm $\left\|f -f^{\prime} \right\|_{\mathrm{S}}$ is estimated by
\[
\begin{split}
&\left\|f -f^{\prime}\right\|_{\mathrm{S}}
=
\left(\frac{1}{n}\sum_{i=1}^{n}|f(a_i,u_i) -f^{\prime}(a_i,u_i)|^2 \right)^{\frac{1}{2}}
\\
&
\leq
\sum_{\ell=0}^{L}
\Biggl[
\sum_{m=0}^{M}
\sum_{j=1}^{d^{w}_{\ell, m+1}}
\sum_{i=1}^{d^{w}_{\ell, m}}
\rho C_a T_{L}C_{w, \ell}C_{w}
\left|\frac{w_{\ell,m, ij}}{C_w}-\frac{w^{\prime}_{\ell,m, ij}}{C_w}\right|
\\
&
\hspace{1cm}
+
\sum_{j=1}^{d^{k}_{\ell}}
\sum_{i=1}^{d^{k}_{\ell}}
\rho C_a T_{L} C_{k, \ell} |D| C_{\alpha}
\left\|\frac{k_{\ell, ij}}{C_{\alpha}}-\frac{k^{\prime}_{\ell, ij}}{C_{\alpha}}\right\|_{\boldsymbol{\mathrm{L}}^{\infty}(D\times D;\mathbb{R})}
\Biggr],
\end{split}
\]
which implies that we have
\begin{equation}
\begin{split}
&N(\varepsilon, \mathscr{F}_{\widetilde{\no}}, \left\| \cdot \right\|_{\mathrm{S}} )
\\
&
\leq 
\prod_{\ell=0}^{L}\prod_{j=1}^{d_{\ell+1}}
\prod_{m=0}^{M}
\prod_{j=1}^{d^{w}_{\ell,m+1}}
\prod_{i=1}^{d^{w}_{\ell,m}}
\prod_{j'=1}^{d^{k}_{\ell+1}}
\prod_{i'=1}^{d^{k}_{\ell}}
\\
&
\times
N\left(\frac{\varepsilon}
{2\left( 
\sum_{\ell=0}^{L}\sum_{m=0}^{M}d^{w}_{\ell,m+1}d^{w}_{\ell,m}C_{w, \ell} \right)
\rho  C_a T_{L}C_{w}}, [-1,1], |\cdot|
\right)
\\
&
\times
N\left(\frac{\varepsilon}
{2\left( 
\sum_{\ell=0}^{L}d^{k}_{\ell+1}d^{k}_{\ell}C_{k, \ell} \right)
\rho C_a T_{L}|D|C_{\alpha}}, \left\|\cdot\right\|_{\boldsymbol{\mathrm{L}}^{\infty}(D\times D; \mathbb{R})}
\right),
\end{split}
\label{covering number-estimation 10 tilde}
\end{equation}
By taking logarithmic functions in (\ref{covering number-estimation 10 tilde}), we have 
\begin{equation}
\begin{split}
&\log N(\varepsilon, \mathscr{F}_{\widetilde{\no}}, \left\| \cdot \right\|_{\mathrm{S}} )
\\
&
\leq 
\left( \sum_{\ell=0}^{L}\sum_{m=0}^{M}d^{w}_{\ell,m+1}d^{w}_{\ell,m}d^{k}_{\ell+1}d^{k}_{\ell} \right)
\\
&
\times
\Biggl\{
\underbracket{
\log
N\left(\frac{\varepsilon}
{2\left( 
\sum_{\ell=0}^{L}\sum_{m=0}^{M}d^{w}_{\ell,m+1}d^{w}_{\ell,m}C_{w, \ell} \right)
\rho  C_a T_{L}C_{w}}, [-1,1], |\cdot|
\right)
}_{=:\widetilde{H}_w(\varepsilon)}
\\
&
+
\underbracket{
\log
N\left(\frac{\varepsilon}
{2\left( 
\sum_{\ell=0}^{L}d^{k}_{\ell+1}d^{k}_{\ell}C_{k, \ell} \right)
\rho C_a T_{L}|D|C_{\alpha}}, \left\|\cdot\right\|_{\boldsymbol{\mathrm{L}}^{\infty}(D\times D; \mathbb{R})}
\right)
}_{=:\widetilde{H}_k(\varepsilon)}
\Biggr\}.
\end{split}
\label{log-covering tilde}
\end{equation}
By same arguments in (\ref{estimation H_w}) and (\ref{estimation H_k}) (using \citet[Example 5.3]{wainwright} and \citet[Lemma 4.2]{gottlieb2016adaptive}), we can estimate $\widetilde{H}_w$ and $\widetilde{H}_k$, respectively as follows:
\begin{equation}
\begin{split}
\widetilde{H}_w(\varepsilon)
& \leq
\left( \frac{\widetilde{I}_{w}}{\varepsilon} \right)^{\hat{d}+1}, 
\\
& \mbox{for} \
0< \varepsilon <
2\left( 
\sum_{\ell=0}^{L}\sum_{m=0}^{M}d^{w}_{\ell,m+1}d^{w}_{\ell,m}C_{w, \ell} \right)
\rho  C_a T_{L}C_{w}, 
\end{split}
\label{estimation H_w tilde}
\end{equation}
%%%%%%%%%%%%%%%%%%%%%%%%%%%%%%%%%%%%%%%%%
\begin{equation}
\begin{split}
\widetilde{H}_k(\varepsilon)
\leq &
\left(\frac{\widetilde{I}_{k}}{\varepsilon} \right)^{\hat{d}+1},
\\
& \hspace{1cm}
\ \mbox{for} \ 0< \varepsilon < 
2\left( 
\sum_{\ell=0}^{L}d^{k}_{\ell+1}d^{k}_{\ell}C_{k, \ell} \right)
\rho C_a T_{L}|D|C_{\alpha}, 
\end{split}
\label{estimation H_k tilde}
\end{equation}
where we denoted by
\[
\widetilde{I}_w:=2\left( 
\sum_{\ell=0}^{L}\sum_{m=0}^{M}d^{w}_{\ell,m+1}d^{w}_{\ell,m}C_{w, \ell} \right)
\rho  C_a T_{L}C_{w},
\]
\[
\widetilde{I}_k:=8\max\left\{C_{\beta}\mathrm{diag}(D\times D), 2 \right\}
\left( 
\sum_{\ell=0}^{L}d^{k}_{\ell+1}d^{k}_{\ell}C_{k, \ell} \right)
\rho C_a T_{L}|D|C_{\alpha}.
\]
By employing (\ref{log-covering tilde}), (\ref{estimation H_w tilde}), and (\ref{estimation H_k tilde}), we calculate
\[
\int_{\alpha}^{\infty}\left(\log N(\varepsilon, \mathscr{F}_{\widetilde{\no}}, \left\| \cdot \right\|_{\mathrm{S}} ) \right)^{\frac{1}{2}} d\varepsilon
\leq
\left( \sum_{\ell=0}^{L}\sum_{m=0}^{M}d^{w}_{\ell,m+1}d^{w}_{\ell,m}d^{k}_{\ell+1}d^{k}_{\ell} \right)^{\frac{1}{2}}
\underbracket{
\int_{\alpha}^{\infty}(\widetilde{H}_w(\varepsilon)+\widetilde{H}_k(\varepsilon))^{\frac{1}{2}}d\varepsilon
}
_{=:(\ast)}
\]
\[
\begin{split}
& (\ast) \leq
\int_{\alpha}^{\infty}
\widetilde{H}_w(\varepsilon)^{\frac{1}{2}} d\varepsilon
+
\int_{\alpha}^{\infty}
\widetilde{H}_w(\varepsilon)^{\frac{1}{2}} d\varepsilon
\\
&
\leq
\left(\widetilde{I}_{w}^{\frac{\hat{d}+1}{2}}
+
\widetilde{I}_{k}^{\frac{\hat{d}+1}{2}}
\right)
\frac{2}{\hat{d}-1}\alpha^{-\frac{\hat{d}-1}{2}} 
\\
&
\leq
\frac{4}{\hat{d}-1}
\Biggl(
\max\left[2C_{w}, 8|D|C_{\alpha}\max\left\{C_{\beta} \mathrm{diag}(D\times D), 2 \right\} \right]
\rho M C_{d}^{2} C_a T_{L}
\Biggr)^{\frac{\hat{d}+1}{2}}
\\
&
\times
\left[ \left( \sum_{\ell=0}^{L}C_{w, \ell} \right)^{\frac{\hat{d}+1}{2}} 
+ \left( \sum_{\ell=0}^{L}C_{k, \ell} \right)^{\frac{\hat{d}+1}{2}}
\right]
\alpha^{-\frac{\hat{d}-1}{2}},
\end{split}
\]
that is, we have %by (iii) of Assumption \ref{Assumption2},
\[
\begin{split}
&
\int_{\alpha}^{\infty}\left(\log N(\varepsilon, \mathscr{F}_{\widetilde{\no}}, \left\| \cdot \right\|_{\mathrm{S}} )  \right)^{\frac{1}{2}} d\varepsilon
\leq
\widetilde{K} 
\alpha^{-\frac{\hat{d}-1}{2}}
\end{split}
\]
where
\[
\begin{split}
\widetilde{K}:= &
\frac{4C_{d}^2M^{1/2}L^{1/2}}{\hat{d}-1}
\Biggl(
\max\left[2C_{w}, 8|D|C_{\alpha}\max\left\{C_{\beta} \mathrm{diag}(D\times D), 2\right\} \right]
\rho M C_{d}^{2} C_a T_{L}
\left(
\sum_{\ell=0}^{L}C_{w, \ell}+C_{k, \ell}
\right)
\Biggr)^{\frac{\hat{d}+1}{2}}
\alpha^{-\frac{\hat{d}-1}{2}}
\end{split}
\]
which implies that we conclude that with (\ref{Rademacher Complexity-covering number tilde})
\[
\begin{split}
&
\mathfrak{R}^n_{\mathrm{S}}(\mathscr{F}_{\widetilde{\no}}) \leq 4 \inf_{\alpha \geq 0} \left\{ \alpha
+
\underbracket{
\frac{3\widetilde{K}}{\sqrt{n}}
}_{=:\widetilde{K}^{\prime}}
\alpha^{-\frac{\hat{d}-1}{2}}\right\}
\\
&
=
4\left( \left(\frac{(\hat{d}-1)\widetilde{K}^{\prime}}{2} \right)^{\frac{2}{\hat{d}+1}} 
+
\widetilde{K}^{\prime}
\left(\frac{(\hat{d}-1)\widetilde{K}^{\prime}}{2} \right)^{\frac{2}{\hat{d}+1} \left(-\frac{\hat{d}-1}{2}\right)}
\right)
\\
&
=\widetilde{\gamma} L^{\frac{1}{\hat{d}+1}} 
\left( \sum_{\ell=0}^{L}
\frac{\boldsymbol{\mathrm{X}}_{\ell}C_{w}^{M+1}C_{\sigma}^{M}}{\boldsymbol{\mathrm{Z}}_{\ell}+\boldsymbol{\mathrm{X}}_{\ell}C_{w}^{M+1}C_{\sigma}^{M}}
+\frac{\boldsymbol{\mathrm{X}}_{\ell}}{
\boldsymbol{\mathrm{Z}}_{\ell}+\boldsymbol{\mathrm{X}}_{\ell}C_{k}C_{\sigma}
}
\right)
\left[\prod_{\ell=0}^{L}(\boldsymbol{\mathrm{Z}}_{\ell}+\boldsymbol{\mathrm{X}}_{\ell}C_{w}^{M+1}C_{\sigma}^{M})(\boldsymbol{\mathrm{Z}}_{\ell}+\boldsymbol{\mathrm{X}}_{\ell}C_k C_{\sigma})
\right]
\left(\frac{1}{n}\right)^{\frac{1}{\hat{d}+1}}
\end{split}
\]
where $\widetilde{\gamma}$ is the positive constant defined by
\begin{equation}
\begin{split}
\widetilde{\gamma}
&
:=4
\left\{
\left(\frac{\hat{d}-1}{2} \right)^{\frac{2}{\hat{d}+1}}
+
\left(\frac{\hat{d}-1}{2} \right)^{-\frac{\hat{d}-1}{\hat{d}+1}}
\right\}
\left(\frac{12}{\hat{d}-1} \right)^{\frac{2}{\hat{d}+1}}
\\
& \hspace{3cm}
\times
\max\left[2C_{w}, 16|D|C_{\alpha}\max\left\{C_{\beta} \mathrm{diag}(D\times D), 2\right\} \right]
(C_{d}^{4}M)^{\frac{\hat{d}+2}{\hat{d}+1}}
\rho C_a 
\label{definition of gamma tilde}
\end{split}
\end{equation}
\end{proof}

%%%%%%%%%%%%%%%%%%%%%%%%%%%%%%%%%%%%
\subsection{Proof of Corollary~\ref{GEB for additive NO}}
\label{main-coro-1}
\begin{proof}
 By using Assumption \ref{Assumption2},
we estimate for $\Gcal \in \no$ and $a \in \mathrm{supp}(\mu_a)$,
\[
\begin{split}
\left\|\Gcal(a) \right\|_{ \Lp(D; \mathbb{R}^{d_u})}&
=
\left\|(W_{L}+\bk_{L})\circ \sigma (W_{L-1}+\bk_{L-1})\circ \cdots \circ \sigma (W_{0}+\bk_{0})(a)
\right\|_{ \Lp(D; \mathbb{R}^{d_u})}
\\
&
\leq 
(C_w + C_k)^{L+1}C_{\sigma}^{L}C_{a}.
\end{split}
\]
Then, by applying Lemma \ref{generalization error bound for general setting} as $R=(C_w + C_k)^{L+1}C_{\sigma}^{L}C_{a}$, and combining with Theorem \ref{Rademacher for standard NO}, we conclude that the inequality (\ref{Generalization error bound for additive NO-main}).
\end{proof}

%%%%%%%%%%%%%%%%%%%%%%%%%%%%%%%%%%%%
\subsection{Proof of Corollary~\ref{GEB for multiplicative NO}}
\label{main-coro-2}
\begin{proof}
By using Assumption \ref{Assumption3},
we estimate for $\Gcal \in \widetilde{\no}$ and $a \in \mathrm{supp}(\mu_a)$,
\[
\begin{split}
&
\left\|\Gcal(a) \right\|_{ \Lp(D; \mathbb{R}^{d_u})}
\\
&
=
\left\|(\boldsymbol{\mathrm{Z}}_{\ell}\bid + \boldsymbol{\mathrm{X}}_{\ell} f_{\ell}) \circ (\boldsymbol{\mathrm{Z}}_{\ell}\bid + \boldsymbol{\mathrm{X}}_{\ell} \sigma \circ \bk_{\ell}) \circ  \cdots \circ (\boldsymbol{\mathrm{Z}}_{0}\bid + \boldsymbol{\mathrm{X}}_{0} f_{0}) \circ (\boldsymbol{\mathrm{Z}}_{0}\bid + \boldsymbol{\mathrm{X}}_{0} \sigma \circ \bk_{0})(a)
\right\|_{ \Lp(D; \mathbb{R}^{d_u})}
\\
&
\leq 
\left[\prod_{\ell=0}^{L}(\boldsymbol{\mathrm{Z}}_{L}+\boldsymbol{\mathrm{X}}_{L}C_{w}^{M+1}C_{\sigma}^{M})(\boldsymbol{\mathrm{Z}}_{L}+\boldsymbol{\mathrm{X}}_{L}C_k C_{\sigma})
\right]C_a.
\end{split}
\]
Then, by applying Lemma \ref{generalization error bound for general setting} as $R=\left[\prod_{\ell=0}^{L}(\boldsymbol{\mathrm{Z}}_{\ell}+\boldsymbol{\mathrm{X}}_{\ell}C_{w}^{M+1}C_{\sigma}^{M})(\boldsymbol{\mathrm{Z}}_{\ell}+\boldsymbol{\mathrm{X}}_{\ell}C_k C_{\sigma})
\right]C_{a}$ and combining with Theorem \ref{Rademacher for multiplicative NO}, we conclude that the inequality (\ref{multiplicativeGEB}).
\end{proof}

%%%%%%%%%%%%%%%%%%%%%%%%%%%%%%%%%%%%%%%%%%%%%%%%%%%%%%%%%%%%%%%%%
% \newpage
%\section{Proof of Corollary \ref{GEB for additive NO} and \ref{GEB for multiplicative NO}}
\section{Remark for Sections \ref{sec:Rademacher Complexity1}, \ref{sec:Rademacher Complexity2}, \ref{sec:Rademacher Complexity3}.}
\label{Appendix 4}

\begin{remark}\label{remark:finite_basis}
\emph{In the implementation of NO}, $\bk_{\ell}$ is projected into a \emph{finite-rank operator} by the chosen basis. 
For clarity's sake, let assume $\bk_{\ell}: \Lp(D) \to \Lp(D)$, i.e., domain and range are the same space, and $\Lp(D) = \Lp(D; \mathbb{R})$. Let $k_{\ell} \in \Lp(D \times D)$ be the kernel of $\bk_{\ell}$, and let $\{\phi_j\}_{j\in \mathbb{N}}$ be an orthonormal basis in $\Lp(D)$\footnote{For FNO, the basis are the Fourier basis.}, so
$\{\phi_i \otimes \phi_j\}_{i,j\in \mathbb{N}}$ is an orthonormal basis of $\Lp(D \times D)$, and thus $k_{\ell}(x,y) = \sum_{j,k\ge 1} k_{\ell,jk}\, \phi_j(x) \otimes \phi_k(y)$, where $k_{\ell, jk}\in \mathbb{R}$, $k_{\ell, jk}= \langle k_{\ell}, \phi_j(x) \otimes \phi_k \rangle_{\Lp(D\times D)} = \langle \phi_j, \mathcal{K} \phi_k \rangle_{\Lp(D)}$. 
By choosing $N$-modes (first $N$ basis), the kernel $k_{\ell}$ is approximated as $k_{\ell}^{(N)}(x,y) = \sum_{j,k \le N} k_{\ell, jk}\, \phi_j(x) \otimes \phi_k(y),$ and so
$$
\|k_{\ell}^{(N)}\|_{\Lp(D\times D)}^2 = \sum_{j,k =1}^N |k_{\ell, jk}|^2 \le \sum_{j,k = 1}^\infty |k_{\ell, jk}|^2  = \|k\|_{\Lp(D\times D)}^2.
$$
Hence, the implementable\footnote{in a computer} kernel $k_{\ell}^{(N)}$ satisfies (ii) \cref{Assumption2}, and the Rademacher Complexity for (\ref{RC for standard NO}) is also an upper-bound.
\end{remark}

\begin{remark}[Summary of generalization error bounds]\label{Comparision-bounds}
%Summing up our bounds as follows:
\[
\begin{split}
&(\mathrm{Bound \ for \ NO}  )
\\
&
\lesssim
L^{\frac{\hat{d}+2}{\hat{d}+1}}
\{ (C_w + C_k)C_{\sigma} \}^{L} \left(\frac{1}{n} \right)^{\frac{1}{\hat{d}+1}} + \left\{(C_w + C_k)C_{\sigma}\right\}^{L} \sqrt{\frac{2 \delta}{n}}.
\\
&
\vspace{1mm}
\\
&(\mathrm{Bound \ for \ } \MNO )
\\
&
\lesssim
L^{\frac{\hat{d}+2}{\hat{d}+1}} 
(C_{w}^{M+1}C_{\sigma}^{M+1}C_k)^{L}
\left(\frac{1}{n}\right)^{\frac{1}{\hat{d}+1}} 
+ (C_{w}^{M+1}C_{\sigma}^{M+1}C_k)^{L} \sqrt{\frac{2 \delta}{n}}.
\\
&
\vspace{1mm}
\\
&(\mathrm{Bound \ for \ } (\NONeXt)v1 )
\\
&
\lesssim
L^{\frac{\hat{d}+2}{\hat{d}+1}} 
\left\{(1+C_{w}^{M+1}C_{\sigma}^{M})(1+C_{\sigma}C_k)\right\}^{L}
\left(\frac{1}{n}\right)^{\frac{1}{\hat{d}+1}} 
+ 
\left\{(1+C_{w}^{M+1}C_{\sigma}^{M})(1+C_{\sigma}C_k)\right\}^{L}
\sqrt{\frac{2 \delta}{n}}.
\\
&
\vspace{1mm}
\\
&(\mathrm{Bound \ for \ } (\NONeXt)v2 )
\\
&
\lesssim L^{\frac{1}{\hat{d}+1}} 
\left( \sum_{\ell=0}^{L}
\frac{\boldsymbol{\mathrm{X}}_{\ell}C_{w}^{M+1}C_{\sigma}^{M}}{1+\boldsymbol{\mathrm{X}}_{\ell}C_{w}^{M+1}C_{\sigma}^{M}}
+\frac{\boldsymbol{\mathrm{X}}_{\ell}}{
1+\boldsymbol{\mathrm{X}}_{\ell}C_{k}C_{\sigma}
}
\right)
\left[\prod_{\ell=0}^{L}(1+\boldsymbol{\mathrm{X}}_{\ell}C_{w}^{M+1}C_{\sigma}^{M})(1+\boldsymbol{\mathrm{X}}_{\ell}C_k C_{\sigma})
\right]
\left(\frac{1}{n}\right)^{\frac{1}{\hat{d}+1}} 
\\
&
+ \left[\prod_{\ell=0}^{L}(\boldsymbol{\mathrm{Z}}_{\ell}+\boldsymbol{\mathrm{X}}_{\ell}C_{w}^{M+1}C_{\sigma}^{M})(\boldsymbol{\mathrm{Z}}_{\ell}+\boldsymbol{\mathrm{X}}_{\ell}C_k C_{\sigma})
\right]
\sqrt{\frac{2 \delta}{n}}.
\end{split}
\]
Here, $\lesssim$ implies that the left-hand side is bounded above by the right-hand side times a constant independent of $n$ and $L$.
Hence, Remark~\ref{GEB Comparison} can be observed.
\end{remark}
%%%%%%%%%%%%%%%%%%%%%%%%%%%%%%%%%%%%%%%%%%%%%%%%%%%%%%%%%%%%%%%%%%%%%%%%%%%%%%%%%%%%%%%%%%%%%%%%%%%%%%%%%%
\begin{lemma}\label{BRV-estimate}
Let $\boldsymbol{\mathrm{Z}}_{\ell}=1$ and $\boldsymbol{\mathrm{X}}_{\ell}$ be a Bernoulli RV with $\prob\{\boldsymbol{\mathrm{X}}_{\ell}  =  1 \} = p_\ell$, and $\prob\{\boldsymbol{\mathrm{X}}_{\ell}  = 0\} = 1-p_\ell$ for $p_\ell \in [0,1]$ in inequality (\ref{multiplicativeGEB}). We assume that $p_{\ell}= x_{\ell}/ L^{\frac{1}{\hat{d}+1}}$ where $x_{\ell} \in [0,1]$ satisfies $\sum_{\ell=0}^{\infty}x_{\ell}< \infty$. 
Then,
\[
\begin{split}
\mathbb{E}_{\boldsymbol{\mathcal{X}}}[\text{RHS of (\ref{multiplicativeGEB})}]
& 
\lesssim
\widehat{\mathcal{L}}_{\mathrm{S}}(\Gcal) 
+
\left(\sum_{\ell=1}^{L}
x_{\ell}\right) 
\prod_{\ell=0}^{L}
\left[
1+(C_{w}^{M+1}C_{\sigma}^{M}+C_k C_{\sigma}+C_{w}^{M+1}C_kC_{\sigma}^{M+1})
x_{\ell}
\right]
\left(\frac{1}{n}\right)^{\frac{1}{\hat{d}+1}} 
\\
&
+ \left(\rho 
\prod_{\ell=0}^{L}
\left[
1+(C_{w}^{M+1}C_{\sigma}^{M}+C_k C_{\sigma}+C_{w}^{M+1}C_kC_{\sigma}^{M+1})
x_{\ell}
\right]
C_a  + R_u \right)\sqrt{\frac{2 \delta}{n}}.
\end{split}
\]
Here, $\lesssim$ implies that the left-hand side is bounded above by the right-hand side times a constant independent of $n$ and $L$.
\end{lemma}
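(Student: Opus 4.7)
}

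The plan is to take the expectation $\mathbb{E}_{\boldsymbol{\mathcal{X}}}$ term-by-term on the right-hand side of~\eqref{multiplicativeGEB} and exploit three facts: (i) independence of $\boldsymbol{\mathrm{X}}_0,\dots,\boldsymbol{\mathrm{X}}_L$; (ii) the Bernoulli identity $\boldsymbol{\mathrm{X}}_\ell^2=\boldsymbol{\mathrm{X}}_\ell$, which collapses products of affine expressions in a single $\boldsymbol{\mathrm{X}}_\ell$ into a single affine expression; and (iii) the elementary inequality $\boldsymbol{\mathrm{X}}_\ell A/(1+\boldsymbol{\mathrm{X}}_\ell A)\le \boldsymbol{\mathrm{X}}_\ell A$ for $A\ge 0$. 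The empirical loss $\widehat{\mathcal{L}}_{\mathrm{S}}(\Gcal)$ is treated as the data-dependent term that passes through the expectation unchanged, producing the first summand of the bound.

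First I would set the shorthand $A:=C_w^{M+1}C_\sigma^{M}$, $B:=C_kC_\sigma$, and $\alpha:=A+B+AB$. Using $\boldsymbol{\mathrm{X}}_\ell^2=\boldsymbol{\mathrm{X}}_\ell$ one checks the key identity
\begin{equation*}
(1+\boldsymbol{\mathrm{X}}_\ell A)(1+\boldsymbol{\mathrm{X}}_\ell B) \;=\; 1+\boldsymbol{\mathrm{X}}_\ell\alpha.
\end{equation*}
For the $\sqrt{2\delta/n}$ part of~\eqref{multiplicativeGEB}, independence of the $\boldsymbol{\mathrm{X}}_\ell$'s then gives
\begin{equation*}
\mathbb{E}_{\boldsymbol{\mathcal{X}}}\!\left[\prod_{\ell=0}^{L}(1+\boldsymbol{\mathrm{X}}_\ell A)(1+\boldsymbol{\mathrm{X}}_\ell B)\right]
= \prod_{\ell=0}^{L}(1+p_\ell\alpha) \;\le\; \prod_{\ell=0}^{L}(1+x_\ell\alpha),
\end{equation*}
where the last step uses $p_\ell=x_\ell/L^{1/(\hat{d}+1)}\le x_\ell$. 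This yields the stated last line of the bound.

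For the Rademacher-complexity part, the difficulty is that the bracketed sum and the product share the random variables $\boldsymbol{\mathrm{X}}_\ell$. I would first use (iii) to bound each summand by its numerator, $\boldsymbol{\mathrm{X}}_\ell A/(1+\boldsymbol{\mathrm{X}}_\ell A)\le \boldsymbol{\mathrm{X}}_\ell A$ and similarly for $B$, and then peel off factor $\ell$ from the product by writing
\begin{equation*}
\boldsymbol{\mathrm{X}}_\ell(A+B)\,\prod_{m=0}^{L}(1+\boldsymbol{\mathrm{X}}_m\alpha)
= \boldsymbol{\mathrm{X}}_\ell(A+B)(1+\boldsymbol{\mathrm{X}}_\ell\alpha)\!\!\prod_{m\ne\ell}(1+\boldsymbol{\mathrm{X}}_m\alpha).
\end{equation*}
Independence then allows factoring the expectation: $\mathbb{E}[\boldsymbol{\mathrm{X}}_\ell(A+B)(1+\boldsymbol{\mathrm{X}}_\ell\alpha)] = p_\ell(A+B)(1+\alpha)\le p_\ell\alpha(1+\alpha)$, while $\mathbb{E}[1+\boldsymbol{\mathrm{X}}_m\alpha]=1+p_m\alpha\le 1+x_m\alpha$. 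Summing over $\ell$ and multiplying by the prefactor $L^{1/(\hat{d}+1)}$ gives
\begin{equation*}
L^{\frac{1}{\hat{d}+1}}\!\sum_{\ell=0}^{L}\frac{x_\ell}{L^{\frac{1}{\hat{d}+1}}}\,\alpha(1+\alpha)\prod_{m\ne\ell}(1+x_m\alpha)
\;\le\; \alpha(1+\alpha)\Big(\sum_{\ell=0}^{L}x_\ell\Big)\prod_{m=0}^{L}(1+x_m\alpha),
\end{equation*}
which matches the middle line of the stated bound (the constant $\alpha(1+\alpha)$ is absorbed into $\lesssim$).

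The main obstacle, as flagged above, is precisely this product/sum entanglement — a naive application of Fubini would double-count the $\ell$-th random variable. Once one peels off the $\ell$-th factor before taking expectations and invokes $\boldsymbol{\mathrm{X}}_\ell^2=\boldsymbol{\mathrm{X}}_\ell$ so that $(1+\boldsymbol{\mathrm{X}}_\ell A)(1+\boldsymbol{\mathrm{X}}_\ell B)$ is genuinely affine in $\boldsymbol{\mathrm{X}}_\ell$, the remaining manipulations (independence, $p_\ell\le x_\ell$, collecting constants into $\lesssim$) are routine. Assembling the three contributions yields the asserted inequality.
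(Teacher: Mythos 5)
Your proposal is correct and follows essentially the same route as the paper: bound the bracketed sum by a numerator of the form $\boldsymbol{\mathrm{X}}_\ell\cdot\mathrm{const}$, peel off the $\ell$-th factor from the product to disentangle it from the sum, exploit $\boldsymbol{\mathrm{X}}_\ell^2=\boldsymbol{\mathrm{X}}_\ell$ and independence to compute each factor's expectation, and then use $p_\ell=x_\ell/L^{1/(\hat d+1)}\le x_\ell$. Your write-up is in fact a bit cleaner where you explicitly restrict the product to $m\ne\ell$ before factoring the expectation (the paper's displayed step leaves the range as $\ell'=0,\dots,L$, which is a slip in notation consistent with your $m\ne\ell$ interpretation).
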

We remark that the upper bound remain bounded as $L$ tends to infinity because $\sum_{\ell=1}^{\infty}
x_{\ell} < \infty$ and 
\[
\sum_{\ell=1}^{\infty}
\bigg(1+(C_{w}^{M+1}C_{\sigma}^{M}+C_k C_{\sigma}+C_{w}^{M+1}C_kC_{\sigma}^{M+1})
x_{\ell} \bigg) < \infty.
\]
As result, infinite products also remain bounded.

\begin{proof}
First, we evaluate that 
\[
\begin{split}
[\text{RHS of (\ref{multiplicativeGEB})}]
&
\leq 
\widehat{\mathcal{L}}_{\mathrm{S}}(\Gcal) 
+
4 \widetilde{\gamma}  (C_{w}^{M+1}C_{\sigma}^{M}+1)L^{\frac{1}{\hat{d}+1}} 
\left(\sum_{\ell=1}^{L}\boldsymbol{\mathrm{X}}_{\ell}\right)
\left[\prod_{\ell=0}^{L}(1+\boldsymbol{\mathrm{X}}_{\ell}C_{w}^{M+1}C_{\sigma}^{M})(1+\boldsymbol{\mathrm{X}}_{\ell}C_k C_{\sigma})
\right]
\left(\frac{1}{n}\right)^{\frac{1}{\hat{d}+1}} 
\\
&
+ \left(\rho 
\left[\prod_{\ell=0}^{L}(1+\boldsymbol{\mathrm{X}}_{\ell}C_{w}^{M+1}C_{\sigma}^{M})(1+\boldsymbol{\mathrm{X}}_{\ell}C_k C_{\sigma})
\right]
C_a  + R_u \right)\sqrt{\frac{2 \delta}{n}},
\end{split}
\]
which implies that 
\[
\begin{split}
\mathbb{E}_{\boldsymbol{\mathcal{X}}}
[\text{RHS of (\ref{multiplicativeGEB})}]
&
\lesssim
\widehat{\mathcal{L}}_{\mathrm{S}}(\Gcal) 
+
L^{\frac{1}{\hat{d}+1}}
\sum_{\ell=1}^{L}
\mathbb{E}_{\boldsymbol{\mathrm{X}}_{\ell}}
\left[
\boldsymbol{\mathrm{X}}_{\ell}
(1+\boldsymbol{\mathrm{X}}_{\ell}C_{w}^{M+1}C_{\sigma}^{M})(1+\boldsymbol{\mathrm{X}}_{\ell}C_k C_{\sigma})
\right]
\\
&
\times
\mathbb{E}_{\boldsymbol{\mathcal{X}} \setminus \boldsymbol{\mathrm{X}}_{\ell}}
\left[\prod_{\ell^{\prime}=0}^{L}
(1+\boldsymbol{\mathrm{X}}_{\ell^{\prime}}C_{w}^{M+1}C_{\sigma}^{M})(1+\boldsymbol{\mathrm{X}}_{\ell^{\prime}}C_k C_{\sigma})
\right]
\left(\frac{1}{n}\right)^{\frac{1}{\hat{d}+1}} 
\\
&
+ \left(\rho 
\mathbb{E}_{\boldsymbol{\mathcal{X}}}
\left[\prod_{\ell=0}^{L}(\boldsymbol{\mathrm{Z}}_{\ell}+\boldsymbol{\mathrm{X}}_{\ell}C_{w}^{M+1}C_{\sigma}^{M})(\boldsymbol{\mathrm{Z}}_{\ell}+\boldsymbol{\mathrm{X}}_{\ell}C_k C_{\sigma})
\right]
C_a  + R_u \right)\sqrt{\frac{2 \delta}{n}}.
\end{split}
\]
Since we have 
\[
\mathbb{E}_{\boldsymbol{\mathrm{X}}_{\ell}}
\left[
\boldsymbol{\mathrm{X}}_{\ell}
(1+\boldsymbol{\mathrm{X}}_{\ell}C_{w}^{M+1}C_{\sigma}^{M})(1+\boldsymbol{\mathrm{X}}_{\ell}C_k C_{\sigma})
\right]
=(1+C_{w}^{M+1}C_{\sigma}^{M}+C_k C_{\sigma}+C_{w}^{M+1}C_kC_{\sigma}^{M+1})p_{\ell},
\]
and 
\[
\mathbb{E}_{\boldsymbol{\mathrm{X}}_{\ell}}
\left[
(1+\boldsymbol{\mathrm{X}}_{\ell}C_{w}^{M+1}C_{\sigma}^{M})(1+\boldsymbol{\mathrm{X}}_{\ell}C_k C_{\sigma})
\right]
=1+(C_{w}^{M+1}C_{\sigma}^{M}+C_k C_{\sigma}+C_{w}^{M+1}C_kC_{\sigma}^{M+1})p_{\ell},
\]
we conclude that by using $p_{\ell}= x_{\ell}/ L^{\frac{1}{\hat{d}+1}}$,
\[
\begin{split}
&
\mathbb{E}_{\boldsymbol{\mathcal{X}}}
[\text{RHS of (\ref{multiplicativeGEB})}]
\\
&
\lesssim
\widehat{\mathcal{L}}_{\mathrm{S}}(\Gcal) 
+
L^{\frac{1}{\hat{d}+1}}
\left(\sum_{\ell=1}^{L}
p_{\ell}\right) 
\prod_{\ell=0}^{L}
\left[
1+(C_{w}^{M+1}C_{\sigma}^{M}+C_k C_{\sigma}+C_{w}^{M+1}C_kC_{\sigma}^{M+1})p_{\ell}
\right]
\left(\frac{1}{n}\right)^{\frac{1}{\hat{d}+1}} 
\\
&
+ \left(\rho 
\prod_{\ell=0}^{L}
\left[
1+(C_{w}^{M+1}C_{\sigma}^{M}+C_k C_{\sigma}+C_{w}^{M+1}C_kC_{\sigma}^{M+1})p_{\ell}
\right]
C_a  + R_u \right)\sqrt{\frac{2 \delta}{n}}
\\
&
\lesssim
\widehat{\mathcal{L}}_{\mathrm{S}}(\Gcal) 
+
\left(\sum_{\ell=1}^{L}
x_{\ell}\right) 
\prod_{\ell=0}^{L}
\left[
1+\frac{(C_{w}^{M+1}C_{\sigma}^{M}+C_k C_{\sigma}+C_{w}^{M+1}C_kC_{\sigma}^{M+1})}{L^{\frac{1}{\hat{d}+1}}}
x_{\ell}
\right]
\left(\frac{1}{n}\right)^{\frac{1}{\hat{d}+1}} 
\\
&
+ \left(\rho 
\prod_{\ell=0}^{L}
\left[
1+\frac{(C_{w}^{M+1}C_{\sigma}^{M}+C_k C_{\sigma}+C_{w}^{M+1}C_kC_{\sigma}^{M+1})}{L^{\frac{1}{\hat{d}+1}}}
x_{\ell}
\right]
C_a  + R_u \right)\sqrt{\frac{2 \delta}{n}}
\end{split}
\]

\end{proof}

\newpage
\section{Experiments.}
\subsection{Loss Landscape Visualization} \label{appendix:loss landscape visualization}

We include here additional views of the training loss landscape of the considered architectures that were created using the method discussed in Section~\ref{section:loss landscape}. In particular, the images below offer a closer view of the landscape in the immediate vicinity of the found minimizer, to allow for a better comparison. In addition, a color-based planar view of the landscapes is provided for a better view of their respective topological features. 

\begin{figure}[hbt]
\includegraphics[width=\linewidth]{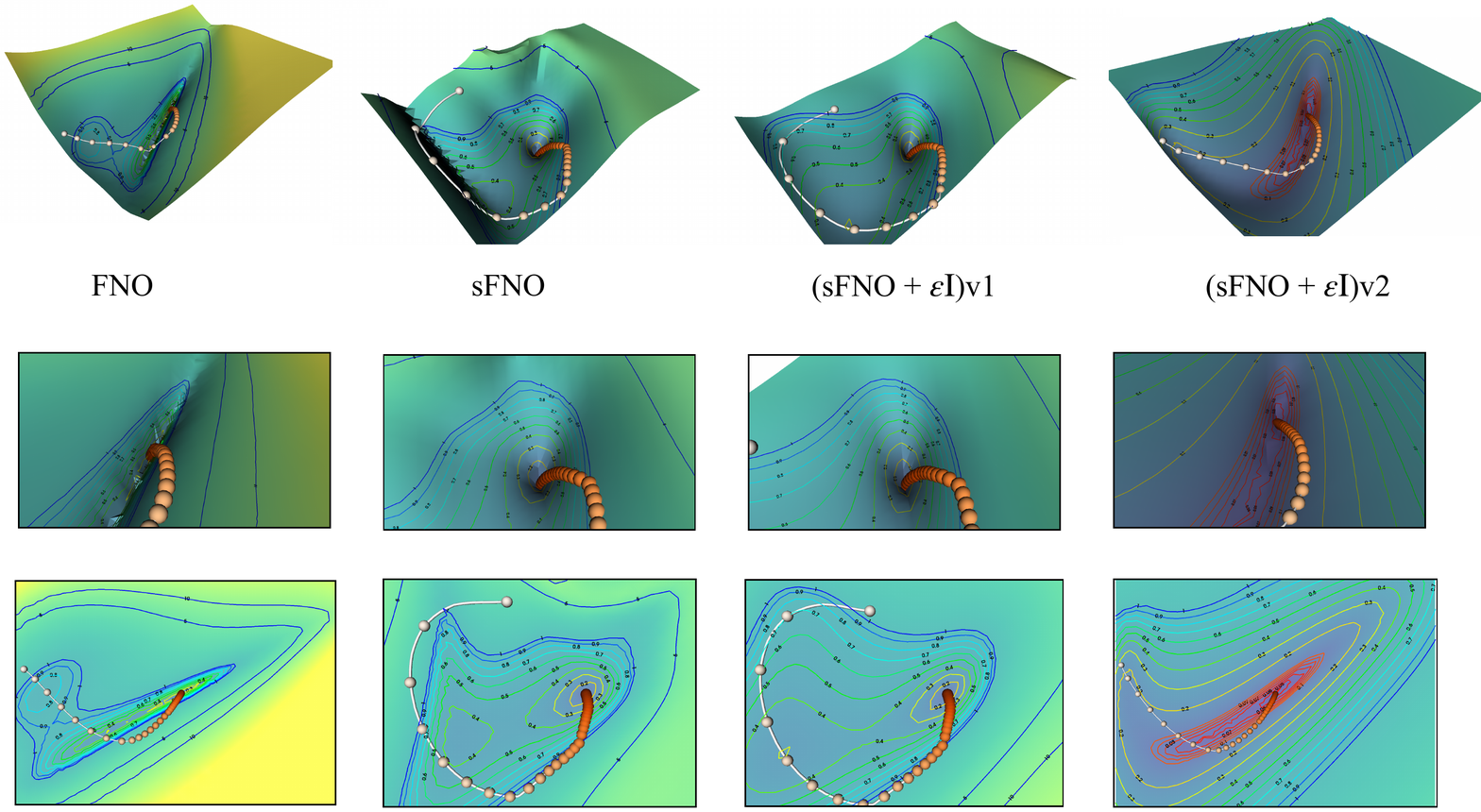}
\caption{Visualization of the training landscapes associated with FNO, $\MFNO$, $\FNONeXt$ v1 and $\FNONeXt$ v2. \label{figure:loss landscape vis 2}}
\end{figure}

\subsection{Out-of-distribution}\label{Out-of-distribution}
%%%%%%%%%%%%%%%%%%%%%%%%%%%%%%%%%%%%%%%%%%%%%%%%%%%%%%%%%%%%%%%%%%%%%%%%%%%%%%%%%%%%%%%%%%%%%%%%%%%%%%%%

In this section, we present the wavefield reconstruction of the other families described in \cref{subsection:OOD}. The values of the parameters are established in \cref{table:ood}, and the relative test loss error is presented in \crefrange{OOD:table1}{OOD:table6}. In our analysis, we selected three realizations from the previously trained neural networks. These networks were trained using a dataset at a frequency of 15 Hz and with the parameters of the random field generating the wave speed set as $\boldsymbol{\lambda} = (1, 1)$ and a wave speed range of $[1500, 5000]$. Specifically, we chose the first three networks documented in \cref{fig:boxplot}.

To test the performance of these networks on a different random field, we kept the smoothness coefficient constant and varied the correlation range of the Whittle-Matérn field. The reconstructed wave fields are presented in \crefrange{OOD:set1}{OOD:set6}. Please note that the imaginary part of the wave field is also recovered, but it is not shown in the figures.

%%%%%%%%%%%%%%%%%%%%%%%%%%%%%%%%%%%%%%%%%%%%%%%%%%%%%%%%%%%%%%%%%%%%%%%%%%%%%%%%
%OOD1
%%%%%%%%%%%%%%%%%%%%%%%%%%%%%%%%%%%%%%%%%%%%%%%%%%%%%%%%%%%%%%%%%%%%%%%%%%%%%%%%
\subsubsection*{OOD 1}
In this set family, we keep the isotropic behaviour of the original data, however we move the value to ${\boldsymbol{\lambda}}_{\mathrm{OOD1}} = (0.20, 0.20)$. The range is kept in $[1500,5000]$. We see that this scenario is the \textit{easier} for the networks. However, FNO still struggles to capture the correct wave propagation. 
\setlength{\modelwidth} {3.65cm}
\setlength{\modelheight}{3.65cm}

\begin{figure}[ht!]\centering
\input{Figures/OOD_main_paper/set0/skeleton_online_set0_appendix} 
    \caption{\small{ 
            %\textbf{(Real part) of the pressure field reconstructed by
             Pressure field reconstructed
             at \num{15} \si{\Hz} trained with \emph{isotropic Whittle–Mat\'ern covariance} 
             $\boldsymbol{\lambda} = (1, 1)$, and wavespeed range of $(1500,5000)$ \cref{eq:lambda_cov} and tested with \cref{OOD:table1}
             ${\boldsymbol{\lambda}}_{\mathrm{OOD1}} = (0.20, 0.20)$, and wavespeed range of $(1500,5000)$
             with the different architectures for multiple realizations of the new GRF
             \emph{out-of distribution},
             realizations of the wave speed. 
             \textit{Left column} shows independent GRF realization of the wave
             speed (see \cref{eq:Helmholtz}).  
             \textit{Second column} shows the real part % and imaginary part
             of the pressure field solution to the wave PDE at frequency
             15 \si{\Hz}, 
             obtained with software \texttt{hawen} \citep{faucher2021hawen}, 
             which we consider as the \emph{reference solution},
             \textit{Other columns} show the approximated reconstructions 
             using the different architectures.
             In each case, we show the real parts of the pressure fields, and 
             the relative error with the reference 
             solution on a logarithmic scale. 
             }} \label{OOD:set1}
\end{figure}
%%%%%%%%%%%%%%%%%%%%%%%%%%%%%%%%%%%%%%%%%%%%%%%%%%%%%%%%%%%%%%%%%%%%%%%%%%%%%%%%
%OOD2
%%%%%%%%%%%%%%%%%%%%%%%%%%%%%%%%%%%%%%%%%%%%%%%%%%%%%%%%%%%%%%%%%%%%%%%%%%%%%%%%
\subsubsection*{OOD 2}
In this set family, we generate an anisotropic random field, different to the original trained data  ${\boldsymbol{\lambda}}_{\mathrm{OOD2}} = (0.10, 0.20)$, however the range was kept similar than the original set.
\begin{figure}[ht!]\centering
\input{Figures/OOD_main_paper/set1/skeleton_online_set1_appendix} 
    \caption{\small{ 
            %\textbf{(Real part) of the pressure field reconstructed by
             Pressure field reconstructed 
             at \num{15} \si{\Hz} trained with \emph{isotropic Whittle–Mat\'ern covariance}
             $\boldsymbol{\lambda} = (1, 1)$, and wavespeed range of $(1500,5000)$ \cref{eq:lambda_cov} and tested with \cref{OOD:table2}
             ${\boldsymbol{\lambda}}_{\mathrm{OOD2}} = (0.10, 0.20)$, and wavespeed range of $(1500,5000)$
           with the different architectures for multiple realizations of the new GRF
             \emph{out-of distribution},
             realizations of the wave speed. 
             \textit{Left column} shows independent GRF realization of the wave
             speed (see \cref{eq:Helmholtz}).  
             \textit{Second column} shows the real part % and imaginary part
             of the pressure field solution to the wave PDE at frequency
             15 \si{\Hz}, 
             obtained with software \texttt{hawen} \citep{faucher2021hawen}, 
             which we consider as the \emph{reference solution},
             \textit{Other columns} show the approximated reconstructions 
             using the different architectures.
             In each case, we show the real parts of the pressure fields, and 
             the relative error with the reference 
             solution on a logarithmic scale. 
             }}
             \label{OOD:set2}
\end{figure}

\iffalse{            Pressure field reconstructed for 
             (\cref{experiment2}) 
             at \num{15} \si{\Hz} trained with \emph{isotropic Whittle–Mat\'ern covariance}
             $\boldsymbol{\lambda} = (1, 1)$, and wavespeed range of $(1500,5000)$ \cref{eq:lambda_cov} and tested with \cref{OOD:table2}
             ${\boldsymbol{\lambda}}_{\mathrm{OOD2}} = (0.10, 0.20)$, and wavespeed range of $(1500,5000)$
           with the different architectures for multiple realizations of the new GRF
             \emph{out-of distribution},
             realizations of the wave speed. 
             \textit{Left column} shows independent GRF realization of the wave
             speed (see \cref{eq:Helmholtz}).  
             \textit{Second column} shows the real part % and imaginary part
             of the pressure field solution to the wave PDE at frequency
             15 \si{\Hz}, 
             obtained with software \texttt{hawen} \citep{faucher2021hawen}, 
             which we consider as the \emph{reference solution},
             \textit{Other columns} show the approximated reconstructions 
             using the different architectures.
             In each case, we show the real parts of the pressure fields, and 
             the relative error with the reference 
             solution on a logarithmic scale. }\fi
%%%%%%%%%%%%%%%%%%%%%%%%%%%%%%%%%%%%%%%%%%%%%%%%%%%%%%%%%%%%%%%%%%%%%%%%%%%%%%%%
%OOD3
%%%%%%%%%%%%%%%%%%%%%%%%%%%%%%%%%%%%%%%%%%%%%%%%%%%%%%%%%%%%%%%%%%%%%%%%%%%%%%%%
\subsubsection*{OOD 3}
In this set family, we generate an isotropic random field, different to the original trained data   ${\boldsymbol{\lambda}}_{\mathrm{OOD3}} = (0.20, 0.20)$, however the range was moved to $[2000,3500]$.
\begin{figure}[ht!]\centering
\input{Figures/OOD_main_paper/set2/skeleton_online_set2_appendix} 
    \caption{\small{ 
            %\textbf{(Real part) of the pressure field reconstructed by
            %\textbf{(Real part) of the pressure field reconstructed by
            Pressure field 
             at \num{15} \si{\Hz} trained with \emph{isotropic Whittle–Mat\'ern covariance}
             $\boldsymbol{\lambda} = (1, 1)$, and wavespeed range of $(1500,5000)$ \cref{eq:lambda_cov} and tested with \cref{OOD:table4}
             ${\boldsymbol{\lambda}}_{\mathrm{OOD3}} = (0.20, 0.20)$, and wavespeed range of $(2000,3500)$
           with the different architectures for multiple realizations of the new GRF
             \emph{out-of distribution},
             realizations of the wave speed. 
             \textit{Left column} shows independent GRF realization of the wave
             speed (see \cref{eq:Helmholtz}).  
             \textit{Second column} shows the real part % and imaginary part
             of the pressure field solution to the wave PDE at frequency
             15 \si{\Hz}, 
             obtained with software \texttt{hawen} \citep{faucher2021hawen}, 
             which we consider as the \emph{reference solution},
             \textit{Other columns} show the approximated reconstructions 
             using the different architectures.
             In each case, we show the real parts of the pressure fields, and 
             the relative error with the reference 
             solution on a logarithmic scale. 
             }}
             \label{OOD:set3}
\end{figure}

\iffalse{   Pressure field reconstructed for 
             (\cref{experiment2}) 
             at \num{15} \si{\Hz} trained with \emph{isotropic Whittle–Mat\'ern covariance}
             $\boldsymbol{\lambda} = (1, 1)$, and wavespeed range of $(1500,5000)$ \cref{eq:lambda_cov} and tested with \cref{OOD:table4}
             ${\boldsymbol{\lambda}}_{\mathrm{OOD3}} = (0.20, 0.20)$, and wavespeed range of $(2000,3500)$
           with the different architectures for multiple realizations of the new GRF
             \emph{out-of distribution},
             realizations of the wave speed. 
             \textit{Left column} shows independent GRF realization of the wave
             speed (see \cref{eq:Helmholtz}).  
             \textit{Second column} shows the real part % and imaginary part
             of the pressure field solution to the wave PDE at frequency
             15 \si{\Hz}, 
             obtained with software \texttt{hawen} \citep{faucher2021hawen}, 
             which we consider as the \emph{reference solution},
             \textit{Other columns} show the approximated reconstructions 
             using the different architectures.
             In each case, we show the real parts of the pressure fields, and 
             the relative error with the reference 
             solution on a logarithmic scale. }\fi
%%%%%%%%%%%%%%%%%%%%%%%%%%%%%%%%%%%%%%%%%%%%%%%%%%%%%%%%%%%%%%%%%%%%%%%%%%%%%%%%
%OOD4
%%%%%%%%%%%%%%%%%%%%%%%%%%%%%%%%%%%%%%%%%%%%%%%%%%%%%%%%%%%%%%%%%%%%%%%%%%%%%%%%
\subsubsection*{OOD 4}
In this set family, we generate an anisotropic random field, different to the original trained data   ${\boldsymbol{\lambda}}_{\mathrm{OOD4}} = (0.10, 0.20)$, however the range was kept to $[2000,3500]$ the same as the original set. 

\begin{figure}[ht!]\centering
\input{Figures/OOD_main_paper/set3/skeleton_online_set3_appendix} 
    \caption{\small{ 
            %\textbf{(Real part) of the pressure field reconstructed by
            %\textbf{(Real part) of the pressure field reconstructed by
            Pressure field reconstructed trained with \emph{isotropic Whittle–Mat\'ern covariance}
             $\boldsymbol{\lambda} = (1, 1)$, and wavespeed range of $(1500,5000)$ \cref{eq:lambda_cov} and tested with \cref{OOD:table4}
             ${\boldsymbol{\lambda}}_{\mathrm{OOD4}} = (0.10, 0.20)$, and wavespeed range of $(2000,3500)$
           with the different architectures for multiple realizations of the new GRF
             \emph{out-of distribution},
             realizations of the wave speed. 
             \textit{Left column} shows independent GRF realization of the wave
             speed (see \cref{eq:Helmholtz}).  
             \textit{Second column} shows the real part % and imaginary part
             of the pressure field solution to the wave PDE at frequency
             15 \si{\Hz}, 
             obtained with software \texttt{hawen} \citep{faucher2021hawen}, 
             which we consider as the \emph{reference solution},
             \textit{Other columns} show the approximated reconstructions 
             using the different architectures.
             In each case, we show the real parts of the pressure fields, and 
             the relative error with the reference 
             solution on a logarithmic scale. 
             }}
             \label{OOD:set4}
\end{figure}
%%%%%%%%%%%%%%%%%%%%%%%%%%%%%%%%%%%%%%%%%%%%%%%%%%%%%%%%%%%%%%%%%%%%%%%%%%%%%%%%
%OOD5
%%%%%%%%%%%%%%%%%%%%%%%%%%%%%%%%%%%%%%%%%%%%%%%%%%%%%%%%%%%%%%%%%%%%%%%%%%%%%%%%
\subsubsection*{OOD 5}
In this set family, we generate an isotropic random field, different to the original trained data   ${\boldsymbol{\lambda}}_{\mathrm{OOD5}} = (0.10, 0.30)$, however the range was also moved to $[2000,6000]$ different than the original set.

\begin{figure}[ht!]\centering
\input{Figures/OOD_main_paper/set4/skeleton_online_set4_appendix} 
    \caption{\small{ 
            %\textbf{(Real part) of the pressure field reconstructed by
            Pressure field reconstructed 
             at \num{15} \si{\Hz} trained with \emph{isotropic Whittle–Mat\'ern covariance}
             $\boldsymbol{\lambda} = (1, 1)$, and wavespeed range of $(1500,5000)$ \cref{eq:lambda_cov} and tested with \cref{OOD:table5}
             ${\boldsymbol{\lambda}}_{\mathrm{OOD5}} = (0.10, 0.30)$, and wavespeed range of $(2000,6000)$
           with the different architectures for multiple realizations of the new GRF
             \emph{out-of distribution},
             realizations of the wave speed. 
             \textit{Left column} shows independent GRF realization of the wave
             speed (see \cref{eq:Helmholtz}).  
             \textit{Second column} shows the real part % and imaginary part
             of the pressure field solution to the wave PDE at frequency
             15 \si{\Hz}, 
             obtained with software \texttt{hawen} \citep{faucher2021hawen}, 
             which we consider as the \emph{reference solution},
             \textit{Other columns} show the approximated reconstructions 
             using the different architectures.
             In each case, we show the real parts of the pressure fields, and 
             the relative error with the reference 
             solution on a logarithmic scale. 
             }}
             \label{OOD:set5}
\end{figure}

%%%%%%%%%%%%%%%%%%%%%%%%%%%%%%%%%%%%%%%%%%%%%%%%%%%%%%%%%%%%%%%%%%%%%%%%%%%%%%%%
%OOD6
%%%%%%%%%%%%%%%%%%%%%%%%%%%%%%%%%%%%%%%%%%%%%%%%%%%%%%%%%%%%%%%%%%%%%%%%%%%%%%%%
\subsubsection*{OOD 6}\label{OOD:6}
In this set family, we generate an anisotropic random field, \emph{significantly} different to the original trained data   ${\boldsymbol{\lambda}}_{\mathrm{OOD6}} = (0.25, 0.75)$, however the range was moved to $[2000,6000]$ different than the original set. 
\begin{figure}[ht!]\centering
\input{Figures/OOD_main_paper/set5/skeleton_online_set5_appendix} 
    \caption{\small{ 
            %\textbf{(Real part) of the pressure field reconstructed by
            Pressure field reconstructed 
             at \num{15} \si{\Hz} trained with \emph{isotropic Whittle–Mat\'ern covariance}
             $\boldsymbol{\lambda} = (1, 1)$, and wavespeed range of $(1500,5000)$ \cref{eq:lambda_cov} and tested with \cref{OOD:table6}
             ${\boldsymbol{\lambda}}_{\mathrm{OOD6}} = (0.25, 0.75)$, and wavespeed range of $(2000,6000)$
           with the different architectures for multiple realizations of the new GRF
             \emph{out-of distribution},
             realizations of the wave speed. 
             \textit{Left column} shows independent GRF realization of the wave
             speed (see \cref{eq:Helmholtz}).  
             \textit{Second column} shows the real part % and imaginary part
             of the pressure field solution to the wave PDE at frequency
             15 \si{\Hz}, 
             obtained with software \texttt{hawen} \citep{faucher2021hawen}, 
             which we consider as the \emph{reference solution},
             \textit{Other columns} show the approximated reconstructions 
             using the different architectures.
             In each case, we show the real parts of the pressure fields, and 
             the relative error with the reference 
             solution on a logarithmic scale. 
             }}
             \label{OOD:set6}
\end{figure}

%-----------------------------------------------------------------------------
\setlength{\modelwidth} {3.8cm}
\setlength{\modelheight}{3.8cm}

\subsection{OOD of the velocity BP 2004.} \label{BP_model}
%-----------------------------------------
To assess the network's ability to handle wave speed that are significantly different from the input distribution (particularly those that deviate from Gaussian measures), we conducted additional tests using the trained networks on a scale version of the velocity model known as the "2004-BP velocity benchmark" \citep{billette20052004}. The source was positioned similarly to the previous experiments, maintaining a frequency of \num{15} Hz, while adjusting the wavespeed's size to accommodate the capabilities of the GPU device. The generated approximations by each network are visualized in \cref{fig:BP}.  
\setlength{\modelwidth} {3.7cm}
\setlength{\modelheight}{3.7cm}
\begin{figure}[!ht] \centering
   \input{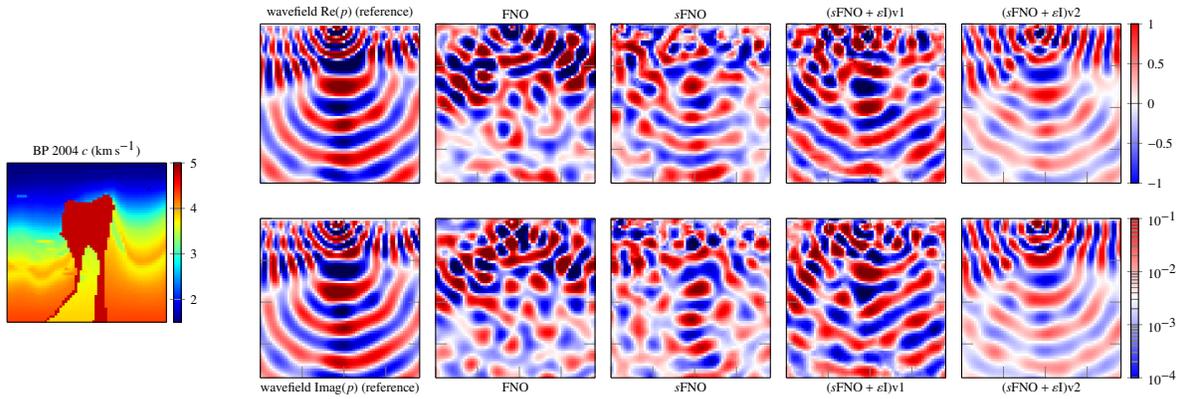}
    \caption{BP 2004 \citep{billette20052004}. Using the networks trained in row $1$ of \cref{table:experiment1:l2error}.}
    \label{fig:BP}
\end{figure}

\newpage
\subsection{Experiments at $7$, $12$ and $15$ Hz.} \label{appendix:more_experiments}
We consider two further datasets, lower frequency with a similar configuration as in \cref{experiment2} at $12$Hz, and an unrealistic case with the source beneath the surface, at $7$Hz, but we increase the size of the domain.

\begin{remark}
  \bourbaki
    Similarly as in \cref{{Experiments}} \emph{we deliberately avoid increasing the epochs of the training algorithm or the size of the training dataset to compensate the network}. 
    
\end{remark}
\newpage
\subsection*{Experiments of $7$ Hz (different configuration).}
\begin{equation}\label{experiment1}
\text{Experiment $7$Hz} \,\left\lbrace\quad\begin{aligned}
 & \text{2D domain of size \num{3.81}$\times$\num{3.81}\si{\km\squared}} \\
 & \text{\num{40000} GRF wave speeds generated, imposing \,
         $\num{1.5}\si{\km\per\second}\,\leq c(x) \leq \num{3}\si{\km\per\second}$} \\
 & \text{The data are $p$ that solve 
         \cref{eq:Euler} at frequency $\omega/(2\pi)=$ \num{7}\si{\Hz}}.
\end{aligned}\right.\end{equation}

Both the wave speeds and the pressure field solution are 
represented on a Cartesian grid of size \num{128}$\times$\num{128}
pixels, that is, using a grid step of \num{30}\si{\meter}.
We illustrate in \cref{figure:cp-GRF:experiment1} a realization of 
the wave speed model and the corresponding pressure field.

% ---------------------------
\setlength{\modelwidth} {3.8cm}
\setlength{\modelheight}{3.8cm}
% ---------------------------
\begin{figure}[h!]\centering
\subfigure[GRF realization of wave speed model $c$ ]{
\renewcommand{\modelfile}{Figures/dataset/GRF/experiment1_cp_scale1500to3000}
  \pgfmathsetmacro{\cmin}{1.50}
  \pgfmathsetmacro{\cmax}{3.00}
  \pgfmathsetmacro{\xmin}{0}
  \pgfmathsetmacro{\xmax}{3.810}
  \pgfmathsetmacro{\zmin}{0}
  \pgfmathsetmacro{\zmax}{3.810}
{\begin{tikzpicture}

\begin{axis}[%  
width=\modelwidth, height=\modelheight,
axis on top, separate axis lines,
xmin=\xmin, xmax=\xmax, xlabel={$x$ (\si{\km})},
ymin=\zmin, ymax=\zmax, ylabel={$z$ (\si{\km})}, y dir=reverse,
xticklabel pos=right, xlabel near ticks,
colormap/jet,colorbar,
colorbar style={title={{\scriptsize{$c$ (\si{\km\per\second})}}},
width=2mm,xshift=-1mm,
},point meta min=\cmin,point meta max=\cmax,
% style of the axis and size -------------------------------------
  label style={font=\scriptsize},
  tick label style={font=\scriptsize},
  legend style={font=\scriptsize\selectfont},
]
\addplot [forget plot] graphics [xmin=\xmin,xmax=\xmax,
                                 ymin=\zmin,ymax=\zmax] {{\modelfile}.png};
\end{axis}
\end{tikzpicture}%}
}
\begin{tikzpicture}[scale=1]
\draw[line width=2,black,->]  (1.50,0) to node[draw,rectangle,black,anchor=south,text width=7em,align=center,line width=1pt,yshift=0.50em]{from $c$ to $\pressure$ solving \eqref{eq:Euler}} (4.4,0);  
\end{tikzpicture}
\subfigure[Real part of the pressure field at 7 \si{\Hz} frequency.]{
\renewcommand{\modelfile}{Figures/dataset/GRF/experiment1_p_scale1e-1_real_bluered}
  \pgfmathsetmacro{\xmin}{0}
  \pgfmathsetmacro{\xmax}{3.810}
  \pgfmathsetmacro{\zmin}{0}
  \pgfmathsetmacro{\zmax}{3.810}
  \pgfmathsetmacro{\cmin}{-0.1}
  \pgfmathsetmacro{\cmax}{0.1}
{\begin{tikzpicture}

  \pgfmathsetmacro{\xmin}{0}
  \pgfmathsetmacro{\xmax}{3.81}
  \pgfmathsetmacro{\zmin}{0}
  \pgfmathsetmacro{\zmax}{3.81}
  \pgfmathsetmacro{\cmin}{-0.1}
  \pgfmathsetmacro{\cmax}{ 0.1}
  
\begin{axis}[%
width=\modelwidth, height=\modelheight,
axis on top, separate axis lines,
xmin=\xmin, xmax=\xmax, xlabel={$x$ (\si{\km})},
ymin=\zmin, ymax=\zmax, ylabel={$z$ (\si{\km})}, y dir=reverse,
xticklabel pos=right, xlabel near ticks,
x label style={xshift=-0.0cm, yshift=-0.00cm}, 
y label style={xshift= 0.0cm, yshift=-0.00cm},
colormap/jet,colorbar,
colorbar style={title={{\scriptsize{$p$ (\si{\pascal})}}},
title style={yshift=0mm, xshift=0mm},
width=3mm,xshift=0mm,
},point meta min=\cmin,point meta max=\cmax,
% style of the axis and size -------------------------------------
  label style={font=\scriptsize},
  tick label style={font=\scriptsize},
  legend style={font=\scriptsize\selectfont},
]
\addplot [forget plot] graphics [xmin=\xmin,xmax=\xmax,ymin=\zmin,ymax=\zmax] {{\modelfile}.png};
\end{axis}
\end{tikzpicture}%}
}
\caption{Illustration of the full-wave dataset for Experiment~1
         that considers a computational domain of size 
         \num{3.81}$\times$\num{3.81}\si{\km\squared} 
         with a source buried in the domain.
         The wave speed and pressure field are represented 
         on a Cartesian grid of size \num{128}$\times$\num{128}
         with a grid step of \num{30}\si{\meter}.
         The complete dataset corresponds to \num{40000} couples
         made up of a wave speed model and associated acoustic 
         wave.}
\label{figure:cp-GRF:experiment1}
\end{figure}

%\subsection{Multiple realizations $7$Hz}
\begin{figure}[!ht]
    \centering
    \includegraphics[scale= 0.25]{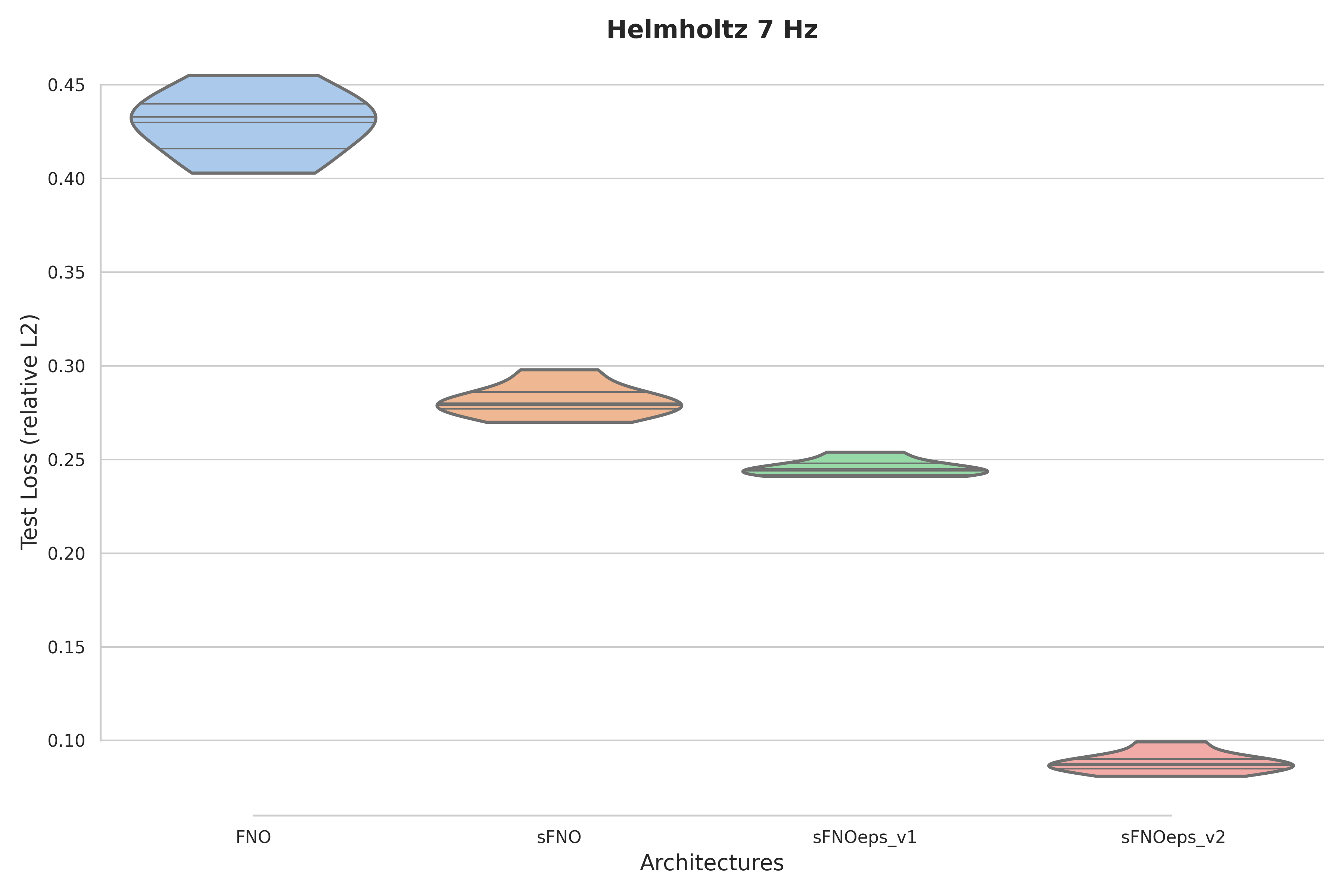}
    \caption{\small{Comparison of test-loss for $\omega/(2\pi) = 7$ Hz. Each architecture is trained $9$ times, the relative $\Lp$-loss, $\|\Gcal^{\mathrm{ref}} - \Gcal^{\mathrm{approx}}\|_{\Lp}/ \|\Gcal^{\mathrm{ref}}\|_{\Lp}$, on the test set is shown in the diagram.}}
    \label{fig:boxplot_12}
\end{figure}

% ---------------------------
\setlength{\modelwidth} {3.7cm}
\setlength{\modelheight}{3.7cm}
\graphicspath{{Figures/results_experiment1/}}
% ---------------------------
%\subsection{Wavefield reconstruction $7$Hz}
\begin{figure}[h!]
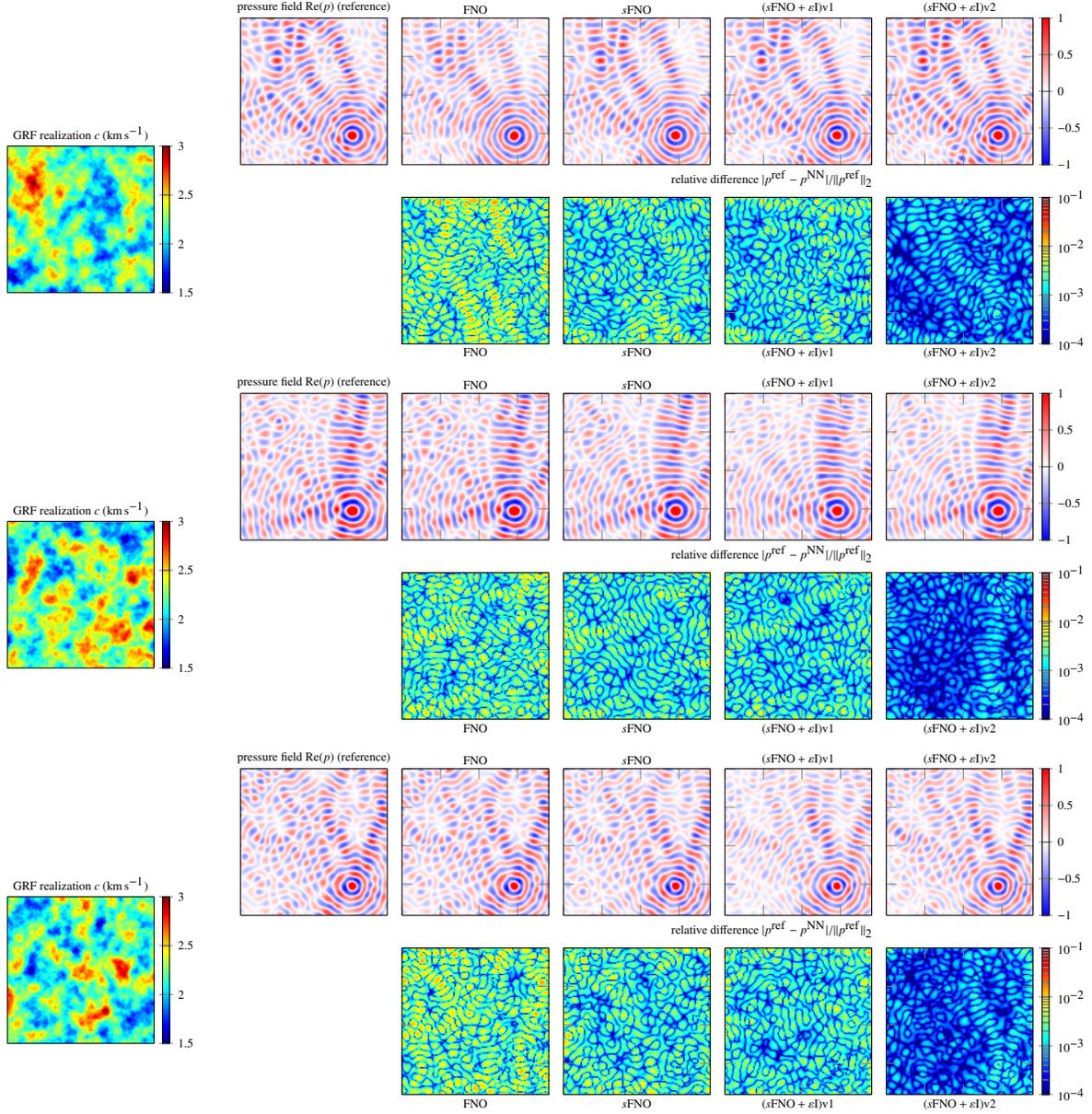
\centering
\input{Figures/results_experiment1/skeleton_oneline_model1} 
\input{Figures/results_experiment1/skeleton_oneline_model2} 
\input{Figures/results_experiment1/skeleton_oneline_model3}

    \caption{\small{ 
            %\textbf{(Real part) of the pressure field reconstructed by
             Pressure field reconstructed  
             at frequency \num{7} \si{\Hz}
             with the different architectures for three test-cases. 
             \textit{First column} shows independent GRF realization of the wave
             speed (see \cref{eq:Helmholtz}).  
             \textit{Second column} shows the solution of the wave PDE
             obtained with software \texttt{hawen} \citep{faucher2021hawen}, 
             which we consider as the \emph{reference solution},
             see \cref{eq:Helmholtz}. 
             \textit{Other columns} show the approximated reconstruction 
             using the different architectures:
             \textit{FNO}, see \citet{kovachki2021neural}; 
             $\multiplicative$ structure ($\MFNO$, see~\cref{MNO}); 
             and the solutions provided by $\FNONeXt$, \cref{FNO+epsilon}.
             In each case, we show the real part of the pressure field, and 
             the relative error with the reference solution using a logarithmic
             scale.
             }}

\label{figure:results-experiment1}
\end{figure}
\newpage

\subsection{Experiments at $12$ Hz.}
\begin{equation}\label{experiment2:12Hz}
\text{Experiment 2} \,\left\lbrace\quad\begin{aligned}
 & \text{2D domain of size \num{1.27}$\times$\num{1.27}\si{\km\squared}} \\
 & \text{\num{40000} GRF wave speeds generated, imposing \,
         $\num{1.5}\si{\km\per\second}\,\leq c(x) \leq \num{5}\si{\km\per\second}$} \\
 & \text{The data are $p$ that solve 
         \cref{eq:Euler} at frequency $\omega/(2\pi)=$ \num{12} \si{\Hz}}.
\end{aligned}\right.\end{equation}

%\subsection{Multiple realizations $12$Hz}
\begin{figure}[!htb]
    \centering
    \includegraphics[scale= 0.25]{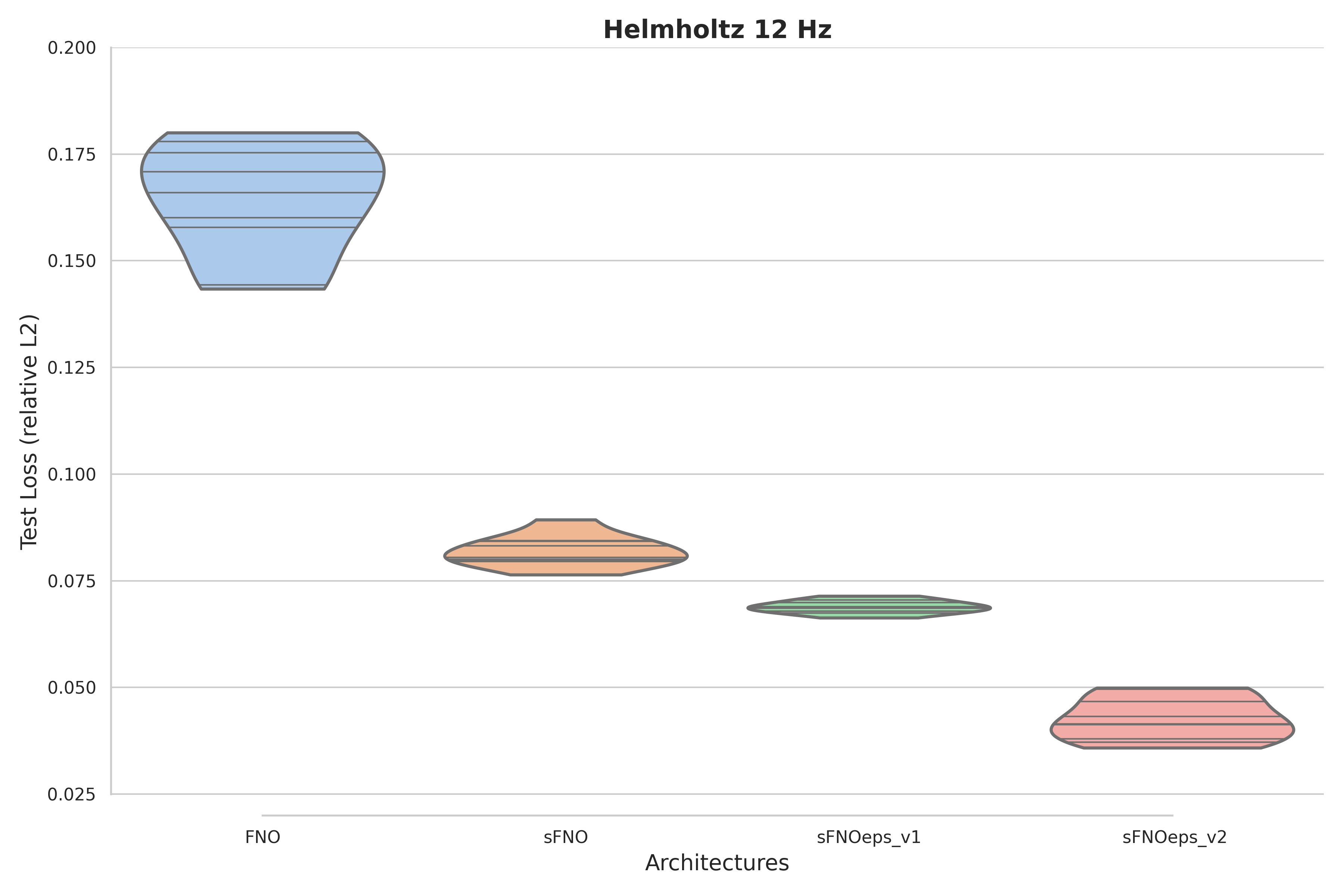}
    \caption{\small{Comparison of test-loss for $\omega/(2\pi) = 12$ Hz. Each architecture is trained $9$ times, the relative $\Lp$-loss, $\|\Gcal^{\mathrm{ref}} - \Gcal^{\mathrm{approx}}\|_{\Lp}/ \|\Gcal^{\mathrm{ref}}\|_{\Lp}$, on the test set is shown in the diagram.}}
    \label{fig:boxplot_12}
    \vspace{2em}
\end{figure}

%\subsection{Wavefield reconstruction $12$Hz}
\graphicspath{{Figures/results_experiment2_12Hz/}}
% ---------------------------
\begin{figure}[ht!]\centering
\input{Figures/results_experiment2_12Hz/skeleton_oneline_model1b} 
\vspace*{2em}

    \caption{\small{ 
            %\textbf{(Real part) of the pressure field reconstructed by
             Pressure field reconstructed  
             at frequency \num{12} \si{\Hz}
             with the different architectures for two GRF 
             realizations of the wave speed. 
             \textit{Left column} shows independent GRF realization of the wave
             speed (see \cref{eq:Helmholtz}).  
             \textit{Second column} shows the real and imaginary parts
             of the pressure field solution to the wave PDE at frequency
             12 \si{\Hz}, 
             obtained with software \texttt{hawen} \citep{faucher2021hawen}, 
             which we consider as the \emph{reference solution},
             see \cref{eq:Helmholtz}. 
             \textit{Other columns} show the approximated reconstructions 
             using the different architectures:
             \textit{FNO}, see \citet{kovachki2021neural}; 
             $\multiplicative$ structure ($\MFNO$, see~\cref{MNO}); 
             and the solutions provided by $\FNONeXt$, \cref{FNO+epsilon}.
             In each case, we show the real and imaginary 
             parts of the pressure fields, and 
             the relative error with the reference 
             solution on a logarithmic scale.
             }}
\label{figure:results-experiment2_12Hz}
\end{figure}

\subsection{Wavefield reconstruction at $15Hz$}\label{Appendux:15Hz_experiment}
\graphicspath{{Figures/results_experiment2_15Hz/}}
% ---------------------------
\begin{figure}[h!]\centering
\input{Figures/results_experiment2_15Hz/skeleton_oneline_model2b}
\vspace*{2em}

    \caption{\small{ 
            %\textbf{(Real part) of the pressure field reconstructed by
             Pressure field reconstructed  
             at frequency \num{15} \si{\Hz}
             with the different architectures for two GRF 
             realizations of the wave speed. 
             \textit{Left column} shows independent GRF realization of the wave
             speed (see \cref{eq:Helmholtz}).  
             \textit{Second column} shows the real and imaginary parts
             of the pressure field solution to the wave PDE at frequency
             12 \si{\Hz}, 
             obtained with software \texttt{hawen} \citep{faucher2021hawen}, 
             which we consider as the \emph{reference solution},
             see \cref{eq:Helmholtz}. 
             \textit{Other columns} show the approximated reconstructions 
             using the different architectures:
             \textit{FNO}, see \citet{kovachki2021neural}; 
             $\multiplicative$ structure ($\MFNO$, see~\cref{MNO}); 
             and the solutions provided by $\FNONeXt$, \cref{FNO+epsilon}.
             In each case, we show the real and imaginary 
             parts of the pressure fields, and 
             the relative error with the reference 
             solution on a logarithmic scale.
             }}
\label{figure:results-experiment2_15Hz}
\end{figure}

\end{document}